\pdfoutput=1
\documentclass[11pt]{article}

\usepackage[letterpaper,margin=1in]{geometry}
\usepackage[numbers,sort&compress]{natbib}
\usepackage[utf8]{inputenc}
\usepackage[T1]{fontenc}
\usepackage{hyperref}
\usepackage{url}
\usepackage{booktabs}
\usepackage{amsfonts}
\usepackage{amssymb}
\usepackage{nicefrac}
\usepackage{microtype}
\usepackage{xcolor}
\hypersetup{colorlinks=true,linkcolor=blue,citecolor=blue,urlcolor=blue}

\usepackage{amsmath,bbm,bm}
\usepackage{amsfonts}
\usepackage{amsthm}
\usepackage{mathtools}
\usepackage{algorithm}
\usepackage{algpascal}
\usepackage{algorithmicx}
\usepackage{algpseudocode}
\usepackage{tabularx}
\algdef{SE}[SUBALG]{Indent}{EndIndent}{}{\algorithmicend\ }%
\algtext*{Indent}
\algtext*{EndIndent}

\newtheorem{thm}{Theorem}%
\newtheorem{lem}{Lemma}

\newtheorem{prop}{Proposition}
\newtheorem{cor}{Corollary}

\newtheorem{aspt}{Assumption}

\newtheorem{rmk}{Remark}

\newtheorem{defn}{Definition}

\usepackage{xcolor}
\newcount\comments  %
\newcommand{\genComment}[2]{\ifnum\comments=1{\textcolor{#1}{\textsf{\footnotesize #2}}}\fi}

\def\eqref#1{equation~\ref{#1}}

\def\1{\bm{1}}

\newcommand{\tr}{\textnormal{tr}}

\DeclareMathAlphabet{\mathsfit}{\encodingdefault}{\sfdefault}{m}{sl}
\SetMathAlphabet{\mathsfit}{bold}{\encodingdefault}{\sfdefault}{bx}{n}

\def\gA{{\mathcal{A}}}

\def\gC{{\mathcal{C}}}
\def\gD{{\mathcal{D}}}

\def\gF{{\mathcal{F}}}

\def\gH{{\mathcal{H}}}

\def\gL{{\mathcal{L}}}
\def\gM{{\mathcal{M}}}
\def\gN{{\mathcal{N}}}
\def\gO{{\mathcal{O}}}

\def\gS{{\mathcal{S}}}
\def\gT{{\mathcal{T}}}

\newcommand{\E}{\mathbb{E}}

\newcommand{\R}{\mathbb{R}}

\newcommand{\Cov}{\mathrm{Cov}}

\DeclareMathOperator*{\argmax}{arg\,max}

\newcommand{\PP}{\operatorname{Pr}}

\title{From POMDP Theory to Deep RL with Particle Filters}

\author{%
  Kevin Tan \\
  Wharton Statistics and Data Science\\
  University of Pennsylvania\\
  \texttt{kevtan@wharton.upenn.edu} \\
  \and
  Ziping Xu \\
  School of Data and Information Sciences\\
  University of North Carolina at Chapel Hill\\
  \texttt{zipingxu@unc.edu}
}

\date{}

\begin{document}

\maketitle

\begin{abstract}
    Model-based reinforcement learning in partially observable environments requires inferring hidden states, learning latent dynamics, and planning under the resulting belief. Practical algorithms often solve one aspect of the above pipeline while relying on approximations for the others that are often poorly understood, while existing POMDP theory typically analyzes computationally inefficient theoretical algorithms far from modern deep RL practice. To fill this gap, we propose a practical end-to-end algorithm that approximately solves the hidden state filtering problem via sequential Monte Carlo (SMC), updates a latent dynamics model online, and performs approximate online planning for $d$ steps before bootstrapping leaves with a state-value critic (a QMDP approximation). We analyze each approximation and prove an end-to-end regret bound decomposing the error into model estimation error, particle filter error, Monte Carlo rollout error, and a structural bias term induced by the QMDP approximation controllable by the planning depth that we further analyze in various special cases. The resulting guarantee makes explicit how statistical sample size, particle budget, rollout budget, and planning depth affect performance. We instantiate our algorithm in an MPC-based deep RL framework, demonstrating its practicality on a variety of environments. On a POMDP that satisfies our assumptions, each term of the bound decays at its predicted rate. Our results provide a practical and theoretically grounded route for bridging theory and practice within online decision-making in POMDPs. 
\end{abstract}

\section{Introduction}

Partially observable Markov decision processes (POMDPs) provide a standard framework for sequential decision making under latent state uncertainty \citep{kaelbling1998planning}. They are a natural model for problems in robotics, healthcare, recommendation, and other settings \citep{hauskrecht2000planning,pineau2003point} where the learner must act using only indirect and noisy observations of the underlying system. 
In principle, optimal control in a POMDP can be reduced to planning in the belief-state MDP, where the belief is the posterior distribution over latent states given the observation history \cite{cassandra1994acting,kaelbling1998planning}. This makes model-based RL especially natural in POMDPs: to act under a belief, the learner must already learn a model rich enough to perform filtering, and this same model can be used for planning. 

However, the belief state reduction does not remove the main computational and learning difficulties. An online model-based decision maker must simultaneously (i) learn the unknown model, (ii) maintain a posterior over latent states online, and (iii) plan over this continuous belief space. These three tasks--\textit{learning, inference, and planning}--interact in a way that makes exact online control in POMDPs intractable in both computation and sample complexity in general \cite{papadimitriou1987complexity} without special structure, motivating a large literature on \textit{approximate} methods for each component individually. 

Existing methods typically make one component of the above POMDP pipeline tractable while leaving the others either computationally expensive or theoretically uncontrolled. For instance, on the planning side, Monte Carlo tree search has led to scalable online POMDP solvers such as Partially Observable Monte Carlo Planning (POMCP), sparse tree search, and progressive widening methods \cite{silver2010monte,lim2021sparse,sunberg2018online}. However, these are primarily planning algorithms under a known generative model, and do not natively provide end-to-end learning guarantees for unknown latent dynamics.
A more aggressive approximation is QMDP, which replaces the belief-MDP continuation value by the fully observed MDP value averaged under the current belief \cite{littman1995learning}. QMDP is computationally efficient but can introduce substantial bias. Existing theory gives limited guidance on when this bias is small or how it can be mitigated. 
Regarding inference, particle filters, a.k.a. sequential Monte Carlo (SMC), provide a flexible approximation to Bayesian filtering in nonlinear non-Gaussian systems \cite{crisan2002survey,whiteley2013stability}, but SMC theory for POMDPs \cite{lim2021sparse,lim2023optimalityparticlepomdp} has not been fully integrated with approximate online planning and model learning. On learning, recent theoretical work has made significant progress on sample-efficient model estimation for POMDPs. \cite{liu2022partially,liu2023optimistic} develop Optimistic MLE as a generic model-based algorithm with polynomial sample complexity for a broad class of POMDPs. These results establish statistical feasibility under exact or abstract planning oracles. They do not specify how to perform approximate filtering or planning with the learned model in a computationally practical way.

A separate practical line of work uses recurrent policies or differentiable planning modules. Recurrent model-free deep RL methods, such as GRU or LSTM-based PPO and Q-learning variants, can perform well empirically under partial observability, but their hidden states are difficult to interpret as beliefs and are hard to analyze theoretically. QMDP-net takes a more structured approach by embedding approximate Bayesian filtering and QMDP planning into a differentiable architecture \cite{karkus2017qmdpnet}. This is closer in spirit to our goal: combine model learning, belief state inference, and planning in a way that resembles modern deep learning practice, while maintaining provable guarantees. However, the approximation error of learned variational or differentiable filtering is difficult to control. Our view is that using SMC as the filtering primitive yields a more principled bridge: it is compatible with learned latent dynamics and has well-developed stability and approximation theory \cite{crisan2002survey,whiteley2013stability}. 

In summary, existing work advances planning, inference, or learning separately, but no current method provides a unified pipeline that combines all three with both computational practicality and end-to-end theoretical guarantees. This leaves open the central question
\begin{center}
    \textit{Can we design a practical model-based deep RL pipeline for POMDPs
that still admits end-to-end theoretical guarantees?}
\end{center}

\subsection{Our Contributions}

This paper develops a practical and analyzable route from POMDP theory to model-based deep RL. We answer the central question by developing a model-based learning-inference-planning pipeline for POMDPs called \textbf{PF-MCTQ} that is compatible with deep RL practice. Our contributions are:
\begin{enumerate}
    \item \textbf{A practical inference-learning-planning pipeline.} We propose an online algorithm that learns a latent dynamics model, tracks the current belief with particles, and plans under this belief. Each is modular and can be instantiated with standard deep RL primitives.

    \item \textbf{A controlled alternative to full belief-tree planning.} Instead of treating the particle belief as the planning state, we plan particle-wise and average action-values of the fully observable MDP. This gives a practical middle ground between expensive belief-tree search and cheap but potentially biased QMDP. Our analysis shows that the bias decays geometrically after $d$ steps of belief-MDP lookahead. For cheaper state-level planners, we show that QMDP is exact for the belief-linear component of the value, with error only from an uncertainty-sensitive residual, clarifying when the approximation is small.

    \item \textbf{An end-to-end regret guarantee.}
    We prove a regret bound that decomposes performance loss into model estimation error, particle filtering error, Monte Carlo rollout error, and QMDP structural bias. The bound makes explicit how sample size, particle budget, rollout budget, and planning depth jointly control performance.

    \item \textbf{A bridge to deep RL practice.}
    Compared with theoretical POMDP learning algorithms that rely on exact POMDP solvers or posterior sampling over parameters, our approach uses MPC-style online planning, approximate filtering, and critic bootstrapping. These are standard primitives in model-based deep RL, while remaining explicit enough to support theoretical analysis. We instantiate our algorithm in an MPC-based deep RL implementation and evaluate it on various POMDP environments, demonstrating its practicality. On a POMDP satisfying our assumptions, each term of the bound decays at its predicted rate (Section~\ref{sec:validation}).
\end{enumerate}

Overall, the paper argues that particle filtering can serve as a practical bridge between POMDP theory and deep RL. It gives a belief representation that is explicit enough to analyze, flexible enough for nonlinear latent dynamics, and compatible with MPC-style planning and learned critics. This yields a route to online POMDP learning that is closer to modern model-based RL than exact belief-space planning, while providing a transparent theoretical account that lets users trace each source of approximation error and make informed design choices.

Section~\ref{sec:problem} introduces the online learning problem and the regret criterion in the belief-state MDP. Section~\ref{sec:algorithm} presents the PF-MCTQ algorithm, describing the MCTQ planning subroutine and the full online pipeline. Section~\ref{sec:analysis} develops the error analysis: we decompose the regret into four traceable terms, analyze the structural QMDP bias and its variance interpretation, bound each error source, and combine them into the end-to-end regret guarantee. We conclude with case studies on LQG, contextual MDPs, and smooth-value POMDPs that illustrate when the QMDP approximation is benign.

\vspace{-1ex}
\section{Problem Formulation}
\label{sec:problem}

\begin{figure}[t]
    \centering
    \includegraphics[width=\linewidth]{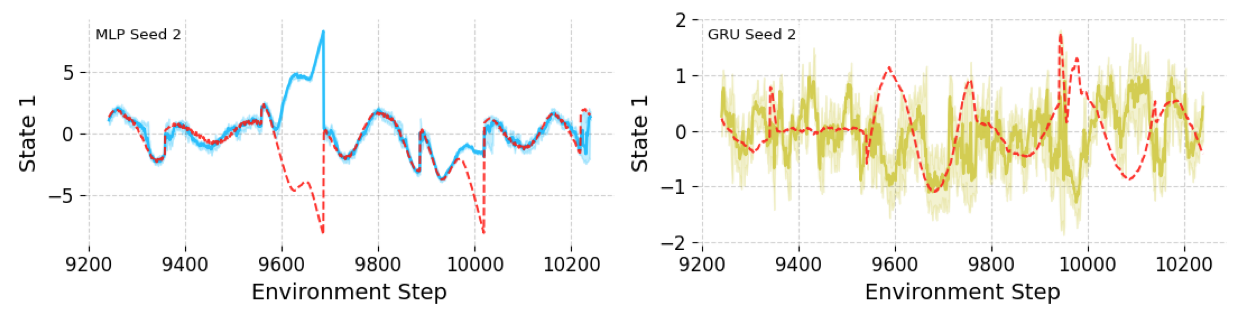}
    \vspace{-4ex}
    \caption{Particle beliefs for the unobserved velocity of a double integrator LQR over the last 1000 steps. Mean, 10th and 90th quantiles visualized. Beliefs track the state when the system is stable.}
    \label{fig:lqr-posterior-small}
    \vspace{-2ex}
\end{figure}

We study online reinforcement learning in a partially observable environment with unknown dynamics. Since the latent state is hidden, the natural state variable for control is the belief, the posterior distribution of the latent state given the history. We first introduce the POMDP model and its belief-state dynamics.
Figure~\ref{fig:lqr-posterior-small} shows a particle belief for a partially observed double integrator (Section~\ref{sec:experiments}).

\paragraph{Parameterized POMDP model.}
For each parameter $\theta \in \Theta$, let $\mathcal M_{\theta} = (\gS,\gA,\gO, P_{\theta}, Z_{\theta}, r,\gamma,b_0)$ be an infinite-horizon discounted POMDP, where $\gS$ is the latent state space, $\gA$ is the action space, and $\gO$ is the observation space. The controlled state transition kernel is $P_{\theta}(\cdot \mid s,a) \in \Delta(\gS)$, the observation kernel is $Z_{\theta}(\cdot \mid s') \in \Delta(\gO)$, the reward function is $r:\gS\times\gA\to\mathbb [0, R_{\max}]$, the discount factor is $\gamma\in(0,1)$, and $b_0\in\Delta(\gS)$ is the initial belief. The true environment is specified by an unknown true parameter $\theta^\star$, and we write $\gM^{\star} := \gM_{\theta^\star}$. Here, for a measurable space $\mathcal X$, $\Delta(\mathcal X)$ denotes the set of probability measures on $\mathcal{X}$, and beliefs $b\in\Delta(\gS)$ are written as measures $b(\mathrm ds)$.

\paragraph{Generative process and belief states.}
At each time $t=0,1,2,\ldots$, the latent state, observation, and reward evolve according to
\begin{equation}\label{eq:generative}
S_{t+1}\sim P_{\theta^\star}(\cdot \mid S_t,A_t),\qquad
O_{t+1}\sim Z_{\theta^\star}(\cdot \mid S_{t+1}),\qquad
R_t = r(S_t,A_t).
\end{equation}
The learner does not observe $S_t$. Instead, decisions are based on the entire history up to time $t$ denoted by $H_t := (O_{1:t},A_{0:t-1},R_{0:t-1})$. Here $O_t$ is available before choosing $A_t$. A possibly randomized policy is a sequence $\pi=\{\pi_t\}_{t\ge0}$ such that $\pi_t(\cdot \mid H_t)\in\Delta(\gA)$.

\textbf{Belief states and Bayes filtering.} Fix a candidate parameter $\theta$. The corresponding belief state at time $t$ is the posterior distribution $b_t^\theta(\cdot) := \mathbb P_{\theta}(S_t\in\cdot \mid H_t)$. When $\theta=\theta^\star$, we write $b_t := b_t^{\theta^\star}$ for the true belief. Given a belief $b\in\Delta(\gS)$ and action $a\in\gA$, define the one-step predictive state distribution and the corresponding predictive observation law respectively by
\begin{equation}\label{eq:predictive_belief}
\tilde b_{\theta}(\cdot \mid b,a)
:= \int_{\gS} P_{\theta}(\cdot \mid s,a)\, b(\mathrm ds),  \qquad p_{\theta}(\cdot \mid b,a)
:= \int_{\gS} Z_{\theta}(\cdot \mid s')\, \tilde b_{\theta}(\mathrm ds' \mid b,a).
\end{equation}
After observing $o\in\gO$, the posterior is updated through the Bayes filter (which we later approximate)
\begin{equation}\label{eq:bayes_update}
\tau_{\theta}(b,a,o)(\mathrm ds) := \frac{Z_{\theta}(o \mid s)\, \tilde b_{\theta}(\mathrm ds \mid b,a)} {\int_{\gS} Z_{\theta}(o \mid s')\, \tilde b_{\theta}(\mathrm ds' \mid b,a)}.
\end{equation}

\paragraph{Belief-MDP value functions.}
The reward associated with belief $b$ and action $a$ is $r(b,a) := \E_{s \sim b}[r(s,a)]$. We assume throughout that $\gA$ is finite and that $|r(s,a)|\le R_{\max}$ for all $(s,a)\in\gS\times\gA$.

For each $\theta$, the POMDP induces a fully observed MDP on $\Delta(\gS)$. For any bounded measurable function $V:\Delta(\gS)\to\mathbb R$, define the belief-MDP Bellman operator
\begin{equation}\label{eq:bellman_operator}
(\gT_{\theta}V)(b)
:=
\max_{a\in\gA}
\left\{
r(b,a)+\gamma \int_{\gO} V\big(\tau_{\theta}(b,a,o)\big)\, p_{\theta}(o \mid b,a)\, \mathrm do
\right\}.
\end{equation}
Let $V_{\theta}^{\star,B}$ denote the unique fixed point of $\gT_{\theta}$. The optimal belief-state action-value function is
\begin{equation}\label{eq:Qstar_belief}
Q_{\theta}^{\star,B}(b,a)
:=
r(b,a)+\gamma \int_{\gO} V_{\theta}^{\star,B}\big(\tau_{\theta}(b,a,o)\big)\, p_{\theta}(o \mid b,a)\, \mathrm do,
\text{ and }
V_{\theta}^{\star,B}(b)=\max_{a\in\gA}Q_{\theta}^{\star,B}(b,a).
\end{equation}
More generally, for a policy $\pi$ acting on histories, let $V_{\theta}^{\pi,B}(b)$ be its discounted value in the belief-MDP from initial belief $b$, and let $Q^{\star, S}_{\theta}$ and $V^{\star, S}_{\theta}$ denote the value functions of the fully observable MDP.

\vspace{-1ex}
\section{The PF-MCTQ Algorithm}
\label{sec:algorithm}

We present PF-MCTQ, a model-based online learning algorithm that integrates model learning, belief inference, and planning into a modular pipeline. We divide time into doubling epochs $E_k=[2^{k-1},2^k)$, $k=1,\ldots,\lceil\log_2 T\rceil$, using the standard doubling trick~\citep{besson2018doubling}. At the start of each epoch, the algorithm (i) updates the MLE $\bar\theta_k$ using all past data and freezes it for the epoch, (ii) maintains a particle approximation $b_t^J$ of the current belief via sequential predict-reweight-resample, and (iii) plans under $\bar{\theta}_k$ and $b_t^J$ using finite-depth Monte-Carlo belief-tree search with QMDP leaf evaluation. The doubling-epoch structure keeps the model from updating too often.

\paragraph{Learning oracle.} Let $\gL_t(\theta)$ be the log-likelihood of the observed history $H_t$ under model $\theta$. At the start of epoch $E_k$, \textsc{ModelUpdate} returns a $\beta$-approximate maximizer on all data up to $T_k$:
\begin{equation}\label{eq:beta_mle}
\bar\theta_k\in\bigl\{\theta\in\Theta:\gL_{T_k}(\theta)\ge \max_{\theta'\in\Theta}\gL_{T_k}(\theta')-\beta\bigr\},
\end{equation}
for a tolerance $\beta\ge 0$. The model $\bar\theta_k$ is frozen throughout epoch $E_k$. When $\beta=0$ this is exact MLE, but we accommodate a $\beta$-suboptimal solver. In practice, many recent gradient-based methods for the MLE in partially observed Markov process (POMP) models apply here \citep{naesseth2018variationalsequentialmontecarlo, tan2024acceleratedinferencepartiallyobserved, chen2025self, ionides15, corenflos21}.

\paragraph{Filtering update.} After playing $a_t$ and observing $o_{t+1}$, particles are propagated and reweighted under the current epoch's model $\bar\theta_{k(t)}$:
\begin{equation}\label{eq:pf_predict_weight}
\hat s^{(t+1)}_j \sim P_{\bar\theta_{k(t)}}(\cdot \mid s^{(t)}_j,a_t),
\qquad
w^{(t+1)}_j \propto Z_{\bar\theta_{k(t)}}(o_{t+1}\mid \hat s^{(t+1)}_j),
\end{equation}
followed by multinomial resampling
$s^{(t+1)}_j \sim \mathrm{Categorical}(w^{(t+1)}_1,\ldots,w^{(t+1)}_J)$ for $j=1,\ldots,J$.
This is the bootstrap sampling-importance-resampling (SIR) filter; any SMC scheme satisfying the $L^p$ error bound of Appendix~\ref{app:pf_proof} can be substituted.

\begin{algorithm}[t]
\caption{\textsc{Pf-Mctq}: Particle filtering with Monte Carlo tree QMDP planning. \textsc{Mctq} (Algorithm~\ref{alg:mctq}) takes the particle belief $b_t^J$ and returns an estimate $\widehat Q_t(b_t^J,a)$ of $Q_{\bar\theta_k,d}(b_t^J,a)$ for every root action $a\in\gA$.}
\label{alg:pf_mcts}
\begin{algorithmic}[1]
\Require particles $J$; depth $d$; rollouts $N$; horizon $T$; tolerance $\beta$; model class $\Theta$; exploration parameter $\eta>0$.
\State Sample $s^{(1)}_1,\ldots,s^{(1)}_J \sim b_0$.
\State Set epochs $E_k=[2^{k-1},\,2^k)$ for $k=1,\ldots,K:=\lceil\log_2 T\rceil$, with $\varepsilon_k:=\eta/\sqrt{|E_k|}$.
\For{epoch $k=1,2,\ldots,K$}
  \State $\bar\theta_k \gets \textsc{ModelUpdate}(H_{T_k})$ via~\eqref{eq:beta_mle}; compute $\widehat Q^{\mathrm{MDP}}_{\bar\theta_k}\approx Q_{\bar\theta_k}^{\star,S}$.
  \For{$t\in E_k$}
    \State $\{\widehat Q_t(b_t^J,a)\}_{a\in\gA}\gets \textsc{Mctq}\bigl(\bar\theta_k,\,b_t^J,\,d,\,N,\,\widehat Q^{\mathrm{MDP}}_{\bar\theta_k}\bigr)$ \Comment{Algorithm~\ref{alg:mctq}}
    \State Play $a_t = \arg\max_a \widehat Q_t(b_t^J,a)$ w.p.\ $1-\varepsilon_k$; else $a_t \sim \text{Unif}(\gA)$.
    \State Observe $(o_{t+1},r_t)$.
    \State Propagate, reweight, and resample particles via~\eqref{eq:pf_predict_weight} under $\bar\theta_k$.
  \EndFor
\EndFor
\end{algorithmic}
\vspace{-1ex}
\end{algorithm}

\paragraph{Approximate planning.} For the current belief, MCTQ runs Monte Carlo tree search on the belief tree for the first $d$ steps and applies the QMDP approximation at the leaves. This gives a middle ground between expensive belief-tree search and the cheap but potentially biased QMDP approximation.

For a belief $b\in\Delta(\gS)$, define the QMDP leaf value and the depth-$d$ belief tree value as 
\begin{equation}
\bar V_{\theta}^{\star,S}(b)
:=
\max_{a\in\gA}\int_{\gS} Q_{\theta}^{\star,S}(s,a)\, b(\mathrm ds), \quad \text{ and } \quad \bar V_{\theta,d}^{\star,S}:=\gT_\theta^d\, \bar V_{\theta}^{\star,S}, \qquad d\ge 0,
\label{eq:U0_def}
\end{equation}
where $Q_{\theta}^{\star,S}$ is the optimal action-value function of the fully observable MDP induced by $\theta$, and $\gT_\theta$ is the Bellman operator of the belief-MDP from~\eqref{eq:bellman_operator}. The corresponding action-value function is
\begin{equation}
Q_{\theta,d}(b,a) := r(b,a)+\gamma \int_{\gO} \bar V_{\theta,d-1}^{\star,S}\bigl(\tau_\theta(b,a,o)\bigr)\, p_\theta(o\mid b,a)\, \mathrm do, \qquad d\ge 1.
\label{eq:Qd_def}
\end{equation}
When $d=0$, $\bar V_{\theta,0}^{\star,S}(b)=\max_a\int Q_\theta^{\star,S}(s,a)\,b(\mathrm ds)$ and MCTQ reduces exactly to QMDP. As $d\to\infty$, $\bar V_{\theta,d}^{\star,S}\to V_\theta^{\star,B}$ (see Proposition~\ref{prop:structural_error} in Appendix~\ref{app:structural_proofs} for the proof).

\paragraph{Particle belief and MCTS approximation.}
In practice, the current belief is represented by a particle approximation $b_t^J=\frac{1}{J}\sum_{j=1}^J \delta_{s_t^{(j)}}$.
Starting from $b^J_t$, MCTQ builds a search tree of depth $d$ with $N$ Monte Carlo simulations per root action, inspired by the POMCP framework~\cite{silver2010monte}.
Each simulation samples a particle from $b_t^J$, descends to depth $d$ using uniform random action selection at internal nodes, expands nodes as it passes through them, and records its state in the particle set $B(v)$ of every node $v$ it visits.
Each depth-$d$ leaf is evaluated by the QMDP value of its pooled particles, $\max_{a'}|B(v)|^{-1}\sum_{s\in B(v)}Q_\theta^{\star,S}(s,a')$, and a bottom-up dynamic program then computes $\widehat Q_t(b_t^J,a)$ for each root action.
We use uniform selection for the analysis and do not claim it is the best rule in practice. The analysis only needs the internal rule to ignore the current tree's statistics and to give every action probability at least $\pi_{\min}>0$, so a frozen learned proposal also qualifies, but UCB1 does not (Appendix~\ref{app:mctq}).

\paragraph{Observation discretization.}
With continuous $\gO$, every simulated observation is distinct, so tree nodes below the root are never revisited and a depth-$d$ tree is not well defined.
MCTQ therefore indexes children by a binned observation $\phi(o)$, where $\phi:\gO\to\bar\gO$ has $M:=|\bar\gO|$ bins of diameter $\mathrm{diam}(\phi)$.
For finite $\gO$ we take $\phi=\mathrm{id}$ and $M=|\gO|$, and for $d=0$ no binning is needed.
The resulting bias and its trade-off against $N$ are made explicit in Theorem~\ref{thm:final_regret_bound}.

\vspace{-1ex}
\section{Error Analysis and Regret Guarantee}
\label{sec:analysis}

We measure performance by the expected cumulative regret defined as:
\begin{equation}\label{eq:regret_def_sec4}
\mathrm{Reg}_T := \E\!\left[\sum_{t=1}^T \text{regret}_t \right] :=\E\!\left[\sum_{t=1}^T\bigl(V_{\theta^\star}^{\star,B}(b_t)-V_{\theta^\star}^{\pi_t,B}(b_t)\bigr)\right],
\end{equation}
where $\pi_t$ is the policy deployed from round $t$, and the expectation is over the trajectory and algorithmic randomness. A key contribution is an end-to-end regret bound for PF-MCTQ, enabled by the following novel regret decomposition. Let $k(t)$ be the epoch containing round $t$, with frozen model $\bar\theta_{k(t)}$:
\begin{align}
\text{regret}_t
=
\underbrace{V_{\theta^\star}^{\star,B}(b_t)-V_{\bar\theta_{k(t)}}^{\star,B}(b_t)}_{\text{(A) optimal-value model error}}
+
\underbrace{V_{\bar\theta_{k(t)}}^{\star,B}(b_t)-V_{\bar\theta_{k(t)}}^{\pi_t,B}(b_t)}_{\text{(B) planning + state-id error}}
+
\underbrace{V_{\bar\theta_{k(t)}}^{\pi_t,B}(b_t)-V_{\theta^\star}^{\pi_t,B}(b_t)}_{\text{(C) fixed-policy model error}}, \label{eq:main_decomp_round_t}
\end{align}
Terms (A) and (C) capture model estimation error: the gap between $\bar\theta_{k(t)}$ and $\theta^\star$. Term (B) captures planning approximation under the learned model, which is further decomposed into three sources:
\begin{equation}
 \text{(B)} \;\le\;
\underbrace{\bigl|V_{\bar\theta_{k(t)}}^{\star,B}(b_t)-V_{\bar\theta_{k(t)}}^{\star,B}(b_t^J)\bigr|}_{\text{(B1) belief approx.}}
+
\underbrace{V_{\bar\theta_{k(t)}}^{\star,B}(b_t^J)-V_{\bar\theta_{k(t)}}^{\pi_t,B}(b_t^J)}_{\text{(B2) planning suboptimality}}
+
\underbrace{\bigl|V_{\bar\theta_{k(t)}}^{\pi_t,B}(b_t^J)-V_{\bar \theta_{k(t)}}^{\pi_t,B}(b_t)\bigr|}_{\text{(B1$'$) belief approx.}}. \label{eq:B_decomp}
\end{equation}
Terms (B1) and (B1$'$) evaluate values at the particle belief $b_t^J$ instead of the posterior $b_t$, each bounded by $O(1/J^{1/(2\kappa_{\mathrm{pf}})})$ via particle-filter concentration.
The planning gap (B2) splits into Monte Carlo error from MCTS rollouts ($O(1/\sqrt{N})$) and structural error from QMDP bootstrapping at depth $d$.

We distinguish the belief-MDP optimum $V_\theta^{\star,B}$, the deterministic \emph{planning target} $\bar V_{\theta,d}^{\star,S}=\gT_\theta^d\bar V_\theta^{\star,S}$ of~\eqref{eq:U0_def}, and the randomized MCTQ estimate $\widehat Q_t(b_t^J,a)$ from pooled particle sets.
The target differs from the optimum by the structural error, which decays geometrically in $d$ and involves no particles. The estimate differs from the target by the Monte Carlo error, which grows with $d$, so for fixed $N$ the bound is U-shaped in $d$.
Each component enters only through its guarantee (a $\beta$-approximate MLE, an SMC filter with an $L^p$ bound, a planner with a Monte Carlo bound), so the decomposition extends to, e.g., POMCP-style planners or variational filters.

\subsection{Assumptions and main result}
\label{sec:final_regret}

Modern SMC analyses \cite{delmoral2004feynmankac, chopin20, karjalainen2026forgettingparticlefilters} impose rather unavoidable but stringent conditions on the transition kernel and observation density. We inherit these in the form of the following Doeblin and bounded observation density assumption (Assumption \ref{aspt:particle_error}) on the latent dynamics. This choice aligns with our aim to provide an end-to-end regret bound that decomposes error across the learning–inference–planning pipeline, rather than to develop optimal exploration methods for POMDP model estimation, as Assumption \ref{aspt:particle_error} ensures that simple exploration strategies like $\varepsilon$-greedy have sufficient coverage. At the same time, the Doeblin condition resembles commonplace strong mixing assumptions within the RL literature \citep{gaur2024closinggapachievingglobal, wu2022finitetimeanalysistimescale, xu2021improvingsamplecomplexitybounds, kausik2023mixtures} and the $\alpha$-revealing condition commonplace in the POMDP literature plays a similar role to the bounded observation density condition \citep{liu2022partially,liu2023optimistic}.

\begin{aspt}[Uniform mixing and bounded weights]\label{aspt:particle_error} \text{ }
\begin{enumerate}
\item \emph{Uniform Doeblin condition.} There exist $\varepsilon_{\mathrm{mix}}>0$ and a probability measure $\nu\in\Delta(\gS)$ such that $P_\theta(\cdot\mid s,a)\ge \varepsilon_{\mathrm{mix}}\,\nu(\cdot)$ for all $(\theta,s,a)\in\Theta\times\gS\times\gA$.
\item \emph{Bounded observation density.} There exist $0<g_{\min}\le g_{\max}<\infty$ such that $g_{\min}\le Z_\theta(o\mid s)\le g_{\max}$ for all $(\theta,s,o)\in\Theta\times\gS\times\gO$.
\end{enumerate}
\end{aspt}

Let $\gN_\Theta(\varepsilon)$ be the $\varepsilon$-covering number of $\Theta$, the minimum number of $\ell_1$-balls of radius $\varepsilon$ (in trajectory likelihood) needed to cover $\Theta$. Define the effective model complexity $\beta_T:=\beta+\log\bigl[\gN_\Theta(1/T)\,T/\delta\bigr]$, where $\beta$ is the MLE tolerance from~\eqref{eq:beta_mle} and $\delta\in(0,1)$ is a confidence parameter.

\begin{thm}[End-to-end regret of \textsc{Pf-Mctq}]\label{thm:final_regret_bound}
Suppose Assumption~\ref{aspt:particle_error} holds. Algorithm~\ref{alg:pf_mcts}, run with exploration parameter $\eta\asymp T^{1/4}(\beta_T/\log T)^{1/4}$ and observation binning $\phi$ with $M$ bins of diameter $\mathrm{diam}(\phi)=\omega_Z^{-1}\bigl((1-\gamma)g_{\min}/(2\gamma g_{\max}V_{\max}\sqrt N)\bigr)$ ($\phi=\mathrm{id}$ and $M=|\gO|$ when $\gO$ is finite), and, if $d\ge2$, $N$ above the threshold of Remark~\ref{rmk:mc_constants}, and $\kappa_{\mathrm{pf}}:=\max \left(1, {\log \left(2 g_{\max }^2 / g_{\min }^2\right)}/{\log (1 / \gamma)}\right)$ satisfies
\begin{align}
\mathrm{Reg}_T
\;\le\;
\underbrace{\widetilde O\!\left(\frac{C_{\mathrm{est}}\, R_{\max}}{(1-\gamma)^{3/2}}\,T^{3/4}\right)}_{\text{model + exploration + filter}}
+
\underbrace{\frac{C_{\mathrm{pf}}\,R_{\max}}{(1-\gamma)^2}\cdot\frac{T}{J^{1/(2\kappa_{\mathrm{pf}})}}}_{\text{particle approx.}}
 +
\underbrace{\frac{C_{\mathrm{mc}}\,R_{\max}}{(1-\gamma)^2}\cdot\frac{T}{\sqrt{N}}}_{\text{Monte Carlo}}
+
\underbrace{\frac{2\gamma^{d+1}}{(1-\gamma)^2}\sum_{t=1}^T\Xi_{\bar{\theta}_{k(t)}}}_{\text{QMDP structural bias}}, \nonumber
\end{align}
\end{thm}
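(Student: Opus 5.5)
The plan is to start from the per-round decomposition \eqref{eq:main_decomp_round_t}--\eqref{eq:B_decomp}, bound each term separately, and then sum over $t$ using the doubling-epoch structure.

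\textbf{Model terms (A) and (C).} I would control these through a standard simulation-lemma argument in the belief MDP. For any fixed policy, the gap $|V^{\pi,B}_{\theta}(b)-V^{\pi,B}_{\theta^\star}(b)|$ is at most $\frac{R_{\max}}{1-\gamma}$ times the total-variation distance between the trajectory laws under $\theta$ and $\theta^\star$, discounted and summed over the horizon. The same bound applies to the optimal values, since $|\max f-\max g|\le\max|f-g|$. The key estimation input is the usual MLE guarantee, in the style of \citep{liu2022partially}: with probability $1-\delta$, the cumulative squared Hellinger distance between the predictive observation laws under $\bar\theta_k$ and $\theta^\star$ along the data collected so far is $O(\beta_T)$.

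To convert this in-sample guarantee into an error at the on-policy beliefs $b_t$ in epoch $E_k$, I would argue as follows.
\begin{itemize}
\item The $\varepsilon$-greedy exploration together with the Doeblin condition of Assumption~\ref{aspt:particle_error} gives coverage: data-collection and deployment distributions have density ratio bounded by a factor polynomial in $1/\varepsilon_k$.
\item The Doeblin condition also makes filtering errors forget geometrically, so the multi-step error collapses to a sum of one-step prediction errors with geometric weights. This produces the $(1-\gamma)^{-3/2}$ factor.
\end{itemize}
This yields a per-round bound of order $\sqrt{\beta_T/(\varepsilon_k|E_{k-1}|)}$ for the model error. Exploration adds $\varepsilon_k R_{\max}/(1-\gamma)$ regret per round. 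Summing over an epoch gives $|E_k|\bigl(\varepsilon_k+\sqrt{\beta_T/(\varepsilon_k|E_k|)}\bigr)$. With the stated choice of $\eta$, which balances these two terms, and after summing over the $\log T$ epochs, the total is $\widetilde O(T^{3/4})$.

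\textbf{Terms (B1) and (B1$'$).} Here I would use the particle-filter $L^p$ bound of Appendix~\ref{app:pf_proof}. Under the Doeblin and bounded-weight conditions, the filter is uniformly stable in time. The hard part is that $V^{\star,B}$ need not be Lipschitz in total variation uniformly over beliefs, because the Bayes map $\tau$ amplifies errors by a factor of up to $g_{\max}^2/g_{\min}^2$ per step. The approach is to truncate the value recursion at depth $h$:
\begin{itemize}
\item errors within the first $h$ steps grow like $(2g_{\max}^2/g_{\min}^2)^h J^{-1/2}$;
\item the tail beyond depth $h$ contributes $\gamma^h V_{\max}$.
\end{itemize}
Optimizing over $h$ gives exactly the rate $J^{-1/(2\kappa_{\mathrm{pf}})}$ with the stated $\kappa_{\mathrm{pf}}$. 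The extra factor of $(1-\gamma)^{-1}$ comes from the policy-value version in (B1$'$).

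\textbf{Term (B2).} I would first apply the standard greedy-policy lemma, which bounds the suboptimality of the greedy action by $\frac{2}{1-\gamma}\sup_a|\widehat Q_t-Q^{\star,B}_{\bar\theta}|$. Then I would split this error into two parts:
\begin{equation*}
|\widehat Q_t-Q_{\bar\theta,d}|+|Q_{\bar\theta,d}-Q^{\star,B}_{\bar\theta}|.
\end{equation*}
\begin{itemize}
\item \emph{Structural part.} Proposition~\ref{prop:structural_error} gives $\gamma^{d}\|\gT\bar V^{\star,S}-\bar V^{\star,S}\|$-type contraction. This gives the bias term $\gamma^{d+1}\Xi_{\bar\theta}/(1-\gamma)$, with $\Xi$ defined as the QMDP one-step defect.
\item \emph{Monte Carlo part.} I would invoke the MCTQ concentration result of Appendix~\ref{app:mctq}. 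Uniform action selection makes each node receive roughly $N(\pi_{\min}/M)^{\mathrm{depth}}$ samples, and Hoeffding's inequality plus a union bound over nodes gives $O(1/\sqrt N)$ once $N$ exceeds the threshold of Remark~\ref{rmk:mc_constants}.
\item \emph{Binning bias.} Binning perturbs the Bayes update by $\omega_Z(\mathrm{diam}\,\phi)g_{\max}/g_{\min}$. The prescribed diameter makes this term equal $1/\sqrt N$, so it is absorbed into the Monte Carlo term.
\end{itemize}

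Finally, summing all terms over $t$ gives the bound in Theorem~\ref{thm:final_regret_bound}. I expect the main obstacle to be the coverage/change-of-measure step in (A) and (C): transferring the in-sample Hellinger guarantee to the beliefs actually visited, while keeping the dependence on $T$ at $T^{3/4}$.
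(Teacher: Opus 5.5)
The genuine gap is in (B1)/(B1$'$). The particles are propagated and reweighted under the epoch models $\bar\theta_{k(s)}$, not under $\theta^\star$. So the time-uniform $L^p$ bound of Appendix~\ref{app:pf_proof} only controls $b_t^J$ against the exact filter $b_t^{\bar\theta_{1:t}}$ run under the same learned model sequence. The distance from that filter to the true posterior $b_t=b_t^{\theta^\star}$ does not shrink as $J\to\infty$. Your per-round bound $O(J^{-1/(2\kappa_{\mathrm{pf}})})$ for (B1) therefore fails whenever $\bar\theta_{k(t)}\neq\theta^\star$. This filter mismatch is exactly the ``filter'' component of the theorem's first term, and nothing in your proposal bounds it.

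The paper splits $d_Q(b_t,b_t^J)\le d_Q(b_t,b_t^{\bar\theta_{1:t}})+d_Q(b_t^{\bar\theta_{1:t}},b_t^J)$ (Proposition~\ref{prop:particle_error}). The first piece compares two exact posteriors. For it, the paper combines the TV-Lipschitz bound of Lemma~\ref{lem:Q_lipschitz_tv} with Dobrushin contraction of the misspecified predict--update kernel, with coefficient $\rho=1-\varepsilon_{\mathrm{mix}}g_{\min}/g_{\max}$. This gives $2V_{\max}C_{\mathrm{filt}}\sum_{s\le t}\rho^{t-s}d_{\mathrm{model}}^{a_s}(\bar\theta_s,\theta^\star)$. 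After summing over $t$ this is controlled by $(1-\rho)^{-1}\sum_s d_{\mathrm{model}}^{a_s}$. That sum is evaluated along the executed actions, so it needs no coverage transfer: the epoch-level MLE guarantee and Cauchy--Schwarz bound it at $O(\sqrt{T\beta_T})$ (\eqref{eq:dmodel_bound}). Relatedly, the test-function/truncation argument is needed for the second piece not because $V^{\star,B}$ fails to be Lipschitz in total variation (it is $2V_{\max}$-Lipschitz). It is needed because the total-variation distance between the atomic $b_t^J$ and a diffuse posterior does not vanish as $J$ grows.

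Two further steps need repair.

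In (B2), the deployed policy acts greedily on $\widehat Q_u(b_u^J,\cdot)$ at every $u\ge t$, so particle discrepancy enters at every future step. The Monte Carlo errors are also random. A one-shot greedy lemma with $\sup_a|\widehat Q_t-Q^{\star,B}_{\bar\theta}|$ at round $t$ captures neither effect. The paper instead uses the performance-difference bound $\E[\sum_{u\ge t}\gamma^{u-t}\delta_u\mid H_t]$ with $\delta_u\le 2d_Q(b_u,b_u^J)+2\epsilon_{\mathrm{mc}}(u,d)+2\epsilon_{\mathrm{struct}}(u,d)$ (Lemmas~\ref{lem:perf_decomp_delta} and~\ref{lem:delta_bound_main}). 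Your Hoeffding argument for $\epsilon_{\mathrm{mc}}$ can replace the paper's McDiarmid/Efron--Stein argument on the i.i.d.\ seeds, but only if applied node by node, conditionally on the visits. Path decoupling (Lemma~\ref{lem:path_decoupling}) justifies that conditioning; the root returns themselves are coupled through the pooled leaf particles and the DP maximum.

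In (A), the trajectory-level density ratio between $\pi$ and $\pi_\varepsilon$ can be $(|\gA|/\varepsilon)^H$, not polynomial in $1/\varepsilon$. The paper gets a factor linear in $H|\gA|/\varepsilon$ only through per-step telescoping (Lemma~\ref{lem:coverage}). That factor multiplies the unsquared $\ell_1$ error, giving a per-round error of order $K_{\mathrm{cov}}\sqrt{\beta_T}/\sqrt{\sum_{j<k}\varepsilon_j^2|E_j|}$, and this is what the stated $\eta$ balances against exploration. Under your rate $\sqrt{\beta_T/(\varepsilon_k|E_k|)}$, the stated $\eta$ does not balance the two terms; balancing would give $T^{2/3}$ with a different $\eta$. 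So your sketch does not support that rate. Your $T^{3/4}$ conclusion survives only because the exploration cost at the stated $\eta$ is already of that order. Term (C) is on-policy and needs no coverage.
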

Here $\widetilde O$ hides factors polylogarithmic in $T$, $\omega_Z$ is the modulus of continuity of the observation likelihood ratio (Appendix~\ref{app:planning_state_proofs}), $M=|\bar\gO|$ is the number of observation bins, and $\Xi_{\bar{\theta}_{k(t)}}$ is the QMDP structural bias of~\eqref{eq:Xi_def}.
The coefficients are
\begin{align*}
    C_{\mathrm{est}} &= \mathrm{Poly}(|\gA|, g_{\max}/g_{\min}, \epsilon_\mathrm{mix}^{-1}, \log(1/\gamma), \log\gN_\Theta(1/T), \beta), \quad C_{\mathrm{mc}} = \textstyle{O(d\gamma^d  |\gA|^{(d+3)/2}M^{d/2})},\\
    C_{\mathrm{pf}} &=\widetilde{O}\bigl(V_{\max}^2\,g_{\max}/(\varepsilon_{\mathrm{mix}}^3\,g_{\min})\cdot(\log J)^{3/2}\sqrt{\log(\gN_\Theta(1/T)\,|\gA|)}\bigr) \;\; \text{ (See ~\eqref{eq:Cpf_explicit})}.
\end{align*}

\textbf{Reading the bound.} Theorem \ref{thm:final_regret_bound} bounds cumulative regret with four major terms. The first term captures model estimation error, exploration cost, and filter mismatch between the true posterior $b_t^{\theta^{\star}}$ and the particle belief $b_t^{\bar\theta_{1:K}}$. The $\widetilde O(T^{3/4})$ rate arises from the coverage transfer needed to bound the optimal-value model error (term~(A) in~\eqref{eq:main_decomp_round_t}) via $\varepsilon$-greedy exploration. Terms 2, 3, and 4 capture the total approximate planning errors that may scale linearly in $T$. They can be diminished by increasing the number of particles $J$, number of Monte Carlo rollouts $N$, and the planning depth $d$.
The particle error decays as $J^{-1/(2\kappa_{\mathrm{pf}})}$ at cost $O(J)$, the Monte Carlo error as $N^{-1/2}$ at cost $O(N)$, and the structural error as $\gamma^{d}$ at a tree size of up to $(|\gA|M)^{d}$ nodes.
The bound therefore implies a \textbf{sample-vs.-computational efficiency trade-off} through the marginal return of each budget, but a concrete allocation requires knowledge of $\kappa_{\mathrm{pf}}$ and $\Xi_\theta$ and is not claimed to be optimal. Section~\ref{sec:validation} gives an empirical budgeting rule on an instance where the assumptions hold.
With continuous observations, the binning keeps the discretization bias at the Monte Carlo order $O(V_{\max}/\sqrt N)$, but $M$ grows as $N^{d_o/2}$ for $\gO\subseteq\R^{d_o}$, so $C_{\mathrm{mc}}$ grows exponentially in $d\cdot d_o$. Balancing bin width against $N$ instead gives the rate $\widetilde O(N^{-1/(d\,d_o+2)})$ (Remark~\ref{rmk:disc_resolution}).
This cost is absent only at $d=0$, the regime used in our deep RL experiments.

\textbf{Technical challenges.} Bounding each source in~\eqref{eq:B_decomp} under online learning raises three difficulties. First, controlling (B1) requires working with a test-function class defined directly over $\gS$, rather than only the parametric family $\{Q^{\star,B}_\theta\}_\theta$. This is because the belief-space value functions may lack the regularity needed for uniform particle-filter error control. In fact, the planning-induced class over multiple steps is effectively uncountable over the possible model parameters, so we control it through truncation. The required truncation level depends on the desired particle approximation accuracy: as the particle error target decreases, the tail must be made smaller, which enlarges the truncated function class. This creates a tradeoff between the complexity of the truncated class and the particle ensemble size, leading to a slower ($1/J^{1/2\kappa_{\mathrm{pf}}}$), essentially nonparametric, rate.

Second, the filtering analysis must handle data generated online by a changing policy. Classical particle-filter stability assumes a fixed generative model, whereas here the filter receives actions from a non-stationary, improving policy. We extend time-uniform $L^p$ bounds~\citep{delmoral2004feynmankac} to this setting by exploiting the Doeblin condition uniformly over actions. Third, Doeblin coverage applies to states but not actions, so we ensure action coverage via $\varepsilon$-greedy exploration. This lets us transfer MLE concentration across policies, controlling terms (A) and (C) and the Monte Carlo error in (B2).

\begin{rmk}[Relation to POMDP hardness]
POMDP sample complexity may in the worst case scale exponentially in the effective horizon $1/(1-\gamma)$.
Our bound is polynomial in $1/(1-\gamma)$ for two reasons.
First, under the Doeblin condition the Bayes filter forgets its initialization at a geometric rate $\rho=1-\varepsilon_{\mathrm{mix}}g_{\min}/g_{\max}<1$ uniformly over actions, so errors add up over time rather than compound.
Second, bounded observation densities play the role of the $\alpha$-revealing condition~\citep{liu2022partially}: they bound importance weights, so data transfer across policies without an $(|\gA|/\varepsilon)^{H}$ blow-up, and particle weights stay non-degenerate.
This does not circumvent the lower bound: $\varepsilon_{\mathrm{mix}}^{-1}$ and $g_{\max}/g_{\min}$ can be exponential in the effective horizon for hard instances, and $C_{\mathrm{est}}$ and $C_{\mathrm{pf}}$ depend polynomially on them.
Appendix~\ref{app:stress} illustrates failures when $g_{\min}=0$ and when $\Xi_\theta$ is large.
\end{rmk}

\begin{rmk}[Bypassing exploration with offline coverage]
The $\widetilde O(T^{3/4})$ rate is driven by the coverage transfer for term~(A), which requires $\varepsilon$-greedy exploration to ensure model accuracy at the optimal policy. If instead the agent has access to $N_{\mathrm{off}}$ offline trajectories from a behavior policy $\pi_b$ with single-policy concentrability $c_{\mathrm{off}}^\star$ (the worst-case ratio of trajectory $\ell_1$ distances between $\pi^\star$ and $\pi_b$), then greedy exploitation ($\varepsilon_t=0$) achieves regret $\widetilde O(\sqrt{T}/(1-\gamma)) + O(c_{\mathrm{off}}^\star T/(\sqrt{N_{\mathrm{off}}}(1-\gamma)))$ plus computational terms. When $N_{\mathrm{off}} \ge (c_{\mathrm{off}}^\star)^2 T$, the total regret is $\widetilde O(\sqrt{T})$ without any additional online exploration. The formal statement and proof are in Appendix~\ref{app:offline_coverage}.
\end{rmk}

\subsection{Understanding the structural bias}
\label{sec:understanding_structural_bias}

It remains to discuss the QMDP approximation error, which is governed by the \emph{Jensen gap}
\begin{equation}
\Xi_\theta
:=
\sup_{b\in\Delta(\gS)}
\left(
\int_{\gS}V_\theta^{\star,S}(s)\,b(\mathrm ds)
-
\max_{a\in\gA}\int_{\gS}Q_\theta^{\star,S}(s,a)\,b(\mathrm ds)
\right),
\label{eq:Xi_def}
\end{equation}
which measures the value lost by choosing a single action for all states (QMDP planning) rather than the state-dependent optimal action. We show (Proposition~\ref{prop:structural_error} in Appendix~\ref{app:structural_proofs}) that the depth-$d$ MCTQ structural error satisfies $\|\bar V_{\theta,d}^{\star,S}-V_{\theta}^{\star,B}\|_\infty\le\frac{\gamma^{d+1}}{1-\gamma}\Xi_\theta$, decaying geometrically with depth $d$.

The quantity $\Xi_\theta$ has an interpretation via the posterior \emph{variance} of value-relevant features. Let $g:\gS\to\R^k$ be a measurable embedding, and write
$m_g(b):=\E_{S\sim b}[g(S)]$ and $\Sigma_g(b):=\Cov_{S\sim b}(g(S))$.

\begin{aspt}[Value-relevant embedding]\label{aspt:value_embedding_main}
There exist twice continuously differentiable maps $F_\theta^a:\R^k\to\R$ for each $a\in\gA$ such that
$Q_\theta^{\star,S}(s,a)=F_\theta^a(g(s))$ for all $(s,a)\in\gS\times\gA$, with curvature bounds, where $F_\theta(x):=\max_a F_\theta^a(x)$ and $V_\theta^{\star,S}(s)=F_\theta(g(s))$, of
\[
\sup_{a\in\gA}\sup_{x\in\mathrm{conv}(g(\gS))}\bigl\|\nabla^2 F_\theta^a(x)\bigr\|_{\mathrm{op}}\le L_\theta
\qquad\text{and}\qquad
\sup_{x\in\mathrm{conv}(g(\gS))}\bigl\|\nabla^2 F_\theta(x)\bigr\|_{\mathrm{op}}\le L_\theta
,\]
where $\|\cdot\|_{\mathrm{op}}$ is the spectral norm.
\end{aspt}
In practice, the condition holds when either (i) a single action is optimal throughout $\mathrm{conv}(g(\gS))$, or (ii) $F_\theta$ is a single smooth function of $s$ (as in linear-quadratic-Gaussian (LQG) systems, where it is a quadratic form and both conditions hold with the same $L_\theta$).

\begin{prop}[Curvature bound on $\Xi_\theta$]\label{prop:variance_qmdp_bias_main}
Under Assumption~\ref{aspt:value_embedding_main}, for every belief $b\in\Delta(\gS)$,
\begin{equation}
\Xi_\theta(b)
:=\widetilde V_\theta^{\star}(b)-\bar V_\theta^{\star,S}(b) 
\;\le\;
L_\theta\,\tr\bigl(\Sigma_g(b)\bigr), 
\label{eq:variance_xi_bound_main}
\end{equation}
where $\widetilde V_\theta^{\star}(b):=\int_{\gS}V_\theta^{\star,S}(s)\,b(\mathrm ds)$ is the linear QMDP surrogate.
In particular, $\Xi_\theta=\sup_b\Xi_\theta(b)$.
\end{prop}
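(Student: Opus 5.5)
Write $m:=m_g(b)$. The idea is to compare both terms of $\Xi_\theta(b)$ with the common anchor $F_\theta(m)$ and to bound each comparison by a second-order Taylor remainder. The starting point is
\[
\Xi_\theta(b)=\Bigl(\E_{S\sim b}[F_\theta(g(S))]-F_\theta(m)\Bigr)+\Bigl(F_\theta(m)-\max_a \E_{S\sim b}[F_\theta^a(g(S))]\Bigr).
\]
This uses $V_\theta^{\star,S}(s)=F_\theta(g(s))$ and $Q_\theta^{\star,S}(s,a)=F_\theta^a(g(s))$ from Assumption~\ref{aspt:value_embedding_main}. Note that $m\in\mathrm{conv}(g(\gS))$, at least up to closure, which I would handle by the continuity of the Hessians. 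Every segment between $m$ and a point $g(s)$ therefore lies in $\mathrm{conv}(g(\gS))$, and the Hessian bounds apply along it.

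For the first bracket, I apply Taylor's theorem with integral remainder along the segment from $m$ to $g(S)$:
\[
F_\theta(g(S))=F_\theta(m)+\nabla F_\theta(m)^\top(g(S)-m)+R,\qquad |R|\le \tfrac{L_\theta}{2}\|g(S)-m\|^2.
\]
Taking expectation under $b$, the linear term vanishes because $\E[g(S)]=m$. Since $\E\|g(S)-m\|^2=\tr\Sigma_g(b)$, the first bracket is at most $\tfrac{L_\theta}{2}\tr\Sigma_g(b)$.

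For the second bracket, I choose $a^\star\in\argmax_a F_\theta^a(m)$, so that $F_\theta(m)=F_\theta^{a^\star}(m)$. Since the maximum over $a$ dominates the value at $a^\star$, the bracket is at most $F_\theta^{a^\star}(m)-\E[F_\theta^{a^\star}(g(S))]$. The same Taylor expansion for $F_\theta^{a^\star}$, now using the per-action Hessian bound, gives $\E[F^{a^\star}_\theta(g(S))]\ge F^{a^\star}_\theta(m)-\tfrac{L_\theta}{2}\tr\Sigma_g(b)$. Adding the two brackets yields the claimed bound $L_\theta\tr\Sigma_g(b)$. The identity $\Xi_\theta=\sup_b\Xi_\theta(b)$ is immediate by comparing with~\eqref{eq:Xi_def}.

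The only delicate step is the regularity of $F_\theta=\max_aF^a_\theta$. A pointwise maximum is generally not $C^2$, but the assumption explicitly posits the Hessian bound for $F_\theta$ on the convex hull, so I would use it as given. Some care is also needed to ensure integrability, namely $\E\|g(S)\|^2<\infty$; otherwise the bound holds trivially with the right side equal to $\infty$. One further point is whether $m$ lies in the closed convex hull rather than the convex hull itself. I would handle this by approximating $m$ with points of $\mathrm{conv}(g(\gS))$ and invoking continuity of the Hessians.
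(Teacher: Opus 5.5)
Your proof is correct and is essentially the paper's argument. The paper also bounds $\widetilde V_\theta^{\star}(b)$ above by $F_\theta(m_g(b))+\tfrac{L_\theta}{2}\tr(\Sigma_g(b))$ via Taylor's theorem for $F_\theta$, and bounds $\bar V_\theta^{\star,S}(b)$ below by the per-action Taylor bound maximized over $a$, which is exactly your choice of $a^\star$. The closure issue you flag does not arise: the mean of an integrable random vector taking values in a convex subset of $\R^k$ always lies in that subset, so $m_g(b)\in\mathrm{conv}(g(\gS))$.
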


The proof is in Appendix~\ref{app:variance_proofs}.
That QMDP is accurate once the belief concentrates is expected; Proposition~\ref{prop:variance_qmdp_bias_main} is sharper in two ways.
First, only the posterior covariance of the value-relevant features $g$ matters; uncertainty in directions to which the value is insensitive does not enter, so the full belief need not concentrate.
Second, the curvature $L_\theta$ converts this covariance into a value gap, which enters Theorem~\ref{thm:final_regret_bound} as $\gamma^{d+1}\Xi_\theta/(1-\gamma)$ and trades off against planning depth.
In discounted LQG, for instance, the Kalman covariance converges to a strictly positive steady state and never vanishes, yet the QMDP policy is belief-optimal, and $\Xi_\theta=0$ when the control does not enter the dynamics (Appendix~\ref{app:case_study_proofs}).

Appendix~\ref{app:case_study_proofs} instantiates the structural analysis on linear-quadratic Gaussian systems, where $\Xi_\theta$ has a closed form in terms of the Riccati solution and the steady-state Kalman covariance, and on contextual MDPs with identifiable latent context, where $\Xi_{\theta,t}\le 2V_{\max}\rho_t$ for a context-misidentification probability $\rho_t$, so eventual identification becomes the rate condition $\sum_t\rho_t=o(\sqrt T)$.

\vspace{-1ex}
\section{Experiments}
\label{sec:experiments}

\begin{figure}[t]
    \centering
    \includegraphics[width=\linewidth]{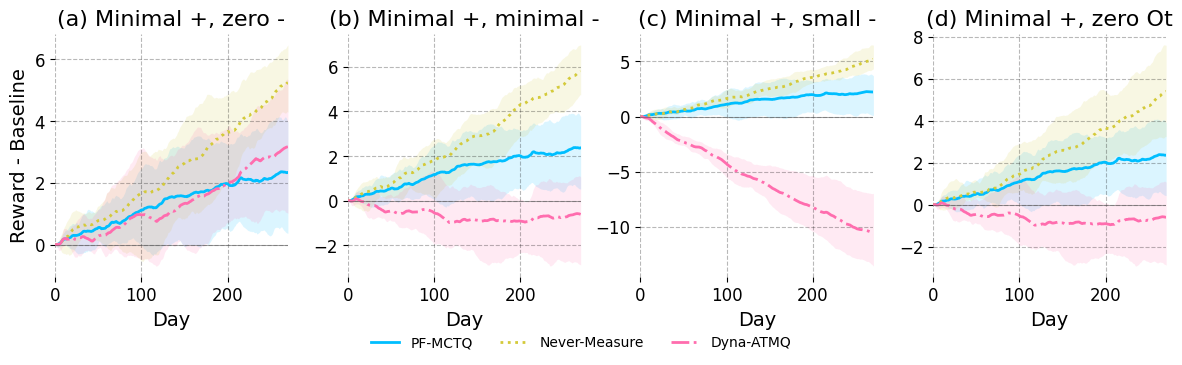}
    \vspace{-4ex}
    \caption{Cumulative reward above the baseline rollout in four HeartSteps scenarios varying the positive intervention effect and the delayed negative measurement effect or proxy $O_t$ informativeness. Mean and 2.5\%/97.5\% quantiles over five seeds.}
    \label{fig:heartsteps}
    \vspace{-1ex}
\end{figure}

Our experiments test whether particle filtering and model-based planning can be used as a practical layer on top of modern deep reinforcement learning under partial observability.
Here, our goal is not to obtain state-of-the-art asymptotic performance.
Instead, we aim to evaluate whether explicit particle beliefs can improve low-sample decision making with modest additional overhead.
Table~\ref{tab:theory_coverage} states which experiments fall under the theory. The deep RL experiments demonstrate that the pipeline is practical but are not evidence for the constants of Theorem~\ref{thm:final_regret_bound}.

\begin{table}[t]
\centering
\caption{Which experiments satisfy the assumptions of Theorem~\ref{thm:final_regret_bound}. \checkmark: holds; $\times$: fails; $\sim$: approx.}
\label{tab:theory_coverage}
\small
\begin{tabular}{lccccl}
\toprule
Environment & Doeblin & $g_{\min}>0$ & Finite $\gA$ & Bounded $r$ & Role \\
\midrule
Mixing Tiger (Sec.~\ref{sec:validation}) & \checkmark & \checkmark & \checkmark & \checkmark & theory-valid \\
Active sensing (App.~\ref{app:stress}) & \checkmark & $\sim$ & \checkmark & \checkmark & structural diagnostic \\
Box search (App.~\ref{app:stress}) & $\times$ & $\times$ & \checkmark & \checkmark & $g_{\min}=0$ by design \\
HeartSteps & $\sim$ & $\sim$ & \checkmark & \checkmark & not claimed \\
Double-integrator LQR & \checkmark & $\times$ & $\times$ & $\times$ & filtering diagnostic \\
MuJoCo & $\times$ & $\times$ & $\times$ & $\times$ & heuristic demonstration \\
\bottomrule
\end{tabular}
\vspace{-2.5ex}
\end{table}

\textbf{Empirical validation of the regret decomposition.}\label{sec:validation}
On a Doeblin-mixing Tiger~\citep{kaelbling1998planning} that satisfies Assumption~\ref{aspt:particle_error} exactly ($\varepsilon_{\mathrm{mix}}=0.05$, $0.15\le Z\le0.85$, $\gamma=0.8$; Appendix~\ref{app:validation}), varying one budget at a time shows each term decaying at its predicted rate (Table~\ref{tab:validation}).
The smallest tested $(J,N,d)$ putting every isolated term below $0.10$, $0.05$, and $0.025$ is $(16,16,6)$, $(64,64,9)$, and $(128,256,12)$.

\begin{table}[t]
\centering
\caption{Fitted vs.\ predicted rates on mixing Tiger; Theorem~\ref{thm:final_regret_bound} gives slower $J^{-1/(2\kappa_{\mathrm{pf}})}$, $T^{3/4}$.}
\label{tab:validation}
\small
\begin{tabular}{llll}
\toprule
Term & Error at endpoints & Fitted & Predicted \\
\midrule
Particle filtering, $J=8\to512$ & $0.144\to0.012$ & $J^{-0.59}$ & $J^{-1/2}$ \\
Monte Carlo, $N=4\to512$ & $0.171\to0.015$ (RMSE) & $N^{-0.50}$ & $N^{-1/2}$ \\
Structural, $d=0\to12$ & $0.314\to0.0216$ & $0.80$ per level & $\gamma=0.80$ \\
Estimation (online MLE), $\sum_{t\le T}\varepsilon_{\mathrm{opt}}(t)$ & & $T^{0.57}$ & $\sqrt{T}$ \\
\bottomrule
\end{tabular}
\vspace{-2ex}
\end{table}

\textbf{Deep RL experiments.} We instantiate PF-MCTQ using the \texttt{pypomp} \cite{pypomp2024github} package for particle-filtering primitives
with MOP \cite{tan2024acceleratedinferencepartiallyobserved} or VSMC \cite{naesseth2018variationalsequentialmontecarlo} gradient estimators. This allows highly optimized JAX particle filters on GPU. We learn a neural dynamics
model online, maintain a particle approximation to the latent state posterior,
and use an actor and a critic trained with recurrent SAC-style updates \citep{ni2021recurrentbaseline} in the same vein as \cite{han2026neuralodesdemodels}.  In the experiments we use belief-tree depth $d=0$: each particle is rolled out at the state level under a learned actor and bootstrapped with a learned critic, rather than planning over a tree of particle beliefs.
With continuous actions and observations a depth-$d$ tree requires binning at a cost of up to $(|\gA|M)^d$ nodes, so this is the main practical gap from the theory.
Rollout planning corresponds to the QMDP case $d=0$, whereas deeper belief planning reduces the structural bias. Appendix~\ref{app:planners} runs the analyzed MCTQ at $d>0$ on finite POMDPs. Algorithm \ref{alg:pf_mcts_practical} gives the practical algorithm. All experiments took about three days on two RTX 3090 GPUs. Code is available at \url{https://github.com/pypomp/pf_control}.

\textbf{Double-integrator LQR.} We first study a partially observed double-integrator LQR.  The state contains
position and velocity, but the observation reveals only position. Thus the
latent velocity is therefore not identifiable from the current observation alone. Different
velocities induce the same observation and can only be distinguished using
history.  This environment isolates the filtering problem.  The posterior
diagnostics in Figure \ref{fig:lqr-posterior-small} show that particle beliefs can track the latent state without access to memory, while failures occur when the system is unstable.
The GRU backbone improves this behavior relative to a memoryless MLP, suggesting that particle filtering and recurrent state representations are
complementary. More details and figures can be found in Appendix \ref{app:lqr}.

\textbf{HeartSteps.} We next evaluate the HeartSteps benchmark of \cite{gao2025activemeasuringreinforcementlearning} as an actively observable MDP with a latent reward
state, where the learner can choose to measure the state at a cost. PF-MCTQ is run in the never-measure regime. It never directly measures
the latent reward and must infer it from passive observations.  The strongest
benchmark is RLSVI-SMC/never-measure, which is specifically tailored to the
linear-Gaussian HeartSteps structure and uses hand-specified linear features.
Consequently, RLSVI-SMC performs best in most scenarios.  PF-MCTQ typically
improves over the baseline rollout but trails RLSVI-SMC because it must learn a deep
model, actor, and critic from data.  Dyna-ATMQ is not directly comparable as a
no-measure method. It actively measures the latent reward roughly half the time,
which gives it an informational advantage, but this advantage can be offset when
measurement has delayed negative effects on future engagement. Details are in Appendix~\ref{app:heartsteps}.

\begin{figure}[t]
    \centering
    \includegraphics[width=\linewidth]{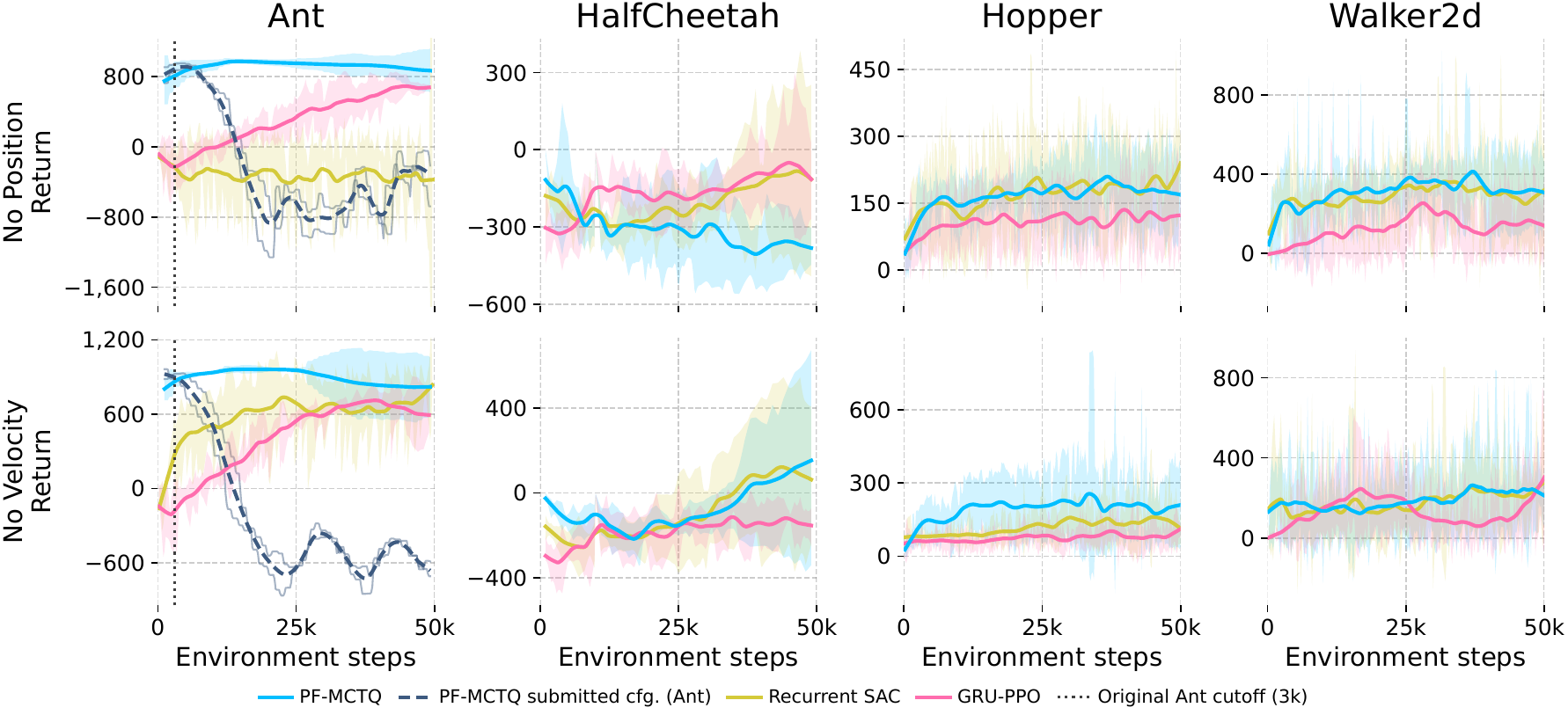}
    \vspace{-4ex}
    \caption{Partial-observation MuJoCo learning curves in the low-sample regime: smoothed means with 95\% Student-$t$ bands over five seeds (Ant: three seeds for corrected PF-MCTQ; the submitted configuration, dashed, shows its two seeds as thin lines). Ant PF-MCTQ shows the 10-episode rolling training return; others, per-episode returns. Dotted: the submitted 3k-step cutoff; none extrapolated.}
    \label{fig:mujoco}
    \vspace{-2.5ex}
\end{figure}

\textbf{MuJoCo partial-observation control.}
Finally, we evaluate Ant, HalfCheetah, Hopper, and Walker2d with position
or velocity removed from the observation.  We compare PF-MCTQ to
Recurrent SAC and GRU-PPO baselines from the recurrent model-free POMDP
benchmarking literature~\citep{ni2021recurrentbaseline}.  These are strong comparators
that learn a recurrent history representation
directly from reward. Figure~\ref{fig:mujoco} shows that PF-MCTQ is competitive in
the ultra-low-sample regime.  On Ant, PF-MCTQ learns a high-return policy within
the first few thousand steps under both ablations.
We no longer stop Ant early. When run to $49.2$k steps, the submitted configuration's policy collapses (dashed). After restoring online model updates and ordinary weight decay, a controlled reduction of the policy update-to-data ratio (one update per ten steps) is decisive and prevents this (Appendix~\ref{app:ant}). Over the prespecified 40k--49.2k window, corrected PF-MCTQ averages $893\pm177$ and $822\pm268$ (position/velocity hidden; three seeds, 95\% Student-$t$) against $-396$ and $-518$ for the submitted configuration.
On Hopper and
Walker2d, PF-MCTQ is broadly competitive with Recurrent SAC and usually stronger
than GRU-PPO early in training.  HalfCheetah is more mixed: PF-MCTQ performs well
when velocity is hidden but struggles when position is hidden, suggesting that
model bias and rollout error can dominate when the omitted coordinates are
crucial for short-horizon prediction.

On Ant, corrected PF-MCTQ needs about 30\% of recurrent SAC's training time per step plus a one-time 86.3\,s model warm start, while GRU-PPO is faster still (Table~\ref{tab:runtime}).
Overall, these results show that particle filtering and model-based planning can be implemented on top of existing deep RL algorithms and can yield sample-efficiency gains in partially observed control.  The results should not be read as a claim
of asymptotic dominance over recurrent model-free RL.  Rather, they show that the
learning--filtering--planning pipeline can be made operational in
standard deep RL benchmarks.

\vspace{-1ex}
\section{Discussion}
\label{sec:conclusion}

We presented an online model-based POMDP pipeline combining particle filtering, latent-dynamics learning, and approximate planning with an end-to-end regret bound. We decompose regret into model estimation, particle approximation, Monte Carlo planning, and QMDP structural errors, clarifying how statistical and computational budgets affect performance. The framework bridges POMDP theory and model-based deep RL. Experiments with a simplified rollout-based version show that it can be made operational and competitive in low-sample partially observed control.

\textbf{Limitations.} Our analysis relies on the Doeblin mixing condition and bounded observation densities, which ensure passive state coverage and obviate the need for strategic exploration.
These assumptions fail in many POMDPs.
In item search with exact observations, the observation density is not bounded below, and a particle set of size $J$ misses the true location among $K$ with probability $(1-1/K)^J$. In Light--Dark-type problems, information must be actively gathered, so $\Xi_\theta$ is large and shallow planning never gathers it (see Appendix~\ref{app:stress}).
Second, the bounded reward assumption excludes settings such as LQG.
Third, the deep RL experiments use $d=0$ with a learned critic and actor, which is the QMDP case of our algorithm, and our guarantees apply to non-adaptive action selection inside the tree but not to UCT-style selection.
Natural extensions are strategic exploration to relax the dynamics assumptions, unbounded rewards, and guarantees for adaptive tree search.

\section*{Acknowledgments}
Ziping Xu was supported by the National Science Foundation under award DMS-2625746.

\bibliographystyle{plain}
\bibliography{main.bib}

\begin{thebibliography}{10}

\bibitem{pypomp2024github}
Aaron Abkemeier, Jun Chen, Kevin Tan, Jesse Wheeler, and Edward Ionides.
\newblock pypomp: Partially observed {M}arkov processes in {P}ython, version
  0.4.4.10.
\newblock \url{https://github.com/pypomp/pypomp}, 2026.

\bibitem{anjarlekar2026scalable}
Ameya Anjarlekar, S.~Rasoul Etesami, and R.~Srikant.
\newblock Scalable policy-based {RL} algorithms for {POMDP}s.
\newblock In {\em The Thirty-ninth Annual Conference on Neural Information
  Processing Systems}, 2025.

\bibitem{besson2018doubling}
Lilian Besson and Emilie Kaufmann.
\newblock What doubling tricks can and can't do for multi-armed bandits.
\newblock {\em arXiv preprint arXiv:1803.06971}, 2018.

\bibitem{cassandra1994acting}
Anthony~R Cassandra, Leslie~Pack Kaelbling, and Michael~L Littman.
\newblock Acting optimally in partially observable stochastic domains.
\newblock In {\em Proceedings of the Twelfth National Conference on Artificial
  Intelligence (AAAI)}, pages 1023--1028, 1994.

\bibitem{chen2025self}
Yuan Chen, Mathieu Gerber, Christophe Andrieu, and Randal Douc.
\newblock Self-organizing state-space models with artificial dynamics.
\newblock {\em Journal of the Royal Statistical Society Series B: Statistical
  Methodology}, page qkaf069, 2025.

\bibitem{chopin20}
Nicolas Chopin and Omiros Papaspiliopoulos.
\newblock {\em An Introduction to Sequential {Monte Carlo}}.
\newblock Springer Series in Statistics. Springer, 2020.

\bibitem{corenflos21}
Adrien Corenflos, James Thornton, George Deligiannidis, and Arnaud Doucet.
\newblock Differentiable particle filtering via entropy-regularized optimal
  transport.
\newblock In Marina Meila and Tong Zhang, editors, {\em Proceedings of the 38th
  International Conference on Machine Learning}, volume 139 of {\em Proceedings
  of Machine Learning Research}, pages 2100--2111. PMLR, 18--24 Jul 2021.

\bibitem{crisan2002survey}
Dan Crisan and Arnaud Doucet.
\newblock A survey of convergence results on particle filtering methods for
  practitioners.
\newblock {\em IEEE Transactions on signal processing}, 50(3):736--746, 2002.

\bibitem{delmoral2004feynmankac}
Pierre Del~Moral.
\newblock {\em Feynman-Kac Formulae: Genealogical and Interacting Particle
  Systems with Applications}.
\newblock Springer, 2004.

\bibitem{doucmoulinesolsson2014}
Randal Douc, Eric Moulines, and Jimmy Olsson.
\newblock Long-term stability of sequential {M}onte {C}arlo methods under
  verifiable conditions.
\newblock {\em Annals of Applied Probability}, 24(5):1767--1802, 2014.

\bibitem{du2021particle}
Simon~S Du, Wei Hu, Zhiyuan Li, Ruoqi Shen, Zhao Song, and Jiajun Wu.
\newblock When is particle filtering efficient for planning in partially
  observed linear dynamical systems?
\newblock In {\em Uncertainty in artificial intelligence}, pages 728--737.
  PMLR, 2021.

\bibitem{gao2025activemeasuringreinforcementlearning}
Daiqi Gao, Ziping Xu, Aseel Rawashdeh, Predrag Klasnja, and Susan Murphy.
\newblock Active measuring in reinforcement learning with delayed negative
  effects.
\newblock In {\em The 29th International Conference on Artificial Intelligence
  and Statistics}, 2025.

\bibitem{gaur2024closinggapachievingglobal}
Mudit Gaur, Amrit~Singh Bedi, Di~Wang, and Vaneet Aggarwal.
\newblock Closing the gap: achieving global convergence (last iterate) of
  actor-critic under markovian sampling with neural network parametrization.
\newblock In {\em Proceedings of the 41st International Conference on Machine
  Learning}, pages 15153--15179, 2024.

\bibitem{golowich2022planningobservablepomdpsquasipolynomial}
Noah Golowich, Ankur Moitra, and Dhruv Rohatgi.
\newblock Planning in observable pomdps in quasipolynomial time.
\newblock arXiv preprint arXiv:2201.04735, 2022.

\bibitem{golowich2023planning}
Noah Golowich, Ankur Moitra, and Dhruv Rohatgi.
\newblock Planning and learning in partially observable systems via filter
  stability.
\newblock In {\em Proceedings of the 55th Annual ACM Symposium on Theory of
  Computing}, pages 349--362, 2023.

\bibitem{han2026neuralodesdemodels}
Chao Han, Stefanos Ioannou, Luca Manneschi, T.~J. Hayward, Michael Mangan,
  Aditya Gilra, and Eleni Vasilaki.
\newblock Neural ode and sde models for adaptation and planning in model-based
  reinforcement learning.
\newblock arXiv preprint arXiv:2603.23245, 2026.

\bibitem{hauskrecht2000planning}
Milos Hauskrecht and Hamish Fraser.
\newblock Planning treatment of ischemic heart disease with partially
  observable markov decision processes.
\newblock {\em Artificial Intelligence in Medicine}, 18(3):221--244, 2000.

\bibitem{ionides15}
Edward~L. Ionides, Dao Nguyen, Yves Atchad\'{e}, Stilian Stoev, and Aaron~A.
  King.
\newblock Inference for dynamic and latent variable models via iterated,
  perturbed {B}ayes maps.
\newblock {\em Proceedings of the National Academy of Sciences},
  112(3):719--724, 2015.

\bibitem{kaelbling1998planning}
Leslie~Pack Kaelbling, Michael~L Littman, and Anthony~R Cassandra.
\newblock Planning and acting in partially observable stochastic domains.
\newblock {\em Artificial intelligence}, 101(1-2):99--134, 1998.

\bibitem{kara2022convergencefinitememoryqlearning}
Ali~Devran Kara and Serdar Y{\"u}ksel.
\newblock Convergence of finite memory q learning for pomdps and near
  optimality of learned policies under filter stability.
\newblock {\em Mathematics of Operations Research}, 48(4):2066--2093, 2023.

\bibitem{karjalainen2026forgettingparticlefilters}
Joona Karjalainen, Anthony Lee, Sumeetpal~S. Singh, and Matti Vihola.
\newblock On the forgetting of particle filters.
\newblock arXiv preprint arXiv:2309.08517, 2026.

\bibitem{karkus2017qmdpnet}
Peter Karkus, David Hsu, and Wee~Sun Lee.
\newblock {QMDP-Net}: Deep learning for planning under partial observability.
\newblock In {\em Advances in Neural Information Processing Systems},
  volume~30, 2017.

\bibitem{kausik2023mixtures}
Chinmaya Kausik, Kevin Tan, and Ambuj Tewari.
\newblock Learning mixtures of {M}arkov chains and {MDP}s.
\newblock In Andreas Krause, Emma Brunskill, Kyunghyun Cho, Barbara Engelhardt,
  Sivan Sabato, and Jonathan Scarlett, editors, {\em Proceedings of the 40th
  International Conference on Machine Learning}, volume 202 of {\em Proceedings
  of Machine Learning Research}, pages 15970--16017. PMLR, 23--29 Jul 2023.

\bibitem{kausik2023learning}
Chinmaya Kausik, Kevin Tan, and Ambuj Tewari.
\newblock Learning mixtures of markov chains and mdps.
\newblock In {\em International Conference on Machine Learning}, pages
  15970--16017. PMLR, 2023.

\bibitem{lim2023optimalityparticlepomdp}
Michael~H Lim, Tyler~J Becker, Mykel~J Kochenderfer, Claire~J Tomlin, and
  Zachary~N Sunberg.
\newblock Optimality guarantees for particle belief approximation of pomdps.
\newblock {\em Journal of Artificial Intelligence Research}, 77:1591--1636,
  2023.

\bibitem{lim2021sparse}
Michael~H Lim, Claire~J Tomlin, and Zachary~N Sunberg.
\newblock Sparse tree search optimality guarantees in pomdps with continuous
  observation spaces.
\newblock In {\em Proceedings of the Twenty-Ninth International Conference on
  International Joint Conferences on Artificial Intelligence}, pages
  4135--4142, 2021.

\bibitem{littman1995learning}
Michael~L Littman, Anthony~R Cassandra, and Leslie~Pack Kaelbling.
\newblock Learning policies for partially observable environments: Scaling up.
\newblock In {\em Machine learning proceedings 1995}, pages 362--370. Elsevier,
  1995.

\bibitem{liu2022partially}
Qinghua Liu, Alan Chung, Csaba Szepesv{\'a}ri, and Chi Jin.
\newblock When is partially observable reinforcement learning not scary?
\newblock In {\em Conference on Learning Theory}, pages 5175--5220. PMLR, 2022.

\bibitem{liu2023optimistic}
Qinghua Liu, Praneeth Netrapalli, Csaba Szepesvari, and Chi Jin.
\newblock Optimistic {MLE}: A generic model-based algorithm for partially
  observable sequential decision making.
\newblock In {\em Proceedings of the 55th Annual ACM Symposium on Theory of
  Computing}, pages 363--376, 2023.

\bibitem{naesseth2018variationalsequentialmontecarlo}
Christian Naesseth, Scott Linderman, Rajesh Ranganath, and David Blei.
\newblock Variational sequential monte carlo.
\newblock In {\em International conference on artificial intelligence and
  statistics}, pages 968--977. PMLR, 2018.

\bibitem{ni2021recurrentbaseline}
Tianwei Ni, Benjamin Eysenbach, and Ruslan Salakhutdinov.
\newblock Recurrent model-free {RL} is a strong baseline for many pomdps.
\newblock {\em CoRR}, abs/2110.05038, 2021.

\bibitem{papadimitriou1987complexity}
Christos~H Papadimitriou and John~N Tsitsiklis.
\newblock The complexity of markov decision processes.
\newblock {\em Mathematics of operations research}, 12(3):441--450, 1987.

\bibitem{pineau2003point}
Joelle Pineau, Geoff Gordon, Sebastian Thrun, et~al.
\newblock Point-based value iteration: An anytime algorithm for {POMDPs}.
\newblock In {\em Ijcai}, volume~3, pages 1025--1032, 2003.

\bibitem{silver2010monte}
David Silver and Joel Veness.
\newblock Monte-carlo planning in large pomdps.
\newblock {\em Advances in neural information processing systems}, 23, 2010.

\bibitem{song2023hybrid}
Yuda Song, Yifei Zhou, Ayush Sekhari, J~Andrew Bagnell, Akshay Krishnamurthy,
  and Wen Sun.
\newblock Hybrid rl: Using both offline and online data can make rl efficient.
\newblock In {\em International Conference on Learning Representations}, 2023.

\bibitem{sunberg2018online}
Zachary Sunberg and Mykel Kochenderfer.
\newblock Online algorithms for pomdps with continuous state, action, and
  observation spaces.
\newblock In {\em Proceedings of the International Conference on Automated
  Planning and Scheduling}, volume~28, pages 259--263, 2018.

\bibitem{tan2024acceleratedinferencepartiallyobserved}
Kevin Tan, Giles Hooker, and Edward~L. Ionides.
\newblock Accelerated inference for partially observed markov processes using
  automatic differentiation.
\newblock arXiv preprint arXiv:2407.03085, 2024.

\bibitem{whiteley2013stability}
Nick Whiteley.
\newblock Stability properties of some particle filters.
\newblock {\em The Annals of Applied Probability}, pages 2500--2537, 2013.

\bibitem{wu2022finitetimeanalysistimescale}
Yue~Frank Wu, Weitong Zhang, Pan Xu, and Quanquan Gu.
\newblock A finite-time analysis of two time-scale actor-critic methods.
\newblock {\em Advances in Neural Information Processing Systems},
  33:17617--17628, 2020.

\bibitem{xu2021improvingsamplecomplexitybounds}
Tengyu Xu, Zhe Wang, and Yingbin Liang.
\newblock Improving sample complexity bounds for (natural) actor-critic
  algorithms.
\newblock {\em Advances in Neural Information Processing Systems},
  33:4358--4369, 2020.

\end{thebibliography}

\appendix

\section{Further Related Work}

Our work is related to research on approximate planning in POMDPs, particle-based belief representations, and inference in latent-state models.

\paragraph{Planning in POMDPs.}
POMDP planning can be formulated as dynamic programming over the belief state \cite{kaelbling1998planning}, but exact planning is difficult because the belief space is continuous and must be updated online. Practical methods therefore rely on approximations. Sampling-based planners such as POMCP combine Monte-Carlo tree search with particle-based belief updates and scale to large problems without explicit belief-stateenumeration \cite{silver2010monte,lim2021sparse}, while progressive-widening methods extend this approach to continuous spaces \cite{sunberg2018online}. Another common approximation is QMDP, which replaces belief-space continuation values with fully observed MDP values averaged under the current belief \cite{cassandra1994acting,littman1995learning}. Our method is connected to this line of work through combining finite-depth MCTS with a QMDP-style bootstrap, but differs in that we study this approximation in an online learning setting and explicitly quantify the resulting structural error.
POMCP, POWSS, and sparse PFT are online planners in that they replan from the current belief at every step. However, they are analyzed with the model given to the planner, so their analyses are not online learning results. The same holds for our planner when it is analyzed at a frozen $\theta$.
Their guarantees concern a single planning call at an exactly known $\theta$, with no estimation error and no horizon $T$, so they do not directly yield a regret bound when the planner is called with a re-estimated $\bar\theta_k$.
Because the planner enters our decomposition only through $\epsilon_{\mathrm{mc}}(t,d)$, one could substitute such a planner by re-deriving that term (and term (B1), which a particle-belief MDP absorbs) under $\bar\theta_k$.
Planning on the particle-belief MDP also requires $O(J)$ generative calls per node, compared with $O(1)$ for a state-level simulation.

\paragraph{Particle-based inference under partial observability.}
A related line of work uses particle filtering to approximate the belief state. Classical particle filtering theory provides convergence and stability guarantees for approximate Bayes filtering \cite{crisan2002survey,whiteley2013stability}. \cite{lim2023optimalityparticlepomdp} studied the particle-belief MDP directly and established optimality guarantees relative to the true belief-state MDP. More recent work has studied planning with particle methods in structured partially observed systems \cite{du2021particle} as well as joint state and parameter inference in state-space models \cite{chen2025self}. Our paper connects these directions in a different way: rather than planning over propagated particle beliefs, we use particles to represent the current belief, perform state-level lookahead from those particles, and combine this with online model learning. This leads to an end-to-end regret analysis that separates estimation error, particle approximation error, Monte-Carlo planning error, and QMDP-style approximation bias.

\paragraph{Model-free approaches.}
Model-free alternatives instead avoid explicit filtering by giving the learner a short horizon lookback or memory \citep{ni2021recurrentbaseline, kara2022convergencefinitememoryqlearning}. The latter are often effective in practice, especially with deep recurrent policies, but their hidden states are difficult to interpret as beliefs, and remain challenging to analyze. Existing sample complexity guarantees for the former often contain caveats, such as only quasipolynomial rather than polynomial sample complexity \citep{golowich2023planning, golowich2022planningobservablepomdpsquasipolynomial}, or non-standard assumptions \citep{anjarlekar2026scalable}. We will focus on the model-based approach.

\section{MCTQ Planning Subroutine}
\label{app:mctq}

We describe the full MCTQ (Monte Carlo Tree search with QMDP leaves) planning subroutine referenced in Algorithm~\ref{alg:pf_mcts}.
MCTQ estimates the depth-$d$ belief-tree action-value $Q_{\theta,d}(b^J,a)$ defined in~\eqref{eq:Qd_def} by Monte Carlo tree search on the particle belief.
It is inspired by the POMCP framework~\cite{silver2010monte}, with three departures: internal actions are drawn uniformly at random rather than by UCB1, every simulation descends to the full depth $d$, and values are computed by a bottom-up dynamic programming (DP) backup after all simulations complete.

\paragraph{Overview.}
Given the current particle belief $b^J=\frac{1}{J}\sum_{j=1}^J\delta_{s_j}$, model $\theta$, search depth $d$, simulation budget $N$ per root action, and a precomputed fully observable Q-function $Q_\theta^{\star,S}$ (e.g., from value iteration), MCTQ estimates $Q_{\theta,d}(b^J,a)$ for each root action $a\in\gA$ separately (round-robin), so that every root action receives exactly $N$ simulations.
For each root action $a$, each simulation samples a particle $s_j$ from $b^J$, takes the fixed root action $a$ to obtain a child state $s'\sim P_\theta(\cdot\mid s_j,a)$ and observation $o\sim Z_\theta(\cdot\mid s')$, and then descends the search tree $T_a$ from the child node $(a,o)$ down to depth $d$, selecting actions uniformly at random at each internal node.
Nodes are created the first time a simulation passes through them, and every node on the path records the simulated state.
After all $N$ simulations complete, a bottom-up DP backup computes value estimates: at each depth-$d$ leaf, the QMDP value of its accumulated particles; at each internal node, $\hat V(v)=\max_{a'}\hat Q(v,a')$, where $\hat Q(v,a')$ is the empirical mean of $r+\gamma\hat V(\text{child})$ over the simulations that selected $a'$ at $v$.
The root estimate $\widehat Q(b^J,a)$ is the same empirical mean at the root.

Like POMCP~\cite{silver2010monte}, MCTQ indexes its search trees by action-observation histories $h=(a_1,o_1,\ldots,a_k,o_k)$ rather than by explicit beliefs, and represents the belief at each node implicitly by the collection of states simulated through that node.
The round-robin over root actions is needed because the regret analysis (Lemma~\ref{lem:delta_bound_main}) requires uniformly good estimates $\widehat Q(b^J,a)$ for \emph{all} $a\in\gA$.
The analysis (Proposition~\ref{prop:mc_concentration}) uses two properties of the internal action-selection rule $\pi$, i.e., the rule that picks actions at nodes below the root within one planning call:
\begin{enumerate}
\item[(S1)] \emph{Non-adaptivity.} $\pi$ may depend on the node's history, on $\theta$, on the root particle set $b^J$, and on fresh randomness, but not on any statistic accumulated by the current tree.
\item[(S2)] \emph{Full support.} $\pi(a'\mid\cdot)\ge\pi_{\min}>0$ for every $a'\in\gA$.
\end{enumerate}
Uniform selection satisfies both with $\pi_{\min}=1/|\gA|$.
So does a learned proposal that is frozen during the planning call and mixed with the uniform distribution.
UCB1 violates (S1), because its choice at a node depends on the visit counts and value estimates accumulated by the current tree.
We therefore use uniform selection for the analysis and do not claim it is the best rule in practice.
Together with the absence of an early return (every simulation descends to depth $d$), (S1) makes the path of each simulation a function of its own random seed alone (Lemma~\ref{lem:path_decoupling}).
The simulation seeds are i.i.d.\ given $b^J$, but the simulation returns are not, because the leaf evaluator pools particles across simulations and the DP backup couples them.
The analysis therefore treats $\widehat Q(b^J,a)$ as a bounded-difference function of the i.i.d.\ seeds rather than as an average of independent returns.

\paragraph{Tree structure and belief particles.}
Each search tree $T_a$ (one per root action $a$) has the particle belief $b^J$ at its root (depth $0$), the root action fixed to $a$, internal nodes at depths $1,\ldots,d-1$, and leaves at depth $d$.
Each node $v$ is identified with the action-observation history that leads to it from the root, so a given history always refers to the same node.
A child of node $v$ under action $a'$ and observation $o$ is denoted $\textsc{Child}(v,a',o)$; it is created on demand when first visited.

Each node $v$ maintains a \emph{particle set} $B(v)$: the collection of all states that have been simulated through $v$ across all simulations.
The empirical distribution $\hat b_v:=\frac{1}{|B(v)|}\sum_{s\in B(v)}\delta_s$ is the particle approximation of the belief at $v$, and $N(v):=|B(v)|$ is the number of simulations that have visited $v$.
In addition, each internal node stores:
\begin{itemize}
\item $N(v,a')$: the number of simulations that selected action $a'$ at $v$;
\item $\bar r(v,a')$: the running mean of the immediate rewards $r(s,a')$ over those simulations.
\end{itemize}
These statistics are accumulated during the simulation phase and used in the post-simulation DP backup.

\paragraph{QMDP leaf evaluation.}
At depth-$d$ leaf nodes (remaining depth $\ell=0$), the QMDP value of the accumulated belief is
\begin{equation}\label{eq:qmdp_leaf_eval}
\bar V_\theta^{\star,S}(B(v)):=\max_{a'\in\gA}\frac{1}{|B(v)|}\sum_{s\in B(v)}Q_\theta^{\star,S}(s,a').
\end{equation}
This takes the max \emph{outside} the average over states, choosing a single action for all particles, which coincides with the QMDP approximation $\bar V_\theta^{\star,S}$ from~\eqref{eq:U0_def}. This is distinct from the per-state evaluation $\frac{1}{|B(v)|}\sum_s\max_{a'}Q_\theta^{\star,S}(s,a')$, which would compute the linear surrogate $\widetilde V_\theta^{\star}$ instead. Using~\eqref{eq:qmdp_leaf_eval} ensures the algorithm matches the theoretical target $\bar V_{\theta,d}^{\star,S}=\gT_\theta^d\bar V_\theta^{\star,S}$ analyzed in Section~\ref{sec:analysis}.

\paragraph{Simulation procedure.}
For each root action $a$, each of the $N$ simulations proceeds as follows.
A particle $s_j$ is sampled uniformly from $b^J$, and the root transition is simulated: $s'\sim P_\theta(\cdot\mid s_j,a)$, $o\sim Z_\theta(\cdot\mid s')$, with immediate reward $r(s_j,a)$.
Starting from the child node $\textsc{Child}(\mathrm{root},a,o)$ with remaining depth $\ell=d-1$, the \textsc{Simulate} subroutine (Algorithm~\ref{alg:simulate}) descends the tree recursively:
\begin{enumerate}
\item \emph{Record.} Add the current state $s$ to $B(v)$.
\item \emph{QMDP leaf ($\ell=0$).} Stop. The leaf is evaluated by~\eqref{eq:qmdp_leaf_eval} on its final particle set once all simulations are complete.
\item \emph{Expansion (new internal node).} If $v$ is not yet expanded, initialize $N(v,a')\gets0$ and $\bar r(v,a')\gets0$ for all $a'\in\gA$, and continue. There is no early return, so a simulation that creates a node continues to descend.
\item \emph{Uniform selection.} Select
\begin{equation}\label{eq:uniform_action}
a'\sim\mathrm{Unif}(\gA).
\end{equation}
\item \emph{Simulate and recurse.} Draw $s''\sim P_\theta(\cdot\mid s,a')$ and $o'\sim Z_\theta(\cdot\mid s'')$, update $N(v,a')$ and $\bar r(v,a')$ with the reward $r(s,a')$, and recurse into $\textsc{Child}(v,a',o')$ with remaining depth $\ell-1$.
\end{enumerate}
Every simulation therefore visits exactly one node at each depth $1,\ldots,d$.
After all $N$ simulations for root action $a$, the DP backup (Phase~2 of Algorithm~\ref{alg:mctq}) computes, from the leaves upward,
\begin{equation}\label{eq:dp_backup}
\begin{aligned}
\hat V(v)&=\bar V_\theta^{\star,S}(B(v)) &&\text{(leaf $v$)},\\
\hat Q(v,a')&=\bar r(v,a')+\gamma\sum_{w\in\mathrm{Ch}(v,a')}\frac{N(w)}{N(v,a')}\,\hat V(w) &&\text{(internal $v$, $N(v,a')>0$)},\\
\hat V(v)&=\max_{a':\,N(v,a')>0}\hat Q(v,a') &&\text{(internal $v$)},
\end{aligned}
\end{equation}
where $\mathrm{Ch}(v,a')$ is the set of children of $v$ under action $a'$.
Equivalently, $\hat Q(v,a')$ is the average of $r(s,a')+\gamma\hat V(\text{child})$ over the simulations that selected $a'$ at $v$.
The root estimate is
\[
\widehat Q(b^J,a)=\frac{1}{N}\sum_{n=1}^N\bigl[r(s_{j_n},a)+\gamma\,\hat V(v_{n,1})\bigr],
\]
where $s_{j_n}$ is the particle sampled by simulation $n$ and $v_{n,1}$ is the depth-$1$ node it enters.
When $d=0$ no tree is built and $\widehat Q(b^J,a)=\frac1J\sum_{j=1}^J Q_\theta^{\star,S}(s_j,a)$.

The full procedure is given in Algorithms~\ref{alg:mctq} and~\ref{alg:simulate}.

\begin{algorithm}[t]
\caption{\textsc{Mctq}: Monte Carlo tree QMDP planning (uniform exploration, full-depth simulations, DP backup)}
\label{alg:mctq}
\begin{algorithmic}[1]
\Require model $\theta$; particle belief $b^J=\frac{1}{J}\sum_{j=1}^J\delta_{s_j}$; depth $d$; simulations per action $N$; MDP Q-function $Q_\theta^{\star,S}$
\If{$d=0$} \Comment{No tree: exact QMDP}
    \State\Return $\bigl\{\frac1J\sum_{j=1}^J Q_\theta^{\star,S}(s_j,a)\bigr\}_{a\in\gA}$
\EndIf
\For{each action $a\in\gA$} \Comment{Round-robin: $N$ simulations per root action}
    \State Initialize empty search tree $T_a$
    \For{$n=1,\ldots,N$} \Comment{Phase 1: tree construction}
        \State Sample $j_n\sim\mathrm{Unif}([J])$
        \State Sample $s'\sim P_\theta(\cdot\mid s_{j_n},a)$, $o\sim Z_\theta(\cdot\mid s')$
        \State $v_{n,1}\gets\textsc{Child}(\mathrm{root}(T_a),a,o)$ \Comment{Child of root for $(a,o)$}
        \State $\textsc{Simulate}(s',\,d-1,\,\theta,\,v_{n,1})$ \Comment{Alg.~\ref{alg:simulate}}
    \EndFor
    \State \Comment{Phase 2: DP backup~\eqref{eq:dp_backup}}
    \State $\hat V(v)\gets\max_{a'\in\gA}\frac{1}{|B(v)|}\sum_{s\in B(v)}Q_\theta^{\star,S}(s,a')$ at each leaf $v$ of $T_a$ \Comment{Pooled QMDP leaf}
    \For{each internal node $v$ of $T_a$, from depth $d-1$ up to depth $1$}
        \State $\hat Q(v,a')\gets\bar r(v,a')+\gamma\sum_{w\in\mathrm{Ch}(v,a')}\frac{N(w)}{N(v,a')}\hat V(w)$ for each $a'$ with $N(v,a')>0$
        \State $\hat V(v)\gets\max_{a':\,N(v,a')>0}\hat Q(v,a')$
    \EndFor
    \State $\widehat Q(b^J,a)\gets\frac{1}{N}\sum_{n=1}^N\bigl[r(s_{j_n},a)+\gamma\,\hat V(v_{n,1})\bigr]$
\EndFor
\State\Return $\{\widehat Q(b^J,a)\}_{a\in\gA}$
\end{algorithmic}
\end{algorithm}

\begin{algorithm}[t]
\caption{\textsc{Simulate}: Recursive uniform-exploration descent to full depth}
\label{alg:simulate}
\begin{algorithmic}[1]
\Require state $s$; remaining depth $\ell$; model $\theta$; tree node $v$
\State $B(v)\gets B(v)\cup\{s\}$ \Comment{Record state; $N(v)=|B(v)|$}
\If{$\ell=0$} \Comment{Leaf: evaluated in Phase 2 of Alg.~\ref{alg:mctq}}
    \State\Return
\EndIf
\If{$v$ is not yet expanded} \Comment{Expansion: initialize and continue (no early return)}
    \State Expand $v$: $N(v,a')\gets 0$, $\bar r(v,a')\gets 0$ for all $a'\in\gA$
\EndIf
\State Sample $a'\sim\mathrm{Unif}(\gA)$ \Comment{Uniform, reads no tree statistic}
\State Sample $s'\sim P_\theta(\cdot\mid s,a')$, $o\sim Z_\theta(\cdot\mid s')$
\State $N(v,a')\gets N(v,a')+1$; $\bar r(v,a')\gets\bar r(v,a')+\frac{r(s,a')-\bar r(v,a')}{N(v,a')}$ \Comment{Incremental mean}
\State $\textsc{Simulate}(s',\,\ell-1,\,\theta,\,\textsc{Child}(v,a',o))$
\end{algorithmic}
\end{algorithm}

\paragraph{Convergence.}
As $N\to\infty$, the following two properties hold.
First, by Lemma~\ref{lem:path_decoupling} the events ``simulation $n$ reaches $v$ and selects $a'$'' are i.i.d.\ across $n$ with positive probability for every reachable node $v$ and every $a'$, so every such node and action is visited infinitely often.
Second, the states entering a node $v$ are i.i.d.\ draws from the (binned) conditional belief $b_v$ at $v$, so $\hat b_v\to b_v$ by the law of large numbers.
At each depth-$d$ leaf, $\bar V_\theta^{\star,S}(B(v))\to\bar V_\theta^{\star,S}(b_v)$, and the DP backup then gives, level by level, $\hat Q(v,a')\to Q_{\theta,\ell}^{\phi}(b_v,a')$ and $\hat V(v)\to \max_{a'}Q_{\theta,\ell}^{\phi}(b_v,a')=\bar V_{\theta,\ell}^{\star,S}(b_v)$, where $\ell$ is the remaining depth at $v$.
Since the root action is fixed and states are sampled from $b^J$, the backed-up estimate $\widehat Q(b^J,a)\to Q_{\theta,d}^{\phi}(b^J,a)$ for each $a\in\gA$.
Proposition~\ref{prop:mc_concentration} quantifies the rate.

When $d=0$, no tree search is performed and the planner returns $\max_a\frac{1}{J}\sum_j Q_\theta^{\star,S}(s_j,a)$, which is the QMDP value $\bar V_\theta^{\star,S}(b^J)$.
When $d\to\infty$, the tree-search value converges to $Q_\theta^{\star,B}(b^J,a)$.

\paragraph{Computational cost.}
For each root action, each of the $N$ simulations traverses a path of exactly $d$ nodes in the tree, requiring $O(d)$ generative model calls and $O(1)$ work for uniform action sampling at each internal node.
The post-simulation DP backup adds $O(|\gA|)$ work per node but is performed once, not per simulation.
The QMDP leaf evaluation~\eqref{eq:qmdp_leaf_eval} can be maintained incrementally by storing running sums $\sum_{s\in B(v)}Q_\theta^{\star,S}(s,a')$ for each action $a'$, reducing the per-simulation leaf cost to $O(|\gA|)$.
The total cost per planning step is $|\gA|Nd$ generative-model calls across all root actions. The additional $|\gA|$ factor in the backup involves only arithmetic operations and no model calls.

\paragraph{Relation to the error analysis.}
The Monte Carlo error $\epsilon_{\mathrm{mc}}(t,d)=\sup_a|\widehat Q_t(b_t^J,a)-Q_{\theta,d}(b_t^J,a)|$ arises from the finite simulation budget $N$ per root action.
The simulation seeds are i.i.d.\ given $b_t^J$, and under (S1) with no early return each simulation's path is a function of its own seed (Lemma~\ref{lem:path_decoupling}).
The estimate $\widehat Q_t(b_t^J,a)$ is then a bounded-difference function of the seeds, so the Efron--Stein and McDiarmid inequalities control its fluctuation at rate $O(\Lambda_d/\sqrt N)$, while the DP-backup bias vanishes at rate $O(\Phi_d/\sqrt{N})$ (Proposition~\ref{prop:mc_concentration}).
The simulation returns themselves are not independent, and the argument does not require them to be.
The proof is written for uniform selection. For a rule satisfying (S1)--(S2), the same argument applies with $|\gA|$ replaced by $1/\pi_{\min}$ wherever it bounds the fraction of a node's visits that select a given action.

\section{Figures and Additional Details for Experiments}
\label{app:experiments}

\begin{algorithm}[H]
    \caption{Practical particle filtering with state-level Monte-Carlo planning}\label{alg:pf_mcts_practical}
    \begin{algorithmic}[1]
    \Require number of state particles $J$; rollout length $H_{\mathrm{roll}}$; rollouts per action $N$; horizon $T$;
    initial parameter $\bar\theta_1$; model update oracle $\textsc{ModelUpdate}(\cdot)$.
    \State \textbf{Initialize:} model dataset $\gD_m$, policy dataset $\gD_p$. 
    \State Sample $s^{(1)}_1,\ldots,s^{(1)}_J \sim b_0$ and set $b_1^J = \frac{1}{J}\sum_{j=1}^J \delta_{s^{(1)}_j}$.
    \For{$t=1,2,\ldots,T$}
        \If{model update}
        \State $\gD_m = \gD_m \cup \gD_p$.
      \State Update $\bar\theta_t$ via updating the model on $\gD_m$ with gradient descent.
      \State $\gD_p = \emptyset$.
      \Else 
      \State $\bar\theta_t \gets \bar\theta_{t-1}$.
      \EndIf
      \For{each particle $j=1,\ldots,J$ and each action $a\in\gA$}
        \State $\widehat Q_t(s^{(t)}_j,a) \gets \textsc{StateMCTS}\big(s^{(t)}_j,a,\bar\theta_t,H_{\mathrm{roll}},N,\widehat Q^{\mathrm{MDP}}_{\bar\theta_t}, \widehat{\pi}_{\bar{\theta}_t}^{\mathrm{MDP}}\big)$, where actions are sampled by $\widehat{\pi}_{\bar{\theta}_t}^{\mathrm{MDP}}$ and not the greedy policy w.r.t. $\widehat Q_t$. Collect $\widehat{a}_j^{(t)}$, the best action sampled for particle $j$.
      \EndFor
      \State $\widehat Q_t(b_t^J,a) \gets \frac{1}{J}\sum_{j=1}^J \widehat Q_t(s^{(t)}_j,a)$ for all $a\in\gA$.
      \State Play $a_t \in \argmax_{j\in [J]} \widehat Q_t(b_t^J, \widehat{a}_j^{(t)})$. Alternatively, sample $\tilde{s}^{(t)} \sim b_t^J$ and play $a_t \sim  \widehat{\pi}_{\bar{\theta}_t}^{\mathrm{MDP}}(\cdot | \tilde{s}^{(t)})$.
      \State Observe $(o_{t+1}, r_t)$ and append to $\gD_p$.
      \For{$j=1,\ldots,J$}
        \State Sample prediction $\hat s^{(t+1)}_j \sim P_{\bar\theta_t}(\cdot\mid s^{(t)}_j,a_t)$.
        \State Set weight $w^{(t+1)}_j \propto Z_{\bar\theta_t}(o_{t+1}\mid \hat s^{(t+1)}_j)$.
      \EndFor
      \State Resample $s^{(t+1)}_1,\ldots,s^{(t+1)}_J \sim \mathrm{Categorical}(w^{(t+1)}_1,\ldots,w^{(t+1)}_J)$.
      \State Set $b_{t+1}^J = \frac{1}{J}\sum_{j=1}^J \delta_{s^{(t+1)}_j}$.
      \State Update MDP value estimate $\widehat Q^{\mathrm{MDP}}_{\bar\theta_t}(s,a)$ and policy $\widehat{\pi}_{\bar{\theta}_t}^{\mathrm{MDP}}$ for $(\gS,\gA,P_{\bar\theta_t},r,\gamma)$ on $\gD_p$.
    \EndFor
    \end{algorithmic}
    \end{algorithm}
    
\subsection{Double Integrator LQR}
\label{app:lqr}

We use a partially observed double-integrator LQR system. The latent state is
\[
x_t =
\begin{bmatrix}
p_t \\
v_t
\end{bmatrix},
\]
where $p_t$ is position and $v_t$ is velocity. The scalar control
$u_t \in [-3,3]$ is acceleration. With time step $\Delta t = 0.05$, the
dynamics are
\[
x_{t+1}
=
A x_t + B u_t + w_t,
\qquad
A =
\begin{bmatrix}
1 & \Delta t \\
0 & 1
\end{bmatrix},
\qquad
B =
\begin{bmatrix}
\frac{1}{2}\Delta t^2 \\
\Delta t
\end{bmatrix},
\]
with process noise
\[
w_t \sim \mathcal{N}(0, \sigma_w^2 I_2),
\qquad
\sigma_w = 0.01.
\]

The agent observes only the position coordinate:
\[
o_t = C x_t + \epsilon_t,
\qquad
C =
\begin{bmatrix}
1 & 0
\end{bmatrix},
\qquad
\epsilon_t \sim \mathcal{N}(0,\sigma_o^2),
\qquad
\sigma_o = 0.05.
\]
Thus the velocity $v_t$ is latent. It is not recoverable from the current
observation alone because the observation map is not injective:
\[
\operatorname{rank}(C)=1 < 2.
\]
Equivalently, for any fixed position $p$ and any two distinct velocities
$v \neq v'$,
\[
C
\begin{bmatrix}
p \\
v
\end{bmatrix}
=
p
=
C
\begin{bmatrix}
p \\
v'
\end{bmatrix}.
\]
Therefore, even in the noiseless observation case, a single current
observation cannot distinguish states that differ only in velocity. With
observation noise, the ambiguity is strictly larger. Recovering information
about $v_t$ requires temporal history, e.g. past observations and actions,
rather than $o_t$ alone.

The reward is the negative quadratic LQR cost,
\[
r_t
=
-
\left(
x_t^\top Q x_t
+
u_t^\top R u_t
\right),
\qquad
Q = I_2,
\qquad
R = 0.1 I_1.
\]
The discounted objective is
\[
J(\pi)
=
\mathbb{E}_{\pi}
\left[
\sum_{t=0}^{T-1}
\gamma^t r_t
\right],
\qquad
\gamma = 0.9,
\qquad
T = 200.
\]

In our experiments, the initial state is sampled as
\[
x_0 \sim \mathcal{N}(0, I_2),
\]
episodes terminate after $T=200$ steps or when
\[
\|x_t\|_2 > 20.
\]
We use $32$ particles for belief tracking, rollout length $1$, planner
discount $0.9$, and a maximum of $16$ candidate actions.

\begin{figure}[H]
    \centering
    \includegraphics[width=\linewidth]{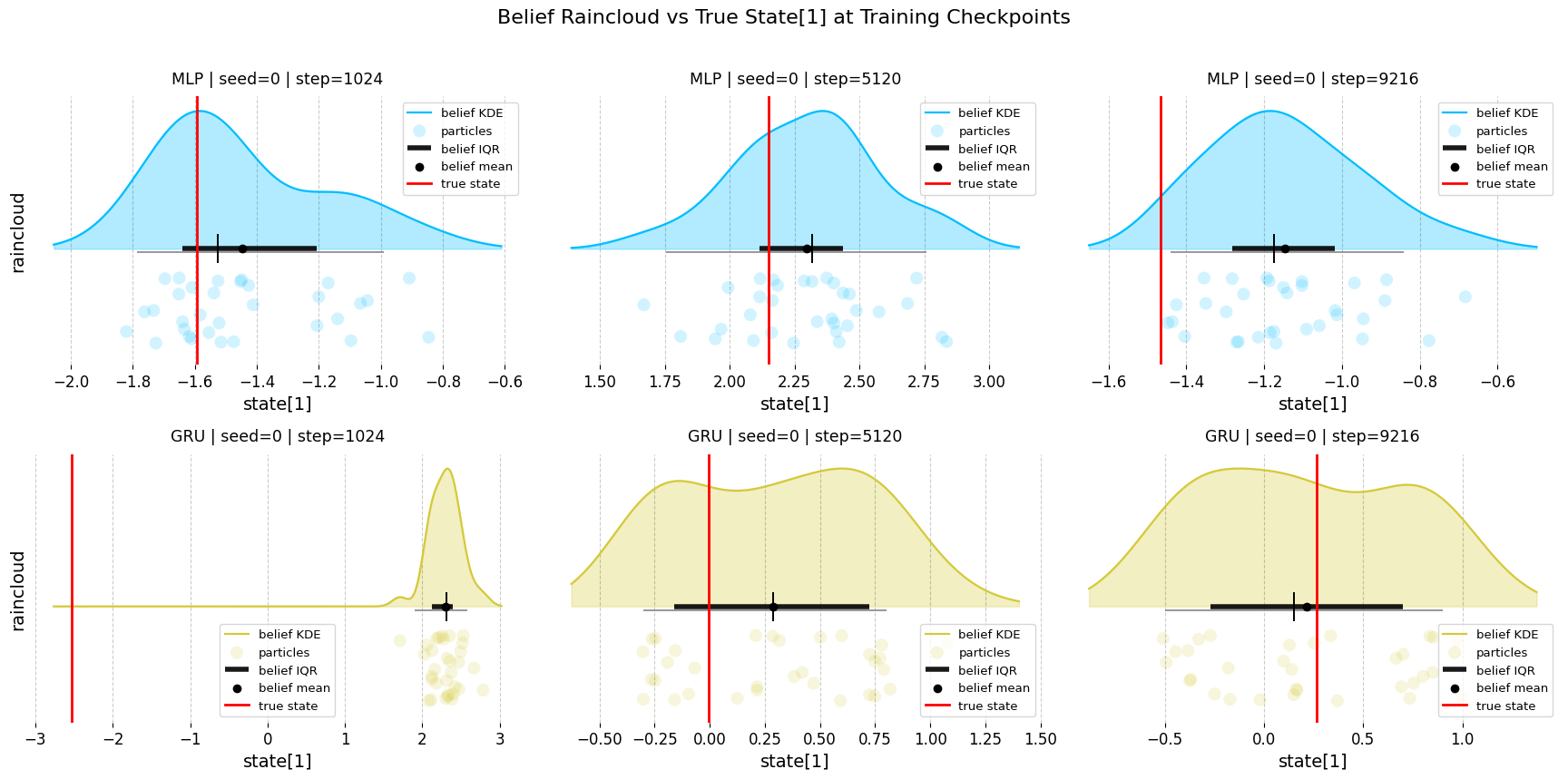}
    \caption{Snapshot of belief state at select training checkpoints. State recovery improves over time.}
    \label{fig:lqr-snapshot}
\end{figure}

\newpage

\begin{figure}[H]
    \centering
    \includegraphics[width=\linewidth]{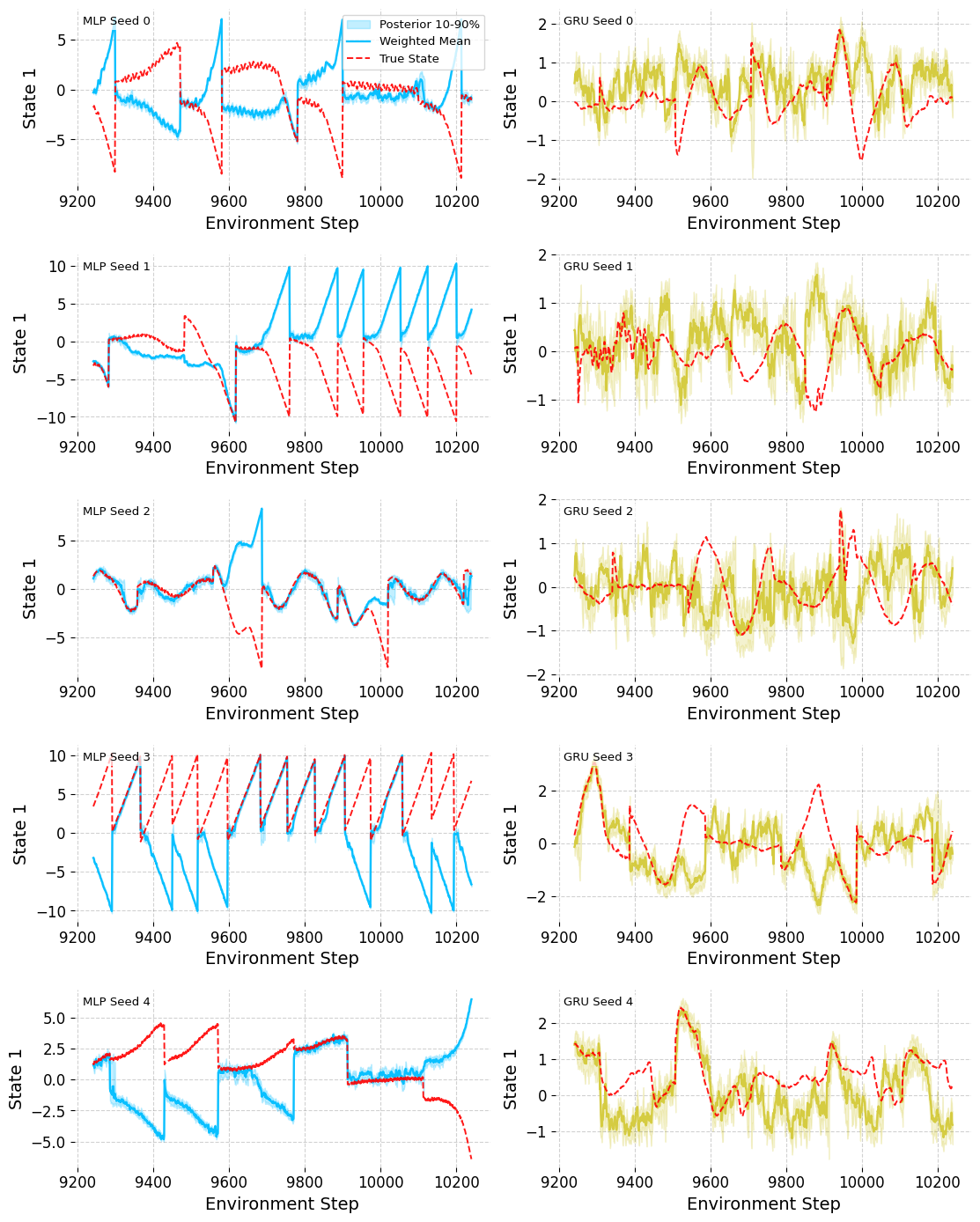}
    \caption{Posterior panel over five seeds for the double integrator LQR, over the last 1000 steps. The 10th quantile and 90th quantile are visualized. In this setting, only the magnitude, but not the direction, latent state is identifiable from the current observations alone. With no access to memory, the particle belief state still exhibits the ability to tracking the belief state, as seen in seeds 2 and 4, but fails when the state rapidly diverges. This suggests that the particle belief can be used in tandem with memory for latent state recovery.. }
    \label{fig:lqr-posterior}
\end{figure}

\newpage 

\begin{figure}[H]
    \centering
    \includegraphics[width=\linewidth]{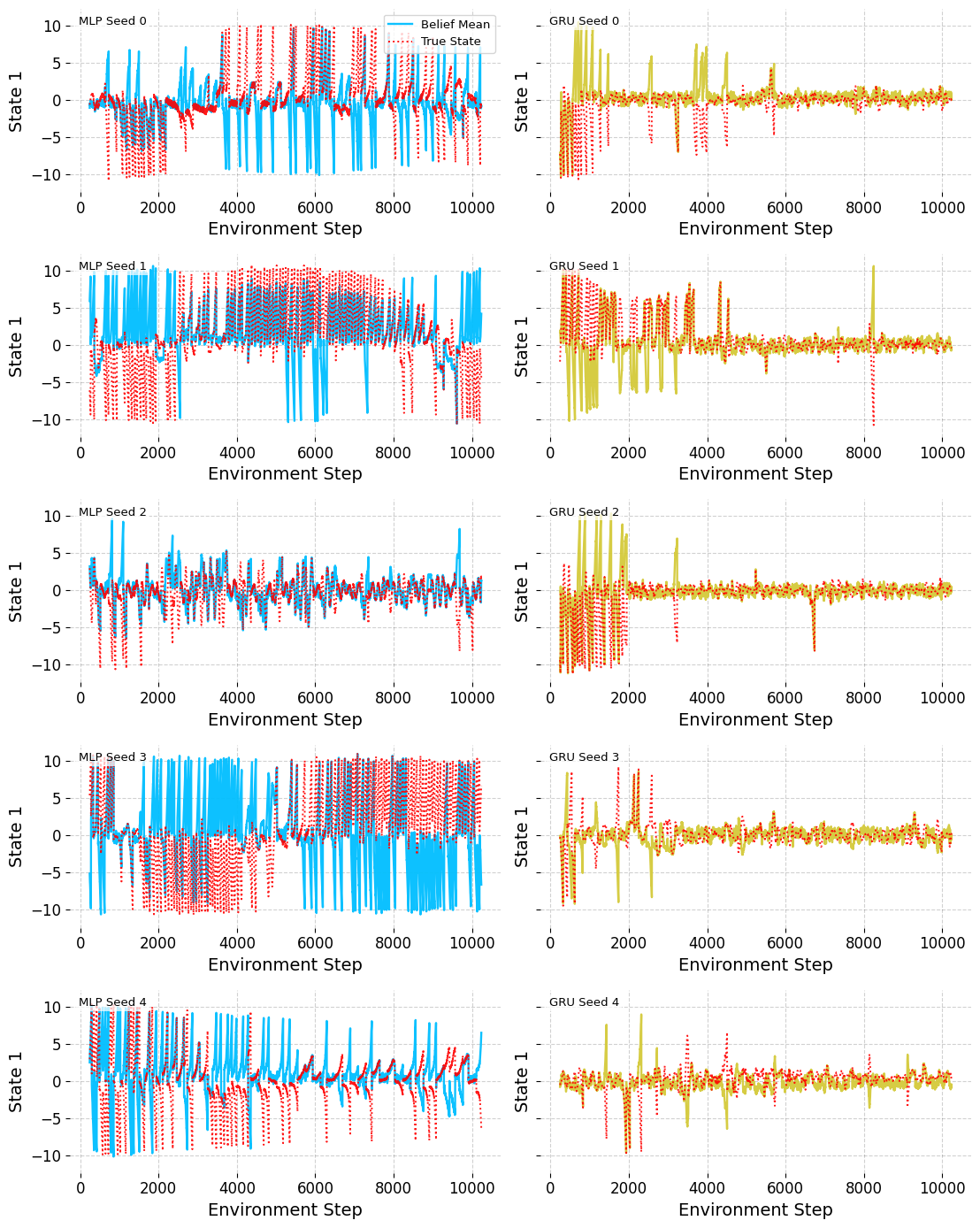}
    \caption{Belief mean panel over five seeds for LQR. The MLP backbone diverges, while the GRU backbone converges.}
    \label{fig:lqr-belief-mean}
\end{figure}

\newpage

\subsection{Heartsteps}
\label{app:heartsteps}

\paragraph{HeartSteps as an AOMDP.}
We follow the formulation of \cite{gao2025activemeasuringreinforcementlearning} in formulating HeartSteps as an actively observable Markov decision process
(AOMDP). At each day $t$, the agent first chooses a measurement action
$I_{t,1} \in \{0,1\}$, where $I_{t,1}=1$ corresponds to sending a survey
that reveals the latent reward/commitment state. The agent then chooses a
control action $A_{t,2} \in \{0,1\}$, where $A_{t,2}=1$ corresponds to
sending a walking-suggestion intervention.

Let $R_t$ denote the user's latent commitment to being active on day $t$,
$E_t$ denote engagement or burden, $C_{t,2}$ denote context before the
intervention decision, $M_{t,2}$ denote the proximal physical activity
outcome, and $O_t$ denote a passive proxy emission of the latent reward.
Following the AOMDP indexing, the latent state is
\[
U_{t+1} := R_t,
\]
and the observed state after the control action is
\[
Z_{t+1} := (M_{t,2}, E_t).
\]
The reward is the latent reward itself,
\[
r(Z_{t+1}, U_{t+1}) = R_t.
\]

The measurement-stage and control-stage states are
\[
S^I_{t,1} = (E_{t-1}, R_{t-1}),
\qquad
S^A_{t,2} = (E_{t-1}, R_{t-1}, C_{t,2}, I_{t,1}).
\]
Before choosing $I_{t,1}$, the agent observes $E_{t-1}$ and the passive
emission $O_{t-1}$, but not $R_{t-1}$. If $I_{t,1}=1$, the agent observes
$R_{t-1}$ before choosing $A_{t,2}$; if $I_{t,1}=0$, $R_{t-1}$ remains
latent.

The HeartSteps transition model is
\[
C_{t,2}
=
\theta^C_0 + \varepsilon^C_{t,2},
\]
\[
M_{t,2}
=
\theta^M_0
+
\theta^M_1 E_{t-1}
+
\theta^M_2 R_{t-1}
+
\theta^M_3 C_{t,2}
+
A_{t,2}
\left(
\theta^M_4
+
\theta^M_5 E_{t-1}
+
\theta^M_6 R_{t-1}
+
\theta^M_7 C_{t,2}
\right)
+
\varepsilon^M_{t,2},
\]
\[
E_t
=
\theta^E_0
+
\theta^E_1 E_{t-1}
+
\theta^E_A A_{t,2}
+
\theta^E_{AE} A_{t,2} E_{t-1}
+
\theta^E_I I_{t,1}
+
\theta^E_{IE} I_{t,1} E_{t-1}
+
\varepsilon^E_t,
\]
\[
R_t
=
\theta^R_0
+
\theta^R_1 M_{t,2}
+
\theta^R_2 E_t
+
\theta^R_3 R_{t-1}
+
\varepsilon^R_t,
\]
and the passive emission model is
\[
O_t
=
\theta^O_0
+
\theta^O_1 R_t
+
\varepsilon^O_t.
\]
The noise terms are mean-zero, with variances
$\sigma_C^2,\sigma_M^2,\sigma_E^2,\sigma_R^2,\sigma_O^2$.

The latent reward is not recoverable from the current passive observation
alone. Indeed, when $I_{t,1}=0$, the posterior over $R_{t-1}$ satisfies
\[
p(R_{t-1}=r \mid E_{t-1}, O_{t-1}, H_{t,1})
\propto
p(O_{t-1} \mid R_{t-1}=r)
p(R_{t-1}=r \mid E_{t-1}, H_{t,1}),
\]
where
\[
O_{t-1}
=
\theta^O_0
+
\theta^O_1 R_{t-1}
+
\varepsilon^O_{t-1}.
\]
Thus, unless the emission noise is degenerate and the prior is already a
point mass, the posterior is not a Dirac measure. In particular, for
$\sigma_O^2>0$, multiple values of $R_{t-1}$ have positive likelihood for
the same observed $O_{t-1}$. Therefore the latent reward cannot be
identified from the current passive observation alone. By contrast, when
$I_{t,1}=1$, the survey reveals $R_{t-1}$ and collapses the posterior:
\[
p(R_{t-1} \mid H_{t,2})
=
\delta_{R_{t-1}}.
\]

The AOMDP objective is
\[
\pi^\star
\in
\arg\max_\pi
\mathbb{E}_\pi
\left[
\sum_{t=1}^{H}
\gamma^{t-1} R_t
\right].
\]

\paragraph{Implementation settings.}
In our HeartSteps experiments, the encoded simulator state is
\[
x_t =
(R_t, E_t, t/H, C_t, M_t, O_t) \in \mathbb{R}^6.
\]
The observed coordinates are
\[
(E_t, t/H, C_t, M_t, O_t),
\]
and the only latent coordinate is $R_t$. We use horizon
\[
H = 270,
\qquad
\gamma = 0.9,
\]
binary control actions $A_{t,2}\in\{0,1\}$, and binary measurement actions
$I_{t,1}\in\{0,1\}$ in the AOMDP formulation. The benchmark contains
$42$ user-specific environments.

For PF-MCTQ, our default settings use
\[
J = 16
\]
particles, rollout length $1$, planner discount $0.9$, and at most $16$
candidate actions. The model and policy are trained online over
$27$ epochs with $10$ environment steps per epoch, for a total of
$270$ environment steps.

\paragraph{Benchmarks.}
RLSVI-SMC is the never-measure benchmark in our experiments.  It fixes the
measurement action to zero,
\[
    I_{t,1} = 0 \qquad \text{for all } t,
\]
so the latent reward is not directly observed after initialization.  Instead,
the algorithm maintains an SMC approximation to the posterior belief over the
latent reward,
\[
    \widehat b_t^R = \sum_{j=1}^J w_t^{(j)} \delta_{\widehat R_t^{(j)}} ,
\]
and chooses the control action using randomized least-squares value iteration
with a linear value-function approximation,
\[
    Q_A(b_{t,2}, a)
    \approx \phi_A(b_{t,2}, a)^\top \beta_A .
\]
Thus RLSVI-SMC is a strong and intentionally tailored benchmark for this
linear-function-approximation setting: the belief update uses the specified
linear-Gaussian working model, and the control rule uses hand-specified linear
features of the belief state, context, action, and action interactions.

PF-MCTQ is also evaluated under the never-measure information pattern,
\[
    I_{t,1}=0,
\]
so it has the same lack of direct latent-reward measurements as RLSVI-SMC.
However, PF-MCTQ is less specialized to the linear benchmark structure: it must
learn a dynamics model, actor, and critic, parameterized by neural networks,
from the observed data.  Consequently, PF-MCTQ incurs an additional estimation
cost relative to RLSVI-SMC, which is already matched to the linear basis and
Bayesian regression structure of the benchmark.

Dyna-ATMQ differs from both no-measure methods.  It is an active-measurement
baseline that maintains a tabular belief over a discretized state space, learns
a Dirichlet transition model and Q-table, and separately decides whether to
measure by comparing the estimated value of measurement to a measurement cost.
Its control action is selected from the belief-weighted Q-values,
\[
    A_{t,2}
    \in \arg\max_{a}
    \sum_s b_t(s) Q(s,a),
\]
while its measurement decision can reveal the latent reward and collapse the
posterior uncertainty.  In our experiments Dyna-ATMQ measures the environment
around half the time on average, giving it an informational advantage over both
RLSVI-SMC/never-measure and PF-MCTQ, which do not measure the environment.

\begin{figure}[H]
    \centering
    \includegraphics[width=\linewidth]{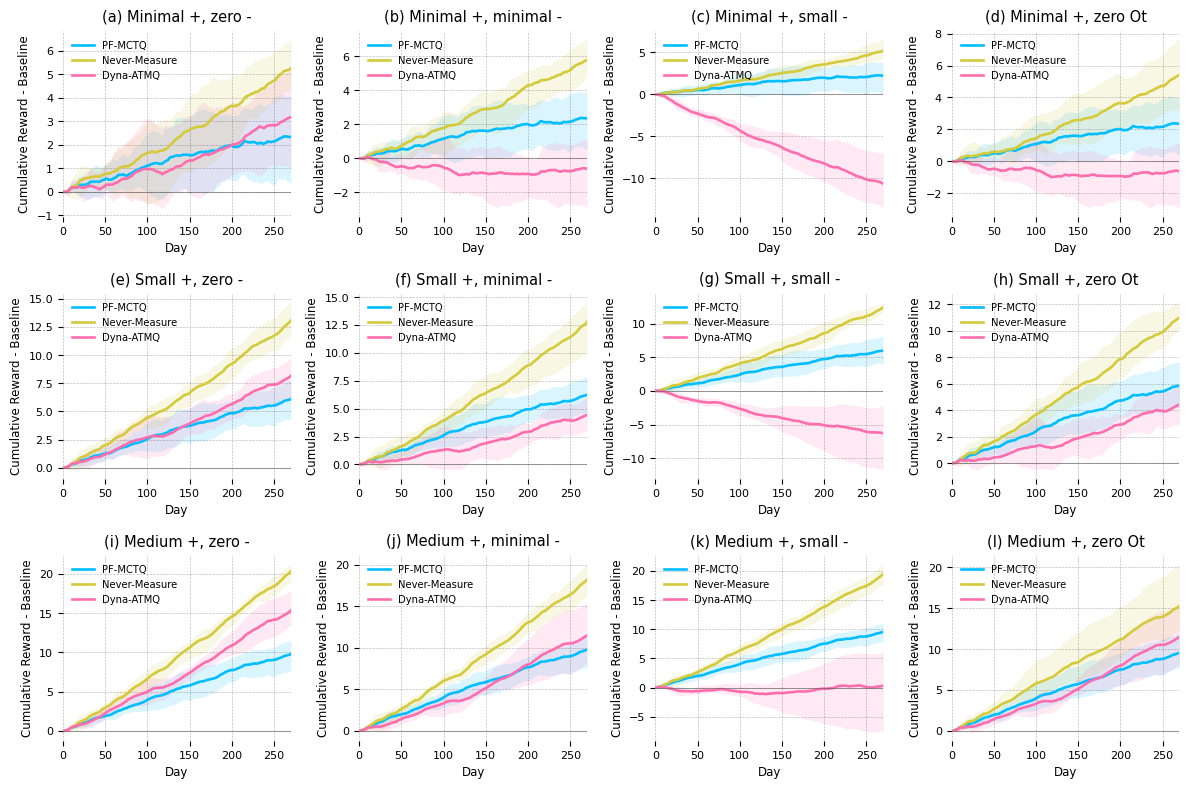}
    \caption{Cumulative reward above the baseline rollout across the twelve HeartSteps scenarios.
Panels vary the positive intervention effect (rows: minimal, small, medium) and the delayed negative measurement effect or proxy informativeness (columns: zero negative effect, minimal negative effect, small negative effect, and uninformative passive proxy $O_t$). Solid lines show the mean across runs and shaded bands show 2.5\% and 97.5\% quantiles.
The never-measure RLSVI-SMC benchmark performs best in most scenarios because it is specifically matched to the linear-Gaussian HeartSteps structure and is given access to hand-specified linear features. PF-MCTQ also never measures, but must learn its model, actor, and critic from data using neural approximators, so it typically improves over baseline while trailing the tailored linear benchmark. Dyna-ATMQ often underperforms when measurement has delayed negative effects, especially when the positive intervention effect is weak, because it measures frequently and therefore gains latent-state information at the cost of worsening future engagement. Its relative performance improves when the positive effect is larger or when passive proxy information is unavailable and so direct measurements become more valuable.
}
    \label{fig:heartsteps-all}
\end{figure}

\begin{figure}[H]
    \centering
    \includegraphics[width=\linewidth]{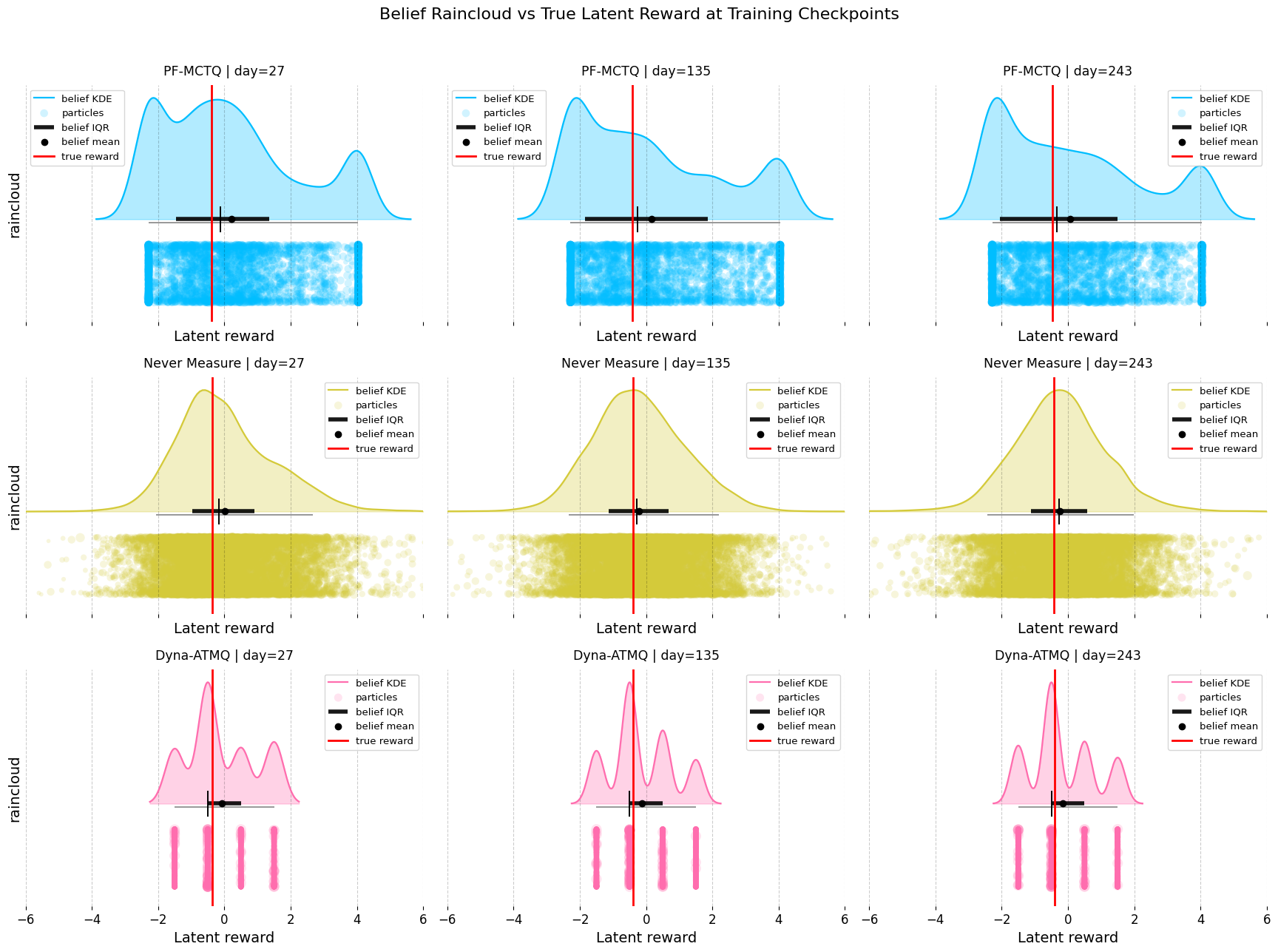}
    \caption{Belief over the latent reward at three training checkpoints in HeartSteps. Rows show PF-MCTQ, the never-measure RLSVI-SMC benchmark, and Dyna-ATMQ; columns show days 27, 135, and 243. Each panel displays posterior particles, a kernel density estimate, the posterior interquartile range, and the posterior mean. The red vertical line marks the true latent reward. We see that Dyna-ATMQ having access to measurements of the environment gives it an informational advantage over the other two algorithms. On the other hand, the correct linear specification that RLSVI-SMC enjoys allows it to create a posterior that is close to Gaussian. PF-MCTQ, which has to estimate a dynamics model, actor, and critic, parameterized by neural networks,
from the observed data, consequently has a much more expressive and multimodal posterior.
}
    \label{fig:heartsteps-snapshot}
\end{figure}

\begin{figure}[H]
    \centering
    \includegraphics[width=\linewidth]{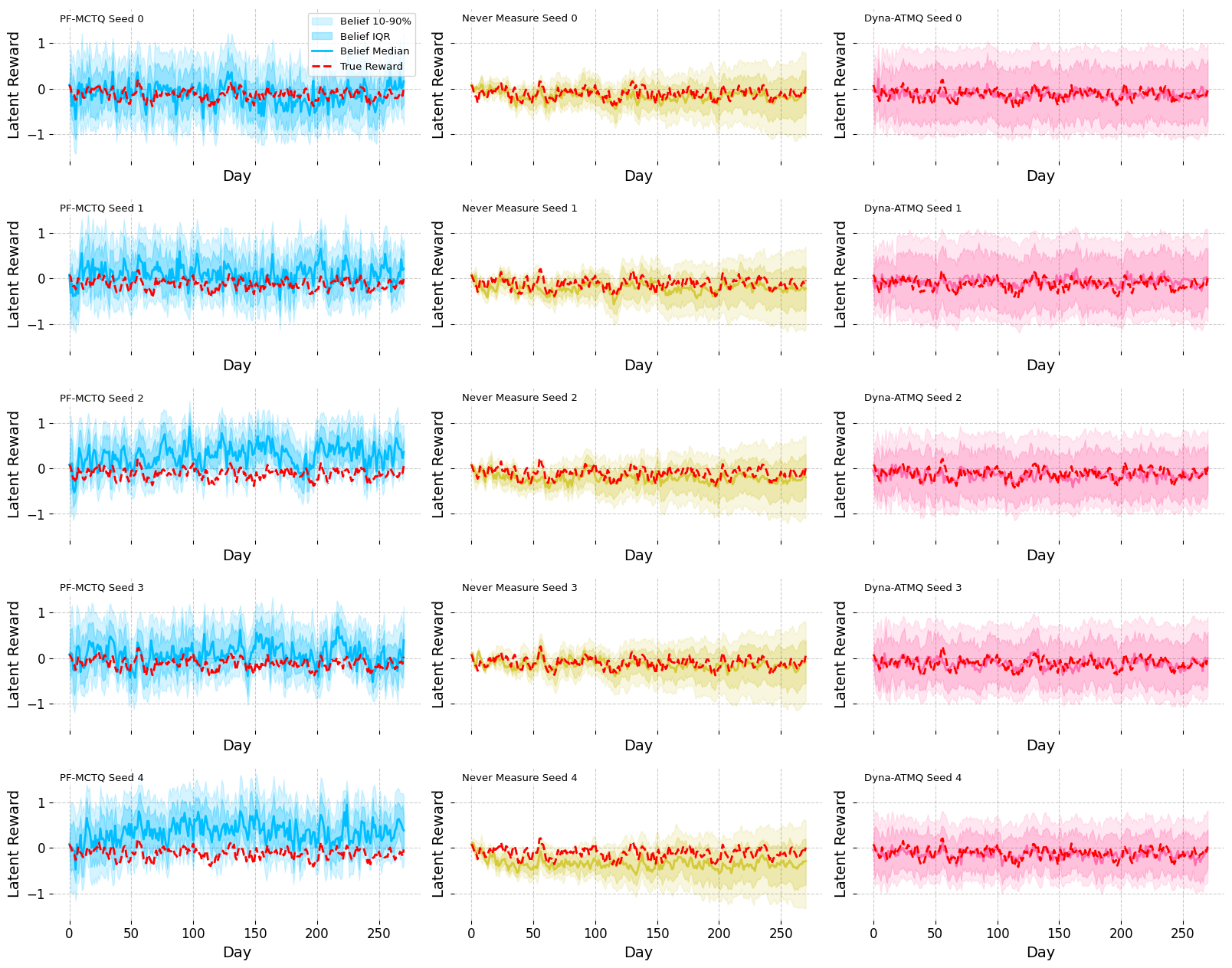}
    \caption{
Latent-reward belief trajectories for HeartSteps scenario (a), shown separately by random seed.
Columns compare PF-MCTQ, the never-measure RLSVI-SMC benchmark, and Dyna-ATMQ; rows show five independent seeds.
Each panel plots the posterior median, interquartile range, and 10--90\% interval for the latent reward, with the dashed red curve showing the true latent reward.
All three methods track the latent reward at the level needed for control, but they express uncertainty differently.
Never-measure produces comparatively narrow posteriors because it is tailored to the linear-Gaussian HeartSteps model and uses a conjugate SMC/RLSVI update.
PF-MCTQ has wider and more variable posteriors because it must learn the transition model and value functions from data while never directly measuring the latent reward.
Dyna-ATMQ also shows broad posterior support, but its belief remains centered near the true trajectory because it actively measures the latent reward on many days, partially collapsing uncertainty despite the coarser tabular approximation.
}
    \label{fig:heartsteps-posterior}
\end{figure}

\subsection{Hyperparameters}

\begin{table}[H]
\centering
\begin{tabular}{ll}
\toprule
Hyperparameter & Value \\
\midrule
Environment & LQR double integrator \\
Observed indices & $[0]$ \\
Warmup steps & $256$ \\
Initial model / policy updates & $100$ / $100$ \\
Epochs / steps per epoch & $40$ / $256$ \\
Model updates per epoch & $10$ \\
Particles & $32$ \\
Model subsequence length & $64$ \\
Network hidden dimensions & $[]$ \\
Recurrent lookback & $5$ \\
Rollout length / discount & $1$ / $0.9$ \\
SAC actor / critic learning rate & $3\times10^{-4}$ / $3\times10^{-4}$ \\
SAC batch size / entropy coefficient & $32$ / $0.01$ \\
Model learning rate / batch size & $3\times10^{-5}$ / $64$ \\
Particle filter mode / $\alpha$ & MOP / $0.8$ \\
ESS threshold / rejuvenation std. & $0.5$ / $0.1$ \\
Environment noise: transition / observation & $0.01$ / $0.05$ \\
Action bound / horizon & $3.0$ / $200$ \\
\bottomrule
\end{tabular}
\vspace{1ex}
\caption{Key hyperparameters for LQR.}
\end{table}

\begin{table}[H]
\centering
\begin{tabular}{ll}
\toprule
Hyperparameter & Value \\
\midrule
Environment & HeartSteps \\
User id / horizon & $0$ / $270$ \\
Noise mode / observation noise & sequential / $0.0$ \\
Warmup steps & $1024$ \\
Initial model / policy updates & $10$ / $30$ \\
Epochs / steps per epoch & $27$ / $10$ \\
Model updates per epoch & $10$ \\
Particles & $16$ \\
GRU backbone / recurrent BPTT & false / false \\
Hidden size / preprocessor / postprocessor & $32$ / $32$ / $32$ \\
Rollout length / discount & $1$ / $0.9$ \\
SAC actor / critic learning rate & $10^{-3}$ / $10^{-3}$ \\
SAC batch size / entropy coefficient & $64$ / $0.003$ \\
Automatic entropy tuning & false \\
Model learning rate / batch size & $3\times10^{-4}$ / $256$ \\
Particle filter mode / $\alpha$ & MOP / $0.8$ \\
ESS threshold / rejuvenation std. & $0.5$ / $0.001$ \\
\bottomrule
\end{tabular}
\vspace{1ex}
\caption{Key hyperparameters for HeartSteps.}
\end{table}

\begin{table}[H]
\centering
\begin{tabular}{ll}
\toprule
Hyperparameter & Value \\
\midrule
Environment & Ant \\
Observation variants & no position, no velocity \\
Observation noise / discount & $0.1$ / $0.99$ \\
Warmup steps & $1000$ \\
Initial model / policy updates & $2000$ / $10$ \\
Epochs / steps per epoch & $164$ / $300$ (submitted: $10$ / $300$) \\
Model updates per epoch & up to $10$, early stopping (submitted: $0$) \\
Policy update every & $10$ env.\ steps (submitted: $1$) \\
AdamW weight decay (actor, critic, model) & $10^{-4}$ (submitted: $0.1$) \\
Particles & $16$ \\
Model subsequence length & $32$ \\
GRU backbone / BPTT size & true / $32$ \\
Hidden dimensions & $[256,256]$ \\
Rollout length / discount & $10$ / $0.99$ \\
SAC actor / critic learning rate & $3\times10^{-4}$ / $3\times10^{-4}$ \\
SAC batch size / entropy coefficient & $256$ / $0.2$ \\
Model learning rate / batch size & $3\times10^{-4}$ / $256$ \\
Particle filter mode / $\alpha$ & MOP / $0.8$ \\
ESS threshold / rejuvenation std. & $0.5$ / $0.1$ \\
\bottomrule
\end{tabular}
\vspace{1ex}
\caption{Key hyperparameters for MuJoCo Ant, using the corrected configuration of Figure~\ref{fig:mujoco}. Values from the submitted version are given in parentheses where they differ.}
\label{tab:hyper_ant}
\end{table}

\begin{table}[t]
\centering
\begin{tabular}{ll}
\toprule
Hyperparameter & Value \\
\midrule
Environment & HalfCheetah \\
Observation variants & no position, no velocity \\
Observation noise / discount & $0.1$ / $0.99$ \\
Warmup steps & $1000$ \\
Initial model / policy updates & $1000$ / $10$ \\
Epochs / steps per epoch & $50$ / $1000$ \\
Model updates per epoch & $100$ \\
Early-stop patience / min updates & $20/20$ no-position; $5/5$ no-velocity \\
Particles & $16$ \\
Model subsequence length & $32$ \\
GRU backbone / BPTT size & true / $32$ \\
Hidden dimensions & $[256,256]$ \\
Rollout length / discount & $10$ / $0.99$ \\
SAC actor / critic learning rate & $3\times10^{-4}$ / $3\times10^{-4}$ \\
SAC batch size / entropy coefficient & $256$ / $0.2$ \\
Model learning rate / batch size & $3\times10^{-4}$ / $256$ \\
Particle filter mode / $\alpha$ & MOP / $0.8$ \\
\bottomrule
\end{tabular}
\vspace{1ex}
\caption{Key hyperparameters for MuJoCo HalfCheetah.}
\end{table}

\begin{table}[H]
\centering
\begin{tabular}{ll}
\toprule
Hyperparameter & Value \\
\midrule
Environment & Hopper \\
Observation variants & no position, no velocity \\
Observation noise / discount & $0.1$ / $0.99$ \\
Warmup steps & $1000$ \\
Initial model / policy updates & $1000$ / $10$ \\
Epochs / steps per epoch & $50$ / $1000$ \\
Model updates per epoch & $100$ \\
Particles & $16$ \\
Model subsequence length & $32$ \\
GRU backbone / BPTT size & true / $32$ \\
Hidden dimensions & $[256,256]$ \\
Rollout length / discount & $10$ / $0.99$ \\
SAC actor / critic learning rate & $3\times10^{-4}$ / $3\times10^{-4}$ \\
SAC batch size / entropy coefficient & $256$ / $0.2$ \\
Model learning rate / batch size & $3\times10^{-4}$ / $256$ \\
Particle filter mode / $\alpha$ & MOP / $0.8$ \\
ESS threshold / rejuvenation std. & $0.5$ / $0.1$ \\
\bottomrule
\end{tabular}

\caption{Key hyperparameters for MuJoCo Hopper.}
\end{table}

\begin{table}[H]
\centering
\begin{tabular}{ll}
\toprule
Hyperparameter & Value \\
\midrule
Environment & Walker2d \\
Observation variants & no position, no velocity \\
Observation noise / discount & $0.1$ / $0.99$ \\
Warmup steps & $1000$ \\
Initial model / policy updates & $1000$ / $10$ \\
Epochs / steps per epoch & $50$ / $1000$ \\
Model updates per epoch & $100$ \\
Early-stop patience / min updates & $20/20$ no-position; $5/5$ no-velocity \\
Particles & $16$ \\
Model subsequence length & $32$ \\
GRU backbone / BPTT size & true / $32$ \\
Hidden dimensions & $[256,256]$ \\
Rollout length / discount & $10$ / $0.99$ \\
SAC actor / critic learning rate & $3\times10^{-4}$ / $3\times10^{-4}$ \\
SAC batch size / entropy coefficient & $256$ / $0.2$ \\
Model learning rate / batch size & $3\times10^{-4}$ / $256$ \\
Particle filter mode / $\alpha$ & MOP / $0.8$ \\
\bottomrule
\end{tabular}
\vspace{1ex}
\caption{Key hyperparameters for MuJoCo Walker2d.}
\end{table}

\section{Detailed Error Analysis and Proofs}
\label{app:proofs}

Throughout, $|r(s,a)|\le R_{\max}$ and $V_{\max}:=R_{\max}/(1-\gamma)$. We define the per-round error quantities:
\begin{align*}
\epsilon_{\mathrm{mc}}(t,d)&:=\sup_{a\in\gA}\bigl|\widehat Q_t(b_t^J,a)-Q_{\theta,d}(b_t^J,a)\bigr|, &\text{(Monte Carlo error)}\\
\epsilon_{\mathrm{struct}}(t,d)&:=\sup_{a\in\gA}\bigl|Q_{\theta}^{\star,B}(b_t^J,a)-Q_{\theta,d}(b_t^J,a)\bigr|. &\text{(structural error)}
\end{align*}

\subsection{Structural Error of QMDP Bootstrapping}
\label{app:structural_proofs}

Define the \emph{linear QMDP surrogate}
\begin{equation}
\widetilde V_\theta^{\star}(b):=\int_{\gS}V_\theta^{\star,S}(s)\,b(\mathrm ds).
\label{eq:linear_surrogate}
\end{equation}
By Jensen's inequality, $\widetilde V_\theta^{\star}(b)\ge \bar V_\theta^{\star,S}(b)$ and the gap $\Xi_\theta := \sup_b [\widetilde V_\theta^{\star}(b)-\bar V_\theta^{\star,S}(b)] \ge 0$.

\begin{prop}[Structural error of MCTQ]
\label{prop:structural_error}
For every $\theta\in\Theta$ and $d\ge 0$,
\begin{equation}
\|\bar V_{\theta, d}^{\star,S}-V_{\theta}^{\star,B}\|_\infty
\;\le\;
\frac{\gamma^{d+1}}{1-\gamma}\,\Xi_\theta.
\label{eq:qmdp_gap}
\end{equation}
\end{prop}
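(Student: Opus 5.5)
The plan is to show that the QMDP leaf value $\bar V_\theta^{\star,S}$ is within $\gamma\Xi_\theta/(1-\gamma)$ of $V_\theta^{\star,B}$ (after one application of $\gT_\theta$), and then let the $\gamma$-contraction of $\gT_\theta$ supply the remaining factor $\gamma^d$. The argument uses a sandwich between the linear surrogate $\widetilde V_\theta^{\star}$ and the QMDP value $\bar V_\theta^{\star,S}$.

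\textbf{Step 1: two pointwise bounds.}
\begin{itemize}
\item[(i)] We show $\bar V_\theta^{\star,S}\le V_\theta^{\star,B}\le \widetilde V_\theta^{\star}$. The upper bound holds because an agent that observes the state does at least as well as one that does not, so $V_\theta^{\star,B}(b)\le\int V_\theta^{\star,S}\,\mathrm db$. Formally, $\widetilde V_\theta^\star$ is a supersolution: $\gT_\theta\widetilde V_\theta^\star\le\widetilde V_\theta^\star$. This follows because $\int \widetilde V_\theta^\star(\tau_\theta(b,a,o))\,p_\theta(o\mid b,a)\,\mathrm do=\int\!\!\int V_\theta^{\star,S}(s')P_\theta(\mathrm ds'\mid s,a)\,b(\mathrm ds)$ (the posterior averaged over observations returns the predictive law). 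Hence $\gT_\theta\widetilde V_\theta^\star(b)=\max_a\int Q_\theta^{\star,S}(s,a)\,b(\mathrm ds)\le\widetilde V_\theta^\star(b)$, and monotonicity plus iteration give $V_\theta^{\star,B}\le\widetilde V_\theta^\star$.
\item[(ii)] The same identity gives the key relation $\gT_\theta\widetilde V_\theta^\star=\bar V_\theta^{\star,S}$.
\end{itemize}
For the lower bound in (i), $\bar V_\theta^{\star,S}=\gT_\theta\widetilde V_\theta^\star\ge\gT_\theta\bar V_\theta^{\star,S}$ by monotonicity, since $\widetilde V_\theta^\star\ge\bar V_\theta^{\star,S}$. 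So $\bar V_\theta^{\star,S}$ is also a supersolution, which would give $\bar V_\theta^{\star,S}\ge V_\theta^{\star,B}$. That is the wrong direction, so I will not rely on this lower bound and will instead proceed as follows.

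\textbf{Step 2: one-step gap.} Combining the relations, $V_\theta^{\star,B}\le\widetilde V_\theta^\star\le\bar V_\theta^{\star,S}+\Xi_\theta$ holds directly from the definition of $\Xi_\theta$. Applying $\gT_\theta$ to $V_\theta^{\star,B}\le\widetilde V_\theta^\star$ gives $V_\theta^{\star,B}\le\gT_\theta\widetilde V_\theta^\star=\bar V_\theta^{\star,S}$. For the other side, I will show $\bar V_\theta^{\star,S}-V_\theta^{\star,B}\le\gamma\Xi_\theta/(1-\gamma)$. From $\bar V_\theta^{\star,S}=\gT_\theta\widetilde V_\theta^\star$ and $\widetilde V_\theta^\star\le\bar V_\theta^{\star,S}+\Xi_\theta$, monotonicity and the discounting of constants give $\bar V_\theta^{\star,S}\le\gT_\theta\bar V_\theta^{\star,S}+\gamma\Xi_\theta$. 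Iterating this inequality and using $\gT_\theta^n\bar V_\theta^{\star,S}\to V_\theta^{\star,B}$ yields
\[
\bar V_\theta^{\star,S}\le V_\theta^{\star,B}+\frac{\gamma\Xi_\theta}{1-\gamma}.
\]
Together these give $0\le\bar V_\theta^{\star,S}-V_\theta^{\star,B}\le\gamma\Xi_\theta/(1-\gamma)$, which is the case $d=0$.

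\textbf{Step 3: depth $d$.} Since $V_\theta^{\star,B}=\gT_\theta^dV_\theta^{\star,B}$ and $\gT_\theta$ is a sup-norm $\gamma$-contraction,
\[
\|\gT_\theta^d\bar V_\theta^{\star,S}-V_\theta^{\star,B}\|_\infty\le\gamma^d\|\bar V_\theta^{\star,S}-V_\theta^{\star,B}\|_\infty\le\frac{\gamma^{d+1}}{1-\gamma}\Xi_\theta,
\]
which is the claimed bound. The same contraction also gives the convergence $\bar V_{\theta,d}^{\star,S}\to V_\theta^{\star,B}$ as $d\to\infty$.

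\textbf{Main obstacle.} The delicate step is the averaging identity $\int\widetilde V(\tau_\theta(b,a,o))\,p_\theta(o\mid b,a)\,\mathrm do=\int\widetilde V\,\mathrm d\tilde b_\theta(\cdot\mid b,a)$, which is the martingale property of beliefs. It must be justified measure-theoretically using the density form of~\eqref{eq:bayes_update}: the normalizer cancels against $p_\theta(o\mid b,a)$, and Fubini then applies because $V_\theta^{\star,S}$ is bounded. A second point to verify is that $\gT_\theta$ is monotone and satisfies $\gT_\theta(V+c)=\gT_\theta V+\gamma c$ on bounded functions; both follow directly from~\eqref{eq:bellman_operator}.
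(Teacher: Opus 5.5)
Your proof is correct. It rests on the same identity the paper uses, namely Bayes consistency giving $\gT_\theta\widetilde V_\theta^{\star}=\bar V_\theta^{\star,S}$; the paper phrases this as $\max_a Q_{\theta,0}(b,a)=\bar V_\theta^{\star,S}(b)$. You close the $d=0$ case differently, though.

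The paper stays in sup norm. A triangle inequality gives $\|Q_\theta^{\star,B}-Q_{\theta,0}\|_\infty\le\gamma\|V_\theta^{\star,B}-\bar V_\theta^{\star,S}\|_\infty+\gamma\Xi_\theta$. Then $|\max f-\max g|\le\max|f-g|$ turns this into the self-referential inequality $x\le\gamma x+\gamma\Xi_\theta$, which it solves.

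You instead use the order structure of $\gT_\theta$, i.e.\ monotonicity and $\gT_\theta(V+c)=\gT_\theta V+\gamma c$. Since $\widetilde V_\theta^\star$ is a supersolution, $V_\theta^{\star,B}\le\widetilde V_\theta^\star$, hence $V_\theta^{\star,B}=\gT_\theta V_\theta^{\star,B}\le\gT_\theta\widetilde V_\theta^\star=\bar V_\theta^{\star,S}$. For the other side, $\bar V_\theta^{\star,S}\le\gT_\theta\bar V_\theta^{\star,S}+\gamma\Xi_\theta$ iterates to the upper bound.

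The paper's route is a few lines shorter and needs only sup-norm estimates. Yours also yields the sign, $0\le\bar V_{\theta,d}^{\star,S}-V_\theta^{\star,B}$. Moreover, $\gT_\theta\bar V_\theta^{\star,S}\le\gT_\theta\widetilde V_\theta^\star=\bar V_\theta^{\star,S}$, so the depth-$d$ targets $\bar V_{\theta,d}^{\star,S}$ decrease monotonically in $d$ to $V_\theta^{\star,B}$. This recovers the classical fact that QMDP upper-bounds the belief value, which the paper's absolute-value argument discards.

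One thing to fix: the lower bound $\bar V_\theta^{\star,S}\le V_\theta^{\star,B}$ asserted in Step 1(i) is false in general. The supersolution computation you then carry out is not the ``wrong direction''; it is the correct inequality. Delete that claim and state the sandwich as $V_\theta^{\star,B}\le\bar V_\theta^{\star,S}\le\widetilde V_\theta^{\star}$, which is exactly what your Step 2 proves.
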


In particular, $\epsilon_{\mathrm{struct}}(u,d)\le\frac{\gamma^{d+1}}{1-\gamma}\Xi_{\bar\theta_u}$.

\begin{proof}[Proof of Proposition~\ref{prop:structural_error}]
Write
\begin{align*}
Q_{\theta}^{\star,B}(b,a)
&=r(b,a)+\gamma\!\int_{\gO}V_{\theta}^{\star,B}(\tau_\theta(b,a,o))\,p_\theta(o\mid b,a)\,\mathrm do,\\
Q_{\theta,0}(b,a)
&=r(b,a)+\gamma\!\int_{\gS}V_\theta^{\star,S}(s')\,\tilde b_\theta(\mathrm ds'\mid b,a)
=r(b,a)+\gamma\,\widetilde V_\theta^{\star}(\tilde b_\theta(\cdot\mid b,a)).
\end{align*}
By Bayes consistency and linearity of $\widetilde V_\theta^{\star}$:
\[
\widetilde V_\theta^{\star}(\tilde b_\theta(\cdot\mid b,a))
=\int_{\gO}\widetilde V_\theta^{\star}(\tau_\theta(b,a,o))\,p_\theta(o\mid b,a)\,\mathrm do.
\]
Subtracting and using $\widetilde V_\theta^{\star}=\bar V_\theta^{\star,S}+(\widetilde V_\theta^{\star}-\bar V_\theta^{\star,S})$ with $0\le \widetilde V_\theta^{\star}-\bar V_\theta^{\star,S}\le\Xi_\theta$:
\[
\|Q_\theta^{\star,B}-Q_{\theta,0}\|_\infty
\;\le\;
\gamma\|V_\theta^{\star,B}-\bar V_\theta^{\star,S}\|_\infty+\gamma\,\Xi_\theta.
\]
Since $\|V_\theta^{\star,B}-\bar V_\theta^{\star,S}\|_\infty=\sup_b|\max_a Q_\theta^{\star,B}(b,a)-\max_a Q_{\theta,0}(b,a)|\le\|Q_\theta^{\star,B}-Q_{\theta,0}\|_\infty$ (using $|\max f-\max g|\le\max|f-g|$), rearranging gives $\|V_\theta^{\star,B}-\bar V_\theta^{\star,S}\|_\infty\le\frac{\gamma}{1-\gamma}\Xi_\theta$ (the $d=0$ case). For general $d\ge 0$: $\bar V_{\theta,d}^{\star,S}=\gT_\theta^d\bar V_\theta^{\star,S}$ and $V_\theta^{\star,B}=\gT_\theta V_\theta^{\star,B}$ (fixed point), so
\[
\|\bar V_{\theta,d}^{\star,S}-V_\theta^{\star,B}\|_\infty
=\|\gT_\theta^d\bar V_\theta^{\star,S}-\gT_\theta^d V_\theta^{\star,B}\|_\infty
\le\gamma^d\|\bar V_\theta^{\star,S}-V_\theta^{\star,B}\|_\infty
\le\frac{\gamma^{d+1}}{1-\gamma}\Xi_\theta.\qedhere
\]
\end{proof}

\subsection{Planning and State-Identification Error}
\label{app:planning_state_proofs}

\begin{defn}[Q-belief distance]\label{def:dQ}
For $b,b'\in\Delta(\gS)$, define
$d_Q(b,b'):=\sup_{\theta\in\Theta,\,a\in\gA}|Q_\theta^{\star,B}(b,a)-Q_\theta^{\star,B}(b',a)|$.
\end{defn}

\begin{lem}[$Q^{\star,B}$-difference via test functions]\label{lem:Q_lipschitz}
Fix $\theta\in\Theta$, $a\in\gA$, and truncation depth $H\ge 1$. Recall the Bayesian update $\tau(b,a,o)\in\Delta(\gS)$, the observation marginal $p(o\mid b,a)=\int Z_\theta(o\mid s')P_\theta(s'\mid s,a)\,b(ds)\,ds'$, and define
\begin{equation}\label{eq:psi_tilde_f_def}
\psi_{o,a}(s):=\int Z_\theta(o\mid s')P_\theta(s'\mid s,a)\,ds', 
\qquad
\tilde f_{o,a}(s):=\int f(s')Z_\theta(o\mid s')P_\theta(s'\mid s,a)\,ds',
\end{equation}
so that $p(o\mid b,a)=b(\psi_{o,a})$ and $\tau(b,a,o)(f)=b(\tilde f_{o,a})/b(\psi_{o,a})$.

Under Assumption~\ref{aspt:particle_error}, for all $b,b'\in\Delta(\gS)$:
\begin{equation}\label{eq:Q_test_decomp}
|Q_\theta^{\star,B}(b,a)-Q_\theta^{\star,B}(b',a)|
\;\le\;
\sum_{\ell=0}^{H-1}\gamma^\ell\,C_{\mathrm{BU}}^\ell\sup_{f\in\gF_\ell(\theta,a)}|b(f)-b'(f)|
\;+\;2\gamma^H V_{\max},
\end{equation}
where $C_{\mathrm{BU}}:=2g_{\max}/g_{\min}^2$ and $\gF_\ell(\theta,a)$ is a class of functions $\gS\to[-g_{\max}^\ell V_{\max},\,g_{\max}^\ell V_{\max}]$ with $\gN_{[]}(\gF_\ell,\epsilon)\le|\gA|^\ell\gN_\Theta(\epsilon/(\ell V_{\max}))$, defined recursively:
\begin{itemize}
\item $\gF_0(\theta,a)=\{r(\cdot,a'):a'\in\gA\}\cup\{Q_\theta^{\star,S}(\cdot,a'):a'\in\gA\}$,
\item $\gF_{\ell+1}(\theta,a)=\bigl\{\tilde f_{o,a''}:\,f\in\gF_\ell(\theta,a''),\,a''\in\gA\bigr\}\cup\bigl\{\psi_{o,a''}:\,a''\in\gA\bigr\}$.
\end{itemize}
\end{lem}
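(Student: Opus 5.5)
The plan is to unroll the belief-MDP Bellman recursion $H$ times. At each level, I pull the belief difference out as a difference of integrals of test functions against $b$ and $b'$. At depth $H$, I truncate with the crude bound $|V|\le V_{\max}$.

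\textbf{One-step estimate.} Write
\[
Q_\theta^{\star,B}(b,a)-Q_\theta^{\star,B}(b',a) = [r(b,a)-r(b',a)] + \gamma\Bigl[\int V(\tau(b,a,o))\,b(\psi_{o,a})\,\mathrm do - \int V(\tau(b',a,o))\,b'(\psi_{o,a})\,\mathrm do\Bigr],
\]
with $V=V_\theta^{\star,B}$. The reward term equals $b(r(\cdot,a))-b'(r(\cdot,a))$, an $\gF_0$ test function. For the continuation term, add and subtract $V(\tau(b',a,o))\,b(\psi_{o,a})$. This splits it into two pieces:
\begin{itemize}
\item $\int V(\tau(b',a,o))\,[b(\psi_{o,a})-b'(\psi_{o,a})]\,\mathrm do$, controlled by $V_{\max}$ times a $\psi$-type test-function difference;
\item $\int [V(\tau(b,a,o))-V(\tau(b',a,o))]\,b(\psi_{o,a})\,\mathrm do$.
\end{itemize}

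\textbf{Transferring to the next level.} For the second piece, I use $V=\max_{a''}Q_\theta^{\star,B}(\cdot,a'')$ together with $|\max f-\max g|\le\max|f-g|$. This reduces it to $\sup_{a''}|Q(\tau(b,a,o),a'')-Q(\tau(b',a,o),a'')|$, which is the same quantity one level deeper, evaluated at the posteriors.

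To close the recursion I need to express posterior differences $\tau(b,a,o)(f)-\tau(b',a,o)(f)$ through prior differences. The quotient identity is
\[
\frac{b(\tilde f)}{b(\psi)}-\frac{b'(\tilde f)}{b'(\psi)}=\frac{b(\tilde f)-b'(\tilde f)}{b(\psi)}+\frac{b'(\tilde f)\,[b'(\psi)-b(\psi)]}{b(\psi)\,b'(\psi)}.
\]
Under Assumption~\ref{aspt:particle_error}, $b(\psi)\ge g_{\min}$ and $|b'(\tilde f)|/b'(\psi)\le\|f\|_\infty$. Each posterior test-function gap is therefore at most $g_{\min}^{-1}$ times a combination of $|b(\tilde f_{o,a})-b'(\tilde f_{o,a})|$ and $|b(\psi_{o,a})-b'(\psi_{o,a})|$, and these are exactly the members of $\gF_{\ell+1}$. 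The integration against $b(\psi_{o,a})\,\mathrm do$ and the $g_{\max}$ bound on densities supply the remaining factor. Together this yields the per-level multiplier $C_{\mathrm{BU}}=2g_{\max}/g_{\min}^2$ and the range bound $g_{\max}^\ell V_{\max}$ for $\gF_\ell$.

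\textbf{Main obstacle.} The recursion does not directly close, because the level-$(\ell+1)$ quantity is a value difference rather than a test-function difference. To handle this, I will prove by induction on $H$ a stronger statement over the composed test classes:
\[
\sup_{a}|Q(b,a)-Q(b',a)|\le \sum_{\ell<H}\gamma^\ell C_{\mathrm{BU}}^\ell\sup_{\gF_\ell}|b(f)-b'(f)|+2\gamma^H V_{\max}.
\]
In the inductive step, I pull the pointwise-in-$o$ sup over the composed classes outside the $\mathrm do$ integral. Because $\gF_{\ell+1}$ indexes over all $o$ and $a''$, it absorbs the resulting families. The base of the truncation at depth $H$ uses $|V(\tau(b,\cdot))-V(\tau(b',\cdot))|\le 2V_{\max}$, which gives the $2\gamma^H V_{\max}$ tail. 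The $Q_\theta^{\star,S}$ functions in $\gF_0$ are included so the same classes also serve the QMDP leaves, but they are not needed here.

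\textbf{Bracketing number.} I obtain this by induction as well. Each level multiplies the count by $|\gA|$ for the choice of $a''$. The map $\theta\mapsto\tilde f$ is Lipschitz in the trajectory-likelihood $\ell_1$ metric with constant at most $\ell V_{\max}$ over $\ell$ compositions, which gives $\gN_{[]}(\gF_\ell,\epsilon)\le|\gA|^\ell\gN_\Theta(\epsilon/(\ell V_{\max}))$.

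The delicate point is this bracketing count, since the continuum of $o$ values must not inflate it. I would handle this by noting that brackets of $\tilde f_{o,a}$ are inherited from the joint kernel brackets in the likelihood metric, uniformly in $o$.
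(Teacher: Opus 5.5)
You follow the paper's proof essentially step for step on the inequality~(\ref{eq:Q_test_decomp}). You use the same split of the continuation into an observation-shift piece and a belief-propagation piece, and the same quotient identity for $\tau(b,a,o)(f)-\tau(b',a,o)(f)$. Your induction likewise spends one factor $C_{\mathrm{BU}}$ per level and leaves a $2\gamma^H V_{\max}$ tail. Inducting on $H$ directly for $Q_\theta^{\star,B}$, instead of on finite-horizon values $\Phi$, is a harmless and slightly cleaner variant.

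You also inherit the paper's loose bookkeeping. The $\psi$-term of the quotient identity carries $\|f\|_\infty$, which can be as large as $g_{\max}^\ell V_{\max}$ for $f\in\gF_\ell(\theta,a)$. The observation-shift piece is bounded by $\gamma V_{\max}\int|(b-b')(\psi_{o,a})|\,\mathrm do\le\gamma V_{\max}\lambda(\gO)\sup_o|(b-b')(\psi_{o,a})|$ with $\lambda(\gO)\le 1/g_{\min}$. That is an extra level-$1$ contribution on top of $\gamma C_{\mathrm{BU}}$. Integrating against the probability measure $b(\psi_{o,a})\,\mathrm do$ does not absorb these factors. The coefficients $\gamma^\ell C_{\mathrm{BU}}^\ell$ come out only after two repairs: replace the $\psi$-members of $\gF_{\ell+1}$ by rescaled copies $c\,\psi_{o,a''}$ with $c\le g_{\max}^\ell V_{\max}$ (this respects the stated range), and pay an extra constant for the observation-shift piece. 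The paper's proof is equally informal here, so this is not something you missed from it.

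The genuine gap is the bracketing bound. The paper's proof does not address it, and your sketch does not establish it. For fixed $\theta$, $\gF_1(\theta,a)$ already contains $\psi_{o,a''}$ for every $o\in\gO$, and these are different functions of $s$ as $o$ varies. A cover of $\Theta$ in the trajectory-likelihood metric controls only the dependence on $\theta$. So no inheritance argument can make the brackets uniform in $o$ at the price $|\gA|^\ell\gN_\Theta(\epsilon/(\ell V_{\max}))$.

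The tension is built into your plan. The inequality needs $\gF_{\ell+1}$ to contain every observation index, precisely so that $\sup_o$ can be pulled out of the $\mathrm do$ integral, and the count must then pay for those indices. In fact the stated bound fails:
\begin{itemize}
\item Take $\Theta=\{\theta\}$, $|\gA|=1$, and two states and two observations.
\item Set $P(\cdot\mid s,a)=0.9\,\delta_s+0.1\,\mathrm{Unif}$, and $Z_\theta(o\mid s)=0.8$ if $o=s$ and $0.2$ otherwise. This satisfies Assumption~\ref{aspt:particle_error}.
\item Then $\psi_{1,a}=(0.77,0.23)\neq(0.23,0.77)=\psi_{2,a}$, so $\gN_{[]}(\gF_1,\epsilon)\ge 2$ for $\epsilon<0.54$.
\item The right-hand side of the claimed bound is $1$; at $\ell=0$ it is not even defined.
\end{itemize}

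A correct count needs an extra factor per level for the observation index. For finite $\gO$ this is $|\gO|^\ell$. For continuous $\gO$ it is a covering number of $\gO$ at a scale set by $\omega_Z$, raised to the power $\ell$, using $\|\tilde f_{o,a}-\tilde f_{o',a}\|_\infty\le\|f\|_\infty\,g_{\max}\,\omega_Z(\|o-o'\|)$. The same omission affects the covering step $N_\ell=\gN_\Theta(1/T)\,|\gA|^\ell$ in the proof of Proposition~\ref{prop:particle_error}.
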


\begin{proof}
We prove by induction on $h$: for any $\Phi(b)=\E_\theta^{\pi}[\sum_{u=0}^{h-1}\gamma^u r_u\mid b_0=b]$ with $h\le H$, $|\Phi(b)-\Phi(b')|$ admits the decomposition~\eqref{eq:Q_test_decomp}. Taking $\Phi=Q_\theta^{\star,B}(\cdot,a)$ with $h=H$ and bounding the tail by $2\gamma^H V_{\max}$ yields the result.

\emph{Base case ($h=1$).} $|\Phi(b)-\Phi(b')|=|\sup_a[b(r_a)-b'(r_a)]|\le\sup_{f\in\gF_0}|b(f)-b'(f)|$.

\emph{Inductive step ($h\to h+1$).} For a fixed action $a$, write
\[
\Phi(b)=b(r_a)+\gamma\int_{\gO}\Phi_{h-1}\bigl(\tau(b,a,o)\bigr)\,p(o\mid b,a)\,do,
\]
where $\Phi_{h-1}$ is the $(h-1)$-step continuation. The continuation difference decomposes as:
\begin{align}
&\left|\int\Phi_{h-1}(\tau(b,a,o))\,p(o|b,a)\,do-\int\Phi_{h-1}(\tau(b',a,o))\,p(o|b',a)\,do\right|\notag\\
&\;\le\;\underbrace{\|\Phi_{h-1}\|_\infty\int|(b-b')(\psi_{o,a})|\,do}_{\text{(I) obs.\ distribution shift}}+\underbrace{\int|\Phi_{h-1}(\tau(b,a,o))-\Phi_{h-1}(\tau(b',a,o))|\,p(o|b',a)\,do}_{\text{(II) belief propagation}},\label{eq:cont_split}
\end{align}
using $p(o\mid b,a)-p(o\mid b',a)=(b-b')(\psi_{o,a})$.

\emph{Term (I)} contributes $\psi_{o,a}\in\gF_{\ell+1}$. 

\emph{Term (II)} requires bounding $|\tau(b,a,o)(f)-\tau(b',a,o)(f)|$ for $\|f\|_\infty\le 1$. Since $\tau(b,a,o)(f)=b(\tilde f_{o,a})/b(\psi_{o,a})$, the quotient rule gives
\begin{equation}\label{eq:quotient_rule}
\frac{b(\tilde f_{o,a})}{b(\psi_{o,a})}-\frac{b'(\tilde f_{o,a})}{b'(\psi_{o,a})}
\;=\;
\frac{(b-b')(\tilde f_{o,a})}{b(\psi_{o,a})}-\frac{b'(\tilde f_{o,a})}{b(\psi_{o,a})\,b'(\psi_{o,a})}\,(b-b')(\psi_{o,a}).
\end{equation}
Since $b(\psi_{o,a}),b'(\psi_{o,a})\ge g_{\min}$ and $|b'(\tilde f_{o,a})|\le g_{\max}$:
\begin{equation}\label{eq:tau_test_fn}
|\tau(b,a,o)(f)-\tau(b',a,o)(f)|
\;\le\;
\frac{|(b-b')(\tilde f_{o,a})|}{g_{\min}}+\frac{g_{\max}}{g_{\min}^2}\,|(b-b')(\psi_{o,a})|.
\end{equation}
Both $\tilde f_{o,a}$ and $\psi_{o,a}$ are in $\gF_{\ell+1}$, with per-update amplification
\[
C_{\mathrm{BU}}=\frac{1}{g_{\min}}+\frac{g_{\max}}{g_{\min}^2}\;\le\;\frac{2\,g_{\max}}{g_{\min}^2},
\]
By the inductive hypothesis, $|\Phi_{h-1}(\tau(b,a,o))-\Phi_{h-1}(\tau(b',a,o))|$ decomposes into evaluations of $\tau(b,a,o)$ vs.\ $\tau(b',a,o)$ on $\gF_\ell$. Substituting~\eqref{eq:tau_test_fn} reduces each to evaluations of $b$ vs.\ $b'$ on $\gF_{\ell+1}$, with one factor of $C_{\mathrm{BU}}$ per level. Taking $\sup_a$ over the action set and iterating across $H$ levels yields~\eqref{eq:Q_test_decomp}.
\end{proof}

\begin{lem}[$Q^{\star,B}$ is Lipschitz in TV]\label{lem:Q_lipschitz_tv}
For any $\theta\in\Theta$, $a\in\gA$, and $b,b'\in\Delta(\gS)$,
\[
|Q_\theta^{\star,B}(b,a)-Q_\theta^{\star,B}(b',a)|\;\le\;2V_{\max}\,\|b-b'\|_{\mathrm{TV}}.
\]
\end{lem}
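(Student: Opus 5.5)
The approach is to exploit the fact that $Q_\theta^{\star,B}(\cdot,a)$ is the value of an optimal history-dependent policy, and for any \emph{fixed} policy the value is linear in the initial belief. Concretely, fix $\theta$ and $a$. For any policy $\pi$ acting on histories, started by playing $a$ at time $0$, the expected discounted return under initial belief $b$ can be written as
\[
W^\pi(b) \;=\; \int_{\gS} w^\pi(s)\, b(\mathrm ds),
\]
where $w^\pi(s)$ is the expected discounted return when the latent state starts at $s$ and the learner follows $\pi$ on the observation history. This holds because, once $\pi$ is fixed as a map from histories to actions, the joint law of $(S_{0:\infty},O_{1:\infty},A_{0:\infty})$ is a mixture over $S_0\sim b$ of the laws started from deterministic $S_0=s$. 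Here the policy is allowed to depend only on observed history, not on $b$ itself. Since rewards lie in $[0,R_{\max}]$, we get $0\le w^\pi(s)\le V_{\max}$.

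Next, I would use that $Q_\theta^{\star,B}(b,a)=\sup_\pi W^\pi(b)$, with the supremum over history-dependent policies starting with $a$. This is the standard equivalence between belief-MDP optimality and optimal history-dependent control, and it is where the definition via the fixed point of $\gT_\theta$ is matched to policy values. Given this, the sup-of-differences inequality yields
\[
|Q_\theta^{\star,B}(b,a)-Q_\theta^{\star,B}(b',a)| \;\le\; \sup_\pi |b(w^\pi)-b'(w^\pi)|.
\]
For any measurable $w$ with values in an interval of length at most $V_{\max}$, we have $|b(w)-b'(w)|\le \mathrm{osc}(w)\,\|b-b'\|_{\mathrm{TV}}$. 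Depending on the TV convention (sup over sets versus total variation of the signed measure), this gives $V_{\max}\|b-b'\|_{\mathrm{TV}}$ or at worst $2V_{\max}\|b-b'\|_{\mathrm{TV}}$. The stated constant $2V_{\max}$ therefore covers either convention, and I would also allow for $|r|\le R_{\max}$, where $w\in[-V_{\max},V_{\max}]$.

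The main obstacle is justifying the representation $Q^{\star,B}_\theta(b,a)=\sup_\pi W^\pi(b)$ rigorously on general measurable spaces without invoking measurable-selection machinery. If that is delicate, there is an alternative route that avoids it: an induction on value iteration. Consider $V_n=\gT_\theta^n 0$ and show that each $V_n$ is a supremum of functions of the form $b\mapsto b(w)$ with $\|w\|_\infty\le V_{\max}$, that is, convex and ``TV-Lipschitz with oscillation bound.'' The key step uses the unnormalized form. For a fixed $w$,
\[
\int b(w)\text{-type term at }\tau(b,a,o)\; p(o\mid b,a)\,\mathrm do \;=\; \int b(\tilde w_{o,a})\,\mathrm do,
\]
which is linear in $b$. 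Choosing a (nearly) optimal $w$ per observation keeps linearity in $b$ with a bounded integrand, where the resulting bound has the form $r+\gamma V_{\max}\le V_{\max}$. Passing to the limit $n\to\infty$ using the $\gamma$-contraction then gives the claim for $V_\theta^{\star,B}$, and hence for $Q_\theta^{\star,B}(\cdot,a)$ via~\eqref{eq:Qstar_belief}.
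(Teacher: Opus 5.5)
Your main route is correct and is essentially the paper's argument. The paper's ``data processing'' step is your observation that, for a fixed observation-history policy, the trajectory law from $b$ is the mixture $\int \mathbb{P}_s^\pi\, b(\mathrm ds)$, so any bounded value functional differs by at most its oscillation times $\|b-b'\|_{\mathrm{TV}}$; the paper merely splits off the immediate reward and uses $R_{\max}+\gamma V_{\max}=V_{\max}$. You are more careful than the paper in making explicit the reduction $Q_\theta^{\star,B}(b,a)=\sup_\pi W^\pi(b)$, which its one-line proof leaves implicit and which is needed because the continuation $V_\theta^{\star,B}(\tau_\theta(b,a,o))$ itself depends on $b$.
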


\begin{proof}
$|r(b,a)-r(b',a)|\le 2R_{\max}\|b-b'\|_{\mathrm{TV}}$. The trajectory law under $b$ and $b'$ differs in TV by at most $\|b-b'\|_{\mathrm{TV}}$ (data processing), so the $V_{\max}$-bounded continuation differs by at most $2V_{\max}\|b-b'\|_{\mathrm{TV}}$. Using $R_{\max}+\gamma V_{\max}=V_{\max}$ gives the result.
\end{proof}

\begin{lem}[Performance decomposition]\label{lem:perf_decomp_delta}
Fix $\theta\in\Theta$. Let $\pi_t$ denote the \emph{stationary} policy induced by the planner with frozen parameters (model $\bar\theta_{k(t)}$, belief $b_t^J$) at time $t$, and let $\delta_u:=V_{\theta}^{\star,B}(b_u)-Q_{\theta}^{\star,B}(b_u,A_u)$ where $A_u\sim\pi_t(\cdot\mid H_u)$. Then
\[
V_{\theta}^{\star,B}(b_t)-V_{\theta}^{\pi_t,B}(b_t)
\;\le\;
\E\!\left[\sum_{u=t}^\infty \gamma^{u-t}\delta_u\,\Bigm|\, H_t\right].
\]
\end{lem}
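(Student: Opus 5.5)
This is the standard performance-difference (telescoping) argument in the belief MDP of $\theta$, applied to the policy $\pi_t$ with the model frozen. Write $V^\star:=V_\theta^{\star,B}$, $Q^\star:=Q_\theta^{\star,B}$, and let $(b_u,A_u)_{u\ge t}$ be the belief/action process generated by running $\pi_t$ from $b_t$ under model $\theta$, with beliefs updated by $\tau_\theta$. The value of $\pi_t$ is $V_\theta^{\pi_t,B}(b_t)=\E[\sum_{u\ge t}\gamma^{u-t} r(b_u,A_u)\mid H_t]$. This uses the tower property: conditional on the history, the expected reward equals the belief reward $r(b_u,A_u)$.

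\textbf{Step 1: one-step identity.} By the Bellman equation~\eqref{eq:Qstar_belief} for $Q^\star$,
\[
Q^\star(b_u,A_u)=r(b_u,A_u)+\gamma\,\E\bigl[V^\star(b_{u+1})\mid H_u,A_u\bigr].
\]
The observation $o_{u+1}\sim p_\theta(\cdot\mid b_u,A_u)$ and $b_{u+1}=\tau_\theta(b_u,A_u,o_{u+1})$, which is exactly the belief-MDP transition. Using the definition of $\delta_u$, this gives
\[
V^\star(b_u)=\delta_u+r(b_u,A_u)+\gamma\,\E[V^\star(b_{u+1})\mid H_u,A_u].
\]

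\textbf{Step 2: telescope.} Iterate the identity from $u=t$ to $u=t+n-1$ and take conditional expectations given $H_t$ (tower property):
\[
V^\star(b_t)=\E\Bigl[\sum_{u=t}^{t+n-1}\gamma^{u-t}\bigl(\delta_u+r(b_u,A_u)\bigr)\Bigm|H_t\Bigr]+\gamma^n\E[V^\star(b_{t+n})\mid H_t].
\]
Let $n\to\infty$. The remainder is at most $\gamma^nV_{\max}\to0$. The reward sums converge to $V_\theta^{\pi_t,B}(b_t)$ by dominated convergence, since rewards are bounded. For the $\delta_u$ terms, note that $\delta_u\ge 0$ because $V^\star=\max_aQ^\star$, and $\delta_u\le 2V_{\max}$. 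So monotone (or dominated) convergence applies. This yields the claim, in fact with equality. The stated "$\le$" then follows trivially, and it also covers the case where $\pi_t$ is randomized, since $A_u$ is drawn from $\pi_t(\cdot\mid H_u)$ inside the expectation.

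\textbf{Main obstacle: measurability and well-posedness.} The one point needing care is that $\pi_t$ is a stationary policy acting on histories $H_u$ with frozen $(\bar\theta_{k(t)},b_t^J)$, rather than a Markov policy on beliefs. The identity in Step 1 only uses two facts: that $b_u$ is the $\theta$-posterior given $H_u$, and that, conditional on $(H_u,A_u)$, the next belief follows $\tau_\theta$ with observation law $p_\theta$. Both hold for any history-dependent policy under model $\theta$, so no Markov property of $\pi_t$ is needed. I would state this explicitly, together with the boundedness $|V^\star|,|Q^\star|\le V_{\max}$ that justifies exchanging limit and expectation.
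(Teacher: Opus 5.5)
Your argument is correct and is essentially the paper's proof. Both unroll the one-step Bellman identity for $Q_\theta^{\star,B}$ in the belief MDP of $\theta$ started at $b_t$, and both pass to the limit using a bounded tail. The difference is cosmetic: the paper telescopes the gap $V_\theta^{\star,B}-V_\theta^{\pi_t,B}$ directly, whereas you telescope $V_\theta^{\star,B}$ alone and identify the accumulated rewards with $V_\theta^{\pi_t,B}(b_t)$. Your version avoids treating the continuation value of the history-dependent $\pi_t$ as a function of $b_u$, and it shows the inequality is in fact an equality.
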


\begin{proof}
Unroll the Bellman recursion $V_{\theta}^{\star,B}(b_u)-V_{\theta}^{\pi_t,B}(b_u)=\delta_u+\gamma\,\E[V_{\theta}^{\star,B}(b_{u+1})-V_{\theta}^{\pi_t,B}(b_{u+1})\mid H_u]$ from $u=t$ to $u=t+n$, using $|V_{\theta}^{\star,B}-V_{\theta}^{\pi_t,B}|\le 2V_{\max}$ as the tail bound, and send $n\to\infty$.
\end{proof}

\begin{lem}[Per-step bound on $\delta_u$]\label{lem:delta_bound_main}
For every $u$ and any $d\ge 0$,
\[
\delta_u
\;\le\;
2\,d_Q(b_u,b_u^J)
+
2\,\epsilon_{\mathrm{mc}}(u,d)
+
2\,\epsilon_{\mathrm{struct}}(u,d).
\]
\end{lem}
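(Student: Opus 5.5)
We prove this with the standard greedy-action argument: sandwich the estimated Q-values $\widehat Q_u(b_u^J,\cdot)$ between the true belief Q-values at $b_u$, paying each approximation error twice. Throughout, $\theta=\bar\theta_{k(u)}$ and $A_u$ is the greedy action $\arg\max_a \widehat Q_u(b_u^J,a)$. The $\varepsilon$-greedy randomization does not enter this per-step bound; it is accounted for separately in the regret, or we condition on the greedy branch. Let $a^\star\in\arg\max_a Q_\theta^{\star,B}(b_u,a)$, so that $\delta_u=Q_\theta^{\star,B}(b_u,a^\star)-Q_\theta^{\star,B}(b_u,A_u)$.

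First, we transfer from the true belief $b_u$ to the particle belief $b_u^J$. By Definition~\ref{def:dQ}, for every $a$ we have $|Q_\theta^{\star,B}(b_u,a)-Q_\theta^{\star,B}(b_u^J,a)|\le d_Q(b_u,b_u^J)$, since $d_Q$ takes the supremum over $\theta\in\Theta$ and $a\in\gA$. Applying this at $a^\star$ and at $A_u$ costs $2d_Q(b_u,b_u^J)$ and leaves
\[
Q_\theta^{\star,B}(b_u^J,a^\star)-Q_\theta^{\star,B}(b_u^J,A_u).
\]

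Second, we replace $Q_\theta^{\star,B}(b_u^J,\cdot)$ by the depth-$d$ target $Q_{\theta,d}(b_u^J,\cdot)$. Each replacement costs at most $\epsilon_{\mathrm{struct}}(u,d)$ by definition, for a total of $2\epsilon_{\mathrm{struct}}(u,d)$. We then replace $Q_{\theta,d}(b_u^J,\cdot)$ by $\widehat Q_u(b_u^J,\cdot)$, and each replacement costs at most $\epsilon_{\mathrm{mc}}(u,d)$. This step needs the uniform-in-$a$ sup in $\epsilon_{\mathrm{mc}}$, which is exactly why MCTQ runs round-robin over root actions. What remains is $\widehat Q_u(b_u^J,a^\star)-\widehat Q_u(b_u^J,A_u)$, which is $\le 0$ by greedy selection of $A_u$. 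Collecting the three pairs of terms gives the claim. For $d=0$ the target is $Q_{\theta,0}$ and $\widehat Q$ is the exact particle average, so $\epsilon_{\mathrm{mc}}=0$ and the same chain applies.

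The only delicate point is bookkeeping rather than mathematics. We must ensure that $\epsilon_{\mathrm{struct}}$ and $\epsilon_{\mathrm{mc}}$ are evaluated at the particle belief $b_u^J$ under the frozen $\theta$, matching their definitions, and that $d_Q$ absorbs the $b_u$ versus $b_u^J$ mismatch uniformly over $\theta$. Because every quantity is a sup over actions, the random and data-dependent choice of $A_u$ causes no measurability or selection issue.
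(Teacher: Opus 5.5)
Your proposal is correct and follows the paper's proof essentially step for step. You pay $2\,d_Q(b_u,b_u^J)$ to move from $b_u$ to $b_u^J$ at $a^\star$ and at the greedy action, then sandwich through $Q_{\theta,d}$ and $\widehat Q_u$ using the uniform-in-$a$ definitions of $\epsilon_{\mathrm{struct}}$ and $\epsilon_{\mathrm{mc}}$ together with greedy optimality. Your remark that the bound covers only the greedy branch, with $\varepsilon$-greedy exploration charged separately, is a point the paper leaves implicit by identifying $A_u$ with $\hat a_u$.
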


\begin{proof}
Let $a_u^\star\in\arg\max_a Q_{\theta}^{\star,B}(b_u,a)$ and $\hat a_u\in\arg\max_a \widehat Q_u(b_u^J,a)$. Then
\begin{align*}
\delta_u
&=Q_{\theta}^{\star,B}(b_u,a_u^\star)-Q_{\theta}^{\star,B}(b_u,\hat a_u)\\
&\le \underbrace{|Q_{\theta}^{\star,B}(b_u,a_u^\star)-Q_{\theta}^{\star,B}(b_u^J,a_u^\star)|+|Q_{\theta}^{\star,B}(b_u^J,\hat a_u)-Q_{\theta}^{\star,B}(b_u,\hat a_u)|}_{\le\, 2\,d_Q(b_u,b_u^J) \;\text{ by Definition~\ref{def:dQ}}}\\
&\quad+Q_{\theta}^{\star,B}(b_u^J,a_u^\star)-Q_{\theta}^{\star,B}(b_u^J,\hat a_u).
\end{align*}
For the last line, add and subtract $Q_{\theta,d}$ and $\widehat Q_u$, using optimality of $\hat a_u$ for $\widehat Q_u$:
\[
Q_{\theta,d}(b_u^J,a_u^\star)
\le \widehat Q_u(b_u^J,a_u^\star)+\epsilon_{\mathrm{mc}}
\le \widehat Q_u(b_u^J,\hat a_u)+\epsilon_{\mathrm{mc}}
\le Q_{\theta,d}(b_u^J,\hat a_u)+2\epsilon_{\mathrm{mc}},
\]
so $Q_{\theta}^{\star,B}(b_u^J,a_u^\star)-Q_{\theta}^{\star,B}(b_u^J,\hat a_u)\le 2\epsilon_{\mathrm{struct}}+2\epsilon_{\mathrm{mc}}$.
\end{proof}

\begin{thm}[Round-$t$ planning + state-identification bound]\label{thm:planning_state_bound}
For each $t\ge 1$ and any depth $d$,
\[
V_{\bar\theta_{k(t)}}^{\star,B}(b_t)-V_{\bar\theta_{k(t)}}^{\pi_t,B}(b_t)
\;\le\;
\E\!\left[\sum_{u=t}^\infty \gamma^{u-t}\bigl(2\,d_Q(b_u,b_u^J)
+2\,\epsilon_{\mathrm{mc}}(u,d)+2\,\epsilon_{\mathrm{struct}}(u,d)\bigr)\,\Bigm|\,H_t\right].
\]
\end{thm}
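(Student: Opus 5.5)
The bound follows by chaining Lemma~\ref{lem:perf_decomp_delta} with Lemma~\ref{lem:delta_bound_main}, with $\theta$ set to the frozen epoch model $\bar\theta_{k(t)}$. First I fix $t$ and take $\theta=\bar\theta_{k(t)}$ in both lemmas. Let $\pi_t$ be the stationary policy induced by the planner at round $t$. Lemma~\ref{lem:perf_decomp_delta} then gives
\[
V_{\bar\theta_{k(t)}}^{\star,B}(b_t)-V_{\bar\theta_{k(t)}}^{\pi_t,B}(b_t)\le \E\Bigl[\textstyle\sum_{u\ge t}\gamma^{u-t}\delta_u\Bigm| H_t\Bigr],
\]
where $\delta_u=V_{\theta}^{\star,B}(b_u)-Q_{\theta}^{\star,B}(b_u,A_u)$.

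Next, for each $u\ge t$, I bound $\delta_u$ pathwise using Lemma~\ref{lem:delta_bound_main}. That lemma applies when $A_u$ is the greedy action $\hat a_u\in\arg\max_a\widehat Q_u(b_u^J,a)$ computed by \textsc{Mctq} from the particle belief $b_u^J$. The errors $\epsilon_{\mathrm{mc}}(u,d)$ and $\epsilon_{\mathrm{struct}}(u,d)$ are evaluated at this $b_u^J$, with the same $\theta$. The result is
\[
\delta_u\le 2d_Q(b_u,b_u^J)+2\epsilon_{\mathrm{mc}}(u,d)+2\epsilon_{\mathrm{struct}}(u,d)
\]
almost surely. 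Every term here is nonnegative and bounded by a multiple of $V_{\max}$. So substituting into the discounted sum and taking conditional expectations is justified by monotonicity of conditional expectation. The series converges absolutely by dominated convergence, since the summands are bounded and the weights are $\gamma^{u-t}$. This yields the displayed inequality.

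The main obstacle is making sure Lemma~\ref{lem:delta_bound_main} really covers the actions drawn by $\pi_t$. There are two issues.

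\begin{itemize}
\item \emph{Exploration.} Algorithm~\ref{alg:pf_mcts} plays $\varepsilon_k$-greedy actions, but Lemma~\ref{lem:delta_bound_main} only controls the greedy choice. I would handle this by reading $\pi_t$ as the greedy planner policy. The exploration cost is already charged to the estimation/exploration term of Theorem~\ref{thm:final_regret_bound}. If instead one insists on the randomized policy, an extra $\varepsilon_k\cdot 2V_{\max}$ term per round would appear. That term is not part of this statement, so I would state explicitly that $\pi_t$ here denotes the greedy stationary policy.
\item \emph{Model consistency.} Along the continuation $u>t$ under $\pi_t$, the quantities $b_u^J$ and $\widehat Q_u$ must be generated with the same frozen $\theta$. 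This is what Lemma~\ref{lem:perf_decomp_delta} assumes by fixing the planner parameters. With that convention, $\epsilon_{\mathrm{struct}}(u,d)$ refers to $\bar\theta_{k(t)}$, consistent with Proposition~\ref{prop:structural_error}.
\end{itemize}

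Once these two conventions are fixed, the argument is a two-line substitution.
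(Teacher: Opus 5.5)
Your proposal is correct and matches the paper's argument: the paper proves this theorem in one line, by substituting the per-step bound of Lemma~\ref{lem:delta_bound_main} into Lemma~\ref{lem:perf_decomp_delta} with $\theta=\bar\theta_{k(t)}$. Your two conventions, reading $\pi_t$ as the greedy planner policy and keeping the filter and planner frozen at $\bar\theta_{k(t)}$ along the continuation, are implicit in the paper, which likewise charges the $\varepsilon_k$-greedy exploration cost separately in the proof of Theorem~\ref{thm:final_regret_bound}.
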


\begin{proof}
Substitute the per-step bound of Lemma~\ref{lem:delta_bound_main} into Lemma~\ref{lem:perf_decomp_delta}.
\end{proof}

\paragraph{Observation discretization.}
For continuous $\gO$, the tree never revisits observation-children (each sampled $o$ is unique), so every node below the root holds a single particle.
At $d=1$ each leaf value~\eqref{eq:qmdp_leaf_eval} then reduces to $\max_{a'}Q_\theta^{\star,S}(s',a')$ at a single state, which targets the linear surrogate rather than $Q_{\theta,1}$, and for $d\ge2$ the tree cannot build depth beyond~$1$.
We therefore discretize the observations. Let $\phi:\gO\to\bar\gO$ with $|\bar\gO|=M$ and bin diameter $\mathrm{diam}(\phi):=\max_{\bar o}\sup_{o,o'\in\phi^{-1}(\bar o)}\|o-o'\|$.
Algorithms~\ref{alg:mctq} and~\ref{alg:simulate} are modified so that $\textsc{Child}(v,a',\phi(o'))$ replaces $\textsc{Child}(v,a',o')$.
The bin-averaged posterior is $\tau_\theta^{\phi}(b,a,\bar o):=\int_{\phi^{-1}(\bar o)} \tau_\theta(b,a,o)\,p_\theta(o\mid b,a)\,\mathrm do\big/\int_{\phi^{-1}(\bar o)} p_\theta(o\mid b,a)\,\mathrm do$, and we write $Q_{\theta,d}^{\phi}$ for the target~\eqref{eq:Qd_def} with $\tau_\theta(b,a,o)$ replaced by $\tau_\theta^{\phi}(b,a,\phi(o))$ at every level.
When $|\gO|<\infty$, take $\phi=\mathrm{id}$, $M=|\gO|$, $\mathrm{diam}(\phi)=0$.
When $d=0$, no tree is built and no discretization is needed.

\paragraph{Seeds and the estimator.}
Fix a round $t$ and a root action $a$, and write $\theta:=\bar\theta_{k(t)}$ and $b^J:=b_t^J$.
All probabilities and expectations below are conditional on $b_t^J$, and we take $d\ge1$.
Simulation $n\in[N]$ is driven by a \emph{seed}
\begin{equation}\label{eq:mc_seed}
\xi_n:=\bigl(j_n,\,u_{n,0},\,(a'_{n,\ell},u_{n,\ell})_{\ell=1}^{d-1}\bigr),
\end{equation}
where $j_n\sim\mathrm{Unif}([J])$ is the sampled particle index, $a'_{n,\ell}\sim\mathrm{Unif}(\gA)$ is the action drawn at depth $\ell$, and $u_{n,\ell}$ is the noise that generates the next state and observation at depth $\ell$ from $P_\theta$ and $Z_\theta$.
The seeds $\xi_1,\ldots,\xi_N$ are i.i.d.
Running Algorithm~\ref{alg:mctq} on them produces the tree $T_a$, and we write the root estimate as a deterministic function of all seeds,
\begin{equation}\label{eq:mc_fa}
\widehat Q(b^J,a)=f_a(\xi_1,\ldots,\xi_N):=\frac1N\sum_{n=1}^N\bigl[r(s_{j_n},a)+\gamma\,\hat V(v_{n,1})\bigr],
\end{equation}
where $v_{n,\ell}$ is the depth-$\ell$ node on the path of simulation $n$ and $\hat V$ is the DP backup~\eqref{eq:dp_backup}.
The function $f_a$ is not an average of independent terms, since $\hat V(v_{n,1})$ depends on every simulation that passes through $v_{n,1}$.
For $S\subseteq[N]$, let $f_a^{(-S)}$ be the same expression computed from the seeds $(\xi_m)_{m\notin S}$ alone, with the sum running over $n\notin S$ and the normalizer $1/N$ retained.

\paragraph{Balancedness.}
For $\rho\ge1$, let $\mathcal E_\rho$ be the event that every internal node $v$ of $T_a$ satisfies
\begin{equation}\label{eq:balanced}
N(v)\le\rho\,|\gA|\,N(v,a')\qquad\text{for all }a'\in\gA,
\end{equation}
so that every action receives at least a $1/(\rho|\gA|)$ fraction of the visits of every internal node.
Define the depth factor
\begin{equation}\label{eq:Lambda_d_def}
\Lambda_d:=\sum_{\ell=0}^{d-1}(\rho\gamma|\gA|)^{\ell},
\end{equation}
so that $\Lambda_0=0$ and $\Lambda_1=1$.
Let $p_\rho:=\max_{a\in\gA}\Pr(\mathcal E_\rho^c\mid b_t^J)$, where the event refers to the tree $T_a$, and let
\[
q_{\min}:=\inf_{s\in\gS,\,a''\in\gA,\,\bar o\in\bar\gO}\int_{\gS}Z_\theta\bigl(\phi^{-1}(\bar o)\mid s'\bigr)\,P_\theta(\mathrm ds'\mid s,a'')
\]
be the smallest one-step bin probability.
Under the bounded observation density of Assumption~\ref{aspt:particle_error}, $q_{\min}\ge g_{\min}\min_{\bar o}\lambda(\phi^{-1}(\bar o))$, where $\lambda$ is the reference measure of $Z_\theta$; for finite $\gO$ with the counting measure, $q_{\min}\ge g_{\min}$.

\begin{prop}[Monte Carlo error]\label{prop:mc_concentration}
Run Algorithm~\ref{alg:mctq} with uniform internal selection~\eqref{eq:uniform_action}, $N$ simulations per root action, observation discretization $\phi$ ($|\bar\gO|=M$), and returns in $[-V_{\max},V_{\max}]$, and let $\rho\ge1$.
Let
\begin{equation}\label{eq:bias_def}
\mathrm{bias}(N,d):=\sup_{a}\bigl|\E[\widehat Q(b^J,a)\mid b_t^J]-Q_{\bar\theta_{k(t)},d}(b^J,a)\bigr|
\end{equation}
and $B_d:=2V_{\max}\bigl(1+2(1+\gamma)\Lambda_d\bigr)$.
For every $\delta\in(0,1)$, with probability at least $1-\delta-|\gA|p_\rho$ conditionally on $b_t^J$,
\begin{equation}\label{eq:mc_hp}
\epsilon_{\mathrm{mc}}(t,d)
\;\le\;
\underbrace{2(1+\gamma)V_{\max}\Lambda_d\sqrt{\frac{2\log(2|\gA|/\delta)}{N}}}_{\text{fluctuation}}
\;+\;\mathrm{bias}(N,d)\;+\;B_d\,p_\rho,
\end{equation}
and, in second moment,
\begin{equation}\label{eq:mc_var_bias}
\E\bigl[\epsilon_{\mathrm{mc}}(t,d)^2\,\big|\,b_t^J\bigr]
\;\le\;
\frac{8(1+\gamma)^2\,|\gA|\,V_{\max}^2\,\Lambda_d^2}{N}
\;+\;
2\,|\gA|\,\mathrm{bias}(N,d)^2
\;+\;
3\,|\gA|\,B_d^2\,p_\rho.
\end{equation}
The bias satisfies
\begin{equation}\label{eq:bias_explicit}
\mathrm{bias}(N,d)
\;\le\;
\underbrace{\frac{c_0\,\Phi_d\,V_{\max}\sqrt{|\gA|}}{\sqrt{N}}}_{\text{DP backup estimation error}}
\;+\;
\underbrace{\frac{2\gamma(1-\gamma^d)}{1-\gamma}\cdot\frac{g_{\max}}{g_{\min}}\,V_{\max}\,\omega_Z(\mathrm{diam}(\phi))}_{\text{discretization error}},
\end{equation}
where $c_0>0$ is a constant independent of $N$, $M$ and $\delta$,
\begin{equation}\label{eq:Phi_d_def}
\Phi_d:=\sum_{\ell=1}^{d}(\gamma\sqrt{|\gA|M})^\ell,
\end{equation}
and $\omega_Z(\delta):=\sup_{\theta,s}\sup_{\|o-o'\|\le\delta}|Z_\theta(o\mid s)/Z_\theta(o'\mid s)-1|$.
If $d\le1$, then $p_\rho=0$.
If $d\ge2$, $q_{\min}>0$ and $|\gA|M\ge2$, then $p_2\le2|\gA|(|\gA|M)^d\exp\bigl(-N(q_{\min}/|\gA|)^{d-1}/32\bigr)$.
When $d=0$, $\widehat Q(b^J,a)=\frac1J\sum_{j}Q_\theta^{\star,S}(s_j,a)$ is exact and every term vanishes ($\Lambda_0=\Phi_0=0$).
When $\phi=\mathrm{id}$ ($M=|\gO|$, $\mathrm{diam}(\phi)=0$), the discretization error vanishes.
\end{prop}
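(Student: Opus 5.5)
The plan is to split the estimation error for each root action $a$ as
\[
|\widehat Q(b^J,a)-Q_{\theta,d}(b^J,a)|\le\bigl|f_a-\E[f_a\mid b_t^J]\bigr|+\mathrm{bias}(N,d),
\]
and treat the two pieces separately. The case $d=0$ is immediate from the definition in Algorithm~\ref{alg:mctq}. The high-probability claim~\eqref{eq:mc_hp} then follows from a union bound over the $|\gA|$ root actions, which produces the $2|\gA|/\delta$ inside the logarithm. The second-moment claim~\eqref{eq:mc_var_bias} follows from $(x+y+z)^2\le 3(x^2+y^2+z^2)$ together with $\E[\sup_a(\cdot)^2]\le\sum_a\E[(\cdot)^2]$, which produces the $|\gA|$ factors.

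\emph{Fluctuation.} I would use Lemma~\ref{lem:path_decoupling}: under (S1) with no early return, the path of simulation $n$ depends on $\xi_n$ alone. Replacing $\xi_n$ by $\xi_n'$ therefore alters the particle sets $B(v)$ only along two root-to-leaf paths, one node per depth.
\begin{itemize}
\item At a leaf $v$, removing or adding one particle moves the pooled QMDP value~\eqref{eq:qmdp_leaf_eval} by at most $2V_{\max}/N(v)$, because the max is $1$-Lipschitz.
\item At an internal node, the backup~\eqref{eq:dp_backup} weights a child by $N(w)/N(v,a')$. On $\mathcal E_\rho$ this weight is at most $\rho|\gA|N(w)/N(v)$.
\item A perturbation of size $c/N(w)$ at depth $\ell+1$ therefore reaches the root average, which carries normalizer $1/N$, as at most $(\rho\gamma|\gA|)^{\ell}\cdot c/N$. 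Reward and count perturbations contribute the factor $(1+\gamma)$.
\end{itemize}
Summing over depths and over the two affected paths gives a bounded-difference constant of $2(1+\gamma)V_{\max}\Lambda_d/N$ per seed.

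To handle the event $\mathcal E_\rho^c$, I would apply McDiarmid (respectively Efron--Stein) to a modified function $\tilde f_a$. This function equals $f_a$ on $\mathcal E_\rho$ and is frozen to a balanced surrogate otherwise, so that it inherits the bounded differences. Since $|f_a|,|\tilde f_a|$ are controlled by $B_d/2$, the swap costs $B_d p_\rho$ in the mean bound and $B_d^2p_\rho$ in the second-moment bound.

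\emph{Bias.} I would split the bias into two terms.
\begin{itemize}
\item \textbf{Discretization bias.} This compares $Q^{\phi}_{\theta,d}$ with $Q_{\theta,d}$. Within a bin, the likelihood ratio $Z_\theta(o\mid s)/Z_\theta(o'\mid s)$ is within $\omega_Z(\mathrm{diam}(\phi))$ of $1$. Combined with $g_{\min}\le b(\psi_{o,a})$, this gives $\|\tau_\theta^\phi(b,a,\phi(o))-\tau_\theta(b,a,o)\|_{\mathrm{TV}}\lesssim (g_{\max}/g_{\min})\,\omega_Z$. The TV-Lipschitz argument of Lemma~\ref{lem:Q_lipschitz_tv}, applied level by level to the depth-$\ell$ targets and summed with weights $\gamma,\dots,\gamma^d$, gives the second term of~\eqref{eq:bias_explicit}.
\item \textbf{DP-backup bias.} This compares $\E\widehat Q$ with $Q^\phi_{\theta,d}$. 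By induction on remaining depth, the error at a node $v$ is
\[
\E\bigl|\hat V(v)-\bar V^{\star,S}_{\theta,\ell}(b_v)\bigr|\lesssim V_{\max}\sqrt{|\gA|/N(v)}+\gamma\sum_{a',w}\tfrac{N(w)}{N(v,a')}\,\E|\hat V(w)-\cdots|.
\]
The first term comes from the max over $|\gA|$ empirical means, using Lemma~\ref{lem:path_decoupling}: entering states are i.i.d.\ from $b_v$. Unrolling to depth $\ell$ gives $\sum_{v\in\text{depth }\ell}\tfrac{N(v)}{N}\sqrt{1/N(v)}$. By Cauchy--Schwarz this is at most $\sqrt{\#\{\text{nodes at depth }\ell\}/N}\le\sqrt{(|\gA|M)^\ell/N}$. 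Summing with $\gamma^\ell$ over $\ell=1,\dots,d$ yields $\Phi_d$.
\end{itemize}

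\emph{The probability $p_\rho$.} For $d\le1$ there are no internal nodes below the root, so $p_\rho=0$. For $d\ge 2$, each simulation independently reaches a fixed depth-$\ell$ node and selects $a'$ with probability at least $(q_{\min}/|\gA|)^{\ell}$. A multiplicative Chernoff bound on these binomial counts, with deviation factor $1/2$ so that $\rho=2$, followed by a union bound over the at most $|\gA|(|\gA|M)^d$ (node, action) pairs, gives the stated bound on $p_2$.

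\emph{Main obstacle.} I expect the bounded-difference step to be the hardest. The pooled leaves and the $\max$ in the backup couple all simulations, and changing one seed can also create or delete nodes and flip which actions have $N(v,a')>0$. The balancedness event is needed to keep the weights $N(w)/N(v,a')$ controlled under such changes. My first attempt would be the "frozen surrogate off $\mathcal E_\rho$" construction above, checking carefully that node creation or deletion changes only $O(1)$ terms at each depth.
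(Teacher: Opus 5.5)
Your outline has the same architecture as the paper's proof: the fluctuation/bias split, path decoupling, a backward recursion gaining $\rho\gamma|\gA|$ per level, McDiarmid and Efron--Stein after modifying $f_a$ off $\mathcal E_\rho$, Chernoff for $p_2$, and TV-Lipschitz for the discretization. However, the step you flag as the main obstacle is left open, and the fix you sketch does not work as described.

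A function that equals $f_a$ on the set $G$ of balanced seed vectors and is ``frozen to a balanced surrogate'' elsewhere does not automatically inherit bounded differences. McDiarmid needs $|\tilde f_a(\xi)-\tilde f_a(\xi')|\le c$ for every one-coordinate change, including the cases $\xi\in G$, $\xi'\notin G$ and $\xi,\xi'\notin G$. If the surrogate at $\xi'$ is $f_a(\eta)$ for some $\eta\in G$, you need $|f_a(\xi)-f_a(\eta)|\le c\,H(\xi,\eta)$ for balanced pairs at arbitrary Hamming distance $H$. This cannot be obtained by chaining one-seed swaps, because the intermediate vectors can leave $G$. Your one-swap estimate already has this problem in miniature: it uses $N(w)/N(v,a')\le\rho|\gA|N(w)/N(v)$ without saying in which run, and the run after the swap need not be balanced (an action can lose its only visit at $v$).

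The missing idea is to compare against deletion rather than replacement. Lemma~\ref{lem:bounded_difference} bounds $|f_a-f_a^{(-S)}|\le|S|\cdot2(1+\gamma)V_{\max}\Lambda_d/N$ for an arbitrary set $S$ of deleted seeds, by backward induction on the weighted influence $I(v)=(K(v)-k(v))\Delta(v)$. Only the full run has to be balanced, and an action all of whose visits are deleted drops out of the max at cost $2k'V_{\max}/K'$. Two balanced vectors that differ on $S$ share the reduced run on $[N]\setminus S$. Hence $|f_a(\xi)-f_a(\eta)|\le c\,H(\xi,\eta)$ on $G$ with $c=4(1+\gamma)V_{\max}\Lambda_d/N$. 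This is twice your per-seed constant, since each of the two affected paths already costs $2(1+\gamma)V_{\max}\Lambda_d/N$. The Hamming--McShane extension $g(\xi)=\inf_{\eta\in G}\{f_a(\eta)+c\,H(\xi,\eta)\}$ then equals $f_a$ on $G$, has bounded differences $c$ everywhere, and satisfies $|f_a-g|\le2V_{\max}+cN=B_d$. McDiarmid and Efron--Stein applied to $g$ with this $c$ give exactly the fluctuation term of~\eqref{eq:mc_hp} and $\mathrm{Var}(g)\le8(1+\gamma)^2V_{\max}^2\Lambda_d^2/N$.

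There are also three smaller points.
\begin{enumerate}
\item The inequality $(x+y+z)^2\le3(x^2+y^2+z^2)$ puts coefficient $3$ on the variance and bias terms instead of the stated $1$ and $2$. Instead, use $\E(f_a-q)^2\le\mathrm{Var}(g)+(\E g-q)^2+4V_{\max}^2p$ together with $(\E g-q)^2\le2(\E f_a-q)^2+2B_d^2p^2$.
\item In your DP-bias recursion the path weights do not telescope to $N(v)/N$ when you sum over $a'$. Each internal level leaves an extra factor $N(w_i)/N(w_i,a_i)\approx|\gA|$. Keep the max over $a'$, so that one action per node carries weight, before applying your Cauchy--Schwarz step.
\item $\rho=2$ requires asymmetric Chernoff deviations, $N(v)<\tfrac32Np_v$ and $N(v,a')>\tfrac34Np_v/|\gA|$. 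A symmetric factor $1/2$ only gives $\rho=3$, and the $1/4$ lower deviation is where the $32$ in the exponent comes from.
\end{enumerate}
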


No stationarity assumption on the rewards at internal nodes is required, and the simulation returns need not be independent.
The proof uses only that the seeds are i.i.d., that the path of each simulation depends on its own seed alone, and that the estimator has bounded differences on $\mathcal E_\rho$.
It rests on the following four results; throughout, we fix a root action $a$ and $d\ge1$.

\begin{lem}[Path decoupling]\label{lem:path_decoupling}
There is a deterministic map $\nu$ such that, whatever the other seeds are, the path of simulation $m$ (its states, actions, binned observations, and nodes $v_{m,1},\ldots,v_{m,d}$) equals $\nu(\xi_m)$.
\end{lem}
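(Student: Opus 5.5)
The plan is to construct $\nu$ explicitly by unrolling the simulation recursion for a single seed and checking that no step consults the tree built by other simulations. Fix the root action $a$, the model $\theta$, and the particle set $b^J$; these are shared by all simulations and are not random given $b_t^J$. Write the seed as in~\eqref{eq:mc_seed}. Each noise variable $u_{n,\ell}$ is passed through fixed measurable maps $G_\theta, H_\theta$ with $G_\theta(s,a'',u)\sim P_\theta(\cdot\mid s,a'')$ and $H_\theta(s',u)\sim Z_\theta(\cdot\mid s')$. These come from the standard randomization (transfer) representation of a kernel, for instance splitting $u$ into two independent uniforms. Define $\nu(\xi_m)$ recursively as follows:
\[
s_{m,0}:=s_{j_m},\qquad a_{m,0}:=a,\qquad s_{m,\ell+1}:=G_\theta(s_{m,\ell},a_{m,\ell},u_{m,\ell}),\qquad \bar o_{m,\ell+1}:=\phi\bigl(H_\theta(s_{m,\ell+1},u_{m,\ell})\bigr),
\]
with $a_{m,\ell}:=a'_{m,\ell}$ for $1\le\ell\le d-1$. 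The node is $v_{m,\ell}:=(a_{m,0},\bar o_{m,1},\ldots,a_{m,\ell-1},\bar o_{m,\ell})$, i.e.\ the action--binned-observation history, which by the tree-structure convention identifies the node uniquely.

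Next, verify by induction on the depth $\ell$ that the state, action, observation and node that Algorithm~\ref{alg:mctq} and Algorithm~\ref{alg:simulate} produce for simulation $m$ coincide with the $\ell$-th entry of $\nu(\xi_m)$, whatever the other seeds are.

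\emph{Base case.} The particle index $j_m$, the root transition, and the child $\textsc{Child}(\mathrm{root},a,\phi(o))$ use only $\xi_m$, the fixed $b^J$ and the fixed root action.

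\emph{Inductive step.} Inspect each line of \textsc{Simulate}. Recording $s$ into $B(v)$ and updating $N(v,a')$ and $\bar r(v,a')$ are writes to tree statistics, never reads that influence control flow. The expansion branch only initializes statistics and, because there is no early return, the simulation continues identically whether or not the node already existed. The action draw~\eqref{eq:uniform_action} reads the seed component $a'_{m,\ell}$ only; this is property (S1). The transition and observation are generated from $u_{m,\ell}$ and the current state. $\textsc{Child}(v,a',\phi(o'))$ is determined by the history, since nodes are indexed by histories rather than by creation order. The recursion stops exactly at $\ell=0$, so the path length is always $d$.

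The main obstacle is making this independence rigorous rather than evident. Concretely, one must argue that node \emph{identity} does not depend on whether another simulation created it first. This is handled by identifying nodes with histories, so creation only changes memory, not the label returned. One must also state clearly that the randomness for simulation $m$ is exactly $\xi_m$ and not a shared stream whose consumption order depends on earlier simulations. I will fix the coupling so that simulation $m$ reads only its own coordinates. This is legitimate because the seeds are defined as i.i.d.\ inputs driving each simulation. The Phase~2 DP backup happens after all paths are fixed and so cannot affect them.

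Finally, note that $\nu$ depends on $(\theta,b^J,a,\phi)$ but not on $(\xi_n)_{n\ne m}$, which is the claim. The same argument applies to any internal rule satisfying (S1) by replacing $a'_{m,\ell}$ with a deterministic function of the history, $\theta$, $b^J$ and fresh seed randomness.
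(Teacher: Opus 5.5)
Your proposal is correct and follows essentially the same route as the paper: an induction on depth showing that the state, action and node at each level are functions of $\xi_m$ alone, using the same three facts the paper uses (expansion only initializes statistics and there is no early return, the internal action is the seed component $a'_{m,\ell}$, and nodes are identified with action--binned-observation histories). Your explicit randomization maps $G_\theta,H_\theta$ and the remark that simulations must not share a random stream are not extra ideas; they make precise what the paper assumes when it defines the seeds.
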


\begin{proof}
We induct on depth.
Simulation $m$ starts from $s_{j_m}$ with the root action $a$, and $u_{m,0}$ generates its next state and observation, so $v_{m,1}=(a,\phi(o))$ is a function of $\xi_m$.
Suppose the state $s_{m,\ell}$ and the node $v_{m,\ell}$ at depth $\ell<d$ are functions of $\xi_m$.
Whether or not an earlier simulation expanded $v_{m,\ell}$, Algorithm~\ref{alg:simulate} only initializes statistics and continues, and it selects the action $a'_{m,\ell}$, a component of $\xi_m$ that does not depend on any tree statistic.
The next state and observation are generated from $(s_{m,\ell},a'_{m,\ell},u_{m,\ell})$, and $v_{m,\ell+1}$ is the history of $v_{m,\ell}$ extended by $a'_{m,\ell}$ and the binned observation.
Hence $s_{m,\ell+1}$ and $v_{m,\ell+1}$ are functions of $\xi_m$, and the simulation stops at depth $d$ regardless of the other seeds.
\end{proof}

Both hypotheses of the lemma are needed.
Under UCB1, the action at $v_{m,\ell}$ depends on the statistics left by simulations $1,\ldots,m-1$.
Under a POMCP-style early return (``if $v$ is not yet expanded, expand it and return''), simulation $m$ stops at the first node that no earlier simulation created, so its depth depends on $\xi_1,\ldots,\xi_{m-1}$.

\begin{cor}[Localized influence]\label{cor:two_paths}
Let $S\subseteq[N]$ and let $P_S$ be the set of nodes on the paths $\nu(\xi_m)$, $m\in S$.
Every node $v\notin P_S$ has the same particle set $B(v)$, counts $N(v)$ and $N(v,a')$, mean rewards $\bar r(v,a')$, and DP value $\hat V(v)$ in the runs on $(\xi_m)_{m\in[N]}$ and on $(\xi_m)_{m\notin S}$.
In particular, replacing one seed $\xi_n$ by $\xi_n'$ changes these quantities only at nodes on the two root-to-leaf paths $\nu(\xi_n)$ and $\nu(\xi_n')$.
\end{cor}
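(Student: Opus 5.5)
The corollary follows from Lemma~\ref{lem:path_decoupling} together with the fact that every statistic stored at a node depends only on the simulations whose path passes through that node. I will first make this precise. By Lemma~\ref{lem:path_decoupling}, the path $\nu(\xi_m)$ of each simulation $m$ is a deterministic function of $\xi_m$ alone. It is therefore the same whether the algorithm is run on all seeds or on the subfamily $(\xi_m)_{m\notin S}$, since removing other seeds cannot alter it. Consequently, for any node $v$, the set of simulations visiting $v$ is $I(v):=\{m:\ v\in\nu(\xi_m)\}$ in the full run and $I(v)\setminus S$ in the reduced run.

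\textbf{Raw statistics.} If $v\notin P_S$, then no $m\in S$ visits $v$, so $I(v)\cap S=\emptyset$ and both runs see exactly the same visiting simulations. Algorithm~\ref{alg:simulate} sets $B(v)$ to the multiset of states $s_{m,\ell}$ for $m\in I(v)$. It sets $N(v,a')$ to the count of $m\in I(v)$ with $a'_{m,\ell}=a'$, and $\bar r(v,a')$ to the mean of $r(s_{m,\ell},a')$ over those $m$. Each of these quantities is a component of $\nu(\xi_m)$. Because the incremental mean is order-independent, they coincide in the two runs, and $N(v)=|B(v)|$ coincides as well.

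\textbf{DP values.} This is the one step needing care. I will use backward induction on depth, from $d$ down to $1$. At a leaf $v\notin P_S$, $\hat V(v)$ is the pooled QMDP value~\eqref{eq:qmdp_leaf_eval} of $B(v)$, so it agrees in both runs. For an internal node $v\notin P_S$, \eqref{eq:dp_backup} uses $\bar r(v,a')$, $N(v,a')$, $N(w)$ and $\hat V(w)$ for the children $w\in\mathrm{Ch}(v,a')$. The key observation is that $P_S$ is closed under taking ancestors, because it is a union of root-to-leaf paths. Hence $v\notin P_S$ forces every child $w\notin P_S$. The children sets $\mathrm{Ch}(v,a')$ are the nodes reached by simulations in $I(v)$, so they are also identical. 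The inductive hypothesis then makes every input to the backup identical, and so is the maximization domain $\{a':N(v,a')>0\}$. Hence $\hat V(v)$ agrees.

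\textbf{Seed replacement.} For the final claim, apply the above with $S=\{n\}$ to both seed families $(\xi_m)$ and $(\xi_m)_{m\ne n}\cup\{\xi_n'\}$. Each run agrees with the common reduced run off $\nu(\xi_n)$ and $\nu(\xi_n')$ respectively. Therefore the two runs agree at every node outside $\nu(\xi_n)\cup\nu(\xi_n')$. I expect no real obstacle; the only subtle point is the ancestor-closure of $P_S$ used in the backward induction.
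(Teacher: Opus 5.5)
Your proposal is correct and follows essentially the same argument as the paper. Path decoupling makes the set of simulations through any $v\notin P_S$ identical in both runs, the ancestor-closure of $P_S$ (which the paper phrases as ``the whole subtree of $v$ avoids $P_S$'') drives the backward induction for $\hat V(v)$, and the replacement claim follows by comparing both runs with the common reduced run on $(\xi_m)_{m\ne n}$; your version is simply more explicit that the children sets and the maximization domain $\{a':N(v,a')>0\}$ are unchanged.
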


\begin{proof}
By Lemma~\ref{lem:path_decoupling}, in either run the simulations through $v$ are those $m$ with $v\in\nu(\xi_m)$ among the seeds present.
For $v\notin P_S$ no $m\in S$ is among them, so $B(v)$, $N(v)$, $N(v,a')$ and $\bar r(v,a')$ coincide.
A descendant of $v$ lies on a path in $P_S$ only if $v$ does, so the whole subtree of $v$ avoids $P_S$, and backward induction on depth with~\eqref{eq:dp_backup} gives the same $\hat V(v)$.
For the last claim, compare both runs with the run on $(\xi_m)_{m\ne n}$.
\end{proof}

\begin{lem}[Bounded difference]\label{lem:bounded_difference}
On $\mathcal E_\rho$, for every $S\subseteq[N]$,
\begin{equation}\label{eq:bd_removal}
\bigl|f_a(\xi_1,\ldots,\xi_N)-f_a^{(-S)}\bigr|\le|S|\cdot\frac{2(1+\gamma)V_{\max}\Lambda_d}{N}.
\end{equation}
Consequently, if two seed vectors $\xi,\xi'$ differ in $k$ coordinates and both of their runs lie in $\mathcal E_\rho$, then $|f_a(\xi)-f_a(\xi')|\le k\,c$, where
\begin{equation}\label{eq:bd_constant}
c:=\frac{4(1+\gamma)V_{\max}\Lambda_d}{N}.
\end{equation}
\end{lem}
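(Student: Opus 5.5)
The plan is to compare the two runs node by node, using Corollary~\ref{cor:two_paths}, and propagate the change up the tree through the DP backup~\eqref{eq:dp_backup}. Write $f_a=\frac1N\sum_{n}[r(s_{j_n},a)+\gamma\hat V(v_{n,1})]$ and $f_a^{(-S)}=\frac1N\sum_{n\notin S}[r(s_{j_n},a)+\gamma\hat V^{(-S)}(v_{n,1})]$.

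The difference then splits into two parts:
\[
f_a-f_a^{(-S)}=\frac1N\sum_{n\in S}\bigl[r(s_{j_n},a)+\gamma\hat V(v_{n,1})\bigr]+\frac{\gamma}{N}\sum_{n\notin S}\bigl[\hat V(v_{n,1})-\hat V^{(-S)}(v_{n,1})\bigr].
\]
The first sum is at most $|S|V_{\max}\cdot$const$/N$ in absolute value. For the second, only depth-$1$ nodes in $P_S$ contribute, by Corollary~\ref{cor:two_paths}. Grouping by node, this equals $\frac{\gamma}{N}\sum_{w\in P_S,\,\mathrm{depth}\,1}N^{(-S)}(w)\,\Delta(w)$, where $\Delta(w):=\hat V(w)-\hat V^{(-S)}(w)$.

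The key step is a recursive weighted-count bound. Define, for a node $v$ at depth $\ell$, the quantity $D(v):=N^{(-S)}(v)\,|\Delta(v)|$, meaning the visit count times the value change. The inductive claim is that
\[
D(v)\le 2(1+\gamma)V_{\max}\,k_v\sum_{i=0}^{d-1-\ell}(\rho\gamma|\gA|)^i,
\]
where $k_v$ is the number of removed simulations passing through $v$.

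At a leaf, the pooled QMDP average changes when $k_v$ particles are removed. Using $|\max_{a'}x_{a'}-\max_{a'}y_{a'}|\le\max_{a'}|x_{a'}-y_{a'}|$ and the standard bound for the mean of bounded values with $k$ removed, we get $N^{(-S)}|\Delta|\le 2V_{\max}k_v$.

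At an internal node, use $|\max\hat Q-\max\hat Q^{(-S)}|\le\max_{a'}|\hat Q(v,a')-\hat Q^{(-S)}(v,a')|$. Also $\hat Q(v,a')$ is an empirical mean of $r+\gamma\hat V(\text{child})$ over the $N(v,a')$ simulations selecting $a'$. Removing $k$ of them shifts the mean by at most $2(1+\gamma)V_{\max}k/N(v,a')$, plus $\frac{\gamma}{N(v,a')}\sum_{w}N^{(-S)}(w)|\Delta(w)|$ over children.

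Multiplying by $N(v)$ and invoking balancedness~\eqref{eq:balanced}, $N(v)/N(v,a')\le\rho|\gA|$, converts each level into a factor $\rho\gamma|\gA|$. The subtlety is the mismatch between $N(v)$ and $N^{(-S)}(v)$; the balancedness condition is imposed on the full tree, and I would keep it that way. Summing the children's $k_w$ to $k_v$ closes the induction.

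At the root, summing over depth-$1$ nodes gives $\sum_w k_w=|S|$. This yields~\eqref{eq:bd_removal} with the geometric sum $\Lambda_d$, and the constant $2(1+\gamma)V_{\max}$ absorbs the direct root term.

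The consequence follows from the triangle inequality through the common run on $(\xi_m)_{m\notin S}$, where $S$ is the set of $k$ differing coordinates. This gives $2\cdot k\cdot 2(1+\gamma)V_{\max}\Lambda_d/N=kc$.

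The main obstacle is the internal-node step. It requires getting the balancedness factor to appear exactly once per level, and handling the normalization of counts in the removed run against the full run without losing extra constants. I expect to need careful bookkeeping so that only the factor $\rho|\gA|$, and not its square, appears.
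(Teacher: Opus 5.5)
Your overall architecture matches the paper's. You weight by $N^{(-S)}(v)\,\Delta(v)$ (the paper's $I(v)=(K(v)-k(v))\Delta(v)$), run backward induction with balancedness on the full tree via $N^{(-S)}(v)\le N(v)$, regroup the root sum over depth-$1$ nodes, and get the second claim by the triangle inequality through the common reduced run. \textbf{The gap is the inductive invariant: your internal-node step cannot establish it.} As written it is also off at the leaves. For $\ell=d$ the sum $\sum_{i=0}^{d-1-\ell}$ is empty, so it asserts $D(v)\le 0$, which contradicts your own leaf bound $2V_{\max}k_v$.

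The substantive problem is different from the one you anticipate: it is neither a squared factor nor the $N(v)$ versus $N^{(-S)}(v)$ mismatch. When you multiply $\Delta(v)$ by $N(v)$ and use $N(v)\le\rho|\gA|\,N(v,a')$, the factor $\rho|\gA|$ multiplies the direct removal term $2V_{\max}k'_{a'}$ as well as the children's term $\gamma\sum_w D(w)$. With the sharp per-term constant discussed below, what you actually obtain is $D(v)\le 2V_{\max}y_\ell k_v$ with $y_d=1$ and $y_\ell=\rho|\gA|(1+\gamma y_{\ell+1})$. This is not a recursion of the form $G_\ell=1+\rho\gamma|\gA|\,G_{\ell+1}$. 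Already at depth $d-1$ it gives $2(1+\gamma)\rho|\gA|V_{\max}k_v$, a factor $\rho|\gA|$ above what your invariant claims.

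The resolution, which is what the paper does, is to stop expecting $\Lambda_d$ at the node level. The undiscounted $\rho|\gA|$ produced at each node pairs with the $\gamma$ of its parent's backup. The last one pairs with the $\gamma$ in $r(s_{j_n},a)+\gamma\hat V(v_{n,1})$ at the root. This gives $1+2\gamma y_1=2\Lambda_d-1+2\gamma(\rho\gamma|\gA|)^{d-1}\le 2(1+\gamma)\Lambda_d$, using $(\rho\gamma|\gA|)^{d-1}\le\Lambda_d$. So $\Lambda_d$ only emerges after the root step.

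Two smaller points matter for the stated constant.

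First, each simulation term satisfies $|r(s,a')+\gamma\hat V(\cdot)|\le(1-\gamma)V_{\max}+\gamma V_{\max}=V_{\max}$. Removing $k'$ of the $N(v,a')$ terms therefore moves $\hat Q(v,a')$ by at most $\bigl(2k'V_{\max}+\gamma\sum_{w}D(w)\bigr)/N(v,a')$. With your looser $2(1+\gamma)V_{\max}k'$, the corrected recursion yields $V_{\max}\bigl(1+2(1+\gamma)(\Lambda_d-1)+2\gamma(\rho\gamma|\gA|)^{d-1}\bigr)$ at the root, taking the root term as $V_{\max}$. That exceeds $2(1+\gamma)V_{\max}\Lambda_d$ once $(\rho\gamma|\gA|)^{d-1}>1+1/(2\gamma)$.

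Second, you must treat the case $k'_{a'}=N(v,a')$. Then $a'$ disappears from the reduced maximum, so $|\max_{a'}x_{a'}-\max_{a'}y_{a'}|\le\max_{a'}|x_{a'}-y_{a'}|$ over a common index set does not apply. In that case the change is at most $2V_{\max}=2k'_{a'}V_{\max}/N(v,a')$, which has the same form. A node all of whose visits are removed carries weight $N^{(-S)}(v)=0$ and contributes nothing.
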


\begin{proof}
Write $K(v):=N(v)$ in the full run, $k(v)$ for the number of simulations $m\in S$ whose path passes through $v$, and $\Delta(v):=|\hat V(v)-\hat V^{(-S)}(v)|$, where $\hat V^{(-S)}$ is the DP value in the reduced run.
If $k(v)=K(v)$, node $v$ is absent from the reduced run, and we set $\hat V^{(-S)}(v):=0$.
By Corollary~\ref{cor:two_paths}, $\Delta(v)=0$ unless $k(v)\ge1$.
Define the weighted influence $I(v):=(K(v)-k(v))\,\Delta(v)$ and the sequence
\[
y_d:=1,\qquad y_\ell:=\rho|\gA|\,(1+\gamma y_{\ell+1})\quad(1\le\ell\le d-1).
\]
We show by backward induction on depth that every node $v$ at depth $\ell\in[1,d]$ satisfies
\begin{equation}\label{eq:influence_induction}
I(v)\le 2V_{\max}\,y_\ell\,k(v).
\end{equation}

\emph{Leaves ($\ell=d$).}
If $k(v)=K(v)$, then $I(v)=0$.
Otherwise, for each $a'$, removing $k(v)$ of the $K(v)$ states in $B(v)$ moves the average of $Q_\theta^{\star,S}(\cdot,a')$ by
\[
\frac{k(v)}{K(v)}\,\bigl|\text{mean over removed states}-\text{mean over remaining states}\bigr|\le\frac{2V_{\max}\,k(v)}{K(v)}.
\]
The maximum over $a'$ in~\eqref{eq:qmdp_leaf_eval} is $1$-Lipschitz in the sup norm, so $\Delta(v)\le 2V_{\max}k(v)/K(v)$ and $I(v)\le 2V_{\max}k(v)$.

\emph{Internal nodes ($1\le\ell\le d-1$).}
Fix $a'$, and write $K':=N(v,a')$ and $k'$ for the number of simulations in $S$ that select $a'$ at $v$.
If $k'=0$, then $\hat Q(v,a')$ is unchanged, since no simulation of $S$ passes through a child of $v$ under $a'$.
If $1\le k'<K'$, let $\Sigma$ and $\Sigma'$ be the sums of $r(s,a')+\gamma\hat V(\text{child})$ over the simulations that select $a'$ at $v$ in the full and in the reduced run.
Each removed simulation contributes a term of absolute value at most $V_{\max}$, because $|r|\le(1-\gamma)V_{\max}$ and $|\hat V|\le V_{\max}$.
Each of the $K(w)-k(w)$ remaining simulations entering a child $w$ changes by $\gamma\Delta(w)$.
Hence
\[
|\Sigma-\Sigma'|\le k'V_{\max}+\gamma\sum_{w\in\mathrm{Ch}(v,a')}I(w).
\]
Since $\hat Q(v,a')=\Sigma/K'$, the reduced-run value is $\Sigma'/(K'-k')$, and $|\Sigma'|\le(K'-k')V_{\max}$,
\[
\Bigl|\frac{\Sigma}{K'}-\frac{\Sigma'}{K'-k'}\Bigr|
\le\frac{|\Sigma-\Sigma'|}{K'}+\frac{k'\,|\Sigma'|}{K'(K'-k')}
\le\frac{1}{K'}\Bigl(2k'V_{\max}+\gamma\sum_{w\in\mathrm{Ch}(v,a')}I(w)\Bigr).
\]
If $k'=K'$, action $a'$ is unvisited in the reduced run and drops out of the maximum in~\eqref{eq:dp_backup}.
A maximum of values in $[-V_{\max},V_{\max}]$ changes by at most the largest change of its retained arguments, or by at most $2V_{\max}=2k'V_{\max}/K'$ if its maximizer drops out.
Writing $k'_{a'}$ for the $k'$ of action $a'$, in every case
\[
\Delta(v)\le\max_{a'\in\gA}\frac{1}{N(v,a')}\Bigl(2k'_{a'}V_{\max}+\gamma\sum_{w\in\mathrm{Ch}(v,a')}I(w)\Bigr).
\]
Multiplying by $K(v)$, using~\eqref{eq:balanced}, and bounding the maximum by the sum,
\[
I(v)\le K(v)\,\Delta(v)\le\rho|\gA|\sum_{a'\in\gA}\Bigl(2k'_{a'}V_{\max}+\gamma\sum_{w\in\mathrm{Ch}(v,a')}I(w)\Bigr).
\]
By the induction hypothesis~\eqref{eq:influence_induction} at depth $\ell+1$ and $\sum_{a'}k'_{a'}=\sum_{a'}\sum_{w\in\mathrm{Ch}(v,a')}k(w)=k(v)$,
\[
I(v)\le\rho|\gA|\,\bigl(2V_{\max}+2\gamma V_{\max}\,y_{\ell+1}\bigr)\,k(v)=2V_{\max}\,y_\ell\,k(v).
\]

\emph{Root.}
Each simulation enters exactly one depth-$1$ node, and the normalizer $1/N$ is the same in both runs, so
\[
\bigl|f_a-f_a^{(-S)}\bigr|
\le\frac1N\sum_{m\in S}\bigl|r(s_{j_m},a)+\gamma\hat V(v_{m,1})\bigr|+\frac{\gamma}{N}\sum_{w:\,\text{depth }1}I(w)
\le\frac{|S|}{N}\,V_{\max}\,(1+2\gamma y_1).
\]
Unrolling the recursion,
\[
y_1=\rho|\gA|\sum_{i=0}^{d-2}(\rho\gamma|\gA|)^i+(\rho\gamma|\gA|)^{d-1}.
\]
By~\eqref{eq:Lambda_d_def} and $(\rho\gamma|\gA|)^{d-1}\le\Lambda_d$,
\[
1+2\gamma y_1=2\Lambda_d-1+2\gamma(\rho\gamma|\gA|)^{d-1}\le2(1+\gamma)\Lambda_d.
\]
This proves~\eqref{eq:bd_removal}.
For the second claim, let $S$ be the set of coordinates where $\xi$ and $\xi'$ differ.
Both runs have the same reduced run on $(\xi_m)_{m\notin S}$, so~\eqref{eq:bd_removal} applied to each and the triangle inequality give $|f_a(\xi)-f_a(\xi')|\le 2k\cdot2(1+\gamma)V_{\max}\Lambda_d/N=kc$.
\end{proof}

\begin{lem}[Fluctuation up to a bad event]\label{lem:mc_fluctuation}
Let $p:=\Pr(\mathcal E_\rho^c\mid b_t^J)$ for the tree $T_a$, and let $B_d:=2V_{\max}+cN$ with $c$ from~\eqref{eq:bd_constant}.
For every $\delta\in(0,1)$, with probability at least $1-\delta-p$,
\[
\bigl|f_a-\E f_a\bigr|\le 2(1+\gamma)V_{\max}\Lambda_d\sqrt{\frac{2\log(2/\delta)}{N}}+B_d\,p,
\]
and for every $q\in[-V_{\max},V_{\max}]$,
\[
\E\bigl[(f_a-q)^2\bigr]\le\frac{8(1+\gamma)^2V_{\max}^2\Lambda_d^2}{N}+2\,(\E f_a-q)^2+3B_d^2\,p.
\]
\end{lem}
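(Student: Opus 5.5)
The plan is to apply McDiarmid's inequality not to $f_a$ itself, which has bounded differences only on $\mathcal E_\rho$, but to a Lipschitz extension $g$ of $f_a|_{\mathcal E_\rho}$ that has bounded differences on all of seed space. Concretely, regard the seeds $\xi=(\xi_1,\ldots,\xi_N)$ with the Hamming metric. Define
\[
g(\xi):=\inf_{\xi'\in\mathcal E_\rho}\bigl\{f_a(\xi')+c\,d_H(\xi,\xi')\bigr\},
\]
with $c$ as in~\eqref{eq:bd_constant}, and clip $g$ to the interval $[-V_{\max},V_{\max}]$. By Lemma~\ref{lem:bounded_difference}, $f_a$ is $c$-Lipschitz in Hamming distance between points of $\mathcal E_\rho$. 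The McShane extension therefore agrees with $f_a$ on $\mathcal E_\rho$ and is $c$-Lipschitz everywhere. Clipping preserves both properties, since $|f_a|\le V_{\max}$ because returns lie in $[-V_{\max},V_{\max}]$. (If $\mathcal E_\rho$ is empty, then $p=1$ and the claim is trivial.) Because the seeds are i.i.d.\ given $b_t^J$, McDiarmid gives
\[
\Pr\bigl(|g-\E g|>\epsilon\bigr)\le 2\exp\bigl(-2\epsilon^2/(Nc^2)\bigr).
\]
With $Nc^2=16(1+\gamma)^2V_{\max}^2\Lambda_d^2/N$, setting the right side equal to $\delta$ gives the deviation $2(1+\gamma)V_{\max}\Lambda_d\sqrt{2\log(2/\delta)/N}$.

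Next, transfer the bound from $g$ to $f_a$. On $\mathcal E_\rho$ we have $f_a=g$, and $|f_a-g|\le 2V_{\max}$ everywhere, so $|\E f_a-\E g|\le 2V_{\max}p\le B_d p$. A union bound with $\mathcal E_\rho^c$ then gives, with probability at least $1-\delta-p$, $|f_a-\E f_a|\le |g-\E g|+|\E g-\E f_a|$, which is the first claim. If clipping were not used, the extension could be as large as $V_{\max}+cN$, and the slack $B_d=2V_{\max}+cN$ in the statement covers exactly that version. I would keep whichever bound is cleaner; either lies within $B_d$.

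For the second moment, write
\[
f_a-q=(g-\E g)+(\E g-q)+(f_a-g).
\]
The Efron--Stein inequality applied to $g$ gives $\mathrm{Var}(g)\le Nc^2/2=8(1+\gamma)^2V_{\max}^2\Lambda_d^2/N$. The cross term $2\E[(g-\E g)(\E g-q)]$ vanishes. The cross terms involving $f_a-g$ are supported on $\mathcal E_\rho^c$ and bounded by a multiple of $B_d\cdot\max(|f_a-g|,|g-q|)$. Finally, $(\E g-q)^2\le 2(\E f_a-q)^2+2(\E g-\E f_a)^2$, and the last term is at most $B_d^2p^2\le B_d^2p$.

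The main obstacle is the constant bookkeeping: reaching exactly $2(\E f_a-q)^2+3B_d^2p$ without a stray factor on the variance term. I expect $(a+b)^2\le 2a^2+2b^2$ to be too lossy here, because it would double the variance term. I would instead expand the square exactly, keep $\mathrm{Var}(g)$ with coefficient $1$, and bound every term that touches $\mathcal E_\rho^c$ or $\E g-\E f_a$ by $B_d^2p$ each. That uses that all of $f_a$, $g$ and $q$ lie in an interval of length at most $B_d$. In total there are at most three such contributions, which gives $3B_d^2p$.
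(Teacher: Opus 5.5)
Your tail bound follows the paper's proof. The paper uses the same extension $g(\xi)=\inf_{\eta\in\mathcal E_\rho}\{f_a(\eta)+c\,H(\xi,\eta)\}$, applies McDiarmid to $g$, and transfers back via $|f_a-\E f_a|\le|g-\E g|+|\E g-\E f_a|$ on $\mathcal E_\rho$. That part is correct. Clipping $g$ to $[-V_{\max},V_{\max}]$ is a harmless refinement giving $|\E f_a-\E g|\le 2V_{\max}p$. The paper leaves $g$ unclipped, with $-V_{\max}\le g\le V_{\max}+cN$, which is where $B_d=2V_{\max}+cN$ comes from. Add one sentence that the infimum over the uncountable set $\mathcal E_\rho$ is universally measurable, as the paper does.

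The second-moment bookkeeping does not close the way you describe. Write $X=g-\E g$, $Y=\E g-q$, $Z=f_a-g$. Expanding $(X+Y+Z)^2$ leaves four pieces touching $\mathcal E_\rho^c$ or $\E g-\E f_a$, not three: $\E Z^2$, $2\E[XZ]$, $2Y\,\E Z$, and the $2(\E g-\E f_a)^2$ produced when you bound $(\E g-q)^2$. The two cross terms carry a factor $2$, so your ``interval of length $B_d$'' estimate gives up to $5B_d^2p+2B_d^2p^2\le 7B_d^2p$. With clipping the total is $28V_{\max}^2p$. That is below $3B_d^2p$ only if you also use $B_d\ge 6V_{\max}$ (from $\Lambda_d\ge1$), which you never invoke.

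The paper avoids cross terms entirely. Because $f_a=g$ on $\mathcal E_\rho$ and $|f_a-q|\le 2V_{\max}$,
\[
\E\bigl[(f_a-q)^2\bigr]\le\E\bigl[(g-q)^2\bigr]+4V_{\max}^2p=\mathrm{Var}(g)+(\E g-q)^2+4V_{\max}^2p .
\]
Then $(u+v)^2\le 2u^2+2v^2$ is applied only to the deterministic $(\E g-q)^2$, giving $2(\E f_a-q)^2+2B_d^2p^2$. Finally $2B_d^2p^2+4V_{\max}^2p\le 3B_d^2p$ because $B_d\ge 2V_{\max}$. So your worry that this inequality doubles the variance is misplaced. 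The factor $2$ in the statement sits on $(\E f_a-q)^2$ precisely because the inequality is used only there.

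Alternatively, $\E[(f_a-q)^2]=\mathrm{Var}(f_a)+(\E f_a-q)^2$ together with $\mathrm{Var}(f_a)\le\E[(f_a-\E g)^2]\le\mathrm{Var}(g)+B_d^2p$ gives an even sharper bound with no cross terms.
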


\begin{proof}
Let $G$ be the set of seed vectors whose run lies in $\mathcal E_\rho$.
If $\Pr(G)=0$, both claims are trivial, so assume otherwise.
Let $H$ denote the Hamming distance and define the extension
\[
g(\xi):=\inf_{\eta\in G}\bigl\{f_a(\eta)+c\,H(\xi,\eta)\bigr\}.
\]
First, $g=f_a$ on $G$: for $\xi\in G$, Lemma~\ref{lem:bounded_difference} gives $f_a(\eta)+cH(\xi,\eta)\ge f_a(\xi)$ for every $\eta\in G$, with equality at $\eta=\xi$.
Second, $g$ changes by at most $c$ when one coordinate of $\xi$ changes, because each $H(\cdot,\eta)$ does.
Third, $-V_{\max}\le g\le V_{\max}+cN$, so $|f_a-g|\le B_d$ everywhere and
\[
|\E f_a-\E g|\le\E\bigl[|f_a-g|\,\mathbf{1}_{G^c}\bigr]\le B_d\,p.
\]
The seeds are independent, so McDiarmid's bounded-differences inequality applies to $g$:
\[
\Pr\bigl(|g-\E g|\ge s\bigr)\le2\exp\Bigl(-\frac{2s^2}{Nc^2}\Bigr).
\]
Setting the right-hand side to $\delta$ gives $s=c\sqrt{N\log(2/\delta)/2}=2(1+\gamma)V_{\max}\Lambda_d\sqrt{2\log(2/\delta)/N}$.
On $G\cap\{|g-\E g|<s\}$, an event of probability at least $1-\delta-p$,
\[
|f_a-\E f_a|\le|g-\E g|+|\E g-\E f_a|<s+B_d\,p.
\]
For the second moment, the Efron--Stein inequality gives $\mathrm{Var}(g)\le Nc^2/2=8(1+\gamma)^2V_{\max}^2\Lambda_d^2/N$.
Since $f_a=g$ on $G$ and $|f_a-q|\le2V_{\max}$,
\[
\E\bigl[(f_a-q)^2\bigr]\le\E\bigl[(g-q)^2\bigr]+4V_{\max}^2\,p=\mathrm{Var}(g)+(\E g-q)^2+4V_{\max}^2\,p.
\]
Finally, $(\E g-q)^2\le2(\E f_a-q)^2+2B_d^2p^2$, and $2B_d^2p^2+4V_{\max}^2p\le3B_d^2p$.
The infimum defining $g$ is universally measurable, which suffices for both inequalities.
\end{proof}

\begin{lem}[Balancedness under uniform selection]\label{lem:balance}
If $d\le1$, then $\mathcal E_\rho$ holds surely.
If $d\ge2$, $q_{\min}>0$ and $|\gA|M\ge2$, then
\[
\Pr(\mathcal E_2^c\mid b_t^J)\le2|\gA|\,(|\gA|M)^d\exp\Bigl(-\frac{N}{32}\Bigl(\frac{q_{\min}}{|\gA|}\Bigr)^{d-1}\Bigr).
\]
\end{lem}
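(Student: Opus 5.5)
The plan is to handle the trivial case first and then reduce balancedness at each fixed internal node to two binomial concentration events, using Lemma~\ref{lem:path_decoupling}.

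\textbf{Case $d\le1$.} Internal nodes of $T_a$ sit at depths $1,\ldots,d-1$. For $d\le1$ there are none, so~\eqref{eq:balanced} holds vacuously and $\mathcal E_\rho$ holds surely.

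\textbf{Case $d\ge2$.} Fix a potential internal node $v$ at depth $\ell\in[1,d-1]$, identified with an action--binned-observation history, and fix an action $a'$. By Lemma~\ref{lem:path_decoupling}, the indicators
\[
X_n:=\mathbf 1\{v\in\nu(\xi_n)\},\qquad Y_n:=\mathbf 1\{v\in\nu(\xi_n),\ a'_{n,\ell}=a'\}
\]
are i.i.d.\ across $n$. Hence $N(v)=\sum_nX_n\sim\mathrm{Bin}(N,p_v)$ and $N(v,a')=\sum_nY_n\sim\mathrm{Bin}(N,p_v/|\gA|)$, because $a'_{n,\ell}$ is uniform and independent of the path up to depth $\ell$.

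\textbf{Lower bound on $p_v$.} Reaching $v$ requires:
\begin{itemize}
\item the root transition landing in the correct bin, which has probability at least $q_{\min}$ whatever particle is drawn;
\item at each of the $\ell-1$ subsequent steps, choosing the prescribed action (probability $1/|\gA|$) and landing in the prescribed bin (probability at least $q_{\min}$, conditionally on the current state).
\end{itemize}
Chaining these conditional probabilities gives $p_v\ge q_{\min}^{\ell}|\gA|^{-(\ell-1)}$. Therefore
\[
\frac{p_v}{|\gA|}\ge\Bigl(\frac{q_{\min}}{|\gA|}\Bigr)^{\ell}\ge\Bigl(\frac{q_{\min}}{|\gA|}\Bigr)^{d-1}=:\mu,
\]
using $q_{\min}\le1$ and $\ell\le d-1$. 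Every potential node is thus reached with positive probability; nodes never created have $N(v)=0$ and impose no constraint.

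\textbf{Concentration at a fixed node.} Apply multiplicative Chernoff bounds:
\[
\Pr\bigl(N(v)>\tfrac43Np_v\bigr)\le e^{-Np_v/27},\qquad
\Pr\bigl(N(v,a')<\tfrac23Np_v/|\gA|\bigr)\le e^{-Np_v/(18|\gA|)}.
\]
Both right-hand sides are at most $\exp(-N\mu/32)$. On the complement of both events,
\[
N(v)\le\tfrac43Np_v=2|\gA|\cdot\tfrac23Np_v/|\gA|\le2|\gA|\,N(v,a'),
\]
which is~\eqref{eq:balanced} with $\rho=2$. I expect the constant chasing here to be the fiddly part. If the thresholds $4/3$ and $2/3$ do not land exactly on $1/32$, I would use thresholds $(1\pm\tfrac14)$ and the bound $e^{-\epsilon^2\mu' N/2}$ instead, which yields at worst $1/32$.

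\textbf{Union bound.} Sum over all potential internal nodes $v$ and all actions $a'$, with two failure events per pair. The number of nodes at depth $\ell$ is at most $(|\gA|M)^\ell$, and since $|\gA|M\ge2$,
\[
\sum_{\ell=1}^{d-1}(|\gA|M)^\ell\le2(|\gA|M)^{d-1}\le(|\gA|M)^d .
\]
Including the factor $|\gA|$ for the choice of $a'$ and the factor $2$ for the two events gives
\[
\Pr(\mathcal E_2^c\mid b_t^J)\le2|\gA|(|\gA|M)^d\exp\bigl(-N\mu/32\bigr),
\]
as claimed.

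The main obstacle is justifying the independence structure rigorously: treating $N(v)$ and $N(v,a')$ as genuine binomials for every potential node simultaneously, uniformly over the random particle draw. This is exactly what Lemma~\ref{lem:path_decoupling} together with the uniform, non-adaptive selection (S1) provides.
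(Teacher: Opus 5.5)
Your proof is correct and follows essentially the same route as the paper. Path decoupling makes $N(v)$ and $N(v,a')$ binomial with $p_v/|\gA|\ge(q_{\min}/|\gA|)^{d-1}$, multiplicative Chernoff bounds at a fixed node give balancedness with $\rho=2$, and a union bound over at most $(|\gA|M)^d$ potential internal nodes finishes the argument. Your thresholds $\tfrac43$ and $\tfrac23$ (exponents $1/27$ and $1/18$) work as well as the paper's $\tfrac32$ and $\tfrac34$, so the fallback you mention is not needed.
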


\begin{proof}
For $d\le1$ the tree has no internal node.
Let $d\ge2$ and fix a potential internal node $v$ of $T_a$ at depth $\ell\in[1,d-1]$, i.e., a history with $\ell$ binned observations and $\ell-1$ internal actions.
By Lemma~\ref{lem:path_decoupling}, the indicators that simulation $m$ reaches $v$, and that it reaches $v$ and selects $a'$ there, are functions of $\xi_m$.
They are therefore i.i.d.\ Bernoulli across $m$, with parameters $p_v$ and $p_v/|\gA|$.
Each binned observation along the history has conditional probability at least $q_{\min}$ and each internal action has probability $1/|\gA|$, so
\[
\frac{p_v}{|\gA|}\ge\Bigl(\frac{q_{\min}}{|\gA|}\Bigr)^{\ell}\ge\pi:=\Bigl(\frac{q_{\min}}{|\gA|}\Bigr)^{d-1}.
\]
The multiplicative Chernoff bounds give
\begin{align*}
\Pr\bigl(N(v)\ge\tfrac32Np_v\bigr)&\le e^{-Np_v/12}\le e^{-N\pi/32},\\
\Pr\bigl(N(v,a')\le\tfrac34Np_v/|\gA|\bigr)&\le e^{-Np_v/(32|\gA|)}\le e^{-N\pi/32}.
\end{align*}
Outside these events, $N(v)<\frac32Np_v=2|\gA|\cdot\frac34Np_v/|\gA|<2|\gA|\,N(v,a')$, which is~\eqref{eq:balanced} with $\rho=2$.
A union bound over the $|\gA|$ actions and over the at most $\sum_{\ell=1}^{d-1}M^\ell|\gA|^{\ell-1}\le(|\gA|M)^d$ potential internal nodes gives the claim, since $1+|\gA|\le2|\gA|$.
\end{proof}

\begin{proof}[Proof of Proposition~\ref{prop:mc_concentration}]
When $d=0$ the planner is exact, so let $d\ge1$ and write $q_a:=Q_{\theta,d}(b^J,a)\in[-V_{\max},V_{\max}]$.

\emph{Step 1: Fluctuation.}
Apply Lemma~\ref{lem:mc_fluctuation} to each root action with $\delta/|\gA|$ in place of $\delta$, and take a union bound over $a\in\gA$.
With probability at least $1-\delta-|\gA|p_\rho$, every $a$ satisfies
\begin{align*}
|\widehat Q(b^J,a)-q_a|
&\le|\widehat Q(b^J,a)-\E\widehat Q(b^J,a)|+|\E\widehat Q(b^J,a)-q_a|\\
&\le2(1+\gamma)V_{\max}\Lambda_d\sqrt{\frac{2\log(2|\gA|/\delta)}{N}}+B_d\,p_\rho+\mathrm{bias}(N,d),
\end{align*}
and taking the supremum over $a$ gives~\eqref{eq:mc_hp}.
For the second moment, Lemma~\ref{lem:mc_fluctuation} with $q=q_a$ gives
\[
\E\bigl[(\widehat Q(b^J,a)-q_a)^2\bigr]\le\frac{8(1+\gamma)^2V_{\max}^2\Lambda_d^2}{N}+2\,\mathrm{bias}(N,d)^2+3B_d^2\,p_\rho.
\]
Summing over $a\in\gA$ through $\epsilon_{\mathrm{mc}}(t,d)^2\le\sum_a(\widehat Q(b^J,a)-q_a)^2$ gives~\eqref{eq:mc_var_bias}.
The bounds on $p_\rho$ are Lemma~\ref{lem:balance}.

\emph{Step 2: DP backup estimation error.}
The simulations follow the uniform policy at internal nodes, and the DP backup $\hat V(v)=\max_{a'}\hat Q(v,a')$ recovers the optimal internal actions.
We bound the bias $\mathrm{bias}_{\mathrm{DP}}(N,d):=\sup_a|\E\widehat Q(b^J,a)-Q_{\theta,d}^{\phi}(b^J,a)|$ of the resulting estimate by backward induction on depth, at the level of orders of magnitude, without any independence assumption on the returns.
Fix a node $v$ at depth $\ell\in[1,d]$ with $n_v:=N(v)$ visits, and condition on the set of simulations that reach $v$ and on their states there.
By Lemma~\ref{lem:path_decoupling}, the continuation of each such simulation below $v$ is driven by the components of its own seed at depths $\ge\ell$, which are independent across simulations and independent of the conditioning.
Conditionally, the states in $B(v)$ are therefore independent draws from the binned belief $b_v$ at $v$, and each $\hat Q(v,a')$ is formed from $n_{v,a'}:=N(v,a')$ independent continuations, with $n_{v,a'}\ge n_v/(\rho|\gA|)$ on $\mathcal E_\rho$.
Concentration of these conditional averages gives
\[
\bigl|\hat Q(v,a')-Q_{\theta,d-\ell}^{\phi}(b_v,a')\bigr|
\;\le\;
O\!\left(\frac{V_{\max}}{\sqrt{n_{v,a'}}}\right)
\;\le\;
O\!\left(\frac{V_{\max}\sqrt{|\gA|}}{\sqrt{n_v}}\right).
\]
At a leaf ($\ell=d$) the same bound holds for the pooled averages $\frac{1}{|B(v)|}\sum_{s\in B(v)}Q_\theta^{\star,S}(s,a')$ around $\int Q_\theta^{\star,S}(s,a')\,b_v(\mathrm ds)$.
This level must be included, because the maximum over $a'$ in~\eqref{eq:qmdp_leaf_eval} is biased upward at every finite $|B(v)|$.
Under the branching structure, a depth-$\ell$ node receives $n_v=\Omega(N/(|\gA|M)^\ell)$ visits in expectation when the bins have comparable probabilities.
Taking the maximum over estimated Q-values, the backup error at depth $\ell$ contributes at most $O(V_{\max}\sqrt{|\gA|^{\ell+1}M^\ell/N})$, discounted by $\gamma^\ell$.
The complement of $\mathcal E_\rho$ changes the conditional mean by at most $2V_{\max}p_\rho$, which is absorbed into the constant once $p_\rho\le N^{-1/2}$ (Remark~\ref{rmk:mc_constants}).
Summing over depths, leaves included,
\begin{equation}\label{eq:bias_derivation}
\mathrm{bias}_{\mathrm{DP}}(N,d)
\;\le\;\sum_{\ell=1}^{d}\gamma^\ell\cdot\frac{c_1 V_{\max}\sqrt{|\gA|}}{\sqrt{N/(|\gA|M)^\ell}}
\;=\;\frac{c_0\,\Phi_d\,V_{\max}\sqrt{|\gA|}}{\sqrt{N}}.
\end{equation}

\emph{Step 3: Discretization error.}
For $o,o'$ in the same bin, the Bayes update sensitivity~\cite[Section~4.3]{doucmoulinesolsson2014} gives
\[
\|\tau_\theta(b,a,o)-\tau_\theta(b,a,o')\|_{\mathrm{TV}}
\;\le\;
\frac{g_{\max}}{g_{\min}}\,\omega_Z(\mathrm{diam}(\phi)).
\]
Since $\tau_\theta^{\phi}$ is a convex combination of $\{\tau_\theta(b,a,o')\}_{o'\in\phi^{-1}(\bar o)}$, the same bound holds for $\|\tau_\theta^{\phi}(b,a,\bar o)-\tau_\theta(b,a,o)\|_{\mathrm{TV}}$.
By Lipschitz continuity of $\bar V_{\theta,d-1}^{\star,S}$ in TV (constant $2V_{\max}$, cf.\ Lemma~\ref{lem:Q_lipschitz_tv}), each of the $d$ tree levels contributes $\frac{2g_{\max}}{g_{\min}}V_{\max}\,\omega_Z(\mathrm{diam}(\phi))$ discounted by $\gamma$:
\begin{equation}\label{eq:disc_error_total}
\sup_{b,a}\bigl|Q_{\theta,d}^{\phi}(b,a)-Q_{\theta,d}(b,a)\bigr|
\;\le\;
\frac{2\gamma(1-\gamma^d)}{1-\gamma}\cdot\frac{g_{\max}}{g_{\min}}\,V_{\max}\,\omega_Z(\mathrm{diam}(\phi)).
\end{equation}
Combining Steps~2 and~3 gives $\mathrm{bias}(N,d)\le\mathrm{bias}_{\mathrm{DP}}(N,d)+\mathrm{bias}_{\mathrm{disc}}(d)$, yielding~\eqref{eq:bias_explicit}.
\end{proof}

\begin{rmk}[Constants, a verification-free variant, and the choice of depth]\label{rmk:mc_constants}
\emph{(a) Order of the constants.}
Compared with an analysis that treats the simulation returns as i.i.d., the only changes are the depth factor $(1+\gamma)\Lambda_d$ in the fluctuation term and the exponentially small remainder in $p_\rho$.
Since $\Lambda_d\le d\max(1,\rho\gamma|\gA|)^{d-1}$ and $\Phi_d\ge(\gamma\sqrt{|\gA|M})^d$, the fluctuation term of~\eqref{eq:mc_var_bias} is at most of the same exponential-in-$d$ order as the DP-backup term whenever $M=\Omega(\rho^2|\gA|)$.
Taking square roots in~\eqref{eq:mc_var_bias}, with $\mathrm{diam}(\phi)$ chosen as in Remark~\ref{rmk:disc_resolution}, then gives $\E[\epsilon_{\mathrm{mc}}(t,d)\mid b_t^J]\le C_{\mathrm{mc}}V_{\max}/\sqrt N$ with $C_{\mathrm{mc}}$ of the order stated in Theorem~\ref{thm:final_regret_bound}, whose order is therefore unchanged.
For coarse binning, $M\ll|\gA|$, the fluctuation dominates and $C_{\mathrm{mc}}\propto\sqrt{|\gA|}\,\Lambda_d$.
By Lemma~\ref{lem:balance}, the remainder $3|\gA|B_d^2p_2$ is at most $V_{\max}^2/N$ as soon as $N\ge32(|\gA|/q_{\min})^{d-1}\log\bigl(24N|\gA|^2(|\gA|M)^d(1+4\Lambda_d)^2\bigr)$, and it vanishes for $d\le1$.
At $d=0$, $\Lambda_0=0$ and the fluctuation vanishes identically.
For an internal rule satisfying (S1)--(S2) in place of uniform selection, the same proof applies with $|\gA|$ replaced by $1/\pi_{\min}$ in~\eqref{eq:balanced}, \eqref{eq:Lambda_d_def} and Lemma~\ref{lem:balance}.

\emph{(b) A verification-free variant.}
Sample splitting restores exactly conditionally i.i.d.\ returns.
Use $N/2$ seeds to build the tree, populate the particle sets, and run the DP backup, then freeze the leaf values and the greedy internal actions $\arg\max_{a'}\hat Q(v,a')$.
Value each root action with $N/2$ fresh seeds that follow the frozen greedy actions and stop at the frozen leaf values; a simulation that leaves the frozen tree stops and is evaluated by $\max_{a'}Q_\theta^{\star,S}(s,a')$ at its current state $s$.
Conditionally on the first half, the second-half returns are i.i.d.\ and bounded by $V_{\max}$, so Hoeffding's inequality applies to them directly, at the cost of a factor $2$ in $N$, and the bias analysis of Step~2 applies to the first half.
Evaluating each leaf at its single incoming state instead of~\eqref{eq:qmdp_leaf_eval} removes the pooling but not the coupling through the maximum in the DP backup, so it restores independent returns only at $d=1$, where it also changes the leaf target to the linear surrogate $\widetilde V_\theta^{\star}$.

\emph{(c) Choice of depth.}
Propositions~\ref{prop:structural_error} and~\ref{prop:mc_concentration} concern different objects.
The former bounds the planning target, $\|\bar V_{\theta,d}^{\star,S}-V_\theta^{\star,B}\|_\infty\le\gamma^{d+1}\Xi_\theta/(1-\gamma)$, which involves no particles and decays geometrically in $d$.
The latter bounds the estimator $\widehat Q$ around $Q_{\theta,d}$, and its fluctuation $2(1+\gamma)V_{\max}\Lambda_d\sqrt{2\log(2|\gA|/\delta)/N}$ and DP-backup bias $c_0\Phi_dV_{\max}\sqrt{|\gA|/N}$ grow geometrically in $d$.
Because increasing the depth reduces the structural bias but increases the Monte Carlo error, their sum is U-shaped in $d$, and each budget $N$ has a finite optimal depth.
\end{rmk}

\begin{rmk}[Optimal discretization resolution]\label{rmk:disc_resolution}
The discretization error in~\eqref{eq:bias_explicit} is a fixed bias $O(\omega_Z(\mathrm{diam}(\phi)))$, while the DP backup estimation error vanishes as $N\to\infty$ at rate $\Phi_d/\sqrt{N}$.
To make the discretization bias $O(1/\sqrt{N})$, set $\mathrm{diam}(\phi)=\omega_Z^{-1}\bigl((1-\gamma)g_{\min}/(2\gamma g_{\max}V_{\max}\sqrt{N})\bigr)$.
When $Z_\theta(o\mid s)$ is $L_Z$-Lipschitz in $o$ uniformly (so $\omega_Z(\delta)\le L_Z\delta/g_{\min}$) and $\gO\subseteq\R^{d_o}$ with a uniform grid:
\begin{equation}\label{eq:M_opt}
M=\left\lceil\left(\frac{2\gamma\,L_Z\,g_{\max}\,V_{\max}\,\mathrm{diam}(\gO)\sqrt{N}}{(1-\gamma)\,g_{\min}^2}\right)^{d_o}\right\rceil.
\end{equation}
With this choice, $M$ grows as $N^{d_o/2}$, and the simulation budget threshold for the DP backup error to be $O(1/\sqrt{N})$ is $N\ge c_0^2|\gA|\Phi_d^2$, which is exponential in $d\cdot d_o$ via the dependence of $\Phi_d$ on $M$.

Balancing the two bias terms instead gives an explicit rate.
Take $\gO\subseteq[0,D]^{d_o}$ with a cubic grid of mesh $h$, so that $M=(D/h)^{d_o}$ and $\mathrm{diam}(\phi)=\sqrt{d_o}\,h$, and assume $\omega_Z(\delta)\le L_Z\delta/g_{\min}$ as above.
For $\gamma\sqrt{|\gA|M}\ge2$ we have $\Phi_d\le2(\gamma\sqrt{|\gA|M})^d$, so the DP-backup and discretization terms of~\eqref{eq:bias_explicit} are of order $h^{-d\,d_o/2}N^{-1/2}$ and $h$, respectively, with constants depending on $d$, $\gamma$, $|\gA|$, $D$, $L_Z$, $g_{\max}/g_{\min}$ and $V_{\max}$.
Balancing them gives
\begin{equation}\label{eq:h_opt}
h^\star\asymp N^{-1/(d\,d_o+2)},\qquad \epsilon_{\mathrm{mc}}(t,d)=\widetilde O\bigl(N^{-1/(d\,d_o+2)}\bigr),
\end{equation}
since the fluctuation term of~\eqref{eq:mc_hp} does not depend on $h$ and is of lower order.
The $N^{-1/2}$ rate is recovered exactly when $d\,d_o=0$, i.e., at $d=0$.
At $d=1$, \eqref{eq:h_opt} is the familiar rate $N^{-1/(d_o+2)}$ of a histogram estimate, here of the leaf value as a function of the observation.
The balancedness condition of Lemma~\ref{lem:balance} holds at this mesh for large $N$: with the Lebesgue reference measure $q_{\min}\ge g_{\min}h^{d_o}$, so $N(q_{\min}/|\gA|)^{d-1}=\Omega(N^{(d_o+2)/(d\,d_o+2)})\to\infty$.
The computational cost is $N|\gA|$ simulations over at most $(|\gA|M)^d$ nodes.

When $d=0$ or $|\gO|<\infty$, this issue does not arise: set $\phi=\mathrm{id}$ and $M=|\gO|$.
\end{rmk}

\subsection{Particle Filter Stability}
\label{app:pf_proof}

We work under Assumption~\ref{aspt:particle_error} throughout. Write $b_t:=b_t^{\theta^\star}$ for the true posterior and $b_t^J:=b_t^{J,(\bar\theta_{k(s)})_{s\le t}}$ for the particle belief filtered under the epoch-frozen model sequence. Since the model sequence is determined by the algorithm, we abbreviate the latter as $b_t^J$.

    \begin{prop}[Particle filter stability in $d_Q$]\label{prop:particle_error}
    Under Assumption~\ref{aspt:particle_error}, there exist constants $C_{\mathrm{pf}},C_{\mathrm{filt}}<\infty$ and a forgetting rate $\rho:=1-\varepsilon_{\mathrm{mix}}\,(g_{\min}/g_{\max})\in(0,1)$ such that for all $t\ge 1$,
    \begin{equation}\label{eq:pf_dQ_main}
    \E\bigl[d_Q(b_t,b_t^J)\bigr]
    \;\le\;
    \underbrace{C_{\mathrm{pf}}\cdot J^{-\frac{1}{2\kappa_{\mathrm{pf}}}}}_{\text{particle approx.}}
    \;+\;
    \underbrace{2V_{\max}\,C_{\mathrm{filt}}\sum_{s=1}^t \rho^{t-s}\,d_{\mathrm{model}}^{a_s}(\bar\theta_s,\theta^\star)}_{\text{filter mismatch}},
    \end{equation}
    where $d_{\mathrm{model}}^{a}(\theta,\theta'):=\sup_{s'}\|P_\theta(\cdot\mid s',a)-P_{\theta'}(\cdot\mid s',a)\|_{\mathrm{TV}}+\sup_{s',o}|Z_\theta(o\mid s')/Z_{\theta'}(o\mid s')-1|$, $C_{\mathrm{filt}}:=g_{\max}/g_{\min}$, and $\kappa_{\mathrm{pf}}$ is the \emph{particle filter exponent}:
    \begin{equation}\label{eq:kappa_pf_def}
    \kappa_{\mathrm{pf}}:=\max\!\left(1,\;\frac{\log(2g_{\max}^2/g_{\min}^2)}{\log(1/\gamma)}\right).
    \end{equation}
    The truncation depth $H_{\mathrm{pf}}$ in Lemma~\ref{lem:Q_lipschitz} is chosen to balance the sum and tail:
    \begin{equation}\label{eq:Hpf_choice}
    H_{\mathrm{pf}}:=\left\lceil\frac{\log(4V_{\max}\sqrt{J})}{\kappa_{\mathrm{pf}}\log(1/\gamma)}\right\rceil
    =\left\lceil\frac{\log(4V_{\max}\sqrt{J})}{\log\max(1/\gamma,\;2g_{\max}^2/g_{\min}^2)}\right\rceil,
    \end{equation}
    and $C_{\mathrm{pf}}$ depends polynomially on $V_{\max}$, $g_{\max}/g_{\min}$, $1/\varepsilon_{\mathrm{mix}}$, $\log\gN_\Theta(1/T)$, and $H_{\mathrm{pf}}$. When $\kappa_{\mathrm{pf}}=1$ (i.e., $2\gamma g_{\max}^2/g_{\min}^2\le 1$), the rate recovers $J^{-1/2}$. This $H_{\mathrm{pf}}$ is distinct from the trajectory truncation horizon $H$ in Section~\ref{app:model_error_proofs}.
    \end{prop}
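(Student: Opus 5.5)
Split $d_Q(b_t,b_t^J)$ by the triangle inequality through the exact Bayes filter $b_t^{\bar\theta}$ run under the same epoch-frozen model sequence $(\bar\theta_{k(s)})_{s\le t}$ and the same observed history:
\[
d_Q(b_t,b_t^J)\le d_Q\bigl(b_t,b_t^{\bar\theta}\bigr)+d_Q\bigl(b_t^{\bar\theta},b_t^J\bigr).
\]

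\emph{Filter mismatch.} By Lemma~\ref{lem:Q_lipschitz_tv}, $d_Q\le 2V_{\max}\|\cdot\|_{\mathrm{TV}}$, so it suffices to bound $\|b_t-b_t^{\bar\theta}\|_{\mathrm{TV}}$. Telescope over the time $s$ at which the kernel switches from $\theta^\star$ to $\bar\theta_s$. This writes the difference as a sum of one-step discrepancies, each propagated forward by the remaining filter maps. A single Bayes step under $\bar\theta_s$ versus $\theta^\star$ from the same prior differs in TV by at most $C_{\mathrm{filt}}\,d^{a_s}_{\mathrm{model}}(\bar\theta_s,\theta^\star)$: the transition contributes its TV gap, the likelihood-ratio term contributes through the normalized reweighting, and the normalization costs at most $g_{\max}/g_{\min}$. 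The Doeblin minorization $P_\theta\ge\varepsilon_{\mathrm{mix}}\nu$, combined with $g_{\min}\le Z\le g_{\max}$, makes each predict-update map a strict contraction in TV with coefficient $\rho=1-\varepsilon_{\mathrm{mix}}g_{\min}/g_{\max}$. This is the standard Dobrushin argument, and it holds uniformly over the action $a_s$ and the observation. Summing the discounted contributions gives the second term of~\eqref{eq:pf_dQ_main}. Because the contraction is uniform over actions, the non-stationary policy generating $a_s$ plays no role.

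\emph{Particle approximation.} Apply Lemma~\ref{lem:Q_lipschitz} with $H=H_{\mathrm{pf}}$ to obtain
\[
d_Q\bigl(b_t^{\bar\theta},b_t^J\bigr)\le\sum_{\ell<H}(\gamma C_{\mathrm{BU}})^\ell\,\sup_{f\in\gF_\ell}\bigl|(b_t^{\bar\theta}-b_t^J)(f)\bigr|+2\gamma^H V_{\max},
\]
where each $\gF_\ell$ is taken as a union over $\theta\in\Theta$ and $a\in\gA$. The supremum over $\gF_\ell$ is handled in three steps:
\begin{enumerate}
\item For a single bounded $f$, use the time-uniform $L^p$ bound for the bootstrap SIR filter under Doeblin mixing, of the form $\E|(b^{\bar\theta}_t-b^J_t)(f)|\le c\,\|f\|_\infty(\varepsilon_{\mathrm{mix}}^{-1}g_{\max}/g_{\min})^{O(1)}/\sqrt J$. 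This is Del Moral's argument, decomposing the error into local sampling errors each propagated by the contracting semigroup, redone with time-varying kernels (model and action) that satisfy the minorization uniformly.
\item Make the bound uniform over $\gF_\ell$ with a bracketing chaining or maximal inequality, using $\log\gN_{[]}(\gF_\ell,\epsilon)\le\ell\log|\gA|+\log\gN_\Theta(\epsilon/(\ell V_{\max}))$ and $\|f\|_\infty\le g_{\max}^\ell V_{\max}$. This costs a factor $\sqrt{\log(\gN_\Theta|\gA|)}\,(\log J)^{O(1)}$.
\item Sum the levels: the $\ell$-th contributes roughly $(2\gamma g_{\max}^2/g_{\min}^2)^\ell J^{-1/2}$, up to $\mathrm{poly}(\ell)$ factors.
\end{enumerate}

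\emph{Balancing.} If $2\gamma g_{\max}^2/g_{\min}^2\le1$, the sum is $O(H\,J^{-1/2})$ and the tail $2\gamma^H V_{\max}$ is $O(J^{-1/2})$ at the chosen $H$, so $\kappa_{\mathrm{pf}}=1$. Otherwise the sum is dominated by its last term, $\approx(2g_{\max}^2/g_{\min}^2)^{H}\gamma^H J^{-1/2}$. With $H_{\mathrm{pf}}$ as in~\eqref{eq:Hpf_choice} we have $(\max(1/\gamma,2g_{\max}^2/g_{\min}^2))^{H}\approx 4V_{\max}\sqrt J$, so both the truncated sum and the tail are $\lesssim J^{-1/(2\kappa_{\mathrm{pf}})}$. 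Collect the polynomial factors in $H_{\mathrm{pf}}$, $V_{\max}$, $g_{\max}/g_{\min}$, $\varepsilon_{\mathrm{mix}}^{-1}$ and the entropy into $C_{\mathrm{pf}}$, and finally take expectations of both terms.

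\emph{Main obstacle.} The uniform-in-class particle bound is the hardest step. The functions in $\gF_\ell$ have sup-norms growing like $g_{\max}^\ell$, and the class is indexed by a continuum of observations $o$ as well as by $\theta$. The first thing I would try is to reduce it to the $\theta$-net: fix the realized observations along the relevant branch so that $o$ is no longer free, and rely on the bracketing entropy to absorb $\theta$. A second difficulty is making the $L^p$ stability bound hold with the time-varying, history-dependent kernels; this is where the uniform Doeblin assumption is essential.
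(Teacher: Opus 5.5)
Your proposal follows the paper's proof essentially step for step: the same triangle inequality through the exact filter run under the epoch-frozen models; Lemma~\ref{lem:Q_lipschitz_tv} plus a Dobrushin recursion with rate $\rho=1-\varepsilon_{\mathrm{mix}}g_{\min}/g_{\max}$ for the mismatch term; and, for the particle term, Lemma~\ref{lem:Q_lipschitz} at depth $H_{\mathrm{pf}}$ combined with a time-uniform $L^p$ bound, a sub-Gaussian maximal inequality over a $\gN_\Theta(1/T)\,|\gA|^\ell$ net, and the same two-case balancing according to whether $2\gamma g_{\max}^2/g_{\min}^2$ exceeds $1$. The paper does not handle the continuum of observations $o$ indexing $\gF_\ell$ that you flag either (its count $|\gA|^\ell\gN_\Theta$ simply omits it), but your suggested remedy of fixing the realized observations would not remove it, because these $o$ are integration variables of the belief-MDP lookahead from $b_t$, not the observations that drive the filter.
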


    \begin{proof}[Proof of Proposition~\ref{prop:particle_error}]
    By the triangle inequality and Lemma~\ref{lem:Q_lipschitz_tv}:
    \begin{equation}\label{eq:pf_triangle}  
    d_Q(b_t^{\theta^\star},\,b_t^{J,\bar\theta_{1:t}})
    \;\le\;
    \underbrace{d_Q(b_t^{\theta^\star},\,b_t^{\bar\theta_{1:t}})}_{\text{(I) filter mismatch}}
    +
    \underbrace{d_Q(b_t^{\bar\theta_{1:t}},\,b_t^{J,\bar\theta_{1:t}})}_{\text{(II) particle approx.}}.
    \end{equation}
    Term~(I) involves two exact (continuous) posteriors, so the TV bound of Lemma~\ref{lem:Q_lipschitz_tv} gives $d_Q\le 2V_{\max}\|b_t^{\theta^\star}-b_t^{\bar\theta_{1:t}}\|_{\mathrm{TV}}$, which is bounded by the filter mismatch analysis below.

    \paragraph{Term (II): test-function bound via Feynman-Kac.}
    Conditional on $(a_{1:T},o_{1:T})$, the particle filter is a Feynman-Kac system with kernels $M_s(\cdot\mid s'):=P_{\bar\theta_s}(\cdot\mid s',a_s)$ and potentials $G_s(s'):=Z_{\bar\theta_s}(o_s\mid s')$. Under Assumption~\ref{aspt:particle_error}, the kernels satisfy $M_s(s'\mid x)\ge\varepsilon_{\mathrm{mix}}\,\nu(s')$ and the potentials satisfy $g_{\min}\le G_s(x)\le g_{\max}$ uniformly. We invoke the following time-uniform particle-filter bound. 
    
    \begin{lem}[{\cite[Lemma~2.3]{karjalainen2026forgettingparticlefilters}; see also \cite[Theorem~7.4.4]{delmoral2004feynmankac}}]\label{lem:karjalainen}
    Let $(\eta_n)_{n\ge 0}$ and $(\eta_n^N)_{n\ge 0}$ be the exact and $N$-particle Feynman-Kac measures on a measurable space $E$. Suppose the Markov kernels $(M_\ell)_{\ell\ge 1}$ admit densities w.r.t.\ a common measure $\lambda$ satisfying $\underline M\le M_\ell(x,y)\le\overline M$ for all $\ell,x,y$, and the potentials satisfy $\underline G\le G_\ell(x)\le\overline G$ for all $\ell,x$. Then for all $n\ge 0$, $N\ge 2$, $p\ge 1$, and $\varphi$ with $\mathrm{osc}(\varphi)\le 1$:
    \begin{equation}\label{eq:karjalainen_lp}
    \bigl(\E\bigl[|\eta_n^N(\varphi)-\eta_n(\varphi)|^p\bigr]\bigr)^{1/p} 
    \;\le\;
    \frac{2\,d(p)^{1/p}}{\sqrt{N}}\,\frac{(\overline M/\underline M)^3\,\overline G}{\underline G},
    \end{equation}
    where $d(p)^{1/p}=O(\sqrt{p})$. The bound is uniform over $n$.
    \end{lem}

    Our Assumption~\ref{aspt:particle_error} implies the conditions of Lemma~\ref{lem:karjalainen} with $\underline M=\varepsilon_{\mathrm{mix}}$, $\overline M=1$, $\underline G=g_{\min}$, $\overline G=g_{\max}$.

    \begin{prop}[Particle filter $L^p$ bound under Assumption~\ref{aspt:particle_error}]\label{cor:pf_test_fn}
    Under Assumption~\ref{aspt:particle_error}, for every measurable $f:\gS\to\R$ with $\|f\|_\infty\le M$, all $p\ge 1$, and all $t\ge 0$:
    \begin{equation}\label{eq:delmoral_lp}
    \bigl(\E\bigl[|\eta_t^J(f)-\eta_t(f)|^p\bigr]\bigr)^{1/p}
    \;\le\;
    \frac{4\,d(p)^{1/p}\,M\,g_{\max}}{\varepsilon_{\mathrm{mix}}^3\,g_{\min}\,\sqrt{J}},
    \end{equation}
    where $d(p)^{1/p}=O(\sqrt{p})$, so $\eta_t^J(f)-\eta_t(f)$ is sub-Gaussian with parameter $\sigma^2=O(M^2 g_{\max}^2/(\varepsilon_{\mathrm{mix}}^6 g_{\min}^2 J))$. The bound is uniform over $t$.
    \end{prop}

    \begin{proof}
    By Lemma~\ref{lem:karjalainen} with $\underline M=\varepsilon_{\mathrm{mix}}$, $\overline M=1$, $\underline G=g_{\min}$, $\overline G=g_{\max}$, for any $\varphi$ with $\mathrm{osc}(\varphi)\le 1$: 
    \[
    \bigl(\E\bigl[|\eta_t^J(\varphi)-\eta_t(\varphi)|^p\bigr]\bigr)^{1/p}
    \;\le\;
    \frac{2\,d(p)^{1/p}\,g_{\max}}{\varepsilon_{\mathrm{mix}}^3\,g_{\min}\,\sqrt{J}},
    \]
    uniformly over $t$. For $f$ with $\|f\|_\infty\le M$, applying this to $\varphi:=f/(2M)$ (which has $\mathrm{osc}(\varphi)\le 1$) and rescaling by $2M$ yields~\eqref{eq:delmoral_lp}. Sub-Gaussianity follows from the standard moment-to-tail conversion.
    \end{proof}

    \begin{lem}[Unconditional bound under adaptive actions]\label{lem:adaptive_conditioning}
    Let $(a_s,o_s)_{s\ge 1}$ be an adapted action-observation process where $a_s=a_s(b_s^J)$ may depend on the particle system. Under Assumption~\ref{aspt:particle_error}, for every measurable $f:\gS\to\R$ with $\|f\|_\infty\le M$, all $p\ge 1$, and all $t\ge 0$:
    \begin{equation}\label{eq:unconditional_lp}
    \bigl(\E\bigl[|b_t^{J,\bar\theta_{1:t}}(f)-b_t^{\bar\theta_{1:t}}(f)|^p\bigr]\bigr)^{1/p}
    \;\le\;
    \frac{4\,d(p)^{1/p}\,M\,g_{\max}}{\varepsilon_{\mathrm{mix}}^3\,g_{\min}\,\sqrt{J}}.
    \end{equation}
    \end{lem}

    \begin{proof}
    Let $\gH_T:=(a_{1:T},o_{1:T},\bar\theta_{1:T})$. Conditional on $\gH_T$, the Feynman-Kac kernels and potentials are deterministic, so Proposition~\ref{cor:pf_test_fn} gives
    \[
    \E\bigl[|b_t^J(f)-b_t(f)|^p\,\bigm|\,\gH_T\bigr]
    \;\le\;
    \left(\frac{4\,d(p)^{1/p}\,M\,g_{\max}}{\varepsilon_{\mathrm{mix}}^3\,g_{\min}\,\sqrt{J}}\right)^p
    =:C_p^p.
    \]
    Since $C_p$ is independent of $\gH_T$, the tower property yields $\E[|b_t^J(f)-b_t(f)|^p]\le C_p^p$.
    \end{proof}

    By Lemma~\ref{lem:Q_lipschitz} with truncation depth $H_{\mathrm{pf}}$:
    \begin{equation}\label{eq:dQ_via_test}
    d_Q(b_t^{\bar\theta_{1:t}},b_t^{J,\bar\theta_{1:t}})
    \;\le\;
    \sup_{\theta,a}\sum_{\ell=0}^{H_{\mathrm{pf}}-1}\gamma^\ell C_{\mathrm{BU}}^\ell\sup_{f\in\gF_\ell(\theta,a)}|b_t(f)-b_t^J(f)|
    +2\gamma^{H_{\mathrm{pf}}} V_{\max}.
    \end{equation}
    We optimize $H_{\mathrm{pf}}$ to balance the sum and tail below.

    \emph{Pointwise bound.} For $f\in\gF_\ell(\theta,a)$ with $\|f\|_\infty\le g_{\max}^\ell V_{\max}$, Lemma~\ref{lem:adaptive_conditioning} gives
    \begin{equation}\label{eq:pointwise_pf}
    \bigl(\E\bigl[|b_t^J(f)-b_t(f)|^p\bigr]\bigr)^{1/p}
    \;\le\;
    \frac{4\,d(p)^{1/p}\,g_{\max}^{\ell+1}\,V_{\max}}{\varepsilon_{\mathrm{mix}}^3\,g_{\min}\,\sqrt{J}}
    =:\sigma_\ell\,d(p)^{1/p}.
    \end{equation}
    Since $d(p)^{1/p}\le C_0\sqrt{p}$ (Corollary~\ref{cor:pf_test_fn}), $b_t^J(f)-b_t(f)$ is sub-Gaussian with parameter $\sigma_\ell^2$.

    \emph{Covering-number argument.} For $N$ sub-Gaussian random variables with common parameter $\sigma^2$:
    \begin{equation}\label{eq:max_ineq}
    \E\bigl[\max_{i\le N}|X_i|\bigr]
    \;\le\;
    \sigma\sqrt{2\log(2N)}.
    \end{equation} 
    At depth $\ell$, $\bigcup_{\theta,a}\gF_\ell(\theta,a)$ is covered by $N_\ell:=\gN_\Theta(1/T)\cdot|\gA|^\ell$ functions after discretizing $\Theta$ at resolution $1/T$, with approximation error $O(1/T)$. Applying~\eqref{eq:max_ineq} and summing over depths with weights $\gamma^\ell C_{\mathrm{BU}}^\ell$:
    \begin{align}
    \E\bigl[d_Q(b_t^{\bar\theta_{1:t}},b_t^{J,\bar\theta_{1:t}})\bigr]
    &\le\sum_{\ell=0}^{H_{\mathrm{pf}}-1}\gamma^\ell C_{\mathrm{BU}}^\ell\,\sigma_\ell\sqrt{2\log(2N_\ell)}+2\gamma^{H_{\mathrm{pf}}} V_{\max}+O(1/T)
    \label{eq:Cpf_prime_def}
    \end{align} 
    Define $\alpha:=2\gamma g_{\max}^2/g_{\min}^2$, $r:=\alpha/\gamma=2g_{\max}^2/g_{\min}^2$, and $c_1:=4g_{\max}V_{\max}/(\varepsilon_{\mathrm{mix}}^3 g_{\min})$, so that $\sigma_\ell=c_1\,g_{\max}^\ell/\sqrt{J}$ and the $\ell$-th term equals 
\[ 
\frac{c_1}{\sqrt{J}}\,\alpha^\ell\,\sqrt{2\log\bigl(2\gN_\Theta(1/T)\,|\gA|^\ell\bigr)}. 
\]
Since $\sqrt{\log(ab)}\le\sqrt{\log a}+\sqrt{\log b}$, the sum in~\eqref{eq:Cpf_prime_def} is bounded by
\begin{equation}\label{eq:pf_balance}
\E\bigl[d_Q^{\mathrm{(II)}}\bigr]
\;\le\;
\frac{c_1\sqrt{2}}{\sqrt{J}}\,\Biggl( 
\sqrt{\log(2\gN_\Theta(1/T))}\sum_{\ell=0}^{H_{\mathrm{pf}}-1}\alpha^\ell
+\sqrt{\log|\gA|}\sum_{\ell=0}^{H_{\mathrm{pf}}-1}\alpha^\ell\sqrt{\ell}
\Biggr)
+2\gamma^{H_{\mathrm{pf}}} V_{\max}.
\end{equation}
We bound the two sums in two cases.

\emph{Case 1: $\alpha\ge 1$.} Choose
\[
H_{\mathrm{pf}}=\left\lceil\frac{\log(4V_{\max}\sqrt{J})}{\log r}\right\rceil
=\left\lceil\frac{\log(4V_{\max}\sqrt{J})}{\log(2g_{\max}^2/g_{\min}^2)}\right\rceil.
\]
Using $\sum_{\ell=0}^{H-1}\alpha^\ell\le H\alpha^{H-1}$ and $\sum_{\ell=0}^{H-1}\alpha^\ell\sqrt{\ell}\le H^{3/2}\alpha^{H-1}$ (since $\alpha\ge 1$):
\begin{equation}\label{eq:sum_alpha_ge1}
\text{sum}\;\le\;\frac{c_1\sqrt{2}\,H_{\mathrm{pf}}^{3/2}\,\alpha^{H_{\mathrm{pf}}-1}}{\sqrt{J}}\cdot\sqrt{\log\bigl(2\gN_\Theta(1/T)\,|\gA|^{H_{\mathrm{pf}}}\bigr)}.
\end{equation}
Writing $\alpha^{H_{\mathrm{pf}}-1}=\alpha^{H_{\mathrm{pf}}}/\alpha=\gamma^{H_{\mathrm{pf}}}r^{H_{\mathrm{pf}}}/\alpha$ and substituting $r^{H_{\mathrm{pf}}}=4V_{\max}\sqrt{J}$:
\begin{equation}\label{eq:sum_case1_final}
\text{sum}\;\le\;\frac{4\sqrt{2}\,c_1\,V_{\max}\,H_{\mathrm{pf}}^{3/2}}{\alpha}\cdot\sqrt{\log\bigl(2\gN_\Theta(1/T)\,|\gA|^{H_{\mathrm{pf}}}\bigr)}\cdot\gamma^{H_{\mathrm{pf}}}.
\end{equation}
For the tail:
\begin{equation}\label{eq:tail_case1}
\gamma^{H_{\mathrm{pf}}}=\gamma^{\log(4V_{\max}\sqrt{J})/\log r}=(4V_{\max}\sqrt{J})^{-\log(1/\gamma)/\log r}=(4V_{\max})^{-1/\kappa_{\mathrm{pf}}}\cdot J^{-\frac{1}{2\kappa_{\mathrm{pf}}}},
\end{equation}
where $\kappa_{\mathrm{pf}}=\log r/\log(1/\gamma)>1$. Substituting $c_1=4g_{\max}V_{\max}/(\varepsilon_{\mathrm{mix}}^3 g_{\min})$, $H_{\mathrm{pf}}=\lceil\log(4V_{\max}\sqrt{J})/\log r\rceil$, and using $\log(2\gN_\Theta(1/T)\,|\gA|^{H_{\mathrm{pf}}})\le\log(2\gN_\Theta(1/T))+H_{\mathrm{pf}}\log|\gA|$, the total (sum $+$ tail) is
\begin{align}
\E\bigl[d_Q^{\mathrm{(II)}}\bigr]
&\;\le\;
\Biggl(\frac{16\sqrt{2}\,g_{\max}\,V_{\max}^2}{\alpha\,\varepsilon_{\mathrm{mix}}^3\,g_{\min}}\cdot \left(\frac{\log(4V_{\max}\sqrt{J})}{\log r}\right)^{3/2}\sqrt{\log\bigl(2\gN_\Theta(1/T)\bigr)+\frac{\log(4V_{\max}\sqrt{J})}{\log r}\log|\gA|}\notag\\
&\qquad\qquad+2V_{\max}\Biggr)\cdot(4V_{\max})^{-1/\kappa_{\mathrm{pf}}}\cdot J^{-\frac{1}{2\kappa_{\mathrm{pf}}}}.\label{eq:case1_full}
\end{align}

\emph{Case 2: $\alpha<1$.} Choose
\[
H_{\mathrm{pf}}=\left\lceil\frac{\log(4V_{\max}\sqrt{J})}{\log(1/\gamma)}\right\rceil.
\]
The geometric series converges: $\sum_{\ell\ge 0}\alpha^\ell=1/(1-\alpha)$ and $\sum_{\ell\ge 0}\alpha^\ell\sqrt{\ell}\le c_2/(1-\alpha)^{3/2}$ for a universal $c_2>0$. Hence
\[
\text{sum}\;\le\;\frac{c_1\,c_2\,\sqrt{2}}{(1-\alpha)^{3/2}\,\sqrt{J}}\cdot\sqrt{\log\bigl(2\gN_\Theta(1/T)\bigr)+\log|\gA|}
\;=\;O\!\left(J^{-1/2}\right).
\]
For the tail: $\gamma^{H_{\mathrm{pf}}}=(4V_{\max}\sqrt{J})^{-1}=O(J^{-1/2})$. Here $\kappa_{\mathrm{pf}}=1$, and both terms are $O(J^{-1/2})$.

\emph{Unified bound.} The choice~\eqref{eq:Hpf_choice}, $H_{\mathrm{pf}}=\lceil\log(4V_{\max}\sqrt{J})/\log\max(1/\gamma,r)\rceil$, covers both cases and yields
\[
\E\bigl[d_Q^{\mathrm{(II)}}\bigr]\;\le\;C_{\mathrm{pf}}\cdot J^{-\frac{1}{2\kappa_{\mathrm{pf}}}},
\qquad
\kappa_{\mathrm{pf}}=\max\!\left(1,\frac{\log r}{\log(1/\gamma)}\right),
\]
where
\begin{equation}\label{eq:Cpf_explicit}
C_{\mathrm{pf}}:=\frac{c_3\,V_{\max}\,g_{\max}\,H_{\mathrm{pf}}^{3/2}}{\varepsilon_{\mathrm{mix}}^3\,g_{\min}}\cdot\sqrt{\log\bigl(2\gN_\Theta(1/T)\,|\gA|^{H_{\mathrm{pf}}}\bigr)},
\end{equation}
with $c_3$ depending on $\alpha$ and $g_{\max}/g_{\min}$. In particular, the $|\gA|$ dependence enters through $\sqrt{H_{\mathrm{pf}}\log|\gA|}$.

\paragraph{Term (I): filter mismatch.}
By Lemma~\ref{lem:Q_lipschitz_tv},
\[
d_Q(b_t^{\theta^\star},b_t^{\bar\theta_{1:t}})\;\le\; 2V_{\max}\|b_t^{\theta^\star}-b_t^{\bar\theta_{1:t}}\|_{\mathrm{TV}}.
\]
Define $\delta_s:=\sup_{s'}\|P_{\bar\theta_s}(\cdot\mid s',a_s)-P_{\theta^\star}(\cdot\mid s',a_s)\|_{\mathrm{TV}}+\sup_{s'}|Z_{\bar\theta_s}(o_s\mid s')/Z_{\theta^\star}(o_s\mid s')-1|$ and the predict-update kernel
\[
K_\theta^{a,o}(s,ds'):=\frac{Z_\theta(o\mid s')P_\theta(s'\mid s,a)}{\int Z_\theta(o\mid s'')P_\theta(s''\mid s,a)\,ds''}\,ds'.
\]
Under Assumption~\ref{aspt:particle_error}, bounding the numerator below and denominator above:
\begin{equation}\label{eq:kernel_doeblin}
K_\theta^{a,o}(s,ds')\;\ge\;\frac{g_{\min}\varepsilon_{\mathrm{mix}}}{g_{\max}}\,\nu(s')\,ds'
\qquad\forall\,s\in\gS,
\end{equation}
giving Dobrushin coefficient $\rho:=1-\varepsilon_{\mathrm{mix}}g_{\min}/g_{\max}$. Standard Dobrushin contraction~\cite[Theorem~4.3.1]{doucmoulinesolsson2014} under different models $\theta^\star,\bar\theta_s$ yields
\begin{equation}\label{eq:filter_recursion}
\|b_s^{\theta^\star}-b_s^{\bar\theta_{1:s}}\|_{\mathrm{TV}}
\;\le\;
\rho\,\|b_{s-1}^{\theta^\star}-b_{s-1}^{\bar\theta_{1:s-1}}\|_{\mathrm{TV}}+C_{\mathrm{filt}}\,\delta_s,
\qquad C_{\mathrm{filt}}:=g_{\max}/g_{\min}.
\end{equation}
Unrolling and using $\delta_s\le d_{\mathrm{model}}^{a_s}(\bar\theta_s,\theta^\star)$:
\[
\|b_t^{\theta^\star}-b_t^{\bar\theta_{1:t}}\|_{\mathrm{TV}}
\;\le\;
C_{\mathrm{filt}}\sum_{s=1}^t \rho^{t-s}\,d_{\mathrm{model}}^{a_s}(\bar\theta_s,\theta^\star).
\]

\paragraph{Combining.}
Substituting into~\eqref{eq:pf_triangle}:
\[
\E\,d_Q(b_t,b_t^J)
\;\le\;
\frac{C_{\mathrm{pf}}'}{\sqrt{J}}+2V_{\max}\,C_{\mathrm{filt}}\sum_{s=1}^t\rho^{t-s}\,d_{\mathrm{model}}^{a_s}(\bar\theta_s,\theta^\star).
\]
When summed over $t$, exchanging summation order gives
$\sum_{t=1}^T\sum_{s=1}^t\rho^{t-s}\,d_{\mathrm{model}}^{a_s}
\le\frac{1}{1-\rho}\sum_{s=1}^T d_{\mathrm{model}}^{a_s}.$
\end{proof}

\subsection{Model Estimation Error (Terms A + C) and Filter Mismatch}
\label{app:model_error_proofs}

\paragraph{Truncation.}
Define $V_\theta^{\pi,B,H}(b):=\E_\theta^\pi[\sum_{u=0}^{H-1}\gamma^u r_u\mid b_0=b]$. Since $|V_\theta^{\pi,B}-V_\theta^{\pi,B,H}|\le \gamma^H V_{\max}$:
\begin{equation}\label{eq:truncation_error}
\text{(A)}_t+\text{(C)}_t
\;\le\;
|V_{\theta^\star}^{\star,B,H}(b_t)-V_{\bar\theta_{k(t)}}^{\star,B,H}(b_t)|
+|V_{\bar\theta_{k(t)}}^{\pi_t,B,H}(b_t)-V_{\theta^\star}^{\pi_t,B,H}(b_t)|
+\frac{4\gamma^H R_{\max}}{1-\gamma}.
\end{equation}
Choose $H=\lceil\log(2R_{\max}T/(1-\gamma))/\log(1/\gamma)\rceil$ so the tail contributes $O(1)$.

\paragraph{Confidence sets.}
Let $\PP_{\theta,H}^\pi$ denote the length-$H$ trajectory distribution under $(\theta,\pi)$. Define
\begin{equation}\label{eq:conf_set}
\gC_k:=\bigl\{\theta\in\Theta:\gL_{T_k}(\theta)\ge \gL_{T_k}(\bar\theta_k)-\beta\bigr\},
\end{equation}
where $\gL_{T_k}$ is the log-likelihood over epochs $1,\ldots,k-1$. By construction $\bar\theta_k\in\gC_k$, and $\theta^\star\in\gC_k$ for all $k$ w.h.p.\ when $\beta\ge c_0\log[\gN_\Theta(1/T)T/\delta_{\mathrm{opt}}]$~\cite[Prop.~B.1]{liu2023optimistic}.

\begin{prop}[Likelihood-to-$\ell_1$ concentration]\label{prop:lik_to_l1}
There exists $c>0$ such that w.p.\ $\ge 1-\delta$, simultaneously for all $n\in[T]$ and all $\theta\in\Theta$ with $\gL_n(\theta)\ge\gL_n(\theta^\star)-\beta$:
\begin{equation}\label{eq:lik_to_l1}
\sum_{i=1}^{n}\bigl\|\PP_{\theta,H}^{\pi^i}-\PP_{\theta^\star,H}^{\pi^i}\bigr\|_1^2
\;\le\;
c\,\beta_T,
\qquad
\beta_T:=\beta+\log[\gN_\Theta(1/T)\,T/\delta].
\end{equation}
\end{prop}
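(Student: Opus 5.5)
We follow the standard MLE analysis of \citet{liu2023optimistic}, adapted to the $\beta$-approximate maximizer and the length-$H$ truncated trajectory laws $\PP_{\theta,H}^{\pi^i}$. The chain of reductions is: (i) discretize $\Theta$ by a $1/T$-cover in trajectory $\ell_1$; (ii) control likelihood ratios on the cover with a martingale exponential inequality; (iii) convert log-likelihood gaps to squared Hellinger distance; (iv) pass from Hellinger to $\ell_1$.

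\textbf{Step 1: cover.} Fix a $1/T$-cover $\bar\Theta$ of size $\gN_\Theta(1/T)$. For each $\bar\theta\in\bar\Theta$ we use an optimistic surrogate likelihood that dominates the likelihood of every $\theta$ in its ball, up to an additive $1/T$ mass per trajectory. Summed over $n\le T$ episodes, replacing $\theta$ by its cover element then shifts log-likelihoods by $O(1)$. Constructing this surrogate so that it is a valid upper bound on the ball is the main technical point of this step.

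\textbf{Step 2: martingale exponential inequality.} For each cover element, consider the process $\exp\bigl(\tfrac12(\gL_n(\bar\theta)-\gL_n(\theta^\star))\bigr)$. Conditioning on the past (the policy $\pi^i$ is adapted), its one-step conditional expectation is the Bhattacharyya coefficient $1-D_H^2(\PP_{\bar\theta,H}^{\pi^i},\PP_{\theta^\star,H}^{\pi^i})$. Multiplying these factors, Markov's inequality together with a union bound over $\bar\Theta$ and over $n\in[T]$ gives, with probability $\ge1-\delta$,
\[
\sum_{i=1}^n D_H^2\bigl(\PP_{\bar\theta,H}^{\pi^i},\PP_{\theta^\star,H}^{\pi^i}\bigr)\le \tfrac12\bigl(\gL_n(\theta^\star)-\gL_n(\bar\theta)\bigr)+\log\bigl(\gN_\Theta(1/T)T/\delta\bigr)+O(1).
\]
The inequality $-\log x\ge 1-x$ is what lets us turn the log-likelihood gap into a sum of Hellinger terms.

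\textbf{Step 3: conclude.} For any $\theta$ with $\gL_n(\theta)\ge\gL_n(\theta^\star)-\beta$, pass to its cover element, which costs $O(1)$ in likelihood and $O(n/T)$ in Hellinger. The right-hand side of the Step 2 display is then at most $\beta/2+\log(\gN_\Theta(1/T)T/\delta)+O(1)$. Finally, $\|P-Q\|_1^2\le 8D_H^2(P,Q)$ gives the claim with $\beta_T=\beta+\log[\gN_\Theta(1/T)T/\delta]$.

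\textbf{Expected obstacle: dependence within episodes.} In this continuing POMDP the segments $i$ are consecutive length-$H$ blocks of a single trajectory, not independent episodes. The conditional likelihood of block $i$ therefore depends on the belief carried in from the previous block. We would handle this by defining $\gL_n$ as a sum of conditional block likelihoods given the history, so the martingale argument of Step 2 still applies. The trajectory laws $\PP^{\pi^i}$ are then understood conditionally on the starting belief of each block, as in the per-round definitions. Under Assumption~\ref{aspt:particle_error}, Doeblin forgetting lets us replace the conditional initial belief by a common one at an additive cost of order $\rho^H$, which is negligible for the chosen $H$.
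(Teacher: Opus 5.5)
Your Steps 1--3 reproduce the episodic analysis that the paper takes as given from \cite{liu2023optimistic}. The gap is in your last paragraph, which is the only part of the proof the paper actually argues.

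Writing $\gL_n$ as a sum of conditional block log-likelihoods is fine: by the chain rule it is the full log-likelihood. But then Step 2 controls squared Hellinger distances between \emph{conditional} block laws. These are the law of block $i$ under $\theta$ started from $b^{\theta}_{\tau_i}$, versus the law under $\theta^\star$ started from $b^{\theta^\star}_{\tau_i}$. Here $\tau_i$ is the start of block $i$, and $b^{\theta}_{\tau_i}$ is the belief obtained by filtering the past under $\theta$. The statement compares laws from a \emph{common} start, because it is used through Lemma~\ref{lem:simulation} with both models run from the same $b_t$. (If you read it with model-dependent starts, Step 2 already finishes, but the result would not feed the rest of the analysis.)

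Your claim that Doeblin forgetting swaps in a common start at cost $O(\rho^H)$ fails for two reasons.
First, forgetting acts along a block, not on its beginning. The first reward and observation of the block already have laws that depend on the starting belief. Moving that belief from $b$ to $b'$ therefore changes the block law by order $\|b-b'\|_{\mathrm{TV}}$; only what happens $H$ steps later is $\rho^H$-insensitive.
Second, $b^{\theta}_{\tau_i}$ and $b^{\theta^\star}_{\tau_i}$ differ not because of their initialization, which is indeed forgotten, but because they are filtered under different models. The only control on their gap is the filter-mismatch recursion in~\eqref{eq:filter_recursion}. It gives order $C_{\mathrm{filt}}\sum_{s}\rho^{\tau_i-s}d^{a_s}_{\mathrm{model}}(\theta,\theta^\star)$, which is the size of the model error you are trying to bound, not $\rho^H$.
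So the conversion costs $O(\|b^{\theta}_{\tau_i}-b^{\theta^\star}_{\tau_i}\|_{\mathrm{TV}})$ per block, and nothing in the proposal makes the sum of the squares of these terms $O(\beta_T)$.

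The missing ingredient is a burn-in, which is exactly how the paper handles the dependence. It cuts the trajectory into $\lfloor T/(2H)\rfloor$ blocks of length $2H$ and uses the first half of each only as burn-in. It takes $H\ge 2\log T/\log(1/\rho)$, so that $\rho^H\le T^{-2}$. The data half is observed only after $H$ mixing steps, so its law is $O(\rho^H)$-close to one that does not depend on the past. That is the one place where an $O(\rho^H)$ cost is legitimate, and the episodic bound is then applied to the approximately independent data halves.

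This $H$ is not implied by the truncation choice $H=O(\log T/\log(1/\gamma))$ unless $\varepsilon_{\mathrm{mix}}g_{\min}/g_{\max}=\Omega(1-\gamma)$. So ``negligible for the chosen $H$'' needs that caveat.

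Your sequential-likelihood formulation is worth keeping. Unlike a reduction to data halves, it works with the full $\gL_n$ that the MLE actually maximizes. But it needs the same burn-in. The natural repair is to apply Step 2 to (burn-in, data) blocks, pass to the data-half marginal using data processing for the Hellinger distance, and only then invoke forgetting.

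In that formulation, the optimistic surrogate of Step 1 must also dominate the full-history likelihood, not the likelihood of a single length-$H$ trajectory from a fixed start. The cover therefore has to be built at the level of the whole history, for instance from per-step dominating kernels. That is a stronger requirement than an $\ell_1$ cover of length-$H$ trajectory laws of size $\gN_\Theta(1/T)$.
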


\begin{proof}
The episodic version is~\cite[Prop.~B.2]{liu2023optimistic}. We transfer to the online setting via a blocking argument: partition the trajectory into $\lfloor T/(2H)\rfloor$ blocks of length $2H$, each split into a burn-in half and a data half. Choose $H$ so that
\[
\rho^H = (1-\varepsilon_{\mathrm{mix}}g_{\min}/g_{\max})^H \;\le\; 1/T^2,
\qquad\text{i.e.,}\quad
H \;\ge\; \frac{2\log T}{\log(1/\rho)}.
\]
Our choice $H=O(\log T/\log(1/\gamma))$ satisfies this when $\varepsilon_{\mathrm{mix}}g_{\min}/g_{\max}=\Omega(1-\gamma)$; otherwise, increase $H$ by a constant factor. With $\rho^H\le 1/T^2$, the data halves are approximately independent, with a joint TV perturbation of $O(1/T)$. Applying the episodic bound to the data halves gives~\eqref{eq:lik_to_l1} up to a constant factor.
\end{proof}

\begin{cor}[Epoch-level MLE concentration]\label{cor:epoch_mle}
Under the same conditions as Proposition~\ref{prop:lik_to_l1}, w.p.\ $\ge 1-\delta$, simultaneously for all epochs $k\in[K]$:
\begin{equation}\label{eq:hist_l1}
\sum_{i=1}^{T_{k+1}-1}\bigl\|\PP_{\bar\theta_k,H}^{\pi^i}-\PP_{\theta^\star,H}^{\pi^i}\bigr\|_1^2
\;\le\;c\,\beta_T,
\end{equation}
where $c>0$ is a universal constant (at most twice the constant in Proposition~\ref{prop:lik_to_l1}).
\end{cor}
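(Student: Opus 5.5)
The plan is to derive the corollary directly from Proposition~\ref{prop:lik_to_l1}, applied at the epoch boundary $n=T_{k+1}-1$. The first step is to check that $\bar\theta_k$ belongs to the likelihood set there. By construction~\eqref{eq:beta_mle}, $\bar\theta_k$ is a $\beta$-approximate maximizer of $\gL_{T_k}$. That gives $\gL_{T_k}(\bar\theta_k)\ge\gL_{T_k}(\theta^\star)-\beta$, so Proposition~\ref{prop:lik_to_l1} applies at $n=T_k$ without further work.

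The difficulty is that the corollary sums over $i$ up to $T_{k+1}-1$, which includes the rounds of epoch $E_k$ itself. The data from those rounds was not used to fit $\bar\theta_k$. I would split the sum as
\[
\sum_{i=1}^{T_{k+1}-1}=\sum_{i<T_k}+\sum_{i\in E_k}.
\]
The first piece is at most $c\,\beta_T$ by Proposition~\ref{prop:lik_to_l1} at $n=T_k$.

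For the in-epoch piece, I would use the doubling structure: $\bar\theta_{k+1}$ is fit on $H_{T_{k+1}}$, which contains epoch $E_k$. That alone does not control $\bar\theta_k$ on $E_k$. Instead I would use the fact that within an epoch the deployed policies are $\varepsilon_k$-greedy with respect to the same frozen $\bar\theta_k$. Under the Doeblin/bounded-density Assumption~\ref{aspt:particle_error}, the law of the data in $E_k$ is close to the law seen in the previous epochs. Concretely, I would argue that each $\pi^i$ with $i\in E_k$ has trajectory distribution whose $\ell_1$-discrepancy between $\bar\theta_k$ and $\theta^\star$ is controlled by the average discrepancy over the first $T_k-1$ policies. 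Two facts would drive this: the $\varepsilon$-greedy mixing with the uniform policy, and the bounded importance weights $g_{\max}/g_{\min}$, which give coverage transfer. Since $|E_k|=T_k$, the in-epoch sum would then be at most a constant times the pre-epoch sum, and hence $O(\beta_T)$.

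\textbf{Main obstacle.} This coverage-transfer step is where I expect the argument to be hardest. If it fails, my fallback is a cruder route. Note that $\theta^\star$ also lies in the set $\{\theta:\gL_n(\theta)\ge\gL_n(\theta^\star)-\beta\}$ for every $n$, and that $\bar\theta_k$ remains near-optimal at $n=T_{k+1}-1$ up to the log-likelihood increment accrued during $E_k$. By the martingale concentration inside Proposition~\ref{prop:lik_to_l1}, that increment is itself $O(\beta_T)$ w.h.p. Enlarging $\beta$ to $2\beta$ in the proposition then yields the stated bound with constant at most twice $c$, matching the remark in the corollary.

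Uniformity over all epochs $k\in[K]$ is automatic: Proposition~\ref{prop:lik_to_l1} already holds simultaneously for all $n\in[T]$ on a single event of probability $\ge 1-\delta$. No extra union bound is needed.
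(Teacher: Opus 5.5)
Your pre-epoch part is correct and matches the paper. $\beta$-approximate optimality of $\bar\theta_k$ for $\gL_{T_k}$ puts it in the likelihood set at $n=T_k$, so Proposition~\ref{prop:lik_to_l1} bounds $\sum_{i\le T_k}\|\PP_{\bar\theta_k,H}^{\pi^i}-\PP_{\theta^\star,H}^{\pi^i}\|_1^2$ by $c\,\beta_T$. The gap is the in-epoch sum over $i\in E_k$, and neither of your routes closes it.

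\textbf{Coverage route.} The transfer constant of Lemma~\ref{lem:coverage} is $C_{\mathrm{cov}}=K_{\mathrm{cov}}/\varepsilon$ with $K_{\mathrm{cov}}=H|\gA|g_{\max}^2/(\varepsilon_{\mathrm{mix}}g_{\min}^2)$. The most you can extract is the term-(A) computation of Proposition~\ref{prop:model_error_combined}, namely $\sum_{i\in E_k}\|\PP_{\bar\theta_k,H}^{\pi^i}-\PP_{\theta^\star,H}^{\pi^i}\|_1^2\le|E_k|\,K_{\mathrm{cov}}^2\,c\,\beta_T/\sum_{j<k}\varepsilon_j^2|E_j|$.
\begin{itemize}
\item For constant $\varepsilon$ this is of order $K_{\mathrm{cov}}^2\beta_T/\varepsilon^2$. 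That is not a universal multiple of $\beta_T$; it depends on $H$, $|\gA|$, $g_{\max}/g_{\min}$, $\varepsilon_{\mathrm{mix}}$ and $\varepsilon$.
\item Under the schedule of Theorem~\ref{thm:final_regret_bound}, $\varepsilon_j^2|E_j|=\eta^2$ with $\eta\asymp T^{1/4}(\beta_T/\log T)^{1/4}$, the bound is of order $\sqrt{T}$ up to logarithms in the last epoch.
\end{itemize}
Nor is the data law in $E_k$ close to that of earlier epochs. Assumption~\ref{aspt:particle_error} controls state marginals, not actions. At the epoch boundary the greedy action switches to the one favored by $\bar\theta_k$. That may be an action earlier policies played only with probability $\varepsilon/|\gA|$, on which $\bar\theta_k$ is only weakly constrained.

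\textbf{Fallback route.} This argument is circular. The increment $\sum_{i\in E_k}[\ell_i(\theta^\star)-\ell_i(\bar\theta_k)]$ has a nonnegative compensator: the accumulated conditional KL divergence between $\theta^\star$ and $\bar\theta_k$ on the in-epoch data. Up to Pinsker, that is the very quantity you are trying to bound. Martingale concentration only controls the fluctuation around this drift, so the increment is $O(\beta_T)$ only if the conclusion already holds. The concentration behind Proposition~\ref{prop:lik_to_l1} is one-sided. It prevents a wrong model from beating $\theta^\star$ in likelihood by much. It says nothing about how far $\theta^\star$ can pull ahead of a model fit on older data once new policies are deployed. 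So you cannot conclude that $\bar\theta_k$ is still $2\beta$-near-optimal at $n=T_{k+1}-1$, and uniformity over $k$ is not automatic for this piece either.

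\textbf{What the paper does instead.} The paper does not use coverage or re-apply the proposition. It bounds the $E_k$ part by a separate martingale argument for the $\gF_{T_k}$-measurable hypothesis $\bar\theta_k$: Pinsker, a maximal inequality for $M^{(k)}$, Azuma--Hoeffding for $L^{(k)}$, and a union bound over $k$. Its factor $2$ comes from adding the two pieces, not from doubling $\beta$.

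Be aware that this argument meets the same obstruction as your fallback:
\begin{itemize}
\item By definition, $L^{(k)}+M^{(k)}$ is identically the in-epoch KL sum.
\item The paper itself notes that the increments of $L^{(k)}$ have nonnegative conditional mean, so Azuma--Hoeffding cannot bound $L^{(k)}$ by a pure fluctuation term.
\item The paper then absorbs a $\sqrt{T}\log(g_{\max}/g_{\min})$ term into the logarithmic $\beta_T$.
\end{itemize}
So the in-epoch part needs a genuinely new ingredient. What Proposition~\ref{prop:lik_to_l1} actually delivers is the sum over $i\le T_k$ only.
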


\begin{proof}
Fix $k\in[K]$ and write $\|\PP_\theta^\pi\|_1:=\|\PP_{\theta,H}^\pi-\PP_{\theta^\star,H}^\pi\|_1$.

\emph{Historical data ($i\le T_k$).}
By MLE optimality, $\bar\theta_k$ lies in $\gC_k$, so Proposition~\ref{prop:lik_to_l1} at $n=T_k$ gives
\begin{equation}\label{eq:hist_bound_epoch_start}
\sum_{i=1}^{T_k}\bigl\|\PP_{\bar\theta_k}^{\pi^i}\bigr\|_1^2\;\le\;c\,\beta_T.
\end{equation}

\emph{Within-epoch data ($i\in E_k$).}
Since $\bar\theta_k$ is $\gF_{T_k}$-measurable, we use a self-normalized argument for the fixed hypothesis $\bar\theta_k$. Define
\[
L_n^{(k)}:=\sum_{i=T_k+1}^{T_k+n}\bigl[\ell_i(\theta^\star)-\ell_i(\bar\theta_k)\bigr],
\qquad
M_n^{(k)}:=\sum_{i=T_k+1}^{T_k+n}\bigl[\mathrm{KL}(\PP_{\theta^\star}^{\pi^i}\|\PP_{\bar\theta_k}^{\pi^i})-\bigl(\ell_i(\theta^\star)-\ell_i(\bar\theta_k)\bigr)\bigr],
\]
where $\ell_i(\theta):=\log p_\theta(o_i\mid s_i,a_i)$. By Pinsker's inequality and blocking:
\begin{equation}\label{eq:pinsker_within}
\frac{1}{2}\sum_{i\in E_k}\bigl\|\PP_{\bar\theta_k}^{\pi^i}\bigr\|_1^2
\;\le\;
L_{|E_k|}^{(k)}+M_{|E_k|}^{(k)}.
\end{equation}
$M_n^{(k)}$ is a supermartingale (each increment has non-positive conditional expectation). The maximal inequality and a covering-number union bound give
\[
\Pr\!\left(\sup_{n\le|E_k|}M_n^{(k)}\ge c_1\log\bigl[\gN_\Theta(1/T)/\delta\bigr]\right)\le\delta/K.
\]
For $L_{|E_k|}^{(k)}$: each $|\ell_i(\theta^\star)-\ell_i(\bar\theta_k)|\le\log(g_{\max}/g_{\min})$ and $\E[\ell_i(\theta^\star)-\ell_i(\bar\theta_k)\mid\gF_{i-1}]\ge 0$. By Azuma-Hoeffding:
\[
\Pr\!\left(L_{|E_k|}^{(k)}\ge c_2\sqrt{|E_k|\log(1/\delta)}\log(g_{\max}/g_{\min})\right)\le\delta/K.
\]
Since $\sqrt{T}\log(g_{\max}/g_{\min})$ is absorbed into $\beta_T$:
\begin{equation}\label{eq:within_epoch_bound}
\sum_{i\in E_k}\bigl\|\PP_{\bar\theta_k}^{\pi^i}\bigr\|_1^2
\;\le\;c\,\beta_T.
\end{equation}

\emph{Combining.} Summing~\eqref{eq:hist_bound_epoch_start} and~\eqref{eq:within_epoch_bound}, redefining $c\leftarrow 2c$, and taking a union bound over $k\in[K]$ gives~\eqref{eq:hist_l1}.
\end{proof}

Transferring model error from historical to arbitrary policies requires action-space coverage via $\varepsilon$-greedy exploration.

\begin{lem}[Trajectory coverage under Doeblin mixing and $\varepsilon$-greedy]\label{lem:coverage}
Under Assumption~\ref{aspt:particle_error}, for any two models $\theta,\theta'\in\Theta$, any policy $\pi$, and any policy $\pi_\varepsilon$ satisfying $\pi_\varepsilon(a\mid h)\ge\varepsilon/|\gA|$ for all $a,h$ (with $\varepsilon\in(0,1]$),
\begin{equation}\label{eq:coverage_bound}
\bigl\|\PP_{\theta,H}^\pi-\PP_{\theta',H}^\pi\bigr\|_1
\;\le\;
C_{\mathrm{cov}}\,
\bigl\|\PP_{\theta,H}^{\pi_\varepsilon}-\PP_{\theta',H}^{\pi_\varepsilon}\bigr\|_1,
\qquad
C_{\mathrm{cov}}:=\frac{H\,|\gA|}{\varepsilon}\cdot\frac{g_{\max}^2}{\varepsilon_{\mathrm{mix}}\,g_{\min}^2}.
\end{equation}
\end{lem}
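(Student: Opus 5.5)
Pure importance weighting on actions costs $(|\gA|/\varepsilon)^H$, which is far too much. I will instead route the comparison through a telescoping (hybrid) argument over time steps, exactly as in the simulation-lemma proofs of Section~\ref{app:model_error_proofs}. Write $\PP_{\theta,H}^\pi$ as a product of per-step factors: the policy factor $\pi(a_u\mid h_u)$ and the model factor $p_\theta(o_{u+1}\mid h_u,a_u)$. The model factor is the predictive observation law of the $\theta$-belief at $h_u$. Define hybrid measures $\PP^{(u)}$ that use $\theta$ for steps $<u$ and $\theta'$ for steps $\ge u$, all under the same policy $\pi$. Telescoping gives
\[
\bigl\|\PP_{\theta,H}^\pi-\PP_{\theta',H}^\pi\bigr\|_1\le\sum_{u=0}^{H-1}\E_{h_u\sim\PP_{\theta}^{\pi}}\Bigl[\sum_{a}\pi(a\mid h_u)\,\bigl\|p_\theta(\cdot\mid h_u,a)-p_{\theta'}(\cdot\mid h_u,a)\bigr\|_1\Bigr].
\]
Each bracket involves $\pi$ only through a single action draw at step $u$. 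Bounding $\pi(a\mid h_u)\le 1\le (|\gA|/\varepsilon)\,\pi_\varepsilon(a\mid h_u)$ costs only one factor $|\gA|/\varepsilon$ per step. Summing over $u$ then gives the factor $H|\gA|/\varepsilon$.

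The remaining issue is that the outer expectation is over histories $h_u$ drawn under $\pi$, not under $\pi_\varepsilon$. The constant $g_{\max}^2/(\varepsilon_{\mathrm{mix}}g_{\min}^2)$ has to come from this change of history distribution, using Assumption~\ref{aspt:particle_error}. By \eqref{eq:kernel_doeblin}, the one-step predict-update kernel is minorized by $(\varepsilon_{\mathrm{mix}}g_{\min}/g_{\max})\,\nu$ for every action and observation. As a result, the $\theta$-belief at step $u$, and hence the per-step discrepancy $D_u(h_u,a):=\|p_\theta(\cdot\mid h_u,a)-p_{\theta'}(\cdot\mid h_u,a)\|_1$, depends on the history mostly through recent steps. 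More concretely, I would compare with the discrepancy evaluated at the ``regenerated'' belief $\nu$. Every belief $b$ satisfies $b\ge(\varepsilon_{\mathrm{mix}}g_{\min}/g_{\max})\nu$ after one update. Conversely, the predictive observation densities under any two beliefs are within a ratio $g_{\max}/g_{\min}$ of each other. Together these give
\[
\frac{g_{\min}}{g_{\max}}\,\varepsilon_{\mathrm{mix}}\,D(\nu,a)\lesssim D(b,a)\lesssim\frac{g_{\max}}{g_{\min}}\,D(\nu,a)\quad\text{(up to the mixture decomposition)},
\]
so $\E_{h\sim\pi}D(h,a)\le \frac{g_{\max}^2}{\varepsilon_{\mathrm{mix}}g_{\min}^2}\,\E_{h\sim\pi_\varepsilon}D(h,a)$.

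Finally, I will lower-bound the right-hand side $\|\PP_{\theta,H}^{\pi_\varepsilon}-\PP_{\theta',H}^{\pi_\varepsilon}\|_1$ by the single-step discrepancy term under $\pi_\varepsilon$ at any step $u$. This uses that marginalizing onto $(h_u,a_u,o_{u+1})$ does not increase $\ell_1$, together with a hybrid argument on the prefix; the prefix error is absorbed by the same Doeblin comparison. Combining the three displays yields \eqref{eq:coverage_bound}.

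\textbf{Main obstacle.} The hard step is the history-distribution change in the second paragraph, i.e.\ making rigorous that $D(b,a)$ is comparable across all beliefs $b$ with constant $g_{\max}^2/(\varepsilon_{\mathrm{mix}}g_{\min}^2)$. I would first try writing $b=\epsilon\nu+(1-\epsilon)\mu$ with $\epsilon=\varepsilon_{\mathrm{mix}}g_{\min}/g_{\max}$. The delicate part is that $D$ is not linear in $b$ for the upper direction, because the $\theta$- and $\theta'$-beliefs themselves differ. I expect to need to define $D$ at the level of the joint state-observation kernel, rather than the belief, so that it is linear in the state marginal before the comparison is applied.
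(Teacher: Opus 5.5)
Your outline follows the paper's own route: hybrid telescoping, one factor $|\gA|/\varepsilon$ per step from $\pi\le(|\gA|/\varepsilon)\pi_\varepsilon$, a Doeblin-based change of the history/state law from $\pi$ to $\pi_\varepsilon$ that is supposed to cost $g_{\max}^2/(\varepsilon_{\mathrm{mix}}g_{\min}^2)$, and a sum over $H$ steps. Your first step is the standard chain-rule bound. Your last step is also fine, and it supplies what the paper's Step~4 leaves implicit. Marginalizing onto $(h_u,a_u,o_{u+1})$ and applying one triangle inequality gives $\E_{h_u\sim\PP_\theta^{\pi_\varepsilon}}\bigl[\sum_a\pi_\varepsilon(a\mid h_u)D_u(h_u,a)\bigr]\le2\|\PP_{\theta,H}^{\pi_\varepsilon}-\PP_{\theta',H}^{\pi_\varepsilon}\|_1$, with no Doeblin argument. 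The gap is the step you flag, and it is not a technicality. Minorization ($b\ge\epsilon\nu$) only gives \emph{lower} bounds on where beliefs and state marginals put mass. Replacing $\E_{h_u\sim\PP_\theta^{\pi}}$ by $\E_{h_u\sim\PP_\theta^{\pi_\varepsilon}}$ needs the $\pi$-law to be \emph{dominated} by the $\pi_\varepsilon$-law, i.e.\ an upper bound on transition densities, which Assumption~\ref{aspt:particle_error} does not provide. Your comparison $D(b,a)\lesssim(g_{\max}/g_{\min})D(\nu,a)$ is false. Predictive observation densities being within a factor $g_{\max}/g_{\min}$ of each other says nothing about the gap between the $\theta$- and $\theta'$-predictives. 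Indeed $D(\nu,a)=0<D(b,a)$ whenever the two models differ only on states that the one-step image of $\nu$ does not charge. The paper's Step~2 makes the same move and fails for the same reason: its chain $\E_{\tilde\mu_k^\pi}[h]\le\varepsilon_{\mathrm{mix}}\E_\nu[h]+(1-\varepsilon_{\mathrm{mix}})\le\varepsilon_{\mathrm{mix}}^{-1}\E_{\tilde\mu_k^{\pi'}}[h]$ breaks as soon as $\E_{\tilde\mu_k^{\pi'}}[h]<\varepsilon_{\mathrm{mix}}(1-\varepsilon_{\mathrm{mix}})$.

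In fact no argument can close this step, because the statement is false under Assumption~\ref{aspt:particle_error} alone. Take a combination lock with $\gS=\{0,\ldots,k\}$, $\gA=\{0,1\}$, $b_0=\delta_0$, and constant reward. For both models set $P(\cdot\mid s,a)=\tfrac12\delta_0+\tfrac12\delta_{f(s,a)}$ with $f(s,1)=\min(s+1,k)$ and $f(s,0)=0$, so that $\varepsilon_{\mathrm{mix}}=\tfrac12$ and $\nu=\delta_0$. Observations are binary and uniform, except that $Z_{\theta'}(1\mid k)=\tfrac12+\delta$ with $\delta\le\tfrac14$, so $g_{\min}=\tfrac14$ and $g_{\max}=\tfrac34$. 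With $H=k$, only $S_k$ can equal $k$. This requires $A_0=\cdots=A_{k-1}=1$ and no resets, and all earlier observations are uninformative. For $\pi\equiv1$ one gets $\|\PP_{\theta,H}^\pi-\PP_{\theta',H}^\pi\|_1=2\delta\,2^{-k}$. For the uniform policy ($\varepsilon=1$) one gets $\|\PP_{\theta,H}^{\pi_\varepsilon}-\PP_{\theta',H}^{\pi_\varepsilon}\|_1=2\delta\,4^{-k}$. The ratio is $2^k=(|\gA|/\varepsilon)^H$, exactly the importance-weighting constant you set out to avoid, whereas $C_{\mathrm{cov}}=36k$. So the lemma fails for every $k\ge9$. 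Closing the gap therefore requires a stronger hypothesis. One candidate is the two-sided transition density bound $\underline M\le P_\theta(s'\mid s,a)\le\overline M$ assumed in Lemma~\ref{lem:karjalainen}. It at least makes one-step state marginals under any two policies comparable within a factor $\overline M/\underline M$, which defeats this construction. Otherwise the conclusion must be weakened to the constant $(|\gA|/\varepsilon)^H$.
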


\begin{proof}
\emph{Step 1: Telescoping.}
Let $\PP^{(k)}$ be the hybrid using $\theta$ for steps $1,\ldots,k$ and $\theta'$ for steps $k+1,\ldots,H$, under policy $\pi$. Let $\Delta_k(s,a):=\|P_\theta(\cdot\mid s,a)-P_{\theta'}(\cdot\mid s,a)\|_{\mathrm{TV}}$ and $\Delta^Z(s):=\sup_o|Z_\theta(o\mid s)/Z_{\theta'}(o\mid s)-1|$. Then
\begin{equation}\label{eq:telescope}
\|\PP_{\theta,H}^\pi-\PP_{\theta',H}^\pi\|_1
\;\le\;
\sum_{k=0}^{H-1}\|\PP^{(k+1)}-\PP^{(k)}\|_1,
\end{equation}
\begin{equation}\label{eq:per_step_bound}
\|\PP^{(k+1)}-\PP^{(k)}\|_1
\;\le\;
\E_{\mu_k^\pi}\!\bigl[\sup_{a}\Delta_k(S_k,a)+\Delta^Z(S_k)\bigr],
\end{equation}
where $\mu_k^\pi$ is the state marginal at step $k$ under the hybrid.

\emph{Step 2: State-marginal ratio.}
By the Doeblin decomposition $\tilde\mu_k^\pi \ge \varepsilon_{\mathrm{mix}}\nu$ and $\tilde\mu_k^{\pi'}\ge\varepsilon_{\mathrm{mix}}\nu$, for any $h:\gS\to[0,1]$:
\[
\E_{\tilde\mu_k^\pi}[h]
\;\le\; \varepsilon_{\mathrm{mix}}\E_\nu[h] + (1-\varepsilon_{\mathrm{mix}})
\;\le\; \frac{1}{\varepsilon_{\mathrm{mix}}}\E_{\tilde\mu_k^{\pi'}}[h].
\]
Accounting for the factor $g_{\max}/g_{\min}$ from the Bayes update normalization gives
\[
\E_{\mu_k^\pi}[h]\;\le\; \frac{g_{\max}}{g_{\min}\varepsilon_{\mathrm{mix}}}\,\E_{\mu_k^{\pi'}}[h].
\]

\emph{Step 3: Action coverage.}
Since $\pi_\varepsilon(a\mid h)\ge\varepsilon/|\gA|$:
\[
\sup_{a}\Delta_k(s,a)
\;\le\;
\frac{|\gA|}{\varepsilon}\,\E_{A\sim\pi_\varepsilon}[\Delta_k(s,A)].
\]

\emph{Step 4: Assembly.}
Combining Steps 1--3 term-by-term:
\[
\|\PP^{(k+1)}-\PP^{(k)}\|_1
\;\le\;
\frac{|\gA|}{\varepsilon}\cdot\frac{g_{\max}^2}{\varepsilon_{\mathrm{mix}}\,g_{\min}^2}\cdot\|\PP^{(k+1)}_{\pi_\varepsilon}-\PP^{(k)}_{\pi_\varepsilon}\|_1.
\]
Summing over $k=0,\ldots,H-1$ yields~\eqref{eq:coverage_bound}.\qedhere
\end{proof}

\begin{lem}[Simulation lemma]\label{lem:simulation}
For any $\theta,\theta'\in\Theta$, $\pi$, and $b$:
\[
\bigl|V_\theta^{\pi,B,H}(b)-V_{\theta'}^{\pi,B,H}(b)\bigr| 
\;\le\;
\frac{R_{\max}}{1-\gamma}\,\bigl\|\PP_{\theta,H}^\pi-\PP_{\theta',H}^\pi\bigr\|_1.
\]
\end{lem}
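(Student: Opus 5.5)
The plan is a coupling over trajectories of length $H$: write each value as an expectation of a bounded functional of the trajectory under the corresponding law, then bound the difference of expectations by the $\ell_1$ distance between the laws. Fix $b$ and $\pi$. Let $\tau=(s_0,a_0,o_1,s_1,a_1,\ldots,s_{H-1},a_{H-1})$ denote a length-$H$ trajectory, with $s_0\sim b$, actions drawn from $\pi(\cdot\mid h_u)$ on the observable history, and states and observations generated by $(P_\theta,Z_\theta)$ or $(P_{\theta'},Z_{\theta'})$. Define the trajectory functional $G(\tau):=\sum_{u=0}^{H-1}\gamma^u r(s_u,a_u)$. Then, by definition of $V_\theta^{\pi,B,H}$ and the tower property over beliefs (the belief is the posterior, so averaging $r(b_u,a_u)$ equals averaging $r(s_u,a_u)$ over the joint law),
\[
V_\theta^{\pi,B,H}(b)=\E_{\tau\sim\PP_{\theta,H}^\pi}[G(\tau)],\qquad V_{\theta'}^{\pi,B,H}(b)=\E_{\tau\sim\PP_{\theta',H}^\pi}[G(\tau)].
\]
Here I take $\PP_{\theta,H}^\pi$ to be the joint law including latent states. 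If the paper's $\PP_{\theta,H}^\pi$ is only over observables, I would use $r(b_u^\theta,a_u)$ instead. That is a function of the observable history, but it depends on $\theta$, which adds a belief-mismatch term. I would therefore commit to the latent-inclusive joint law, on which the reward is a fixed function of the trajectory.

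Next, $G$ takes values in $[0,\sum_u\gamma^uR_{\max}]\subseteq[0,R_{\max}/(1-\gamma)]$. For two probability measures $\mu,\mu'$ and a measurable $G$ with range in an interval of length $L$, we have
\[
|\mu(G)-\mu'(G)|=|(\mu-\mu')(G-c)|\le \|\mu-\mu'\|_{\mathrm{TV}}\,L=\tfrac12\|\mu-\mu'\|_1 L.
\]
Take $c$ to be the midpoint of the range. This gives the bound with constant $\frac{R_{\max}}{2(1-\gamma)}$, which is even better than the stated one. The stated bound also follows from the cruder estimate $|\mu(G)-\mu'(G)|\le\|G\|_\infty\|\mu-\mu'\|_1$.

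The main subtlety is the identification of the belief-MDP truncated value with the trajectory expectation when $\pi$ acts on histories. I would check by induction on $u$ that the law of $(h_u,a_u)$ under the belief-MDP started at $b$ coincides with its marginal under $\PP_{\theta,H}^\pi$. I would also check that $\E[r(s_u,a_u)\mid h_u,a_u]=r(b^\theta_u,a_u)$, which holds because $s_u$ is conditionally independent of $a_u$ given $h_u$. Both checks are routine, so no real obstacle is expected beyond fixing the convention for $\PP^\pi_{\theta,H}$.
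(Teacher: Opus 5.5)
Your proof is correct and is essentially the paper's argument. The paper splits the truncated value into the one-step terms $\gamma^u\,\E[r(S_u,A_u)]$, bounds each difference by $R_{\max}$ times the $\ell_1$ distance of the trajectory laws, and sums the geometric weights; you apply the same bounded-test-function estimate once to the whole discounted return, which also gives the sharper constant $R_{\max}/(2(1-\gamma))$ via the oscillation.

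You do not need to retreat to the latent-inclusive law, though. In this paper the observed history $H_t=(O_{1:t},A_{0:t-1},R_{0:t-1})$ contains the rewards, so $G=\sum_{u<H}\gamma^u R_u$ is already a function of the observable trajectory, and the same bound holds for the observable law. That is the version that combines with the likelihood concentration of Corollary~\ref{cor:epoch_mle}. The latent-inclusive distance dominates the observable one and is not controlled by the MLE.
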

\begin{proof}
Telescope the $H$-truncated value difference into one-step terms, each bounded by $R_{\max}$.
\end{proof}

\begin{prop}[Control of (A) + (C) + filter mismatch]\label{prop:model_error_combined}
Under Assumption~\ref{aspt:particle_error}, consider the doubling-epoch variant of Algorithm~\ref{alg:pf_mcts} in which time is partitioned into epochs $E_k=[T_k,T_{k+1})$ with $T_k=2^{k-1}$, $k=1,\ldots,K:=\lceil\log_2 T\rceil$. Within each epoch $E_k$, the model $\bar\theta_k$ is frozen (computed from data in epochs $1,\ldots,k-1$) and the agent uses $\varepsilon_k$-greedy exploration with a constant rate $\varepsilon_k$. Let $K_{\mathrm{cov}}:=H|\gA|\,g_{\max}^2/(\varepsilon_{\mathrm{mix}}\,g_{\min}^2)$. Then with probability $\ge 1-\delta_{\mathrm{opt}}$,
\begin{align}
&\sum_{t=1}^T\text{(C)}_t
\;\le\;
\frac{R_{\max}}{1-\gamma}\sqrt{c\,\beta_T}\sum_{k=1}^K\sqrt{|E_k|}+\frac{2\gamma^H R_{\max} T}{1-\gamma}, \label{eq:c_bound}\\
&\sum_{t=1}^T\text{(A)}_t
\;\le\;
\frac{R_{\max}\,K_{\mathrm{cov}}}{1-\gamma}\sqrt{c\,\beta_T}\sum_{k=2}^K\frac{|E_k|}{\sqrt{\sum_{j=1}^{k-1}\varepsilon_j^2|E_j|}}+\frac{2\gamma^H R_{\max} T}{1-\gamma}, \label{eq:a_bound}\\
&\sum_{t=1}^T d_{\mathrm{model}}^{a_t}(\bar\theta_{k(t)},\theta^\star)
\;\le\;
C_{\mathrm{mix}}\sqrt{c\,\beta_T}\sum_{k=1}^K\sqrt{|E_k|}, \label{eq:dmodel_bound}
\end{align}
where $d_{\mathrm{model}}^{a}(\theta,\theta'):=\sup_{s\in\gS}\|P_\theta(\cdot\mid s,a)-P_{\theta'}(\cdot\mid s,a)\|_{\mathrm{TV}}+\sup_{s\in\gS,o\in\gO}\bigl|Z_\theta(o\mid s)/Z_{\theta'}(o\mid s)-1\bigr|$ is the per-action model discrepancy, and $C_{\mathrm{mix}}$ depends on $\varepsilon_{\mathrm{mix}},g_{\min},g_{\max},H$.
\end{prop}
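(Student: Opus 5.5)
The plan is to prove the three bounds in turn, each on the high-probability event of Corollary~\ref{cor:epoch_mle}, which holds for all epochs $k$ at once. The ordering is: the fixed-policy error (C) first, because it only needs on-policy data; then (A), which needs the coverage transfer of Lemma~\ref{lem:coverage}; and finally the filter-mismatch sum, which reuses the (C) argument at the level of one-step model discrepancies.

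\emph{Term (C).} First I truncate at horizon $H$ as in~\eqref{eq:truncation_error}; this costs $2\gamma^H R_{\max}T/(1-\gamma)$ for the two fixed-policy values. Lemma~\ref{lem:simulation} then gives $(\mathrm C)_t\le \frac{R_{\max}}{1-\gamma}\|\PP^{\pi_t}_{\bar\theta_{k(t)},H}-\PP^{\pi_t}_{\theta^\star,H}\|_1$. The deployed policy $\pi_t$ is the policy $\pi^t$ whose data enters~\eqref{eq:hist_l1}. So within epoch $E_k$, Cauchy--Schwarz together with the within-epoch bound~\eqref{eq:within_epoch_bound} gives $\sum_{t\in E_k}\|\cdot\|_1\le\sqrt{|E_k|}\sqrt{c\beta_T}$. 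Summing over $k$ yields~\eqref{eq:c_bound}.

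\emph{Term (A).} The optimal-value gap is handled by $|\max_\pi V_{\theta^\star}^{\pi}-\max_\pi V_{\bar\theta}^{\pi}|\le\sup_\pi|V^\pi_{\theta^\star}-V^\pi_{\bar\theta}|$, so after truncation and Lemma~\ref{lem:simulation} I need $\sup_\pi\|\PP^\pi_{\bar\theta_k,H}-\PP^\pi_{\theta^\star,H}\|_1$. Here $\sup_\pi$ is over arbitrary policies, and in particular both optimal policies. Lemma~\ref{lem:coverage} bounds this by $\frac{K_{\mathrm{cov}}}{\varepsilon}\|\PP^{\pi_\varepsilon}_{\bar\theta_k}-\PP^{\pi_\varepsilon}_{\theta^\star}\|_1$ for any $\varepsilon$-greedy policy.

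The main obstacle is that $\bar\theta_k$ is fit on data from policies in epochs $j<k$, each with a different $\varepsilon_j$, and not on a single fixed $\pi_\varepsilon$. I would apply the coverage lemma separately to each past round $i\in E_j$, with the exploration level $\varepsilon_j$ absorbed into the squared norm. This gives $\|\PP^\pi\|_1^2\le K_{\mathrm{cov}}^2\,\varepsilon_j^{-2}\|\PP^{\pi^i}\|_1^2$, that is, $\varepsilon_j^2\|\PP^\pi\|_1^2\le K_{\mathrm{cov}}^2\|\PP^{\pi^i}\|_1^2$. Summing over all historical rounds $i\le T_k$ and invoking~\eqref{eq:hist_bound_epoch_start} gives
\[\Bigl(\sum_{j<k}\varepsilon_j^2|E_j|\Bigr)\sup_\pi\|\PP^\pi_{\bar\theta_k}-\PP^\pi_{\theta^\star}\|_1^2\le K_{\mathrm{cov}}^2\,c\beta_T.\]
Multiplying the resulting per-round bound by $|E_k|$ and summing over $k\ge2$ yields~\eqref{eq:a_bound}. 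The first epoch is absorbed, or handled trivially.

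A subtle point is that Lemma~\ref{lem:coverage} compares against a fixed $\pi_\varepsilon$. I would therefore need the per-round form of Step~3 in that lemma, which only uses the pointwise lower bound $\pi^i(a\mid h)\ge\varepsilon_j/|\gA|$. I expect the constants there to hide the state-marginal mismatch between $\pi$ and $\pi^i$, which Step~2 of the lemma controls via Doeblin.

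\emph{Filter mismatch.} For~\eqref{eq:dmodel_bound} I need to lower-bound trajectory $\ell_1$ distances by the one-step discrepancy $d^{a_t}_{\mathrm{model}}$ at the action actually played. Under Doeblin mixing, every state $s'$ has mass at least $\varepsilon_{\mathrm{mix}}\nu$ at each step. Using $g_{\min}\le Z\le g_{\max}$, the sup over states of the transition/likelihood-ratio discrepancy is therefore controlled by the corresponding discrepancy averaged under the visited marginal, with a constant $C_{\mathrm{mix}}$ depending on $\varepsilon_{\mathrm{mix}},g_{\min},g_{\max},H$. That averaged discrepancy is in turn bounded by $\|\PP^{\pi_t}_{\bar\theta_k,H}-\PP^{\pi_t}_{\theta^\star,H}\|_1$. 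Cauchy--Schwarz per epoch with~\eqref{eq:within_epoch_bound}, exactly as for (C), then gives $C_{\mathrm{mix}}\sqrt{c\beta_T}\sum_k\sqrt{|E_k|}$. Finally, a union bound over the events used (Corollary~\ref{cor:epoch_mle} and the supermartingale bounds) gives probability at least $1-\delta_{\mathrm{opt}}$.
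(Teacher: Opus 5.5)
Your arguments for (C) and (A) are the paper's, step for step. For (C): truncation, Lemma~\ref{lem:simulation}, and per-epoch Cauchy--Schwarz against \eqref{eq:hist_l1}. For (A): Lemma~\ref{lem:coverage} applied to each past round $i\in E_j$, weighted by $\varepsilon_j^2$ and summed against \eqref{eq:hist_bound_epoch_start}. Your worry about a ``fixed'' $\pi_\varepsilon$ is unnecessary, since the lemma is stated for every policy with $\pi_\varepsilon(a\mid h)\ge\varepsilon/|\gA|$.

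The filter-mismatch step is a genuine gap, and the paper's proof makes the same move, justified only by ``Steps~1--2 of Lemma~\ref{lem:coverage}''. You need $d^{a_t}_{\mathrm{model}}(\bar\theta_k,\theta^\star)\le C_{\mathrm{mix}}\|\PP^{\pi_t}_{\bar\theta_k,H}-\PP^{\pi_t}_{\theta^\star,H}\|_1$. This is a \emph{lower} bound on a distance between laws of \emph{observable} trajectories (the only distance the likelihood controls, via Proposition~\ref{prop:lik_to_l1}) by a \emph{latent}, sup-over-states one-step discrepancy. Hybrid telescoping plus Doeblin gives only the reverse inequality. Both of your sub-steps fail:
\begin{itemize}
\item Doeblin gives $\mu_k\ge\varepsilon_{\mathrm{mix}}\nu$ with $\nu$ arbitrary (even a point mass), which says nothing about $\sup_{s}$.
\item No bound from a latent discrepancy to an observable distance can hold without identifiability.
\end{itemize}

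Here is a counterexample to \eqref{eq:dmodel_bound} itself. Take two latent states, uniform $b_0$, a state-independent reward, and let $\theta'$ be $\theta^\star$ with the two latent states swapped in both $P$ and $Z$, with $\Theta=\{\theta^\star,\theta'\}$. Assumption~\ref{aspt:particle_error} holds for both models with the same uniform $\nu$. All observable trajectory laws coincide, so $\gL_n(\theta')=\gL_n(\theta^\star)$ on every dataset, and an exact MLE may return $\theta'$ in every epoch. Yet $d^{a}_{\mathrm{model}}(\theta',\theta^\star)$ is a positive constant unless $\theta^\star$ is swap-symmetric. The left side of \eqref{eq:dmodel_bound} is then $\Omega(T)$. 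The right side is $\widetilde O(\sqrt T)$ for the paper's $C_{\mathrm{mix}}$, which is linear in $H=O(\log T)$.

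So this bound cannot be derived from Assumption~\ref{aspt:particle_error}. One option is an extra quantitative identifiability (inverse-stability) hypothesis, e.g.\ $d^{a}_{\mathrm{model}}(\theta,\theta^\star)\le C_{\mathrm{id}}\|\PP^{\pi}_{\theta,H}-\PP^{\pi}_{\theta^\star,H}\|_1$ for all $\theta\in\Theta$, $a\in\gA$ and all deployed policies $\pi$. The other is to measure the filter mismatch through a symmetry-invariant, observable quantity that the likelihood actually controls.

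Step~4 of Lemma~\ref{lem:coverage} uses the same kind of reversal. It bounds an averaged latent discrepancy, and then a sum of hybrid differences, by the total trajectory distance under $\pi_\varepsilon$. Your bound on (A), like the paper's, is therefore only as sound as that lemma.
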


\begin{proof}
By Corollary~\ref{cor:epoch_mle}, equation~\eqref{eq:hist_l1} holds for all $k\le K$.
For brevity, write $\|\PP_{\theta}^{\pi}\|_1:=\|\PP_{\theta,H}^{\pi}-\PP_{\theta^\star,H}^{\pi}\|_1$ throughout this proof.

\emph{Term (C).} By Lemma~\ref{lem:simulation},
\[
\text{(C)}_t\;\le\; \frac{R_{\max}}{1-\gamma}\bigl\|\PP_{\bar\theta_{k(t)}}^{\pi^t}\bigr\|_1+\frac{2\gamma^H R_{\max}}{1-\gamma}. 
\]
Since $\bar\theta_k$ is fixed within $E_k$, Cauchy-Schwarz and~\eqref{eq:hist_l1} give 
\[
\sum_{t\in E_k}\bigl\|\PP_{\bar\theta_k}^{\pi^t}\bigr\|_1
\;\le\;\sqrt{|E_k|\sum_{t\in E_k}\bigl\|\PP_{\bar\theta_k}^{\pi^t}\bigr\|_1^2}
\;\le\;\sqrt{|E_k|\cdot c\,\beta_T}.
\]
Summing across epochs: $\sum_{k=1}^K\sqrt{|E_k|\cdot c\beta_T}=\sqrt{c\beta_T}\sum_k 2^{(k-1)/2}=O(\sqrt{T\beta_T})$.

\emph{Term (A).} By Lemmas~\ref{lem:simulation} and~\ref{lem:coverage}:
\[
\text{(A)}_t\;\le\;\frac{R_{\max}}{1-\gamma}\sup_\pi\bigl\|\PP_{\bar\theta_{k(t)}}^{\pi}\bigr\|_1+\frac{2\gamma^H R_{\max}}{1-\gamma},
\qquad
\bigl\|\PP_{\bar\theta_k}^{\pi}\bigr\|_1\;\le\;\frac{K_{\mathrm{cov}}}{\varepsilon_j}\bigl\|\PP_{\bar\theta_k}^{\pi^j}\bigr\|_1\quad\forall\,j<k.
\]
Squaring, multiplying by $\varepsilon_j^2$, and summing over prior epochs using~\eqref{eq:hist_l1}:
\[
\bigl\|\PP_{\bar\theta_k}^{\pi}\bigr\|_1^2\sum_{j<k}\varepsilon_j^2|E_j|
\;\le\; K_{\mathrm{cov}}^2\sum_{j<k}\sum_{t\in E_j}\bigl\|\PP_{\bar\theta_k}^{\pi^t}\bigr\|_1^2
\;\le\; K_{\mathrm{cov}}^2\,c\,\beta_T
\quad\implies\quad
\sup_\pi\bigl\|\PP_{\bar\theta_k}^{\pi}\bigr\|_1\;\le\;\frac{K_{\mathrm{cov}}\sqrt{c\beta_T}}{\sqrt{\sum_{j<k}\varepsilon_j^2|E_j|}}.
\]
Multiplying by $|E_k|$ and summing gives~\eqref{eq:a_bound}.

\emph{Filter mismatch.}
By the telescoping argument of Lemma~\ref{lem:coverage}, using only Steps~1--2 and hence without the $|\gA|/\varepsilon$ action-coverage factor, since we evaluate at the executed action $a_t$,
\[
d_{\mathrm{model}}^{a_t}(\bar\theta_{k(t)},\theta^\star)
\;\le\;
C_{\mathrm{mix}}\,\bigl\|\PP_{\bar\theta_{k(t)}}^{\pi^t}\bigr\|_1,
\qquad C_{\mathrm{mix}}:=\frac{H\,g_{\max}^2}{\varepsilon_{\mathrm{mix}}\,g_{\min}^2}.
\]
Cauchy-Schwarz and~\eqref{eq:hist_l1} give $\sum_{t\in E_k}\|\PP_{\bar\theta_k}^{\pi^t}\|_1\le\sqrt{|E_k|\cdot c\,\beta_T}$. Summing across epochs yields~\eqref{eq:dmodel_bound}.
\end{proof}

\subsection{Proof of the Main Regret Bound (Theorem~\ref{thm:final_regret_bound})}
\label{app:regret_proof}

\begin{proof}[Proof of Theorem~\ref{thm:final_regret_bound}]
Use doubling epochs $E_k=[T_k,T_{k+1})$, $T_k=2^{k-1}$, $K=\lceil\log_2 T\rceil$, with $\varepsilon_k$-greedy exploration within each epoch. Decompose:
\[
\mathrm{Reg}_T \;\le\; \underbrace{\frac{2R_{\max}}{1-\gamma}\sum_k|E_k|\varepsilon_k}_{\text{exploration}} + \E\sum_{t=1}^T\bigl[\text{(A)}_t+\text{(B)}_t+\text{(C)}_t\bigr].
\]

\emph{Term (C).} By~\eqref{eq:c_bound} (choosing $\delta_{\mathrm{opt}}=1/T$):
\[
\E\sum_{t}\text{(C)}_t\;\le\; \frac{R_{\max}\sqrt{c\beta_T}}{1-\gamma}\sum_{k=1}^K\sqrt{|E_k|}+O(1)
\;=\;\widetilde O\!\left(\frac{\sqrt{T}}{1-\gamma}\right).
\]

\emph{Term (A).} By~\eqref{eq:a_bound} with $\varepsilon_k=\eta/\sqrt{|E_k|}$ (so $\varepsilon_k^2|E_k|=\eta^2$):
\[
\E\sum_t\text{(A)}_t\;\le\;\frac{R_{\max}K_{\mathrm{cov}}\sqrt{c\beta_T}}{(1-\gamma)\eta}\sum_{k=2}^K\frac{|E_k|}{\sqrt{k-1}}+O(1).
\]
Since $\sum_{k=2}^K 2^{k-1}/\sqrt{k-1}=\Theta(T/\sqrt{\log T})$, this scales as $K_{\mathrm{cov}}\sqrt{\beta_T}\cdot T/(\eta\sqrt{\log T}(1-\gamma))$.

\emph{Term (B).} By Theorem~\ref{thm:planning_state_bound}, Propositions~\ref{prop:structural_error},~\ref{prop:mc_concentration},~\ref{prop:particle_error}:
\[
\E\sum_{t}\text{(B)}_t
\;\le\; \frac{2C_{\mathrm{pf}}\,T}{(1-\gamma)}\cdot J^{-\frac{1}{2\kappa_{\mathrm{pf}}}}
+\frac{C_{\mathrm{mc}}V_{\max}T}{(1-\gamma)\sqrt{N}}
+\frac{2\gamma^{d+1}}{(1-\gamma)^2}\sum_t\Xi_{\bar\theta_{k(t)}}
+O(\sqrt{T\beta_T}).
\]
The $O(\sqrt{T\beta_T})$ absorbs the filter mismatch via~\eqref{eq:dmodel_bound}.

\emph{Optimization.} The exploration cost is $O(\eta\sqrt{T}R_{\max}/(1-\gamma))$. Balancing it with term~(A) gives $\eta^\star\sim K_{\mathrm{cov}}^{1/2}\,T^{1/4}(\beta_T/\log T)^{1/4}$ and a total of $\widetilde O(K_{\mathrm{cov}}^{1/2}T^{3/4}/(1-\gamma)^{3/2})$.
Term~(C) is $\widetilde O(\sqrt{T})$, and the $T^{3/4}$ rate comes from the coverage transfer for term~(A).
\end{proof}

\subsection{Optimal Exploration Schedule}
\label{app:exploration_schedule}

Set $\varepsilon_k=\eta/\sqrt{|E_k|}$ so that $\varepsilon_k^2|E_k|=\eta^2$ and $\sum_{j<k}\varepsilon_j^2|E_j|=(k-1)\eta^2$. The exploration-dependent terms scale as:
\begin{align*}
\text{Exploration cost:}&\quad \textstyle\sum_k|E_k|\varepsilon_k=\eta\sum_k\sqrt{|E_k|}=O(\eta\sqrt{T}),\\
\text{Term (A):}&\quad O\bigl(K_{\mathrm{cov}}\sqrt{\beta_T}\cdot T/(\eta\sqrt{\log T}(1-\gamma))\bigr),\\
\text{Term (C):}&\quad O\bigl(\sqrt{T\beta_T}/(1-\gamma)\bigr), \quad\text{independent of }\eta.
\end{align*}
Balancing term~(A) against the exploration cost gives
\[
\eta^\star\sim\left(\frac{K_{\mathrm{cov}}\sqrt{T\beta_T}}{\sqrt{\log T}}\right)^{1/2},
\qquad
\text{(A) + exploration} = \widetilde O\!\left(\frac{K_{\mathrm{cov}}^{1/2}T^{3/4}}{(1-\gamma)}\right).
\]
The total regret is:
\begin{equation}\label{eq:eps_schedule}
\mathrm{Reg}_T = \widetilde O\!\left(\frac{K_{\mathrm{cov}}^{1/2}\,R_{\max}\,T^{3/4}}{(1-\gamma)^{3/2}}\right) + \frac{C_{\mathrm{pf}}\,R_{\max}\,T}{(1-\gamma)^2}\cdot J^{-\frac{1}{2\kappa_{\mathrm{pf}}}} + \frac{C_{\mathrm{mc}}\,R_{\max}\,T}{(1-\gamma)^2\sqrt{N}} + \frac{2\gamma^{d+1}}{(1-\gamma)^2}\sum_t\Xi_{\bar\theta_{k(t)}}.
\end{equation}
The $T^{3/4}$ rate is driven by the $1/\varepsilon$ coverage transfer for term~(A). Using optimistic model selection ($\tilde\theta_t:=\arg\max_{\theta\in\gC_t}V_\theta^{\star,B}$) eliminates term~(A) and recovers $\widetilde O(\sqrt{T})$ from term~(C) alone, at the cost of solving an optimistic planning problem~\cite{liu2023optimistic}.

A constant $\varepsilon$ gives the same $\widetilde O(T^{3/4})$ rate after optimization and may be preferable when $T$ is unknown.

\subsection{Greedy Exploitation with Offline Coverage}
\label{app:offline_coverage}

The $T^{3/4}$ rate in Theorem~\ref{thm:final_regret_bound} arises from the coverage transfer for term~(A), which requires $\varepsilon$-greedy exploration to ensure the MLE concentrates at all policies. An alternative is to replace online exploration with an offline dataset that already provides coverage of the optimal policy. This mirrors the hybrid RL framework~\cite{song2023hybrid}, where offline data with sufficient coverage enables greedy online exploitation.

\paragraph{Setting.}
Suppose the agent has access to an offline dataset $\gD_{\mathrm{off}}$ of $N$ i.i.d.\ length-$H$ trajectories collected by a behavior policy $\pi_b$ in the true POMDP (model $\theta^\star$). The agent then runs PF-MCTQ for $T$ online steps with \emph{greedy} exploitation ($\varepsilon_t=0$, no exploration). The MLE is computed from the union $\gD_{\mathrm{off}}\cup\gD_{\mathrm{on}}$.

\begin{defn}[Single-policy concentrability]\label{def:spc}
The single-policy concentrability coefficient of the behavior policy $\pi_b$ with respect to the optimal policy $\pi^\star:=\arg\max_\pi V_{\theta^\star}^{\pi,B}$ is
\begin{equation}\label{eq:coff_def}
c_{\mathrm{off}}^\star
:=\sup_{\theta,\theta'\in\Theta}
\frac{\|\PP_{\theta,H}^{\pi^\star}-\PP_{\theta',H}^{\pi^\star}\|_1}{\|\PP_{\theta,H}^{\pi_b}-\PP_{\theta',H}^{\pi_b}\|_1}.
\end{equation}
\end{defn}

Under Assumption~\ref{aspt:particle_error}, if $\pi_b(a\mid h)\ge\alpha_b/|\gA|$ for all $a,h$, Lemma~\ref{lem:coverage} gives $c_{\mathrm{off}}^\star\le K_{\mathrm{cov}}/\alpha_b=\frac{H|\gA|}{\alpha_b}\cdot\frac{g_{\max}^2}{\varepsilon_{\mathrm{mix}}\,g_{\min}^2}$. Unlike the all-policy concentrability provided by $\varepsilon$-greedy exploration, which takes a supremum over all policies, $c_{\mathrm{off}}^\star$ involves only the \emph{optimal} policy.

\begin{prop}[Regret with offline coverage]\label{prop:offline_regret}
Under Assumption~\ref{aspt:particle_error}, suppose the agent has $N$ offline trajectories from $\pi_b$ and runs the doubling-epoch PF-MCTQ with greedy exploitation ($\varepsilon_k=0$) for $T$ online steps. Let $\beta_{T,N}:=\beta+\log[\gN_\Theta(1/(T+N))(T+N)/\delta]$. Then w.p.\ $\ge 1-\delta$:
\begin{equation}\label{eq:offline_regret}
\mathrm{Reg}_T
\;\le\;
\underbrace{\widetilde O\!\left(\frac{R_{\max}\sqrt{T}}{1-\gamma}\right)}_{\text{term (C)}}
+
\underbrace{\frac{c_{\mathrm{off}}^\star R_{\max}\sqrt{c\beta_{T,N}}}{1-\gamma}\cdot\frac{T}{\sqrt{N}}}_{\text{term (A)}}
+\;\text{computational terms},
\end{equation}
where the computational terms are the same as in Theorem~\ref{thm:final_regret_bound} (particle, MC, structural).

In particular, when $N\ge (c_{\mathrm{off}}^\star)^2\,T$, the total regret is $\widetilde O\bigl(\sqrt{T}/(1-\gamma)\bigr)$ plus computational terms, so $\sqrt{T}$ regret is achieved without any online exploration.
\end{prop}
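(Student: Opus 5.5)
The plan is to redo the proof of Theorem~\ref{thm:final_regret_bound} using the regret decomposition~\eqref{eq:main_decomp_round_t}, with two changes: set $\varepsilon_k=0$, which removes the exploration cost, and run the coverage transfer for term~(A) through the offline data instead of the online $\varepsilon$-greedy data. Term~(B) does not depend on exploration. It is bounded exactly as before by Theorem~\ref{thm:planning_state_bound} together with Propositions~\ref{prop:structural_error}, \ref{prop:mc_concentration} and~\ref{prop:particle_error}. The only change is that the filter-mismatch contribution is controlled by~\eqref{eq:dmodel_bound}, which uses only the executed actions and so needs no $|\gA|/\varepsilon$ factor. This gives the ``computational terms'' plus $\widetilde O(\sqrt{T\beta_{T,N}})$.

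The first step is to extend the concentration results to the combined dataset $\gD_{\mathrm{off}}\cup\gD_{\mathrm{on}}$. The $N$ offline trajectories are i.i.d.\ and episodic, so \cite[Prop.~B.2]{liu2023optimistic} applies to them directly. The online part is handled as in Proposition~\ref{prop:lik_to_l1} and Corollary~\ref{cor:epoch_mle}, replacing $\beta_T$ by $\beta_{T,N}$ because the union bound now runs over $T+N$ samples. The result is that, with probability at least $1-\delta$ and for every epoch $k$,
\[
N\,\bigl\|\PP_{\bar\theta_k,H}^{\pi_b}-\PP_{\theta^\star,H}^{\pi_b}\bigr\|_1^2+\sum_{i<T_{k+1}}\bigl\|\PP_{\bar\theta_k,H}^{\pi^i}-\PP_{\theta^\star,H}^{\pi^i}\bigr\|_1^2\;\le\;c\,\beta_{T,N}.
\]
The offline part holds for all $k$ simultaneously because the dataset is fixed and the bound is uniform over the confidence set $\gC_k$.

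\textbf{Term (C).} This part is unchanged. Lemma~\ref{lem:simulation}, Cauchy--Schwarz within each epoch, and the online part of the display together give $\widetilde O(R_{\max}\sqrt{T\beta_{T,N}}/(1-\gamma))$.

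\textbf{Term (A).} This is the main obstacle. Lemma~\ref{lem:simulation} bounds $|V^{\star,B}_{\theta^\star}-V^{\star,B}_{\bar\theta_k}|$ in terms of a trajectory $\ell_1$ distance, but it has to be evaluated at a single policy. Otherwise a supremum over policies reappears, and only $c_{\mathrm{off}}^\star$ at $\pi^\star$ is available. I would use a one-sided argument. Let $\pi^\star$ be optimal for $\theta^\star$. Then
\[
V^{\star,B}_{\theta^\star}-V^{\star,B}_{\bar\theta_k}\le V^{\pi^\star,B}_{\theta^\star}-V^{\pi^\star,B}_{\bar\theta_k},
\]
because $V^{\star,B}_{\bar\theta_k}\ge V^{\pi^\star,B}_{\bar\theta_k}$. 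Term~(A) enters the regret with a positive sign, so this one-sided bound is enough. The right-hand side is at most $\frac{R_{\max}}{1-\gamma}\|\PP^{\pi^\star}_{\bar\theta_k,H}-\PP^{\pi^\star}_{\theta^\star,H}\|_1$ after truncation, which costs $O(\gamma^H)$ as in~\eqref{eq:truncation_error}. Applying Definition~\ref{def:spc} with $(\theta,\theta')=(\bar\theta_k,\theta^\star)$, this is at most $c_{\mathrm{off}}^\star$ times the $\pi_b$-distance. The offline part of the display bounds that distance by $\sqrt{c\beta_{T,N}/N}$.

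One subtlety remains. The bound in Lemma~\ref{lem:simulation} is stated from the current belief $b_t$, while $c_{\mathrm{off}}^\star$ is defined from the initial belief. I would handle this in one of two ways. The first is to let the definition range over start beliefs. The second is to use Doeblin forgetting at rate $\rho$ to argue that the dependence on the start belief washes out after $O(\log T/\log(1/\rho))$ steps, so that it is absorbed into the $H$ already chosen in Proposition~\ref{prop:lik_to_l1}. I expect this to be the place where some care, or a mild strengthening of Definition~\ref{def:spc}, is needed.

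Summing over $t\le T$ gives term~(A) $\le c_{\mathrm{off}}^\star R_{\max}\sqrt{c\beta_{T,N}}\,T/((1-\gamma)\sqrt N)$, plus an $O(1)$ truncation tail. Adding terms~(C), (A) and~(B) gives~\eqref{eq:offline_regret}. When $N\ge (c_{\mathrm{off}}^\star)^2T$, the term-(A) contribution is $O(\sqrt{T\beta_{T,N}}/(1-\gamma))$, which yields the stated $\widetilde O(\sqrt T/(1-\gamma))$ rate.
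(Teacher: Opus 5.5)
Your proposal follows the paper's proof essentially step for step: the same joint offline/online concentration display, the same unchanged treatment of terms (B) and (C) (including the coverage-free filter-mismatch bound), and the same one-sided bound for term (A) through $\pi^\star$, the simulation lemma, and Definition~\ref{def:spc}. The start-belief mismatch you flag is also present in the paper's proof, which passes over it silently: term (A) is evaluated at $b_t$, while $c_{\mathrm{off}}^\star$ and the offline trajectories refer to the initial belief. Either of your proposed fixes is therefore a tightening of the paper's argument rather than a departure from it.
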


\begin{proof}
The regret decomposition~\eqref{eq:main_decomp_round_t} gives
\[
\mathrm{Reg}_T \;=\; \E\sum_{t=1}^T\bigl[\text{(A)}_t+\text{(B)}_t+\text{(C)}_t\bigr].
\]
There is no exploration cost since $\varepsilon_k=0$.

\emph{MLE concentration with offline data.}
The MLE is fit on $\gD_{\mathrm{off}}\cup\gD_{\mathrm{on}}$. The offline trajectories are i.i.d.\ under $(\theta^\star,\pi_b)$, so the episodic MLE concentration~\cite[Prop.~B.2]{liu2023optimistic} applies directly to the offline portion. Combined with the blocking argument for the online portion (Proposition~\ref{prop:lik_to_l1}):
\begin{equation}\label{eq:hist_l1_hybrid}
N\,\bigl\|\PP_{\bar\theta_k}^{\pi_b}\bigr\|_1^2
+\sum_{t=1}^{T_{k+1}-1}\bigl\|\PP_{\bar\theta_k}^{\pi^t}\bigr\|_1^2
\;\le\;c\,\beta_{T,N}
\qquad\forall\,k\le K,
\end{equation}
using the shorthand $\|\PP_\theta^\pi\|_1:=\|\PP_{\theta,H}^\pi-\PP_{\theta^\star,H}^\pi\|_1$.

\emph{Term (C).}
As in the proof of Proposition~\ref{prop:model_error_combined}, since $\bar\theta_k$ is fixed within $E_k$, applying Cauchy-Schwarz within each epoch to the online terms in~\eqref{eq:hist_l1_hybrid} gives
\[
\sum_{t=1}^T\text{(C)}_t
\;\le\;\frac{R_{\max}}{1-\gamma}\sqrt{c\beta_{T,N}}\sum_{k=1}^K\sqrt{|E_k|}+O(\gamma^H T)
\;=\;\widetilde O\!\left(\frac{R_{\max}\sqrt{T}}{1-\gamma}\right).
\]

\emph{Term (A).}
We use a tighter decomposition that avoids $\sup_\pi$:
\begin{equation}\label{eq:A_single_policy}
\text{(A)}_t
= V_{\theta^\star}^{\star,B}(b_t)-V_{\bar\theta_{k(t)}}^{\star,B}(b_t)
\;\le\;
V_{\theta^\star}^{\pi^\star,B}(b_t)-V_{\bar\theta_{k(t)}}^{\pi^\star,B}(b_t)
\;\le\;
\frac{R_{\max}}{1-\gamma}\bigl\|\PP_{\bar\theta_{k(t)}}^{\pi^\star}\bigr\|_1,
\end{equation}
where the first inequality uses $V_{\bar\theta_{k(t)}}^{\star,B}\ge V_{\bar\theta_{k(t)}}^{\pi^\star,B}$ (optimality under $\bar\theta_{k(t)}$) and the second is the simulation lemma (Lemma~\ref{lem:simulation}).

By Definition~\ref{def:spc}:
\[
\bigl\|\PP_{\bar\theta_k}^{\pi^\star}\bigr\|_1
\;\le\;
c_{\mathrm{off}}^\star\,\bigl\|\PP_{\bar\theta_k}^{\pi_b}\bigr\|_1.
\]
From the offline portion of~\eqref{eq:hist_l1_hybrid}:
\[
\bigl\|\PP_{\bar\theta_k}^{\pi_b}\bigr\|_1
\;\le\;\sqrt{\frac{c\,\beta_{T,N}}{N}}.
\]
Since $\bar\theta_k$ is frozen within $E_k$, this bound is constant across the epoch:
\[
\sum_{t\in E_k}\text{(A)}_t
\;\le\;
\frac{R_{\max}}{1-\gamma}\cdot c_{\mathrm{off}}^\star\sqrt{\frac{c\,\beta_{T,N}}{N}}\cdot|E_k|.
\]
Summing across epochs:
\[
\sum_{t=1}^T\text{(A)}_t
\;\le\;
\frac{c_{\mathrm{off}}^\star R_{\max}}{1-\gamma}\sqrt{\frac{c\,\beta_{T,N}}{N}}\cdot T.
\]

\emph{Term (B) and filter mismatch.}
Identical to the proof of Theorem~\ref{thm:final_regret_bound}. The filter mismatch $\sum_t d_{\mathrm{model}}^{a_t}$ is bounded by $C_{\mathrm{mix}}\sqrt{c\beta_{T,N}}\sum_k\sqrt{|E_k|}=\widetilde O(\sqrt{T})$ via~\eqref{eq:dmodel_bound} (the online diagonal bound, which does not require coverage).

\emph{Combining.}
\[
\mathrm{Reg}_T
\;\le\;
\frac{R_{\max}}{1-\gamma}\widetilde O(\sqrt{T})
+\frac{c_{\mathrm{off}}^\star R_{\max}\sqrt{c\beta_{T,N}}}{1-\gamma}\cdot\frac{T}{\sqrt{N}}
+\frac{C_{\mathrm{pf}}'R_{\max}T}{(1-\gamma)^2\sqrt{J}}
+\frac{C_{\mathrm{mc}}R_{\max}T}{(1-\gamma)^2\sqrt{N_{\mathrm{mc}}}}
+\frac{2\gamma^{d+1}}{(1-\gamma)^2}\sum_t\Xi_{\bar\theta_{k(t)}}.
\]
When $N\ge (c_{\mathrm{off}}^\star)^2\,T$, the second term is $O(\sqrt{T\beta_{T,N}}/(1-\gamma))=\widetilde O(\sqrt{T}/(1-\gamma))$, and the leading rate is $\widetilde O(\sqrt{T})$.
\end{proof}

\subsection{Variance Perspective on the QMDP Bias}
\label{app:variance_proofs}

We provide a variance-based interpretation of the Jensen gap $\Xi_\theta$ defined in~\eqref{eq:Xi_def}. We show that $\Xi_\theta$ is controlled by the curvature of the value function along a sufficient embedding and the posterior variance of the embedded features.

\paragraph{Value-relevant embedding.}
Let $g:\gS\to\R^k$ be a measurable embedding, and write
\[
m_g(b):=\E_{S\sim b}[g(S)],
\qquad
\Sigma_g(b):=\Cov_{S\sim b}\bigl(g(S)\bigr).
\]

\begin{aspt}[Value-relevant embedding]\label{aspt:value_embedding}
There exist twice continuously differentiable maps $F_\theta^a:\R^k\to\R$ for each $a\in\gA$ such that
$Q_\theta^{\star,S}(s,a)=F_\theta^a(g(s))$ for all $(s,a)\in\gS\times\gA$, with curvature bounds
\[
\sup_{a\in\gA}\sup_{x\in\mathrm{conv}(g(\gS))}\bigl\|\nabla^2 F_\theta^a(x)\bigr\|_{\mathrm{op}}\le L_\theta
\qquad\text{and}\qquad
\sup_{x\in\mathrm{conv}(g(\gS))}\bigl\|\nabla^2 F_\theta(x)\bigr\|_{\mathrm{op}}\le L_\theta,
\]
where $F_\theta(x):=\max_a F_\theta^a(x)$ and $V_\theta^{\star,S}(s)=F_\theta(g(s))$.
\end{aspt}
The second curvature condition on $F_\theta=\max_a F_\theta^a$ does not hold automatically, since $F_\theta$ is not $C^2$ at switching boundaries where the maximizing action changes. The bound~\eqref{eq:variance_xi_bound} uses $\|\nabla^2 F_\theta\|_{\mathrm{op}}\le L_\theta$ only for the upper bound on $\widetilde V_\theta^\star(b)$; the lower bound on $\bar V_\theta^{\star,S}(b)$ uses only the per-action bounds $\|\nabla^2 F_\theta^a\|_{\mathrm{op}}\le L_\theta$.

We first state an auxiliary lemma used in the proof of Proposition~\ref{prop:variance_qmdp_bias}.

\begin{lem}[Mean-belief value approximation]\label{lem:mean_belief_value}
Under Assumption~\ref{aspt:value_embedding}, for every belief $b\in\Delta(\gS)$,
\begin{equation}
\bigl|\bar V_\theta^{\star,S}(b)-F_\theta(m_g(b))\bigr|
\;\le\;
\frac{L_\theta}{2}\,\tr\bigl(\Sigma_g(b)\bigr).
\label{eq:mean_belief_approx}
\end{equation}
\end{lem}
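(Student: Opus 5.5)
We prove the two one-sided inequalities separately, writing $x_0:=m_g(b)$, $X:=g(S)$ with $S\sim b$, and using second-order Taylor expansion of each $F_\theta^a$ around $x_0$.

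\textbf{Lower bound.} Fix $a$. Since $X$ and $x_0=\E[X]$ lie in $\mathrm{conv}(g(\gS))$, and this set is convex, the whole segment between them lies in it. Taylor's theorem with integral remainder along that segment gives
\[
\bigl|F_\theta^a(X)-F_\theta^a(x_0)-\nabla F_\theta^a(x_0)^\top(X-x_0)\bigr|\le\tfrac{L_\theta}{2}\|X-x_0\|^2 .
\]
Taking expectations under $b$, the linear term vanishes because $\E[X]=x_0$, and $\E\|X-x_0\|^2=\tr(\Sigma_g(b))$. Hence
\[
\Bigl|\int Q_\theta^{\star,S}(s,a)\,b(\mathrm ds)-F_\theta^a(x_0)\Bigr|\le\tfrac{L_\theta}{2}\tr(\Sigma_g(b))\qquad\text{for every }a.
\]
Maximizing over $a$ and using $|\max_a u_a-\max_a w_a|\le\max_a|u_a-w_a|$ yields
\[
\bigl|\bar V_\theta^{\star,S}(b)-F_\theta(x_0)\bigr|\le\tfrac{L_\theta}{2}\tr(\Sigma_g(b)),
\]
since $\max_a F_\theta^a(x_0)=F_\theta(x_0)$. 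This step uses only the per-action curvature bounds. In particular $\bar V_\theta^{\star,S}(b)\ge F_\theta(x_0)-\tfrac{L_\theta}{2}\tr(\Sigma_g(b))$.

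\textbf{Upper bound.} We apply the same expansion to $F_\theta$ itself, using $\|\nabla^2F_\theta\|_{\mathrm{op}}\le L_\theta$ on $\mathrm{conv}(g(\gS))$. This gives
\[
\widetilde V_\theta^\star(b)=\E[F_\theta(X)]\le F_\theta(x_0)+\tfrac{L_\theta}{2}\tr(\Sigma_g(b)).
\]
Subtracting the lower bound on $\bar V_\theta^{\star,S}(b)$ gives $\Xi_\theta(b)\le L_\theta\tr(\Sigma_g(b))$. The identity $\Xi_\theta=\sup_b\Xi_\theta(b)$ is then just the definition \eqref{eq:Xi_def}, since $\Xi_\theta(b)$ is exactly the integrand there.

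\textbf{Main obstacle.} The delicate point is the regularity of $F_\theta=\max_aF_\theta^a$, which need not be $C^2$ at action-switching boundaries. The bound is meaningful only where the assumption supplies the Hessian bound, for instance when a single action is optimal throughout or when $F_\theta$ is a single smooth function. Under that assumption we take $F_\theta$ to be $C^{1,1}$ with gradient Lipschitz constant $L_\theta$ along segments, which is all the integral-remainder Taylor bound requires.

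A secondary technical point is integrability: if $\tr(\Sigma_g(b))=\infty$ the bound is trivial, and otherwise the expectations above are finite because $Q^{\star,S}_\theta$ is bounded.
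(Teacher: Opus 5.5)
Your proof is correct, but it reaches the upper half of the lemma by a different route than the paper.

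\textbf{Comparison with the paper.} The paper proves $\bar V_\theta^{\star,S}(b)\le F_\theta(m_g(b))+\frac{L_\theta}{2}\tr(\Sigma_g(b))$ in two steps. It first applies Jensen, $\bar V_\theta^{\star,S}(b)\le\widetilde V_\theta^\star(b)=\int F_\theta(g(s))\,b(\mathrm ds)$. It then Taylor-expands $F_\theta=\max_a F_\theta^a$ itself, which uses the Hessian bound on $F_\theta$. Only the paper's lower half uses the per-action bounds.

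You instead prove the two-sided per-action estimate $\bigl|\int F_\theta^a(g(s))\,b(\mathrm ds)-F_\theta^a(m_g(b))\bigr|\le\frac{L_\theta}{2}\tr(\Sigma_g(b))$. You then pass it through the map $(u_a)_a\mapsto\max_a u_a$, which is $1$-Lipschitz in the sup norm. So the whole lemma follows from the per-action curvature bounds alone.

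This is more economical. It never uses the second condition of Assumption~\ref{aspt:value_embedding}, and that is exactly the condition the paper concedes can fail at action-switching boundaries. What the paper's route buys is the intermediate inequality $\widetilde V_\theta^\star(b)\le F_\theta(m_g(b))+\frac{L_\theta}{2}\tr(\Sigma_g(b))$, which it reuses for Proposition~\ref{prop:variance_qmdp_bias}.

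\textbf{Organization.} Your paragraph labelled ``Upper bound'' is not part of the proof of this lemma. The lemma is already complete at the end of your first paragraph. What that paragraph actually proves is Proposition~\ref{prop:variance_qmdp_bias}.

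Likewise, your ``main obstacle'' about the regularity of $F_\theta$ does not arise for the lemma under your argument. It matters only for the proposition, where control of $F_\theta$ beyond the per-action bounds is genuinely needed. For example, $F^1_\theta(x)=x$, $F^2_\theta(x)=-x$ and $g(S)=\pm\sigma$ with equal probability give a gap $\Xi_\theta(b)=\sigma$. Yet the per-action curvature there is zero and $\tr(\Sigma_g(b))=\sigma^2$.
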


\begin{proof}
By Jensen's inequality applied to the pointwise maximum,
\[
\bar V_\theta^{\star,S}(b)
=\max_{a\in\gA}\int_{\gS}Q_\theta^{\star,S}(s,a)\,b(\mathrm ds)
\;\le\;
\int_{\gS}V_\theta^{\star,S}(s)\,b(\mathrm ds).
\]
By Assumption~\ref{aspt:value_embedding}, $V_\theta^{\star,S}(s)=F_\theta(g(s))$, so
\begin{equation}
\bar V_\theta^{\star,S}(b)
\;\le\;
\int_{\gS}F_\theta(g(s))\,b(\mathrm ds).
\label{eq:upper_step1}
\end{equation}
Expand $F_\theta$ in a second-order Taylor expansion around $m_g(b)$.
For each $s$, there exists $\xi_s\in\mathrm{conv}(g(\gS))$ on the segment between $m_g(b)$ and $g(s)$ such that
\[
F_\theta(g(s))
=F_\theta(m_g(b))
+\nabla F_\theta(m_g(b))^\top\!\bigl(g(s)-m_g(b)\bigr)
+\tfrac{1}{2}\bigl(g(s)-m_g(b)\bigr)^\top\!\nabla^2 F_\theta(\xi_s)\,\bigl(g(s)-m_g(b)\bigr).
\]
Integrating against $b$ and using $\int_{\gS}(g(s)-m_g(b))\,b(\mathrm ds)=0$,
\begin{equation}
\int_{\gS}F_\theta(g(s))\,b(\mathrm ds)-F_\theta(m_g(b))
=\tfrac{1}{2}\int_{\gS}\bigl(g(s)-m_g(b)\bigr)^\top\!\nabla^2 F_\theta(\xi_s)\,\bigl(g(s)-m_g(b)\bigr)\,b(\mathrm ds).
\label{eq:taylor_remainder}
\end{equation}
Since $\|\nabla^2 F_\theta(\xi_s)\|_{\mathrm{op}}\le L_\theta$, each integrand is bounded in absolute value by $L_\theta\,\|g(s)-m_g(b)\|^2$,
so \eqref{eq:taylor_remainder} lies in $[-\frac{L_\theta}{2}\,\tr(\Sigma_g(b)),\;\frac{L_\theta}{2}\,\tr(\Sigma_g(b))]$.
Combined with~\eqref{eq:upper_step1}, we obtain the upper bound
\begin{equation}
\bar V_\theta^{\star,S}(b)\;\le\;F_\theta(m_g(b))+\tfrac{L_\theta}{2}\,\tr\bigl(\Sigma_g(b)\bigr).
\label{eq:lemma_upper}
\end{equation}
For the lower bound, fix any $a\in\gA$. By Assumption~\ref{aspt:value_embedding}, $Q_\theta^{\star,S}(s,a)=F_\theta^a(g(s))$ with $\|\nabla^2 F_\theta^a\|_{\mathrm{op}}\le L_\theta$. The same Taylor expansion gives
\[
\int F_\theta^a(g(s))\,b(\mathrm ds)\;\ge\; F_\theta^a(m_g(b))-\tfrac{L_\theta}{2}\,\tr(\Sigma_g(b)).
\]
Taking $\max_a$:
\begin{equation}
\bar V_\theta^{\star,S}(b)=\max_a\int F_\theta^a(g(s))\,b(\mathrm ds)\;\ge\;\max_a F_\theta^a(m_g(b))-\tfrac{L_\theta}{2}\,\tr(\Sigma_g(b))=F_\theta(m_g(b))-\tfrac{L_\theta}{2}\,\tr(\Sigma_g(b)).\qedhere
\label{eq:lemma_lower}
\end{equation}
\end{proof}

\begin{rmk}
The upper half~\eqref{eq:lemma_upper} follows from Jensen's inequality alone and is used to verify the hypothesis of Proposition~\ref{prop:variance_qmdp_bias} in concrete settings.
\end{rmk}

\begin{prop}[Curvature bound on $\Xi_\theta$]\label{prop:variance_qmdp_bias}
Under Assumption~\ref{aspt:value_embedding}, for every belief $b\in\Delta(\gS)$,
\begin{equation}
\Xi_\theta(b)
:=\widetilde V_\theta^{\star}(b)-\bar V_\theta^{\star,S}(b)
\;\le\;
L_\theta\,\tr\bigl(\Sigma_g(b)\bigr).
\label{eq:variance_xi_bound}
\end{equation}
In particular, $\Xi_\theta=\sup_b\Xi_\theta(b)\le L_\theta\sup_b\tr(\Sigma_g(b))$.
\end{prop}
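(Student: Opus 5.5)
The plan is to sandwich both $\widetilde V_\theta^{\star}(b)$ and $\bar V_\theta^{\star,S}(b)$ around the single anchor value $F_\theta(m_g(b))$, the optimal fully observable value evaluated at the posterior mean of the value-relevant features. Each side is then within $\frac{L_\theta}{2}\tr(\Sigma_g(b))$ of this anchor, and the two half-widths add to give the claimed $L_\theta\tr(\Sigma_g(b))$.

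\emph{Upper bound on $\widetilde V_\theta^{\star}(b)$.} By Assumption~\ref{aspt:value_embedding}, $\widetilde V_\theta^{\star}(b)=\int_{\gS}F_\theta(g(s))\,b(\mathrm ds)$. I would reuse the Taylor identity~\eqref{eq:taylor_remainder} from the proof of Lemma~\ref{lem:mean_belief_value}. Expanding around $m_g(b)$ makes the first-order term vanish after integration. The remainder is bounded using $\sup_{x\in\mathrm{conv}(g(\gS))}\|\nabla^2F_\theta(x)\|_{\mathrm{op}}\le L_\theta$, which applies because every intermediate point $\xi_s$ lies in $\mathrm{conv}(g(\gS))$. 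This gives
\[
\widetilde V_\theta^{\star}(b)\le F_\theta(m_g(b))+\tfrac{L_\theta}{2}\tr(\Sigma_g(b)).
\]
This is the only place where the curvature bound on the max-function $F_\theta$ is needed, in line with the remark following Assumption~\ref{aspt:value_embedding}.

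\emph{Lower bound on $\bar V_\theta^{\star,S}(b)$.} Here I would invoke the lower half~\eqref{eq:lemma_lower} of Lemma~\ref{lem:mean_belief_value} directly:
\[
\bar V_\theta^{\star,S}(b)\ge F_\theta(m_g(b))-\tfrac{L_\theta}{2}\tr(\Sigma_g(b)).
\]
This step uses only the per-action Hessian bounds on $F_\theta^a$ together with $\max_a F_\theta^a=F_\theta$. Subtracting it from the upper bound cancels $F_\theta(m_g(b))$ and yields~\eqref{eq:variance_xi_bound}. Taking the supremum over $b$ then gives $\Xi_\theta\le L_\theta\sup_b\tr(\Sigma_g(b))$, and since $\Xi_\theta(b)\ge0$ by Jensen, this matches the definition~\eqref{eq:Xi_def}, so that $\Xi_\theta=\sup_b\Xi_\theta(b)$.

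\emph{Main obstacle.} The delicate point is justifying the Taylor expansion of $F_\theta$ itself, since $F_\theta$ fails to be $C^2$ at action-switching boundaries. I would take the assumption's Hessian bound on $F_\theta$ at face value, meaning that $F_\theta$ is $C^2$ on $\mathrm{conv}(g(\gS))$, as in cases (i)–(ii) following the main-text assumption. If that fails, the fallback is the integral form of the remainder along segments, which only requires $\nabla F_\theta$ to be $L_\theta$-Lipschitz. A secondary routine check is integrability: $\tr(\Sigma_g(b))$ must be finite, and otherwise the bound holds trivially.
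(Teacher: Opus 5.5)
Your proposal is correct and follows the paper's proof essentially step for step. Both arguments bound $\widetilde V_\theta^{\star}(b)$ above by $F_\theta(m_g(b))+\frac{L_\theta}{2}\tr(\Sigma_g(b))$ using the Hessian bound on $F_\theta$, bound $\bar V_\theta^{\star,S}(b)$ below by $\max_a F_\theta^a(m_g(b))-\frac{L_\theta}{2}\tr(\Sigma_g(b))$ using only the per-action bounds, and subtract. (Your Lipschitz-gradient fallback would not help at genuine switching boundaries, since $\max_a F_\theta^a$ has convex kinks there and $\nabla F_\theta$ is discontinuous; but you do not need it, because, like the paper, you take the Hessian bound on $F_\theta$ as part of the assumption.)
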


\begin{proof}
We bound $\Xi_\theta(b)=\widetilde V_\theta^\star(b)-\bar V_\theta^{\star,S}(b)$ for each belief $b$.

\emph{Upper bound on $\widetilde V_\theta^\star(b)$.}
By Assumption~\ref{aspt:value_embedding}, $V_\theta^{\star,S}(s)=F_\theta(g(s))$ where $F_\theta=\max_a F_\theta^a$ has curvature $\le L_\theta$.
Applying the standard Taylor upper bound (as in Lemma~\ref{lem:mean_belief_value}) to $F_\theta$:
\begin{equation}\label{eq:tV_upper}
\widetilde V_\theta^\star(b)=\int F_\theta(g(s))\,b(\mathrm ds)\le F_\theta(m_g(b))+\frac{L_\theta}{2}\,\tr(\Sigma_g(b)).
\end{equation}
Since $F_\theta(m_g(b))=\max_a F_\theta^a(m_g(b))$, this gives $\widetilde V_\theta^\star(b)\le\max_a F_\theta^a(m_g(b))+\frac{L_\theta}{2}\tr(\Sigma_g(b))$.

\emph{Lower bound on $\bar V_\theta^{\star,S}(b)$.}
By Assumption~\ref{aspt:value_embedding}, $Q_\theta^{\star,S}(s,a)=F_\theta^a(g(s))$ for each $a$. For any fixed $a$,
$$\int Q_\theta^{\star,S}(s,a)\,b(\mathrm ds)=\int F_\theta^a(g(s))\,b(\mathrm ds)\ge F_\theta^a(m_g(b))-\frac{L_\theta}{2}\tr(\Sigma_g(b)),$$
where the inequality is the Taylor lower bound (each $F_\theta^a$ has curvature $\le L_\theta$).
Taking the maximum over $a$:
\[
\bar V_\theta^{\star,S}(b)=\max_a\int F_\theta^a(g(s))\,b(\mathrm ds)\ge\max_a\Bigl[F_\theta^a(m_g(b))-\frac{L_\theta}{2}\tr(\Sigma_g(b))\Bigr]=\max_a F_\theta^a(m_g(b))-\frac{L_\theta}{2}\tr(\Sigma_g(b)).
\]

\emph{Combining.}
$$\Xi_\theta(b)=\widetilde V_\theta^\star(b)-\bar V_\theta^{\star,S}(b)\le\bigl[\max_a F_\theta^a(m_g(b))+\frac{L_\theta}{2}\tr(\Sigma_g(b))\bigr]-\bigl[\max_a F_\theta^a(m_g(b))-\frac{L_\theta}{2}\tr(\Sigma_g(b))\bigr]=L_\theta\,\tr(\Sigma_g(b)).$$
\end{proof}

\subsection{Case Studies}
\label{app:case_study_proofs}

\textbf{Case 1: Discounted partially observed LQG.}
Consider linear-Gaussian dynamics
$S_{t+1}=\Phi S_t+\Gamma A_t+W_t$, $O_t=H S_t+\eta_t$,
with $W_t\sim\gN(0,\Sigma_W)$, $\eta_t\sim\gN(0,\Sigma_\eta)$ i.i.d., quadratic reward $r(s,a)=-(s^\top Q_{\mathrm{c}} s+a^\top Ra)$, $Q_{\mathrm{c}}\succeq 0$, $R\succ 0$. Assume $(\Phi,\Gamma)$ is stabilizable and $(\Phi,H)$ is detectable. For LQG, we obtain the closed-form expression $\Xi_\theta =\tr\bigl((M-P)\,\bar\Sigma\bigr)$, where $P$ solve the Riccati equation, $M = Q_c + \gamma \Phi^{\top} P \Phi$, and $\bar \Sigma$ is the steady-state Kalman predictive covariance. When $\Gamma = 0$, we have $M = P$ and $\Xi_{\theta} = 0$. Note that for LQG, the QMDP policy is always optimal.

\begin{cor}[LQG]\label{cor:lqg}
Let $M:=Q_{\mathrm{c}}+\gamma\Phi^\top P\Phi$ where $P\succeq 0$ solves the discounted algebraic Riccati equation, and let $\bar\Sigma_{\mathrm{post}}$ be the steady-state Kalman posterior (filtered) covariance satisfying $\bar\Sigma_{\mathrm{post}}=\bar\Sigma - \bar\Sigma H^\top(H\bar\Sigma H^\top+\Sigma_\eta)^{-1}H\bar\Sigma$ with $\bar\Sigma:=\Phi\bar\Sigma_{\mathrm{post}}\Phi^\top+\Sigma_W$ the steady-state predictive covariance. Then
$\Xi_\theta =\tr\bigl((M-P)\,\bar\Sigma\bigr)$ and the QMDP policy is belief-MDP optimal: $\sup_b\bigl[V_\theta^{\star,B}(b)-V_\theta^{\pi_{\mathrm{Q}},B}(b)\bigr]=0$.
\end{cor}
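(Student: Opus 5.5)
The plan is to compute every object in the statement in closed form via the LQG separation principle, and then read off both claims. The main tool is that the fully observable discounted LQR problem has a quadratic value function. Specifically, $V_\theta^{\star,S}(s)=-s^\top P s-c$ and $Q_\theta^{\star,S}(s,a)=-(s^\top Q_{\mathrm c}s+a^\top Ra)-\gamma(\Phi s+\Gamma a)^\top P(\Phi s+\Gamma a)-\gamma(\tr(P\Sigma_W)+c)$, with $P$ the discounted Riccati solution. In terms of the Assumption of Proposition~\ref{prop:variance_qmdp_bias}, we may take the embedding $g(s)=(s,ss^\top)$, so that every $Q(\cdot,a)$ is affine in $g$. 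For a belief $b$ with mean $\mu$ and covariance $\Sigma$:
\begin{itemize}
\item Integrating gives $\widetilde V_\theta^\star(b)=-\mu^\top P\mu-\tr(P\Sigma)-c$.
\item We also get $\int Q_\theta^{\star,S}(s,a)\,b(\mathrm ds)=Q_\theta^{\star,S}(\mu,a)-\tr(M\Sigma)$, where $M=Q_{\mathrm c}+\gamma\Phi^\top P\Phi$. The cross term in $a$ only involves $\mu$.
\item Maximizing over $a$ yields $\bar V_\theta^{\star,S}(b)=V_\theta^{\star,S}(\mu)-\tr(M\Sigma)=-\mu^\top P\mu-c-\tr(M\Sigma)$.
\end{itemize}
Hence $\Xi_\theta(b)=\tr((M-P)\Sigma)$ exactly. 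Note that $M-P=\gamma^2\Phi^\top P\Gamma(R+\gamma\Gamma^\top P\Gamma)^{-1}\Gamma^\top P\Phi\succeq0$ by the Riccati equation, which vanishes when $\Gamma=0$.

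To get the stated value, I would interpret the supremum over beliefs as a supremum over reachable Gaussian beliefs in steady state. Otherwise $\sup_b\tr((M-P)\Sigma)$ would be infinite, since rewards are unbounded here, as already flagged in the Limitations. The Kalman covariances converge to the steady state by detectability, and then one must decide which covariance enters. The leaf is evaluated at the predictive belief $\tilde b_\theta(\cdot\mid b,a)$, whose covariance is $\bar\Sigma=\Phi\bar\Sigma_{\mathrm{post}}\Phi^\top+\Sigma_W$, consistent with $Q_{\theta,0}$ in the proof of Proposition~\ref{prop:structural_error}. Since the evaluated object there is the predictive belief, I would identify $\Sigma=\bar\Sigma$ and obtain $\Xi_\theta=\tr((M-P)\bar\Sigma)$. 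I expect this bookkeeping to be the main obstacle: matching which covariance (predictive or filtered) the definition picks out, and justifying the restriction to the steady-state Gaussian family. I would try to resolve it first by carefully tracking the one-step shift in the proof of Proposition~\ref{prop:structural_error}.

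For optimality of the QMDP policy, I would use the computation above. The map $a\mapsto\int Q_\theta^{\star,S}(s,a)\,b(\mathrm ds)$ differs from $Q_\theta^{\star,S}(\mu,a)$ by a constant independent of $a$. So the QMDP action is the certainty-equivalent action $a=-K\mu$, with $K=\gamma(R+\gamma\Gamma^\top P\Gamma)^{-1}\Gamma^\top P\Phi$. The separation principle for discounted LQG states that the belief-MDP optimal policy is exactly $-K$ applied to the Kalman mean. I would verify this directly by checking that $V^{\star,B}(b)=-\mu^\top P\mu-\tr(P\Sigma)-\kappa(\Sigma)$ solves the belief Bellman equation~\eqref{eq:bellman_operator}. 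Here $\kappa$ depends only on the covariance sequence, which is action-independent. The maximizer of the right-hand side coincides with $-K\mu$ because the covariance terms do not depend on $a$. Therefore $V_\theta^{\pi_{\mathrm Q},B}=V_\theta^{\star,B}$ for every $b$, which gives the zero gap.
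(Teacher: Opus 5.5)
Your closed-form computations match the paper's: the quadratic $V_\theta^{\star,S}$ and $Q_\theta^{\star,S}$, the identity $\int Q_\theta^{\star,S}(s,a)\,b(\mathrm ds)=Q_\theta^{\star,S}(\mu,a)-\tr(M\Sigma)$ giving $\Xi_\theta(b)=\tr((M-P)\Sigma)$ with $M-P\succeq 0$, and the certainty-equivalent QMDP action together with separation. Your direct Bellman verification is an acceptable substitute for the paper's citation of the separation principle. With unbounded rewards, however, you need a verification argument for quadratically growing value functions, since the sup-norm fixed-point characterization does not apply.

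The gap is the step $\Sigma=\bar\Sigma$, which rests on a misreading of Proposition~\ref{prop:structural_error}. There, only the linear surrogate is applied to the predictive belief. Before $\Xi_\theta$ is invoked, Bayes consistency rewrites $\widetilde V_\theta^\star(\tilde b_\theta(\cdot\mid b,a))$ as $\int\widetilde V_\theta^\star(\tau_\theta(b,a,o))\,p_\theta(o\mid b,a)\,\mathrm do$. So the Jensen gap $\widetilde V_\theta^\star-\bar V_\theta^{\star,S}$ is evaluated at the posteriors $\tau_\theta(b,a,o)$, never at $\tilde b_\theta$. The decision-time beliefs $b_t$ are also filtered, since $O_t\in H_t$ is emitted by $S_t$. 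In steady state all of these have covariance $\bar\Sigma_{\mathrm{post}}$. Your restriction to steady-state reachable beliefs therefore gives $\tr((M-P)\bar\Sigma_{\mathrm{post}})$. This falls short of $\tr((M-P)\bar\Sigma)$ by $\tr\bigl((M-P)\bar\Sigma H^\top(H\bar\Sigma H^\top+\Sigma_\eta)^{-1}H\bar\Sigma\bigr)\ge 0$, which is generically strict.

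What your computation does establish is the inequality the paper's proof actually ends with. Because $\bar\Sigma_{\mathrm{post}}\preceq\bar\Sigma$ and $M-P\succeq0$, the supremum over stationary Kalman beliefs satisfies $\Xi_\theta\le\tr((M-P)\bar\Sigma)$. Equality would require an explicit convention that the supremum also ranges over predictive beliefs, which then attain it because they have the larger covariance. That convention is not the one under which $\Xi_\theta$ enters Proposition~\ref{prop:structural_error} or the regret bound.

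The restriction to steady state must also be imposed rather than derived. Transient filtered covariances, and $b_0$ itself, depend on the prior and can exceed the steady-state value. Hence the supremum over all reachable beliefs is not determined by $\bar\Sigma$ at all.
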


The QMDP action equals the certainty-equivalent LQR control $a=Km$, which is belief-MDP optimal by the separation principle.
When $\Gamma=0$, the DARE gives $M=P$, so $\Xi_\theta=0$ and the QMDP value and the belief-MDP value coincide. When $\Gamma\neq 0$, $M-P\succeq 0$ is generically nonzero, so $\Xi_\theta>0$. In that case the QMDP \emph{policy} remains optimal, but the QMDP \emph{value} underestimates the belief-MDP value by $\tr((M-P)\bar\Sigma)$.

\emph{Remark.} LQG has unbounded reward and $V_{\max}=\infty$, so it lies outside the standing assumption $|r(s,a)|\le R_{\max}$ of Theorem~\ref{thm:final_regret_bound}. The corollary illustrates the structural analysis (Jensen gap and policy optimality) but does not directly instantiate the regret bound.

\textbf{Case 2: Identifiable contextual MDP.}
Consider an episodic contextual MDP \cite{kausik2023learning} with a fixed latent context $C\in\gC$ sampled once per episode and Markov dynamics $S_{t+1}\sim P_c(\cdot\mid S_t,A_t)$, $r_t=r_c(S_t,A_t)$ given $C=c$. Assume
(i) there exists an estimator $\widehat C_t=\widehat C_t(H_t)$ with $\PP(\widehat C_t\neq C)\le \rho_t$, $\rho_t\downarrow 0$; and
(ii) $\sup_{c\in\gC}\|V_c^{\star,S}\|_\infty\le V_{\max}$.
Then $\Xi_{\theta,t}\;\le\; 2V_{\max}\,\rho_t$ and $\sup_b\bigl[V_\theta^{\star,B}(b)-V_\theta^{\pi_{\mathrm{Q}},B}(b)\bigr]\;\le\;{4\gamma\, V_{\max}}\,\rho_t/{(1-\gamma)^2}$. As the context becomes identifiable ($\rho_t\to 0$), both the structural residual and the QMDP policy suboptimality vanish. For the structural bias term $\sum_t\Xi_t\le 2V_{\max}\sum_t\rho_t$ to be $o(\sqrt{T})$, we need $\sum_{t=1}^T\rho_t=o(\sqrt{T})$. 
This result also suggests a schedule for the planning depth d that decays with $t$, reflecting diminishing structural bias.

\paragraph{Proofs.}

The fully observed value function is $V_\theta^{\star,S}(s)=-s^\top Ps-c_0$, where $P\succeq 0$ solves the discounted algebraic Riccati equation (DARE). The constant $c_0$ is determined by substituting into the Bellman equation: $$-s^\top Ps-c_0=\max_a\{r(s,a)+\gamma\E[V^{\star,S}(s')]\},$$ which gives $c_0=\gamma[\tr(P\Sigma_W)+c_0]$ from the noise covariance term, yielding $c_0=\gamma\tr(P\Sigma_W)/(1-\gamma)$.
Hence
\[
Q_\theta^{\star,S}(s,a)
=-(s^\top Q_{\mathrm{c}} s+a^\top Ra)
-\gamma(\Phi s+\Gamma a)^\top P(\Phi s+\Gamma a)
-\gamma\tr(P\Sigma_W)-\gamma c_0.
\]
For belief $b=\gN(m,\Sigma)$,
\[
\int Q_\theta^{\star,S}(s,a)\,b(\mathrm ds)
=-m^\top Q_{\mathrm{c}} m - a^\top R a
-\gamma(\Phi m+\Gamma a)^\top P(\Phi m+\Gamma a)
-\tr(M\Sigma)-\gamma\tr(P\Sigma_W)-\gamma c_0,
\]
where $M:=Q_{\mathrm{c}}+\gamma\Phi^\top P\Phi$.
All $\Sigma$-dependent terms are independent of $a$, so the QMDP action is
\[
\arg\max_a \int Q_\theta^{\star,S}(s,a)\,b(\mathrm ds)
=\arg\max_a\bigl\{-a^\top Ra-\gamma(\Phi m+\Gamma a)^\top P(\Phi m+\Gamma a)\bigr\}
=Km,
\]
with $$K=-\gamma(R+\gamma\Gamma^\top P\Gamma)^{-1}\Gamma^\top P\Phi.$$
By the separation principle for discounted LQG (stabilizable/detectable), the belief-MDP optimal action is also $Km$, so QMDP and belief-optimal policies coincide. This gives part~2.

\emph{Structural residual.}
We compute $\Xi_\theta(b)=\widetilde V_\theta^\star(b)-\bar V_\theta^{\star,S}(b)$ directly.
From the expansion above, $$\bar V_\theta^{\star,S}(b)=-m^\top Pm-\tr(M\Sigma)-c_0.$$
Meanwhile, $$\widetilde V_\theta^\star(b)=\int V_\theta^{\star,S}(s)\,b(\mathrm ds)=\int(-s^\top Ps-c_0)\,b(\mathrm ds)=-m^\top Pm-\tr(P\Sigma)-c_0.$$
Therefore $\Xi_\theta(b)=\tr((M-P)\Sigma)$.
Since $$M-P=\gamma^2\Phi^\top P\Gamma(R+\gamma\Gamma^\top P\Gamma)^{-1}\Gamma^\top P\Phi\succeq 0,$$ this is nonnegative.
Taking the supremum over beliefs generated by the stationary Kalman filter yields $$\Xi_\theta\le\tr((M-P)\bar\Sigma),$$ where $\bar\Sigma$ is the steady-state Kalman predictive covariance.

When $\Gamma=0$, the DARE gives $M=P$, so $\Xi_\theta=0$.

\emph{Structural residual (part~1).}
Let $\mu_t(\cdot):=\PP(C=\cdot\mid H_t)$ be the posterior on context and $c^\star$ the true context.
By the definition of the Bayes classifier $$\hat C_t^{\mathrm{MAP}}=\arg\max_c\mu_t(c),$$
we have that
$$\E[1-\max_c\mu_t(c)]\le \PP(\widehat C_t\neq C)\le \rho_t.$$
For any bounded $f:\gC\to[-V_{\max},V_{\max}]$, let $c_t^{\mathrm{MAP}}=\arg\max_c\mu_t(c)$. Then
$$\bigl|\sum_c \mu_t(c)f(c)-f(c_t^{\mathrm{MAP}})\bigr|\le 2V_{\max}(1-\max_c\mu_t(c)),$$
since the total posterior mass on contexts other than $c_t^{\mathrm{MAP}}$ is $1-\max_c\mu_t(c)$.
The structural residual is bounded directly (without the triangle inequality):
\begin{align*}
\Xi_{\theta,t}(b)
&= \sum_c\mu_t(c)\max_a Q_c^{\star,S}(s,a) - \max_a\sum_c\mu_t(c)Q_c^{\star,S}(s,a) \\
&\le \sum_c\mu_t(c)\max_a Q_c^{\star,S}(s,a) - \sum_c\mu_t(c)Q_c^{\star,S}(s,a_{c_t^{\mathrm{MAP}}}^\star) \\
&= \sum_{c\neq c_t^{\mathrm{MAP}}}\mu_t(c)\bigl[\max_a Q_c^{\star,S}(s,a)-Q_c^{\star,S}(s,a_{c_t^{\mathrm{MAP}}}^\star)\bigr] \\
&\le \sum_{c\neq c_t^{\mathrm{MAP}}}\mu_t(c)\cdot 2V_{\max}
\;=\; 2V_{\max}(1-\max_c\mu_t(c)),
\end{align*}
where $a_{c_t^{\mathrm{MAP}}}^\star:=\arg\max_a Q_{c_t^{\mathrm{MAP}}}^{\star,S}(s,a)$ and the first inequality uses $$\max_a\sum_c\mu_t(c)Q_c(s,a)\ge\sum_c\mu_t(c)Q_c(s,a_{c_t^{\mathrm{MAP}}}^\star),$$
Taking expectations yields $$\E[\Xi_{\theta,t}]\le 2V_{\max}\E[1-\max_c\mu_t(c)]\le 2V_{\max}\rho_t.$$

\emph{Policy suboptimality (part~2).}
From Proposition~\ref{prop:structural_error}, $$\|Q_\theta^{\star,B}-Q_{\theta,0}\|_\infty\le\frac{\gamma}{1-\gamma}\Xi_{\theta,t}.$$
For any belief $b$, let $$a^\star=\arg\max_a Q_\theta^{\star,B}(b,a), \qquad a_{\mathrm{Q}}=\pi_{\mathrm{Q}}(b).$$ Then
$$Q_\theta^{\star,B}(b,a^\star)-Q_\theta^{\star,B}(b,a_{\mathrm{Q}})
\le 2\|Q_\theta^{\star,B}-Q_{\theta,0}\|_\infty
\le \frac{2\gamma}{1-\gamma}\Xi_{\theta,t},$$
where the first inequality uses $$a_{\mathrm{Q}}=\arg\max_a Q_{\theta,0}(b,a).$$
By standard policy-loss arguments (telescoping the per-step suboptimality),
$$\sup_b[V_\theta^{\star,B}(b)-V_\theta^{\pi_{\mathrm{Q}},B}(b)]
\le \frac{1}{1-\gamma}\cdot\frac{2\gamma}{1-\gamma}\Xi_{\theta,t}
\le \frac{4\gamma V_{\max}}{(1-\gamma)^2}\rho_t.$$

\section{Empirical Validation of the Regret Decomposition}
\label{app:validation}

\paragraph{Instance.}
The mixing Tiger of Section~\ref{sec:validation} is a variant of Tiger~\citep{kaelbling1998planning} that we constructed for diagnostic purposes, and it is not a standard benchmark.
It has two latent states, three actions (listen, open-left, open-right), and two observations that report the state correctly with probability $0.85$ under every action.
Listening mixes the state, $P(\cdot\mid s,\text{listen})=0.95\,\delta_s+0.05\,\mathrm{Unif}$, opening a door resets it uniformly, and the standard Tiger rewards $(-1,+10,-100)$ are mapped affinely to $(0.9,1,0)$.
Relative to the uniform measure every transition density lies in $[0.05,1.95]$, so Assumption~\ref{aspt:particle_error} holds with $\varepsilon_{\mathrm{mix}}=0.05$, $g_{\min}=0.15$, and $g_{\max}=0.85$. The rewards lie in $[0,1]$, the discount factor is $\gamma=0.8$, and $\Xi_\theta=0.10$.

\paragraph{Isolating each term.}
Except in the estimation run, the planner and filter use the true model, so each experiment contains a single source of error.
Belief-MDP quantities are computed by value iteration on a grid of $4001$ belief points with linear interpolation.
\emph{Particle filtering:} a bootstrap SIR filter with $J$ particles tracks the belief for $40$ listening steps, and the error is the absolute error of its estimate of $\PP(S=1)$ against the exact filter, averaged over steps ($2000$ replications).
\emph{Monte Carlo:} the root value $Q_{\theta,3}(b,a)$ is estimated from $N$ sampled first transitions, $r(s,a)+\gamma\bar V^{\star,S}_{\theta,2}(\tau(b,a,o))$, with the depth-$2$ continuation computed exactly and fixed before sampling, so the $N$ returns are conditionally i.i.d. The error is the root-mean-squared error over five beliefs and all three actions, computed from $2000$ replications.
This design isolates the $N^{-1/2}$ fluctuation of Proposition~\ref{prop:mc_concentration}. It does not include the pooling of leaf particles across simulations or the DP-backup bias of the full estimator, for which that proposition does not require independent returns.
\emph{Structural:} $\|\gT_\theta^d\bar V_\theta^{\star,S}-V_\theta^{\star,B}\|_\infty$ is computed exactly for $d=0,\dots,12$ and compared with the envelope $\gamma^{d+1}\Xi_\theta/(1-\gamma)$ of Proposition~\ref{prop:structural_error}.
Rates are least-squares slopes of the log mean error against the log budget, and the structural contraction is the fitted per-level ratio.
Figure~\ref{fig:theory_audit} shows all three terms.
The budgeting rule of Section~\ref{sec:validation} reads off, for each threshold, the smallest tested $J$, $N$, and $d$ whose isolated error falls below it.

\begin{figure}[h]
    \centering
    \includegraphics[width=\linewidth]{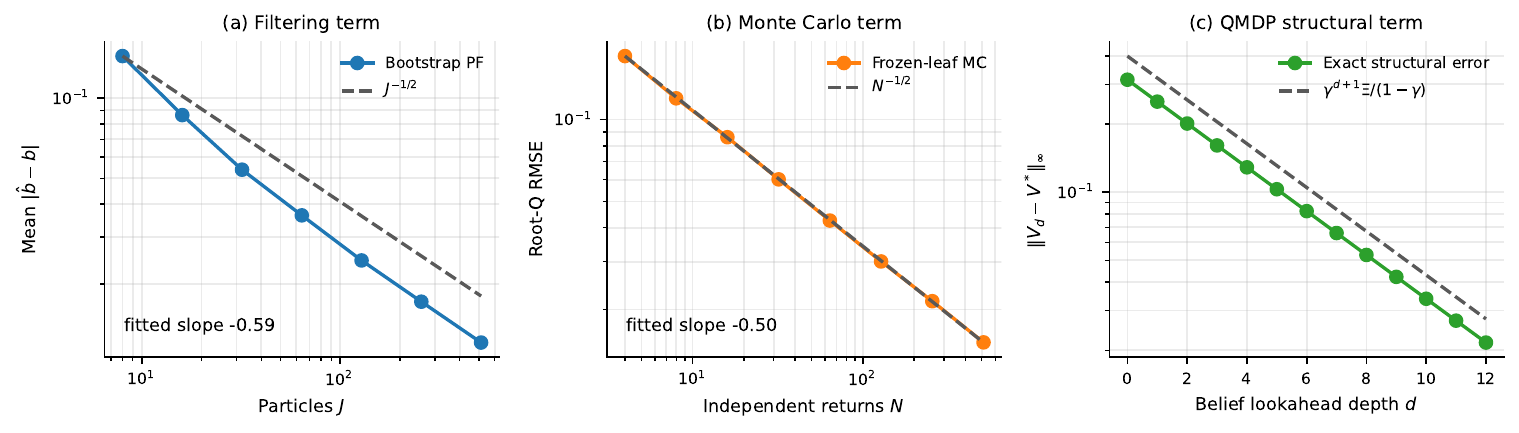}
    \caption{The three computational terms on the mixing Tiger. (a) Filtering error versus particles $J$ (fitted slope $-0.59$). (b) Root-$Q$ RMSE of the frozen-leaf estimator versus returns $N$ (fitted slope $-0.50$). (c) Exact structural error versus belief lookahead depth $d$. The error contracts by a factor $0.80=\gamma$ per level and stays below $\gamma^{d+1}\Xi_\theta/(1-\gamma)$. Means over $2000$ replications in (a, b).}
    \label{fig:theory_audit}
\end{figure}

\paragraph{Estimation term.}
Here the model is learned online by maximum likelihood.
In Figure~\ref{fig:bound_scaling}(a), the optimization error $\varepsilon_{\mathrm{opt}}$ decays as $n^{-0.59}$ and the model's total-variation error as $n^{-0.55}$ in the number of model-estimation samples $n$.
In panel~(b), the cumulative estimation term $\sum_{t\le T}\varepsilon_{\mathrm{opt}}(t)$ grows as $T^{0.57}$, close to $\sqrt T$. With the known model, the same pipeline accumulates about $1.3\cdot10^{-4}$ per round.

\paragraph{Structural term across instances.}
In Figure~\ref{fig:bound_scaling}(c), the structural error of $32$ instances stays below the envelope $\gamma^{d+1}\Xi/(1-\gamma)$ and contracts by a factor of about $\gamma$ per level, with fitted ratios of $0.798$--$0.800$.

\begin{figure}[h]
    \centering
    \includegraphics[width=\linewidth]{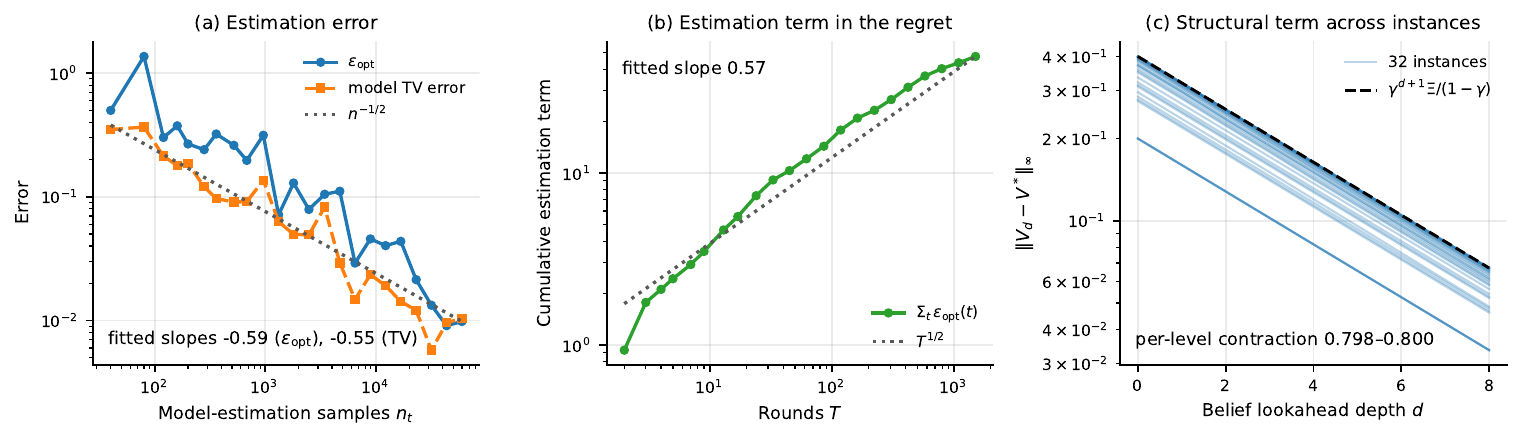}
    \caption{Estimation and structural terms. (a) $\varepsilon_{\mathrm{opt}}$ and the model's total-variation error versus model-estimation samples $n_t$, with an $n^{-1/2}$ reference (fitted slopes $-0.59$ and $-0.55$). (b) Cumulative estimation term $\sum_t\varepsilon_{\mathrm{opt}}(t)$ versus rounds $T$, with a $T^{1/2}$ reference (fitted slope $0.57$). (c) Structural error $\|\bar V_d-V^\star\|_\infty$ of $32$ instances versus belief lookahead depth $d$, below the envelope $\gamma^{d+1}\Xi/(1-\gamma)$ (per-level contraction $0.798$--$0.800$).}
    \label{fig:bound_scaling}
\end{figure}

\section{Assumption Stress Tests}
\label{app:stress}

The two diagnostics below are intended to illustrate the mechanisms behind Assumption~\ref{aspt:particle_error} and the structural term, and are not meant as benchmarks.
Figure~\ref{fig:stress} plots both.

\paragraph{QMDP information bias (active sensing).}
In this Light--Dark-type problem, information can be obtained only by giving up immediate reward. Two latent contexts persist with mixing rate $0.02$. Two exploit actions give reward $1$ when they match the context and reveal nothing, while a sense action gives no reward and reveals the context with accuracy $q$. The discount factor is $\gamma=0.9$.
Table~\ref{tab:active_sensing} reports the normalized return $(1-\gamma)V^{\pi_d,B}(b_0)$ at the uniform belief of the greedy depth-$d$ policy, computed exactly on the belief grid.
Because sensing has no immediate reward and the QMDP leaf values the state as if it will be revealed, depths $0$ and $1$ never sense and attain a normalized return of $0.5$. Depth $3$ comes within $0.006$ of the optimum for $q\ge0.75$, and depth $2$ does so for $q\ge0.85$. At $q=0.65$, even depth $3$ does not sense.

\begin{table}[h]
\centering
\caption{Active sensing: normalized return of the depth-$d$ policy (exact computation).}
\label{tab:active_sensing}
\small
\begin{tabular}{lccccc}
\toprule
Sensing accuracy $q$ & QMDP ($d=0$) & $d=1$ & $d=2$ & $d=3$ & Optimum \\
\midrule
0.55 & 0.500 & 0.500 & 0.500 & 0.500 & 0.500 \\
0.65 & 0.500 & 0.500 & 0.500 & 0.500 & 0.565 \\
0.75 & 0.500 & 0.500 & 0.500 & 0.641 & 0.643 \\
0.85 & 0.500 & 0.500 & 0.717 & 0.717 & 0.722 \\
0.95 & 0.500 & 0.500 & 0.793 & 0.794 & 0.798 \\
\bottomrule
\end{tabular}
\end{table}

\paragraph{Bounded-density violation (item search).}
An item lies in one of $K=32$ boxes under a uniform prior, and the observation reveals the box exactly, so $g_{\min}=0$.
A particle set of size $J$ drawn from the prior has all weights zero whenever no particle lies in the revealed box, which happens with probability $(1-1/K)^J$. In that case the particle policy chooses the wrong box, so this is also the decision-failure probability.
Table~\ref{tab:box_search} compares the empirical collapse rate over $10{,}000$ replications with this formula.
The effective sample size does not detect this failure. At $g_{\min}=0$ the mean ESS$/J$ is about $1/K$ for every $J$. With a small floor $g_{\min}=10^{-6}$, a particle set that misses the revealed box has equal weights and ESS$/J=1$, so the mean ESS$/J$ is $0.81$ at $J=8$, even though $77\%$ of the runs miss.

\begin{table}[h]
\centering
\caption{Item search with $K=32$ and $g_{\min}=0$: weight collapse versus particles $J$.}
\label{tab:box_search}
\small
\begin{tabular}{lccccc}
\toprule
$J$ & 8 & 16 & 32 & 64 & 128 \\
\midrule
Empirical collapse rate & .773 & .602 & .360 & .128 & .017 \\
Analytic $(1-1/K)^J$ & .776 & .602 & .362 & .131 & .017 \\
Mean ESS$/J$ & .032 & .031 & .031 & .032 & .031 \\
\bottomrule
\end{tabular}
\end{table}

\begin{figure}[h]
    \centering
    \includegraphics[width=\linewidth]{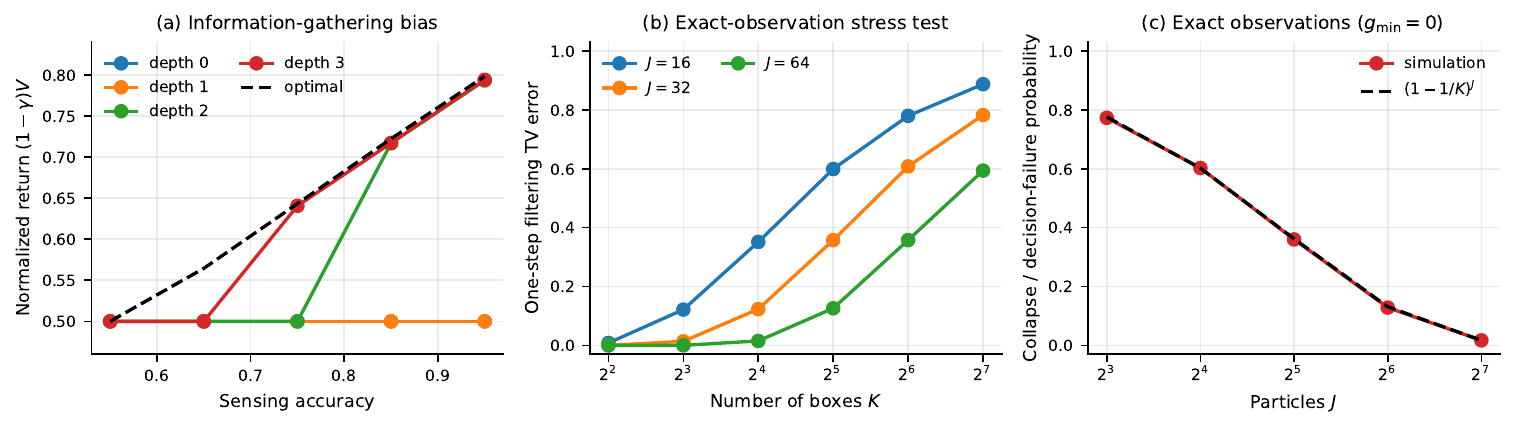}
    \caption{Stress tests illustrating the two mechanisms. (a) In active sensing, depths $0$--$1$ never sense, while depths $2$--$3$ approach the belief-optimal return when sensing is informative. (b) With exact observations over $K$ boxes, the one-step filtering error grows when the particle set is too small to cover the observed box. (c) At $g_{\min}=0$, weight collapse matches $(1-1/K)^J$ ($K=32$), which is also the decision-failure probability. Means over $10{,}000$ replications in (b, c).}
    \label{fig:stress}
\end{figure}

\section{Planner Comparison at Depth \texorpdfstring{$d>0$}{d>0}}
\label{app:planners}
Here we run the analyzed MCTQ at $d>0$ on two finite POMDPs and compare it with POMCP~\citep{silver2010monte} and a sparse particle-belief tree (Sparse-PFT)~\citep{lim2021sparse}.

\paragraph{Setup.}
The environments are the active-sensing problem of Appendix~\ref{app:stress} with $\gamma=0.9$ and the mixing Tiger of Appendix~\ref{app:validation}.
The agent tracks the exact belief, and all planners share the same $256$-particle root sample, with common random numbers.
We report the mean $\pm$ s.e.\ over $3$ seeds of $40$ episodes each.
Calls per decision are counted in generative-model steps, of which MCTQ uses $|\gA|Nd$.

\begin{table}[h]
\centering
\caption{Active sensing ($\gamma=0.9$): return, mean $\pm$ s.e.\ over three seeds of $40$ episodes. The belief optimum scores $7.45\pm0.20$ and the exact QMDP policy $5.54\pm0.32$.}
\label{tab:planners}
\small
\begin{tabular}{lccc}
\toprule
Planner & $d$ & Calls per decision & Return \\
\midrule
MCTQ, uniform & 0 & 0 & $5.04\pm0.11$ \\
MCTQ, uniform ($=$ UCB1) & 1 & 768 & $5.06\pm0.13$ \\
POMCP & -- & 768 / 3072 & $5.47$ / $5.52$ \\
MCTQ, uniform & 2 & 1536 & $6.37\pm0.27$ \\
MCTQ, UCB1 & 2 & 1536 & $6.96\pm0.11$ \\
Sparse-PFT, $J=64$ & 2 & 2688 & $7.18\pm0.08$ \\
MCTQ, uniform & 2 & 6144 & $7.30\pm0.12$ \\
MCTQ, uniform / UCB1 & 3 & 2304 & $5.69$ / $6.74$ \\
\bottomrule
\end{tabular}
\end{table}

\paragraph{Results.}
Depth roughly determines the value, as predicted by Proposition~\ref{prop:structural_error}.
At a fixed budget, performance is U-shaped in $d$.
On the mixing Tiger, QMDP achieves the belief optimum (both $4.586$), with every planner landing within $0.06$ of it.

\section{Ant: Correction Screen, Full-Horizon Runs, and Runtime}
\label{app:ant}

\paragraph{Correction screen.}
Unlike in the submitted version, Ant training is no longer stopped early, and Figure~\ref{fig:mujoco} shows the full Ant horizon.
Training the released configuration over the full horizon revealed a systematic late collapse in both observation ablations.
We report this result here and examined three choices in the released Ant configuration: it freezes the model after its random-data warm start, although the main text describes online updates; it uses AdamW weight decay $0.1$; and it performs one policy update per environment step.
At learning rate $3\cdot10^{-4}$, weight decay alone multiplies the parameters by about $0.22$ over $50{,}000$ optimizer steps.
The controls in Table~\ref{tab:ant_screen} show that the first two corrections are not sufficient and that the collapse is prevented by reducing the update-to-data ratio (UTD).
We screened corrections on position-hidden Ant (seed $0$) using a fixed $15$k--$20$k selection window (Figure~\ref{fig:ant_screen}).
Neither online updates nor corrected decay alone prevents collapse.
We selected UTD $1/10$ before launching the final multi-seed comparison. Table~\ref{tab:hyper_ant} lists the corrected configuration.

\begin{table}[h]
\centering
\caption{Correction screen on position-hidden Ant (seed $0$): mean $10$-episode rolling return over $15$k--$20$k steps and at $20.1$k steps.}
\label{tab:ant_screen}
\small
\begin{tabular}{lccccc}
\toprule
Variant & Online model & Decay & UTD & $15$k--$20$k mean & At $20.1$k \\
\midrule
Submitted & No & $0.1$ & $1$ & $-279$ & $-563$ \\
Online model only & Yes & $0.1$ & $1$ & $-378$ & $-834$ \\
Online model + corrected decay & Yes & $10^{-4}$ & $1$ & $-259$ & $-505$ \\
Corrected + UTD $1/4$ & Yes & $10^{-4}$ & $1/4$ & $902$ & $881$ \\
Corrected + UTD $1/10$ & Yes & $10^{-4}$ & $1/10$ & $980$ & $975$ \\
\bottomrule
\end{tabular}
\end{table}

\begin{figure}[h]
    \centering
    \includegraphics[width=0.6\linewidth]{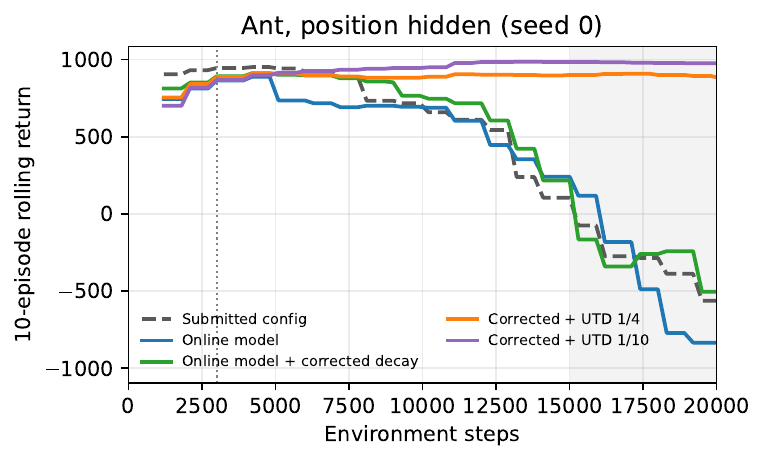}
    \caption{Correction screen on position-hidden Ant (seed $0$). ``Corrected'' denotes online model updates with weight decay $10^{-4}$; UTD $1/k$ means one policy update per $k$ environment steps. The shaded region is the $15$k--$20$k selection window, and the dotted line marks the original $3$k cutoff.}
    \label{fig:ant_screen}
\end{figure}

\paragraph{Full horizon.}
All six corrected runs reach $49{,}200$ steps.
In the prespecified $40$k--$49.2$k late window, corrected PF-MCTQ scores $893\pm177$ with position hidden and $822\pm268$ with velocity hidden (mean and 95\% Student-$t$ interval over three seeds). The submitted configuration, run with two seeds, scores $-396$ and $-518$, respectively.
Independent final $10$-episode evaluations are $862\pm178$ and $767\pm323$ for the corrected method, compared with means of $-751$ and $-625$ for the submitted configuration.
Table~\ref{tab:ant_late} also reports the five-seed recurrent-SAC and GRU-PPO comparisons, and Table~\ref{tab:ant_checkpoints} gives PF-MCTQ at fixed checkpoints.
All reported values are measured, and no curve or table entry is extrapolated or forward-filled.
The submitted configuration is not asymptotically stable, while the targeted update-ratio correction prevents catastrophic collapse across the tested seeds.

\begin{table}[h]
\centering
\caption{Ant return over the prespecified $40$k--$49.2$k window. Values with $\pm$ are means and 95\% Student-$t$ intervals. For the submitted configuration's two complete seeds, parentheses show both seed values. Recurrent baselines have five seeds.}
\label{tab:ant_late}
\small
\begin{tabular}{llccc}
\toprule
Hidden variable & Method & Seeds & Late return & Final evaluation \\
\midrule
Position & PF-MCTQ, corrected & 3 & $893\pm177$ & $862\pm178$ \\
Position & PF-MCTQ, submitted & 2 & $-396$ ($-506$, $-287$) & $-751$ ($-1055$, $-448$) \\
Position & Recurrent SAC & 5 & $-123\pm47$ & not recorded \\
Position & GRU-PPO & 5 & $599\pm147$ & not recorded \\
Velocity & PF-MCTQ, corrected & 3 & $822\pm268$ & $767\pm323$ \\
Velocity & PF-MCTQ, submitted & 2 & $-518$ ($-513$, $-522$) & $-625$ ($-511$, $-739$) \\
Velocity & Recurrent SAC & 5 & $593\pm154$ & not recorded \\
Velocity & GRU-PPO & 5 & $669\pm37$ & not recorded \\
\bottomrule
\end{tabular}
\end{table}

The checkpoint values below are the logged $10$-episode rolling training return at epoch ends, so every value is measured.

\begin{table}[h]
\centering
\caption{Ant checkpoints. Corrected: mean and 95\% Student-$t$ half-width over three seeds ($t_{2,0.975}=4.30$). Submitted: mean of the two complete seeds, with both seed values in parentheses. The last two rows are the mean over the prespecified $40$k--$49.2$k window and a final $10$-episode evaluation.}
\label{tab:ant_checkpoints}
\small
\begin{tabular}{lcccc}
\toprule
 & \multicolumn{2}{c}{Corrected (3 seeds)} & \multicolumn{2}{c}{Submitted (2 seeds)} \\
\cmidrule(lr){2-3}\cmidrule(lr){4-5}
Env.\ steps & Pos.\ hidden & Vel.\ hidden & Pos.\ hidden & Vel.\ hidden \\
\midrule
3k & $825\pm146$ & $882\pm60$ & $908$ ($945$, $871$) & $897$ ($925$, $870$) \\
10.2k & $931\pm61$ & $934\pm11$ & $600$ ($659$, $541$) & $435$ ($606$, $264$) \\
20.1k & $959\pm36$ & $962\pm15$ & $-911$ ($-563$, $-1260$) & $-620$ ($-697$, $-542$) \\
49.2k & $868\pm249$ & $816\pm241$ & $-448$ ($-678$, $-217$) & $-648$ ($-708$, $-589$) \\
\midrule
40k--49.2k mean & $893\pm177$ & $822\pm268$ & $-396$ ($-506$, $-287$) & $-518$ ($-513$, $-522$) \\
Final evaluation & $862\pm178$ & $767\pm323$ & $-751$ ($-1055$, $-448$) & $-625$ ($-511$, $-739$) \\
\bottomrule
\end{tabular}
\end{table}

\paragraph{Runtime.}
Table~\ref{tab:runtime} reports wall-clock cost on one RTX 3090 with position hidden.
Training time is end-to-end (data collection and all updates, excluding the final evaluation).
PF-MCTQ rates come from full $49.2$k-step runs and the baseline rates from $3$k-step runs with the same $300$-step epochs. Initialization is the one-time cost before the first environment step, and for PF-MCTQ it is dominated by the $2000$ model warm-start updates.

\begin{table}[h]
\centering
\caption{Wall-clock cost on Ant with the released code (position hidden, one RTX 3090).}
\label{tab:runtime}
\small
\begin{tabular}{lccc}
\toprule
Method & Initialization (s) & Training s / 1k steps & Steps / s \\
\midrule
PF-MCTQ (corrected) & 86.3 & 15.8 & 63.4 \\
PF-MCTQ (submitted cfg.) & 92.9 & 41.7 & 24.0 \\
Recurrent SAC & 21.0 & 53.6 & 18.6 \\
GRU-PPO & 2.8 & 5.5 & 181.8 \\
\bottomrule
\end{tabular}
\end{table}

\end{document}